\newcommand{\Norm}[1]{\left\Vert #1\right\Vert}			
\newcommand{\vect}[1]{\boldsymbol{#1}}				
\newcommand{\tp}{{\mathrm T}}						
\newcommand{\cov}{{\mathrm Cov}}						
\newcommand{\var}{\mathrm{Var}}						
\newcommand{\med}{\mathrm{med}}						
\newcommand{\card}{{\mathrm Card}}					
\newcommand{\diff}{{\mathrm d}}						
\newcommand{\argmin}[1]{\mathop{\mathrm{argmin}}_{#1}}	
\newcommand*\widebar[1]{\,							
	\hbox{%
   		\kern0.01em%
		\vbox{%
			\hrule height 0.5pt		
			\kern0.33ex			
			\hbox{%
				\kern-0.1em		
				\ensuremath{#1}%
				\kern-0.1em		
			}%
		}%
	}%
\,}%
\let\hat\widehat
\let\tilde\widetilde
\let\bar\widebar
\newcommand{\eg}{\mbox{\sl e.g.\;}}					
\newcommand{\bX}{{\mathbf X}}
\newcommand{\bSigma}{\boldsymbol{\Sigma}}
\newcommand{\btheta} {\boldsymbol{\theta}}
\newcommand{\bmu} {\boldsymbol{\mu}}
\newcommand{\mcB}{{\mathcal B}}					
\newcommand{\mcC}{{\mathcal C}}
\newcommand{\mcH}{{\mathcal H}}					
\newcommand{\mcL}{{\mathcal L}}					
\newcommand{\mcN}{{\mathcal N}}					
\newcommand{\mcR}{{\mathcal R}}
\newcommand{\mbB}{{\mathbb B}}					
\newcommand{\mbE}{{\mathbb E}}					
\newcommand{\mbI}{{\mathbb I}}					
\newcommand{\mbP}{{\mathbb P}}					
\newcommand{\mbR}{{\mathbb R}}					
\newcommand{\mbS}{{\mathbb S}}					
\newcommand{\mfE}{{\mathfrak E}}					
\newcommand{\mfN}{{\mathfrak N}}					
\newcommand{\mfX}{{\mathfrak X}}					
\DeclareMathAlphabet\EuScriptBF{U}{eus}{b}{n}
\newcommand{\mfp}{\mathfrak{p}}		
\definecolor{DSgray}{cmyk}{0,1,0,0}
\newtheorem{definition}{Definition}
\newtheorem{lemma}{Lemma}
\newtheorem{theorem}{Theorem}
\newtheorem{proposition}{Proposition}
\newtheorem{remark}{Remark}
\newtheorem{example}{Example}
\newtheorem{assumption}{Assumption}
\author{
	Jiyuan Tu
	\footnote{Shanghai Jiao Tong University, Shanghai, China, Email: tujy.19@gmail.com} ~
	Weidong Liu
	\footnote{Shanghai Jiao Tong University, Shanghai, China, Email: weidongl@sjtu.edu.cn} ~
	Xiaojun Mao
	\footnote{Fundan University,
		Shanghai,
		China,
		Email: maoxj@fudan.edu.cn} ~
	Xi Chen
	\footnote{New York University, New York, USA, Email: xchen3@stern.nyu.edu} 
	}
\begin{document}
	\title{Variance Reduced Median-of-Means Estimator for Byzantine-Robust Distributed Inference}
	\author{Jiyuan Tu
	\footnote{Shanghai Jiao Tong University, Shanghai, China, Email: tujy.19@gmail.com} ~
	Weidong Liu
	\footnote{Shanghai Jiao Tong University, Shanghai, China, Email: weidongl@sjtu.edu.cn} ~
	Xiaojun Mao
	\footnote{Fundan University,
		Shanghai,
		China,
		Email: maoxj@fudan.edu.cn} ~
	Xi Chen
	\footnote{New York University, New York, USA, Email: xchen3@stern.nyu.edu} 
	}
	
	\date{}
	\maketitle
\begin{abstract}
	
	This paper develops an efficient distributed inference algorithm,  which is robust against a moderate fraction of Byzantine nodes, namely arbitrary and possibly adversarial machines in a distributed learning system. In robust statistics, the median-of-means (MOM) has been a popular approach to hedge against Byzantine failures due to its ease of implementation and computational efficiency. However, the MOM estimator has the shortcoming in terms of statistical efficiency. The first main contribution of the paper is to propose a variance reduced median-of-means (VRMOM) estimator, which improves the statistical efficiency over the vanilla MOM estimator and is computationally as efficient as the MOM. Based on the proposed VRMOM estimator, we develop a general distributed inference algorithm that is robust against Byzantine failures.  Theoretically, our distributed algorithm achieves a fast convergence rate with only a constant number of rounds of communications. We also provide the asymptotic normality result for the purpose of statistical inference. To the best of our knowledge, this is the first normality result in the setting of Byzantine-robust distributed learning.	The simulation results are also presented to illustrate the effectiveness of our method.

\end{abstract}
\noindent
\textbf{keywords:}
Byzantine robustness, distributed inference, median-of-means, statistical efficiency

\section{Introduction}

Due to the rapid increase of the scale of data,  modern datasets are usually too large to fit in a single device, and thus have to be stored and processed in a distributed manner.  In a common distributed computing environment, data are stored across multiple machines/nodes. A single master node is in charge of maintaining and updating target parameters, and a large number of worker machines perform local computations and communicate the computed information with the master node (see Figure \ref{fig:distribute} in Section \ref{sec:robust-csl} for an illustration). As compared to the traditional single machine setting, where the entire data can be loaded into the memory for the centralized computation, the distributed setting poses two major challenges.  

The first challenge comes from the tradeoff between communication cost and statistical accuracy. For example, one-shot communication (e.g., taking average of local estimators), though incurs low communication cost, has a poor performance for nonlinear estimation when the number of machines is large (see, e.g., \cite{li2013statistical, zhang_etal.2013, zhang2015divide, zhao2016partially,  rosenblatt_etal.2016, Shang17Computation, lee2017communication}). Therefore, iterative approaches are adopted in literature (see, e.g., \cite{shamir_etal.2013,jordan_etal.2019,Chen:19qr,fan_guo_etal.2019,wang2019distributed, chen_liu_etal.2019}). For iterative algorithms, since each iteration of communication requires synchronization, a communicationally efficient algorithm should run with a small number of iterations. Our goal is to develop algorithms that achieve communication efficiency without losing statistical accuracy.

The second challenge comes from the vulnerability of worker machines and communication channels.  In particular, the information sent from a worker machine can be arbitrarily erroneous due to hardware or software breakdowns, data crashes, or communication failures. Such an error is usually referred to as Byzantine failures \citep{lamport_etal.1982}. In other words, a subset of workers called Byzantine machines, may send arbitrary and even adversarial messages to the master. Distributed learning under the Byzantine setting has attracted a lot of research attentions in recent years (see, e.g., \cite{feng_etal.2014, chen_su_etal.2017,blanchard_etal.2017,xie_koyejo_gupta.2018,alistarh_zhu_li.2018,yin_etal.2018,yin_chen_etal.2018,su_xu.2018}). However, as we will survey later, some of these methods suffer from a larger number of iterations of communications and existing analysis only focuses on the convergence rate.  The statistical inference with Byzantine failures, which plays an important role in uncertainty quantification, is still largely open. 

The goal of this paper is to propose a communication-efficient statistical inference method, which is robustly against Byzantine failures. We consider a general risk minimization problem, 
\begin{equation}	\label{eq:true_para}
\vect{\theta}^*= \argmin{\vect{\theta}\in\mbR^p}\mbE_{X \sim \mfX}\left\{ f(X,\vect{\theta})\right\},
\end{equation}
where $f$ is the convex loss function,  $X$ denotes the random sample from a probability distribution $\mfX$, and $\vect{\theta}^* \in \mathbb{R}^p$ is the target parameter vector of interest. To infer the underlying parameter $\vect{\theta}^*$, assume that $N$ i.i.d. observations $\{X_1,...,X_N\}$ are collected and evenly distributed over $(m+1)$-machines $\{\mcH_0,\dots\mcH_m\}$, where each machine contains $n$ observations. We allow diverging $N$, $n$, and $p$ under certain rate constraints. 

{In this paper, we consider the Byzantine distributed framework, which allows for Byzantine failures described as follows. In particular, we assume there exists an $\alpha_n$ fraction of worker machines (a.k.a. Byzantine machines), whose indices form a subset $\mcB\subseteq\{1,\dots,m\}$ with $\card(\mcB)= \lfloor\alpha_n m\rfloor$. The Byzantine machines are subject to the following Byzantine failures when communicating information:
\begin{definition}[Byzantine Failures]
	In each round of communication, assume the information produced by each machine is $\vect{v}_j$, then the actual information $\bar{\vect{v}}_j$ received from each worker machine $\mcH_j$ is as follows
	\begin{equation*}
		\bar{\vect{v}}_j=
		\begin{cases}
			\vect{v}_j\quad &j\notin \mcB,\\
			*\quad&j\in\mcB,
		\end{cases}
	\end{equation*}
	where $*$ denotes an arbitrary value.
\end{definition} 
}

Let us start with the most fundamental setting where $\vect{\theta}^*$ is the population mean and the goal is to infer the population mean with the presence of Byzantine failures. A widely used robust estimator is the median-of-means (MOM) estimator \citep{nemirovsky_etal.1983, jerrum_etal.1986, Alon:99:Space}, which first computes the local sample mean on each machine and then aggregate them by taking the median. Due its ease of implementation and computational efficiency,  the MOM estimator has attracted a lot of attentions \citep{minsker.2015, hsu_sabato.2016,lecue_lerasle.2017, lugosi_mendelson_a.2019,minsker_strawn.2019} and served as an important building block in distributed learning with Byzantine failures (\cite{yin_etal.2018}).  However, despite its popularity, the MOM estimator suffers from low asymptotic statistical efficiency. More precisely, the asymptotic efficiency of the MOM estimator is only $2/\pi\approx0.637$, which is far from 1 for normal mean problem.

{The main contribution of the paper is to propose a computationally efficient robust mean estimator, which greatly improves the statistical efficiency of the MOM. Our estimator is called \emph{variance reduced median-of-means (VRMOM)} estimator.} Instead of using the median in MOM, we use multiple quantile levels to improve the statistical efficiency. By formulating a carefully designed stochastic optimization problem and leveraging the idea of one-step Newton iteration, our VRMOM estimator achieves the same order of computational complexity as the MOM estimator, but improves the asymptotic efficiency from $2/\pi\approx0.637$ of  MOM to $3/\pi\approx0.955$ (see Theorem \ref{thm:normality_vrmom}). The proposed VRMOM estimator naturally serves as a more efficient substitute of MOM in all robust statistical applications that benefit from the MOM estimator.

{As an application of our VRMOM estimator, we describe a communication-efficient algorithm for the general risk minimization problem in \eqref{eq:true_para} based on the VRMOM estimator.} In a standard distributed gradient descent (GD) approach, each local machine computes the gradient information, which takes the form of the \emph{mean of gradients of each local data point}. Then, the master receives the transmitted gradient information and aggregates the local gradients by taking the average. However, the averaged gradient is highly sensitive to Byzantine failure, whose value can be completely skewed by a single Byzantine worker. To hedge against Byzantine failures, the work by \cite{yin_etal.2018} proposed to take the coordinate-wise \emph{median} of the transmitted gradients, which is essentially an MOM estimator based on gradients of local data. Our method improves this result from two aspects. First, instead of using the median, our VRMOM serves as a new \emph{gradient aggregator}, which is statistically more efficient in a large class of  distributed robust inference problems. Second, the distributed gradient method would take a large number of iterations (i.e., $O(\log (N/p))$) to converge, which is communicationally expensive. To address this issue, we leverage the  surrogate loss function in the {Communication-efficient Surrogate Likelihood (CSL)} framework \citep{jordan_etal.2019} and develop the \emph{robust CSL (RCSL)} method. In a wide range of choices of $N$, $m$, and $p$,  our RCSL only requires a constant order of iterations to achieve a fast convergence rate, which greatly saves the total communication cost. Theoretically, we establish the convergence rates of our estimator (see Theorem \ref{thm:iter_rdane_byzantine_conv} and Theorem \ref{thm:rdane_byzantine_conv} in Appendix \ref{sec:theory:RCSL}) and provide the asymptotic normality result (see Theorems \ref{cor:VRMOM_limit}).


\subsection{Contributions and Related Works}
\label{sec:related}

The median-of-means (MOM), which was introduced by \cite{nemirovsky_etal.1983}, has been a popular estimator in robust statistics due to its ease of implementation and convergence guarantees \citep{minsker.2015, hsu_sabato.2016, lecue_lerasle.2017, lugosi_mendelson_a.2019,minsker_strawn.2019}. The MOM estimator finds a wide range of applications, including robust PCA \citep{minsker.2015}, linear regression \citep{hsu_sabato.2016}, sparse linear regression \citep{minsker.2015,lecue_lerasle.2017}, robust empirical risk minimization \citep{lecue_lerasle.2017,lugosi_mendelson_a.2019,minsker_strawn.2019}. This paper improves the MOM estimator by proposing a variance reduction scheme, which significantly boosts the statistical efficiency. Our VRMOM estimator is motivated by the idea that composite quantiles can improve the efficiency \citep{zou_yuan.2008}. However, directly taking multiple sample quantiles would incur a higher computational cost and our VRMOM is carefully designed to be computationally efficient and admit a simple closed-form for the ease of theoretical analysis.  
The proposed VRMOM estimator can be a natural substitute for the classical MOM estimator for all aforementioned applications.

In recent years, statistical learning and optimization with the presence of Byzantine failures have attracted a lot of attentions \citep{feng_etal.2014, chen_su_etal.2017,blanchard_etal.2017,xie_koyejo_gupta.2018,alistarh_zhu_li.2018, yin_etal.2018,yin_chen_etal.2018,su_xu.2018}. The key idea behind these work is to let each worker machine compute the gradient (or stochastic gradient) information, and the gradients from workers are aggregated using some robust mean estimators instead of the vanilla gradient mean. There are many applicable estimators like median, trimmed mean  \citep{yin_etal.2018,yin_chen_etal.2018}, geometric median \citep{feng_etal.2014,chen_su_etal.2017}, Krum \citep{blanchard_etal.2017}, marginal median, mean-around-median \citep{xie_koyejo_gupta.2018}, and iterative filtering \citep{su_xu.2018,yin_chen_etal.2018}. However, most existing methods are only based on gradient information, without utilizing any second order properties. In this paper, we propose the robust CSL method, which combines the new gradient aggregator --- VRMOM estimator, and the approximate-Newton framework (See, e.g. \cite{shamir_etal.2013, jordan_etal.2019}, and \cite{fan_guo_etal.2019}). The combination of the VRMOM and approximate-Newton greatly facilitates communication efficient estimation by reducing the total number of communication rounds. From a theoretical perspective, we establish the asymptotic normality result, which has not been well explored in previous robust distributed learning literature. A more detailed comparison of the convergence rates with the existing approaches is presented after Theorem \ref{thm:iter_rdane_byzantine_conv}, after the formal description of our convergence result.


\subsection{Paper Organization and Notations}

The rest of the paper is organized as follows. {Section \ref{sec:model} describes the proposed VRMOM estimator and its theoretical results. In Section \ref{sec:robust-csl}, we introduce the Robust CSL (RCSL) method for Byzantine robust machine learning problem as an important application of the VRMOM estimator. Simulation experiments are provided in Section \ref{sec:sim}, which demonstrate the superiority of our method over some existing methods. Finally, we conclude our work in Section \ref{sec:conclude}. The proofs of the theories of the VRMOM estimator and the theories of the RCSL method are relegated to Appendices.}

For every vector $\vect{v}=(v_1,...,v_p)^{\rm T}$, denote $|\vect{v}|_2=\sqrt{\sum_{l=1}^pv_l^2}$. For every matrix $\vect{A}$, define $\Norm{\vect{A}}=\sup_{|\vect{v}|_2=1}|\vect{A}\vect{v}|_2$ as the operator norm, $\Lambda_{\max}(\vect{A})$ and $\Lambda_{\min}(\vect{A})$ as the largest and smallest eigenvalues of $\vect{A}$ respectively. Suppose there is another matrix $\vect{B}$, and we denote $\vect{B}\preceq\vect{A}$ if and only if $\vect{A}-\vect{B}$ is positive definite. Let $\mcN(0,1)$ be the standard normal distribution. We denote $\Phi(x)= \mbP(\mcN(0,1)\leq x)$ and $\psi(x)= e^{-x^2/2}/\sqrt{2\pi}$ to be its cumulative distribution function and probability density function, respectively. Denote $\mbS^{p-1}(\vect{\theta})$ and $\mbB^p(\vect{\theta})$ as the unit sphere and the unit ball centered at $\vect{\theta} \in\mbR^p$ respectively. For simplicity, we denote $\mbS^{p-1}$ and $\mbB^p$ as unit sphere and unit ball centered at $\vect{0}$. We will use $\mbI(\cdot)$ as the indicator function. The symbols $\lfloor x\rfloor$ ($\lceil x\rceil$) denotes the greatest integer (the smallest integer) not larger than (not less than) $x$. Summation symbol will be heavily used throughout this article. For the convenience of reading, in each summand, we will use the subscripts $i  (1\leq i\leq n)$ for each data point, $j (0\leq j\leq m)$ for each machine, $k (1\leq k\leq K)$ for each quantile level and $l (1\leq l\leq p)$ for each entry of a vector, respectively. Lastly, the generic constants are assumed to be independent of $m,n,$ and $p$.


\section{Proposed Methods}\label{sec:model}

{In this section, we will firstly introduce the construction  of our VRMOM estimator. Then we provide theoretical guarantees for it.} 

\subsection{Variance Reduced Median-of-Means Estimator}
\label{sec:vrmom-estimator}

To motivate our estimator, let us provide a brief review of the standard MOM estimator. Let $X_1,...,X_N$ be \emph{i.i.d.} {copies of $X$} with $\mbE(X)=\mu$ and $\var(X)=\sigma^2$. For the ease of presentation, we assume that $N$ observations are evenly partitioned into $(m+1)$-batches $\{\mcH_0,\dots\mcH_m\}$, where each $\mcH_j$ denotes the indices of the samples within the $j$-th batch. Let $n=N/(m+1)$ be the sample size of each batch and $\bar{X}_j=\sum_{i\in \mcH_j}X_i/n$ be the sample mean of the observations in the $j$-th batch. To estimate the population mean $\mu$, the MOM estimator is defined as
	\begin{equation}	\label{eq:mom_estimator}
\hat{\mu}=\med(\bar{X}_0,...,\bar{X}_{m}),
\end{equation}
where $\med(\cdot)$ denotes the sample median. The MOM estimator is computationally efficient and robust against Byzantine failures. Moreover, as shown in \cite{minsker_strawn.2019},  when $m \rightarrow \infty$ and $m=o(\sqrt{N})$,  under some mild moment conditions (e.g., $\mbE|X-\mu|^3<\infty$), the MOM estimator admits the following limiting distribution as $N \rightarrow \infty$,
\begin{equation*}
	\sqrt{N}(\hat{\mu}-\mu)\xrightarrow{d}\mcN(0,\frac{\pi}{2}\sigma^2).
\end{equation*}
In addition to the robustness, the statistical efficiency is another important issue.   In a classical statistical estimation setting without Byzantine failures, we can see the relative efficiency of $\hat{\mu}$ with respect to the vanilla sample mean is $(\sigma^2)/(\frac{\pi}{2}\sigma^2) =  2/\pi\approx0.637$, which is far from the optimal efficiency 1. Therefore, a natural question is:

\bigskip
\addtolength{\fboxsep}{5pt}
\begin{boxedminipage}{.9\textwidth}
\emph{Is it possible to construct a computationally efficient robust estimator that achieves a nearly-optimal efficiency?}
\end{boxedminipage}
\bigskip

\paragraph{The key idea behind our VRMOM estimator}
To address this challenge, we first note that by the central limit theorem, for each sample mean $\bar{X}_j$, $\sqrt{n}\bar{X}_j$ asymptotically obeys the normal distribution $\mcN(\mu,\sigma^2)$. Moreover, for the ease of notation, for a fixed $n$, we define 
\begin{equation}	\label{eq:uni_model}
 \bar{X} = \mu+ \epsilon, \qquad \epsilon \sim \mcN(0,\sigma^2/n),
\end{equation}
where $\mu$ and $\sigma$ are unknown.
{Note that for every quantile level $\tau$, the $\tau$-th population quantile of the normal distribution $\mcN(\mu,\sigma^2/n)$ is exactly $\mu^{\tau}:=\mu+\sigma\Phi^{-1}(\tau)/\sqrt{n}$. To see this, 
\[
\mbP(\bar{X} \leq \mu^{\tau}) = \mbP((\bar{X}-\mu)/(\sigma/\sqrt{n}) \leq \Phi^{-1}(\tau))=\mbP(N(0,1)\leq \Phi^{-1}(\tau))=\tau.
\]	
Additionally, by symmetry of normal distribution, we have 
\begin{equation*}
	\frac{1}{2}(\mu^{\tau}+\mu^{1-\tau})=\frac{1}{2}\Big[2\mu+\frac{\sigma}{\sqrt{n}}\left\{\Phi^{-1}(\tau)+\Phi^{-1}(1-\tau)\right\}\Big]=\mu.
\end{equation*}} 
Therefore, to improve the statistical efficiency, a natural idea is to approximate $\mu$ by averaging many pairs of estimators for the $(\tau,1-\tau)$-th quantiles of $\mcN(\mu,\sigma^2/n)$, instead of using a single quantity (i.e., median). More precisely, let $K$ be a pre-fixed integer. For any $1 \leq k \leq K$, let $\tau_k:=k/(K+1)$ and $\bar{\mu}^{\tau_k}$ be the $\tau_k$-th sample quantile of $\{\bar{X}_0,\dots,\bar{X}_m\}$. Since $\tau_{K+1-k}=1- \tau_k$, $\bar{\mu}^{\tau_k}$ and $\bar{\mu}^{\tau_{K+1-k}}$ are symmetrical about $\mu$ and their average is a natural estimator of $\mu$. Based on this idea, we can take weighted average of $\{\bar{\mu}^{\tau_k}\}_{k=1}^K$ as an estimator of $\mu$, which improves the statistical efficiency. We also illustrate the main idea of the weighted averaged estimator in Figure \ref{fig:quantile_wmean}. Next, we introduce a computationally more efficient  estimator for  implementing this idea. Moreover, since it is a closed-form estimator, which also facilitates the theoretical analysis.

\begin{figure}[!t]
	\begin{center}
		\includegraphics[width=0.8\textwidth]{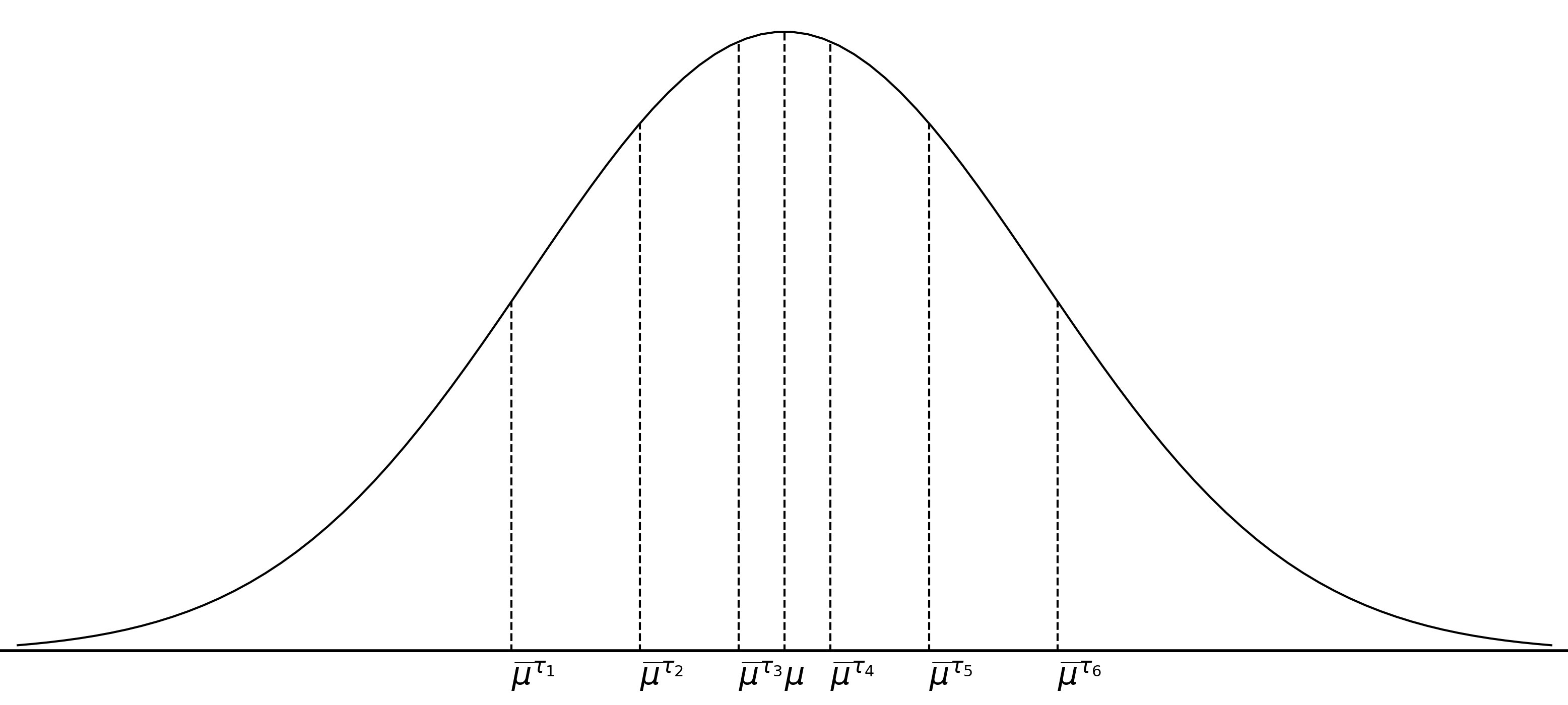}
	\end{center}
	\caption{Let $K=6$. For $1\leq k\leq K$,  $\bar{\mu}^{\tau_k}$ is defined as the $k/7$-th sample quantile of $\{\bar{X}_0,\dots,\bar{X}_m\}$. We can see that the pairs $(\bar{\mu}^{\tau_1},\bar{\mu}^{\tau_6}), (\bar{\mu}^{\tau_2},\bar{\mu}^{\tau_{5}}), (\bar{\mu}^{\tau_3},\bar{\mu}^{\tau_{4}})$ are nearly symmetrical about the targeting parameter $\mu$. Thus the weighted average of $\{\bar{\mu}^{\tau_k}\}_{k=1}^K$ serves as an estimator of $\mu$.}
	\label{fig:quantile_wmean}
\end{figure}

\paragraph{Computationally efficient VRMOM estimator}
Denote the quantile loss function as $\rho_{\tau}(z)=z(\tau-\mbI(z\leq0))$, we consider the following stochastic optimization problem:
\begin{equation}	\label{eq:compo_model}
	\argmin{x\in\mbR}\left[\mbE\left\{G(\bar{X},x)\right\}\right]:=
	\argmin{x\in\mbR}\left[\mbE\left\{\sum_{k=1}^K\rho_{\tau_k}\left(\bar{X}-\frac{\sigma\Delta_k}{\sqrt{n}}-x\right)\right\}\right],
\end{equation}
where $\Delta_k:=\Phi^{-1}(\tau_k)$, and the expectation is taken over $\bar{X}$ in \eqref{eq:uni_model}. We can easily see that $\mu$ is the solution of \eqref{eq:compo_model}. To approximate $\mu$ from \eqref{eq:compo_model}, we adopt the idea of one-step estimator as follows. Define 
\begin{align*}
	g(x):=&\frac{\diff}{\diff x}\mbE\left\{G(\bar{X},x)\right\}=\mbE\left\{\sum_{k=1}^K\mbI\left(\bar{X}\leq x+\frac{\sigma\Delta_k}{\sqrt{n}}\right)-\tau_k\right\},\\
	H(x):=&\frac{\diff}{\diff x}g(x)=\sum_{k=1}^K\mfp\left(x-\mu+\frac{\sigma\Delta_k}{\sqrt{n}}\right),
\end{align*}
as the gradient and Hessian of the loss function in \eqref{eq:compo_model} respectively. Here, $\mfp(x)=\sqrt{n}\psi(\sqrt{n}x/\sigma)/\sigma$ denotes the probability density function of the noise $\epsilon \sim N(0, \sigma^2/n)$ in \eqref{eq:uni_model}, and $\psi(\cdot)$ denotes the density function of $\mcN(0,1)$. Given an initial crude estimator $\mu_0$ of $\mu$, the one-step estimator essentially takes the following Newton-Raphson step:
\begin{equation}	\label{eq:newton_iteration}
	\tilde{\mu}_1:=\mu_0-g(\mu_0)/H(\mu_0)=\mu_0-\frac{\mbE\{\sum_{k=1}^K\mbI\left(\bar{X}\leq \mu_0+\sigma\Delta_k/\sqrt{n}\right)-\tau_k\}}{\sum_{k=1}^K\mfp(\mu_0-\mu+\sigma\Delta_k/\sqrt{n})}.
\end{equation}
Next, we use the MOM estimator $\hat{\mu}$ in \eqref{eq:mom_estimator} as the initial estimator of \eqref{eq:compo_model}. The unknown parameter $\sigma^2$ can be estimated by $\hat{\sigma}^2:=\sum_{i\in\mcH_0}(X_i-\bar{X}_0)^2/n$, the sample variance of the first batch of observations $\mcH_0$. With the initial estimator $\mu_0$ in place, replacing $g(\mu_0)$ in \eqref{eq:newton_iteration} with its empirical counterpart and approximating $H(\mu_0)$ by $\sum_{k=1}^K\mfp(\hat{\sigma}\Delta_k/\sqrt{n})\approx\sqrt{n}\sum_{k=1}^K\psi(\Delta_k)/\hat{\sigma}$, we derive the following one-step estimator of \eqref{eq:compo_model} from \eqref{eq:newton_iteration}
\begin{equation}	\label{eq:vrmom_def}
	\bar{\mu}=\hat{\mu}-\frac{\hat{\sigma}}{(m+1)\sqrt{n}\sum_{k=1}^K\psi(\Delta_k)}\sum_{j=0}^m\sum_{k=1}^K\left\{\mbI\left(\bar{X}_j\leq \hat{\mu}+\frac{\hat{\sigma}\Delta_k}{\sqrt{n}}\right)-\frac{k}{K+1}\right\}.
\end{equation}
To further alleviate the burden of computation, we choose one summand in \eqref{eq:vrmom_def} and simplify it as follows
\begin{eqnarray*}
		&&\sum_{k=1}^K\left\{\mbI\left(\bar{X}_j\leq\hat{\mu}+\frac{\hat{\sigma}\Delta_k}{\sqrt{n}}\right)-\frac{k}{K+1}\right\}\cr
		&=&\sum_{k=1}^K\mbI\left(\bar{X}_j\leq\hat{\mu}+\frac{\hat{\sigma}\Delta_k}{\sqrt{n}}\right)-\frac{K}{2}\cr
		&=&\card\left\{k\;:\;\frac{\sqrt{n}(\bar{X}_j-\hat{\mu})}{\hat{\sigma}}\leq\Delta_k,\; 1\leq k\leq K \right\}-\frac{K}{2}\cr
		&=&\card\left\{k\;:\;\Phi\left(\frac{\sqrt{n}(\bar{X}_j-\hat{\mu})}{\hat{\sigma}}\right)\leq \frac{k}{K+1},\; 1\leq k\leq K \right\}-\frac{K}{2}\cr
		&=&\frac{K}{2}+1-\left\lceil(K+1)\Phi\left(\frac{\sqrt{n}(\bar{X}_j-\hat{\mu})}{\hat{\sigma}}\right)\right\rceil.
	\end{eqnarray*}
In fact, our theoretical result will show that a larger $K$ leads to a better statistical efficiency. This derivation shows that it is possible to enhance the efficiency by taking a larger $K$ without incurring additional computational cost. Our VRMOM estimator in \eqref{eq:vrmom_def} can be rewritten as
	\begin{equation}	\label{eq:vrmom_def2}
		\bar{\mu}=\hat{\mu}-\frac{\hat{\sigma}}{(m+1)\sqrt{n}\sum_{k=1}^K\psi(\Delta_k)}\sum_{j=0}^{m}\left\{\frac{K}{2}+1-\left\lceil(K+1)\Phi\left(\frac{\sqrt{n}(\bar{X}_j-\hat{\mu})}{\hat{\sigma}}\right)\right\rceil\right\}.
	\end{equation} 

Although the expression of VRMOM $\bar{\mu}$ in \eqref{eq:vrmom_def2} seems more complicated than the MOM estimator $\hat{\mu}$,  the time complexity of $\bar{\mu}$ is at the same order as the MOM estimator. In particular, at each iteration, each worker machine computes a local sample mean in parallel, which takes $O(n)$ time complexity. In MOM estimator, it takes another $O(m)$ operations to find the median $\hat{\mu}$ (see \cite{paterson_1997}). While in VRMOM estimator, we only need an extra $O(m+n+K)$ time complexity ($O(n)$ for sample variance computed in $\mcH_0$, $O(m+K)$ for the variance reduction term in \eqref{eq:vrmom_def2}). Therefore the time complexity of both methods is $O(m+n)$ (here $K$ is fixed). While keeping the same order of computational complexity, the VRMOM greatly improves the statistical efficiency.  As we can see from Theorem \ref{thm:normality_vrmom} in the following section, the asymptotic efficiency of $\bar{\mu}$ approaches $3/\pi\approx0.955$ as $K$ grows to infinity, which is nearly optimal. In fact, by taking $K=5$, the efficiency has already been more than $0.9$ as compared to $0.637$ of $\hat{\mu}$. 

\begin{remark}
	As illustrated in Figure \ref{fig:quantile_wmean}, we can also find the $\tau_k$-th sample quantile $\bar{\mu}^{\tau_k}$ and take a weighted average of $\{\bar{\mu}^{\tau_k}\}_{k=1}^K$ as an estimator of $\mu$.  However, to give the sample quantiles at $K$ different levels, we need to perform a sorting algorithm among the set $\{\bar{X}_j\}_{j=0}^m$, which takes $O(m\log m)$ operations. Therefore the total complexity would be $O(n+m\log m)$, which it is more costly compared with $O(m+n)$ complexity of our VRMOM estimator in \eqref{eq:vrmom_def2}. The inferior in complexity is exacerbated in multivariate case. When the dimension $p$ is very large, our coordinate-wise VRMOM estimator has complexity $O(p(m+n))$, while the average of sample quantiles would take complexity $O(p(m\log m + n))$. 
\end{remark}

\begin{remark}
	Although we utilize averaging sample quantiles to motivate our method, the direct average among sample quantiles cannot tolerate even a small fraction of Byzantine machines. For example, we assume $K=6$, and there are $1/6$ of machines are Byzantine. In this case, the $1/7$-th and $6/7$-th sample quantiles can be completely ruined by the Byzantine machines, and further foil the weighted average. In contrast, our VRMOM estimator tolerates $1/2-\delta$ (where $\delta\in(0,1/2)$ can be arbitrarily small) fraction of Byzantine machines, which is better than the direct weighted average of sample quantiles (see Theorem \ref{thm:concentration_vrmom_byzantine} for more details). To see this in a more intuitive way, we can take a closer look at  \eqref{eq:vrmom_def}. Each summand of the correction term is bounded in the interval $[-1,1]$. Noticing that there is a factor of order $O(1/(m\sqrt{n}))$ multiplying the summation, the overall magnitude of the correction term is only of the order $O(1/\sqrt{n})$. In consequence, as long as the initial estimator (e.g., the MOM estimator) is robust, our proposed VRMOM estimator is Byzantine robust.
\end{remark}

\begin{remark}
	To approximate $\mu$ from \eqref{eq:compo_model}, we can also directly solve the following optimization problem
	\begin{equation*}
		\argmin{x\in\mbR}\left[\sum_{k=1}^K\sum_{j=0}^m\rho_{\tau_k}\left(\bar{X}_j-\frac{\hat{\sigma}\Delta_k}{\sqrt{n}}-x\right)\right],
	\end{equation*}
	which is the empirical version of \eqref{eq:compo_model}. This formula is similar as the univariate composite quantile regression in \cite{zou_yuan.2008}. However, it is much costly to solve such non-smooth optimization problem in a direct way. Instead, we leverage the idea of Newton-Raphson step, which greatly improves computation efficiency.
\end{remark}


\subsection{Theories for VRMOM Estimator}
\label{sec:theory:VRMOM}
{Now, we present several theoretical results for the proposed VRMOM estimator. We firstly provide the asymptotic normality and convergence result of the VRMOM estimator in one-dimensional case. Then we extend these results to its multi-dimensional variant. The proofs of results in this section are all relegated in Appendix \ref{sec:proof}.}

\begin{theorem}[Asymptotic normality of VRMOM]	\label{thm:normality_vrmom}
	Let $N=(m+1)n$ i.i.d. random variables $X_1,...,X_{N}$ be evenly distributed in $m+1$ subsets $\mcH_0,...,\mcH_m$. There is a subset of Byzantine machine indices $\mcB\subseteq\{1,\dots,m\}$ with $\card(\mcB)=\lfloor\alpha_nm\rfloor$, where  $\alpha_n=o(m^{-1/2})$. Let
	\begin{equation}\label{eq:Byzantine}
		\bar{X}_j=
		\begin{cases}
			 \frac{1}{n}\sum_{i\in\mcH_j}X_i\quad&j\notin\mcB,\\
			 \quad*\quad&j\in\mcB,
		\end{cases}
	\end{equation}
	and $\bar{\mu}$ be defined as in \eqref{eq:vrmom_def}. Suppose $X$ satisfies $\mbE(X)=\mu,\var(X)=\sigma^2$. Assume there exists some $\kappa>0$ such that $\mbE[|X-\mu|^{2+\kappa}]<\infty$, and $m=o(\min\{n,n^{2\kappa/(2+\kappa)}\}),\log^3 n=o(m)$. Then we have
	\begin{equation*}
		\sqrt{N}(\bar{\mu}-\mu)\xrightarrow{d}\mcN(0,\sigma^2_K),
	\end{equation*}
	where 
	\begin{equation}	\label{eq:vrmom_variance}
		\sigma_K^2=\frac{\sum_{k_1,k_2=1}^K\min(\tau_{k_1},\tau_{k_2})\{1-\max(\tau_{k_1},\tau_{k_2})\}}{\{\sum_{k=1}^K\psi(\Delta_k)\}^2}\sigma^2,
	\end{equation}
	with $\tau_k=k/(K+1)$. Moreover, $\lim_{K\rightarrow\infty}\sigma_K^2=\pi\sigma^2/3$.
\end{theorem}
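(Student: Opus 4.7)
The strategy is a one-step Newton expansion that exploits the Newton-Raphson structure of \eqref{eq:vrmom_def} to cancel the leading randomness of the initial MOM $\hat\mu$. Writing $C_K=\sum_{k=1}^{K}\psi(\Delta_k)$ and $T_j(x,s)=\sum_{k=1}^{K}[\mbI(\bar X_j\le x+s\Delta_k/\sqrt n)-\tau_k]$, the identity
\[
\sqrt{N}(\bar\mu-\mu)=\sqrt{N}(\hat\mu-\mu)-\frac{\hat\sigma}{\sqrt{m+1}\,C_K}\sum_{j=0}^{m} T_j(\hat\mu,\hat\sigma)
\]
follows from $\sqrt{N}/((m+1)\sqrt n)=1/\sqrt{m+1}$. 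My two starting ingredients are (i) the $\sqrt N$-consistency of the MOM under Byzantine contamination with $\alpha_n=o(m^{-1/2})$, yielding $\sqrt{N}(\hat\mu-\mu)=O_p(1)$, equivalently $U_n:=\sqrt n(\hat\mu-\mu)/\sigma=O_p((m+1)^{-1/2})$; and (ii) $V_n:=\hat\sigma/\sigma=1+O_p(n^{-1/2})$, since $\hat\sigma^2$ is an ordinary sample variance on the uncorrupted batch $\mcH_0$.

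The core step is linearizing $\sum_j T_j(\hat\mu,\hat\sigma)$. For each honest $j\notin\mcB$, conditioning on $(\hat\mu,\hat\sigma)$ and applying a (non-uniform) Berry-Esseen bound for $\bar X_j$ under $(2+\kappa)$-th moments yields
\[
\mbE\bigl[T_j(\hat\mu,\hat\sigma)\mid\hat\mu,\hat\sigma\bigr]=\sum_{k=1}^K\bigl\{\Phi(U_n+V_n\Delta_k)-\tau_k\bigr\}+O(n^{-\kappa/2}).
\]
Taylor-expanding $\Phi$ around $\Delta_k$ and exploiting the symmetry $\sum_k\psi(\Delta_k)\Delta_k=0$ eliminates the $V_n$ perturbation to first order, leaving $C_K U_n+O(U_n^2+(V_n-1)^2+n^{-\kappa/2})$. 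Aggregated over honest machines and divided by $\sqrt{m+1}$, the linear part becomes $\sqrt{m+1}\,C_K U_n=(C_K/\sigma)\sqrt{N}(\hat\mu-\mu)$; multiplied by the prefactor $-\hat\sigma/C_K$ it exactly cancels the $\sqrt{N}(\hat\mu-\mu)$ outside---this is the Newton-Raphson magic. The conditions $m=o(n)$ and $m=o(n^{2\kappa/(2+\kappa)})$ are exactly what is needed to push the aggregated $O(U_n^2)$, $O((V_n-1)^2)$, and Berry-Esseen remainders into $o_p(1)$ after the $\sqrt{m+1}$-scaling.

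What remains is $-\frac{\sigma}{\sqrt{m+1}C_K}\sum_{j\notin\mcB}\{T_j(\mu,\sigma)-\mbE T_j(\mu,\sigma)\}$ plus an empirical-process fluctuation $(m+1)^{-1/2}\sum_{j\notin\mcB}\{T_j(\hat\mu,\hat\sigma)-T_j(\mu,\sigma)-\mbE[T_j(\hat\mu,\hat\sigma)-T_j(\mu,\sigma)\mid\hat\mu,\hat\sigma]\}$, which I would dominate by a chaining/maximal inequality for the VC class of half-spaces of shrinking radius $|U_n|=O_p((m+1)^{-1/2})$, using $\log^3 n=o(m)$ to obtain Bernstein-type concentration. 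The Byzantine summands are deterministically bounded by $K\lfloor\alpha_n m\rfloor/(\sqrt{m+1}\,C_K)=O(\alpha_n\sqrt m)=o(1)$, precisely matching $\alpha_n=o(m^{-1/2})$. A Lindeberg CLT for the bounded honest summands then gives the $\mcN(0,\sigma_K^2)$ limit, with the variance read off from $\cov\bigl(\mbI(Z\le\Delta_{k_1}),\mbI(Z\le\Delta_{k_2})\bigr)=\min(\tau_{k_1},\tau_{k_2})(1-\max(\tau_{k_1},\tau_{k_2}))$ for $Z\sim\mcN(0,1)$. The claim $\sigma_K^2\to\pi\sigma^2/3$ is a Riemann-sum calculation: the numerator of \eqref{eq:vrmom_variance} behaves like $(K+1)^2\int_0^1\!\int_0^1\min(u,v)(1-\max(u,v))\,du\,dv=(K+1)^2/12$, while $\sum_k\psi(\Delta_k)\sim(K+1)\int_0^1\psi(\Phi^{-1}(u))\,du=(K+1)/(2\sqrt\pi)$, and the ratio is $\pi/3$.

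\emph{Main obstacle.} The hardest ingredient is the empirical-process step: justifying that the discontinuous statistic $\sum_j T_j$ linearizes with a $o_p(\sqrt{m+1})$ remainder when plugged in at the data-dependent pair $(\hat\mu,\hat\sigma)$ built from the very same $\{\bar X_j\}$, and doing so under only $(2+\kappa)$-th moments. The dependence between $\hat\mu$ and each $\bar X_j$ is benign because the leave-one-out influence of $\bar X_j$ on the MOM is $O((m\sqrt n)^{-1})$, but turning this intuition into a tight bound, jointly over the $K$ quantile levels and the shrinking neighborhood of $(\mu,\sigma)$, is where the constraint $m=o(n^{2\kappa/(2+\kappa)})$ becomes binding, trading the per-machine Berry-Esseen bias $O(n^{-\kappa/2})$ against the aggregation factor $\sqrt m$.
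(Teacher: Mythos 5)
Your proposal is correct and follows essentially the same route as the paper's proof: a Newton--Raphson cancellation of the leading $\sqrt{N}(\hat\mu-\mu)$ term via the Berry--Esseen/Taylor expansion of the conditional mean, a uniform empirical-process localization (the paper's Lemma~\ref{lemma:empirical}, a discretization-net version of your chaining step) to replace the data-dependent evaluation point by $(\mu,\sigma)$, a deterministic $O(\alpha_n\sqrt m)$ bound on the Byzantine summands, a CLT for the remaining i.i.d.\ bounded sum, and the same Riemann-sum computation for $\pi/3$. The only cosmetic difference is that you kill the $\hat\sigma$-perturbation by the symmetry $\sum_k\psi(\Delta_k)\Delta_k=0$, whereas the paper simply absorbs it using the rate $|\hat\sigma-\sigma|=o_{\mbP}(n^{-\kappa_2/(2+\kappa_2)})$.
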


{ This theorem provides the asymptotic normality result of our VRMOM estimator $\bar{\mu}$ and characterizes the asymptotic variance.}
In particular, it shows that $\bar{\mu}$ is a consistent estimator of $\mu$. Comparing with the MOM estimator, we \emph{improve the efficiency} of the estimator by reducing the variance from $\pi\sigma^2/2$ \citep{minsker_strawn.2019} to $\pi\sigma^2/3$ when $K$ goes to infinity. It should be noted that, we impose the rate constraints $\alpha_n=o(m^{-1/2}), m=o(\min\{n,n^{2\kappa/(2+\kappa)}\})$, and $\log^3 n=o(m)$ in order to obtain asymptotic normality. In the following theorem, we drop out these conditions and investigate the convergence rate of the VRMOM estimator.


\begin{theorem}[Convergence rate of VRMOM]	\label{thm:concentration_vrmom_byzantine}
	Let $N=(m+1)n$ i.i.d. random variables $X_1,...,X_{N}$ be evenly distributed in $m+1$ subsets $\mcH_0,...,\mcH_m$. There is a subset of Byzantine machine indices $\mcB\subseteq\{1,\dots,m\}$ with $\card(\mcB)=\lfloor\alpha_nm\rfloor$, where  $\alpha_n\leq1/2-\delta$ for some fixed $\delta\in(0,1/2)$. Let
	\begin{equation*}
		\bar{X}_j=
		\begin{cases}
			 \frac{1}{n}\sum_{i\in\mcH_j}X_i\quad&j\notin\mcB,\\
			 \quad*\quad&j\in\mcB,
		\end{cases}
	\end{equation*}
	and $\bar{\mu}$ be defined as in \eqref{eq:vrmom_def}. Suppose $X$ satisfies $\mbE(X)=\mu,\text{Var}(X)=\sigma^2$. Assume there exists some $\kappa>0$ such that $\mbE[|X-\mu|^{2+\kappa}]<\infty$. Then  we have
	\begin{equation}	\label{concentration_mmm_byzantine.ineq}
		|\,\bar{\mu}-\mu|= O_{\mbP}\left(\frac{\alpha_n}{\sqrt{n}}+\frac{1}{\sqrt{mn}}+\frac{1}{n^{(3\kappa_2+2)/(2\kappa_2+4)}}+\frac{\log^{3/4}n}{n^{1/2}m^{3/4}}\right),
	\end{equation}
	where $\kappa_2=\min(\kappa,2)$.
\end{theorem}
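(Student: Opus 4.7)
The estimator $\bar\mu$ is a one-step Newton correction to the MOM initializer, so I would decompose
\[
\bar\mu-\mu=(\hat\mu-\mu)-\frac{\hat\sigma}{(m+1)\sqrt{n}\,h}\sum_{j=0}^{m}V_j,\qquad V_j:=\sum_{k=1}^{K}\bigl[\mbI(\bar X_j\leq\hat\mu+\hat\sigma\Delta_k/\sqrt{n})-\tau_k\bigr],
\]
with $h=\sum_k\psi(\Delta_k)$. Since $|V_j|\leq K/2$ deterministically, isolating the Byzantine machines $j\in\mcB$ immediately produces the $O(\alpha_n/\sqrt{n})$ term in the claim. Two preliminary rates are then needed throughout: (i) the classical MOM bound $|\hat\mu-\mu|=O_P(\alpha_n/\sqrt{n}+1/\sqrt{mn})$ under the moment assumption and $\alpha_n\leq 1/2-\delta$, which follows from the arguments in \cite{minsker.2015,minsker_strawn.2019}; and (ii) for the variance estimator $\hat\sigma^2=n^{-1}\sum_{i\in\mcH_0}(X_i-\bar X_0)^2$ computed on the trusted master batch, applying a Marcinkiewicz--Zygmund inequality to the $(1+\kappa_2/2)$-integrable summands $(X_i-\mu)^2-\sigma^2$ gives $|\hat\sigma-\sigma|=O_P(n^{-\kappa_2/(\kappa_2+2)})$.

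For the clean-machine contribution, set $W_j=\sqrt{n}(\bar X_j-\mu)/\sigma$ with CDF $G_n$ and note that for $j\in\mcB^c$, $\mbI(\bar X_j\leq\hat\mu+\hat\sigma\Delta_k/\sqrt{n})=\mbI(W_j\leq(\Delta_k-a)/b)$ where $a=\sqrt{n}(\mu-\hat\mu)/\hat\sigma$ and $b=\sigma/\hat\sigma$. The conditional mean $\mbE[V_j\mid\hat\mu,\hat\sigma]=\sum_k[G_n((\Delta_k-a)/b)-\tau_k]$ is analyzed by writing $G_n=\Phi+r_n$, where $\sup_t|r_n(t)|=O(n^{-\kappa_2/2})$ by Berry--Esseen under the $(2+\kappa_2)$-moment condition, and Taylor-expanding the $\Phi$ part in $(a,b-1)$. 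The key algebraic observation is the symmetry $\Delta_{K+1-k}=-\Delta_k$: summed over $k$, every \emph{pure} $(b-1)^p$ coefficient vanishes because $\sum_k\psi^{(p-1)}(\Delta_k)\Delta_k^p$ is an odd function of $\Delta_k$, and $\sum_k\psi(\Delta_k)\Delta_k=0$ eliminates the linear piece in $b-1$. The only surviving contributions up to the relevant order are a linear term $-ah$ (which, once multiplied by $-\hat\sigma/(\sqrt{n}h)$, exactly cancels $\hat\mu-\mu$), a cross term proportional to $a(b-1)$, a cubic in $a$ (negligible), and the Berry--Esseen residual $O(n^{-(1+\kappa_2)/2})$. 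Substituting the preliminary rates, the cross term is bounded by $|\hat\mu-\mu|\,|\hat\sigma-\sigma|\lesssim n^{-(3\kappa_2+2)/(2\kappa_2+4)}$, producing the third term in the claim, while the Berry--Esseen residual is absorbed. For the centered fluctuation of $V_j$ around its conditional mean the variance is $O(\sqrt{n}|\hat\mu-\mu|+|\hat\sigma-\sigma|+n^{-\kappa_2/2})$; at $(\mu,\sigma)$ Bernstein yields the $1/\sqrt{mn}$ term, while for the uniform version over the data-dependent localization $\mcN=\{(x,s):|x-\mu|\lesssim\alpha_n/\sqrt{n}+1/\sqrt{mn},\;|s-\sigma|\lesssim n^{-\kappa_2/(\kappa_2+2)}\}$ I would use a chaining maximal inequality for the VC class $\{\sum_k\mbI(\cdot\leq x+s\Delta_k/\sqrt{n})\}_{(x,s)\in\mcN}$ (VC dimension $\leq 2K$). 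With $L^2$-radius of order $m^{-1/4}$ over this localization, chaining yields the remaining $\log^{3/4}n/(n^{1/2}m^{3/4})$ term after dividing by $\sqrt{n}$.

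The main obstacle, as I see it, is the uniform empirical-process bound: because the localization $\mcN$ is random and shrinking with the preliminary rates from Step~1, one must combine a peeling argument with the Berry--Esseen estimate on $G_n$ to obtain the correct $m^{-3/4}$ scaling and the precise $\log^{3/4}n$ factor (rather than $\log^{1/2}n$ from a naive VC bound). A secondary subtle point is carrying the symmetry-based cancellation in the Taylor expansion to enough orders that no dominant pure-$(\hat\sigma-\sigma)^p$ term survives: if any such term slipped through, the claimed exponent $(3\kappa_2+2)/(2\kappa_2+4)$ would degrade, so the antisymmetry calculation $\sum_k\psi^{(p-1)}(\Delta_k)\Delta_k^p=0$ for all $p\geq1$ is the delicate algebraic heart of the argument.
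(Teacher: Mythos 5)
Your plan is correct and follows essentially the same route as the paper's proof: the same decomposition of $\bar\mu-\mu$ into the Byzantine contribution ($O(\alpha_n/\sqrt n)$ from boundedness of $V_j$), the same preliminary rates for $\hat\mu$ (Lemma \ref{lemma:concen_mom_byzantine}) and $\hat\sigma$ (Marcinkiewicz--Zygmund), the same Berry--Esseen-plus-Taylor treatment of the conditional mean whose linear-in-$(\hat\mu-\mu)$ piece cancels the initializer, the same localized empirical-process bound (Lemma \ref{lemma:empirical}, with $\delta_n\asymp\alpha_n+\sqrt{\log n/m}+n^{-\kappa_2/(\kappa_2+2)}$) producing the $\log^{3/4}n/(n^{1/2}m^{3/4})$ term, and a CLT for the remaining i.i.d. sum giving $1/\sqrt{mn}$. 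The one point where you diverge is immaterial: the antisymmetry identities $\sum_k\psi^{(p-1)}(\Delta_k)\Delta_k^p=0$ are true but not the "heart" of the argument --- the paper never invokes them, since the linear $(\hat\sigma-\sigma)$ term contributes $O(n^{-1/2}|\hat\sigma-\sigma|)=O(n^{-(3\kappa_2+2)/(2\kappa_2+4)})$, which is exactly the third term of the claimed rate and is unavoidable anyway because of the $\hat\sigma$-versus-$\sigma$ mismatch in the Newton denominator, so the exponent would not degrade even without the cancellation.
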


{This convergence result shows that our VRMOM estimator is consistent as long as $\alpha_n$ is strictly smaller than $1/2$.} The condition on $\alpha_n$ is also necessary because clearly the sample median can be ruined when there are more than $\lceil m/2\rceil$ corruptions. From \eqref{concentration_mmm_byzantine.ineq}, when $m=O(\min\{n,n^{2\kappa/(2+\kappa)}\}), \log^3 n=O(m)$, the rate matches the optimal rate $O(\alpha_n/\sqrt{n}+1/\sqrt{mn})$ (See Observation 1 in \cite{yin_etal.2018}). Further assume that $\alpha_n=O(1/\sqrt{m})$, the VRMOM achieves square root-$N$ consistency.

{Next we extend our VRMOM estimator to  the multi-dimensional extension setting. Let $\vect{X}_1,\dots,\vect{X}_N$ be i.i.d. copies of the $p$-dimensional random vectors $\vect{X}=(X^{(1)},\dots,X^{(p)})^{\tp}$ with $\mbE(\vect{X})=\vect{\mu}=(\mu^{(1)},\dots,\mu^{(p)})^{\tp}$ and $\cov(\vect{X})=\vect{\Sigma}=(\sigma_{l_1,l_2})_{l_1,l_2=1}^p$. Then the multi-dimensional VRMOM estimator $\bar{\vect{\mu}}$ is defined by applying \eqref{eq:vrmom_def2} on each coordinate $l$, where $1\leq l\leq p$. We first obtain the convergence rate of the multi-dimensional VRMOM estimator in terms of $\ell_2$-norm.}

{
\begin{theorem}[Convergence rate of multi-dimensional VRMOM]	\label{thm:concen_multi_vrmom}
	Let $N=(m+1)n$ i.i.d. random vectors $\vect{X}_1,...,\vect{X}_{N}$ be evenly distributed in $m+1$ subsets $\mcH_0,...,\mcH_m$. There is a subset of Byzantine machine indices $\mcB\subseteq\{1,\dots,m\}$ with $\card(\mcB)=\lfloor\alpha_nm\rfloor$, where  $\alpha_n\leq1/2-\delta$ for some fixed $\delta\in(0,1/2)$. 
	Let $\bar{\vect{\mu}}$ be the multi-dimensional VRMOM estimator defined in \eqref{eq:vrmom_def2}. Suppose $\vect{X}$ satisfies $\mbE(\vect{X})=\vect{\mu},\cov(\vect{X})=\vect{\Sigma}$. Moreover, for each coordinate $l\in\{1,\dots,p\}$, we assume there exists some $\kappa>0$ such that $\mbE[|X^{(l)}-\mu^{(l)}|^{2+\kappa}]<\infty$. Then we have
	\begin{equation}	\label{concen_multi_vrmom.ineq}
		|\,\bar{\vect{\mu}}-\vect{\mu}|_2= O_{\mbP}\left(\frac{\alpha_n\sqrt{p}}{\sqrt{n}}+\sqrt{\frac{p}{mn}}+\frac{\sqrt{p}}{n^{(3\kappa_2+2)/(2\kappa_2+4)}}+\frac{p^{1/2}\log^{3/4}n}{n^{1/2}m^{3/4}}\right),
	\end{equation}
	where $\kappa_2=\min(\kappa, 2)$.
\end{theorem}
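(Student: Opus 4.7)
The plan is to reduce the multivariate statement to a coordinate-wise application of Theorem~\ref{thm:concentration_vrmom_byzantine} and then aggregate the squared errors across the $p$ coordinates. By the definition of $\bar{\vect{\mu}}$, which applies \eqref{eq:vrmom_def2} coordinate-wise, each component $\bar{\mu}^{(l)}$ is precisely the one-dimensional VRMOM estimator constructed from the scalar samples $\{X_i^{(l)}\}_{i=1}^{N}$, using the same Byzantine index set $\mcB$ and the same partition into $\mcH_0,\dots,\mcH_m$. Under the moment assumption $\mbE|X^{(l)}-\mu^{(l)}|^{2+\kappa}<\infty$ imposed for every $l$, Theorem~\ref{thm:concentration_vrmom_byzantine} applies coordinate-wise and yields, with
\begin{equation*}
r_n:=\frac{\alpha_n}{\sqrt{n}}+\frac{1}{\sqrt{mn}}+\frac{1}{n^{(3\kappa_2+2)/(2\kappa_2+4)}}+\frac{\log^{3/4}n}{n^{1/2}m^{3/4}},
\end{equation*}
the bound $|\bar{\mu}^{(l)}-\mu^{(l)}|=O_{\mbP}(r_n)$ with the implicit constant uniform in $l$.

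To pass from this coordinate bound to the $\ell_2$-norm bound without picking up a spurious $\sqrt{\log p}$ factor, I would upgrade the coordinate conclusion to a second-moment bound by revisiting the proof of Theorem~\ref{thm:concentration_vrmom_byzantine}. Inspection of \eqref{eq:vrmom_def2} shows that the correction $\bar{\mu}^{(l)}-\hat{\mu}^{(l)}$ is a sum of $m+1$ bounded terms (each of magnitude at most $K/2+1$) divided by a factor of order $m\sqrt{n}$; hence the correction is almost surely of order $\hat{\sigma}^{(l)}/\sqrt{n}$ regardless of Byzantine behaviour. Meanwhile the initial MOM estimator $\hat{\mu}^{(l)}$ is sandwiched between two quantiles of the uncorrupted batch means (since $\alpha_n\leq 1/2-\delta$), which supplies an integrable sub-Gaussian-type tail. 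Combining these two observations with the tail decomposition already developed inside the proof of Theorem~\ref{thm:concentration_vrmom_byzantine} gives the per-coordinate second-moment inequality
\begin{equation*}
\mbE\left[\left(\bar{\mu}^{(l)}-\mu^{(l)}\right)^2\right]\leq C\, r_n^2
\end{equation*}
for a constant $C$ depending only on $\delta$, $K$, and $\sup_l \mbE|X^{(l)}-\mu^{(l)}|^{2+\kappa}$, and in particular independent of $p$, $m$, and $n$.

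With the per-coordinate moment bound in hand, the final step is immediate. Linearity of expectation gives
\begin{equation*}
\mbE\left[|\bar{\vect{\mu}}-\vect{\mu}|_2^2\right]=\sum_{l=1}^{p}\mbE\left[\left(\bar{\mu}^{(l)}-\mu^{(l)}\right)^2\right]\leq C p\, r_n^2,
\end{equation*}
and Markov's inequality then yields $|\bar{\vect{\mu}}-\vect{\mu}|_2=O_{\mbP}(\sqrt{p}\, r_n)$, which is exactly \eqref{concen_multi_vrmom.ineq}. The main obstacle is this upgrade from an $O_{\mbP}$ statement to a mean-square bound: a naive union bound over the $p$ coordinates would inject an unwanted $\sqrt{\log p}$ factor, so one must extract from the scalar proof enough integrable tail control — in particular for the remainder terms arising from the Hessian approximation $\sum_k\mfp(\hat{\sigma}\Delta_k/\sqrt{n})\approx\sqrt{n}\sum_k\psi(\Delta_k)/\hat{\sigma}$ and from replacing the population gradient $g(\mu_0)$ in \eqref{eq:newton_iteration} by its empirical counterpart — to justify interchanging the expectation with the rate inequality. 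Once this book-keeping is carried out uniformly in $l$, the coordinate-wise summation argument completes the proof.
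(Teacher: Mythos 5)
Your overall architecture --- apply the scalar result coordinate-wise and aggregate in $\ell_2$ --- is exactly the paper's: its proof of this theorem consists of applying \eqref{eq:mu_bar_concen} to each coordinate and writing $|\bar{\vect{\mu}}-\vect{\mu}|_2=\sqrt{\sum_{l=1}^p|\bar{\mu}_l-\mu_l|^2}$. You are right that this aggregation deserves justification when $p\to\infty$, and your instinct to avoid a naive union bound for the leading stochastic term is sound.

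The weak point is the asserted per-coordinate second-moment bound $\mbE[(\bar{\mu}^{(l)}-\mu^{(l)})^2]\leq C r_n^2$. This does not follow from ``the tail decomposition already developed'' in the scalar proof, and under the stated assumptions it is doubtful. The decomposition \eqref{eq:simplified_diff} contains the remainder \eqref{mmm.term3}, bounded by $K|\hat{\sigma}-\sigma|/(\sqrt{n}\sum_{k}\psi(\Delta_k))$; matching the term $n^{-(3\kappa_2+2)/(2\kappa_2+4)}$ in $r_n$ at the level of second moments would require $\mbE[(\hat{\sigma}-\sigma)^2]=O(n^{-2\kappa_2/(2+\kappa_2)})$. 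The paper controls $\hat{\sigma}-\sigma$ only through Marcinkiewicz--Zygmund, an in-probability statement, and a moment bound of that order on $\hat{\sigma}^2-\sigma^2$ requires fourth moments of $X^{(l)}$, which are not assumed when $\kappa<2$; the elementary route $|\hat{\sigma}-\sigma|^2\leq|\hat{\sigma}^2-\sigma^2|$ only yields $\mbE[(\hat{\sigma}-\sigma)^2]=O(n^{-\kappa_2/(2+\kappa_2)})$, which is too weak by a factor. So the clean inequality $\mbE|\bar{\vect{\mu}}-\vect{\mu}|_2^2\leq Cpr_n^2$ is not available as stated. The repair consistent with how the paper's lemmas are phrased is hybrid: use variance-plus-Chebyshev only for the leading term $\frac{1}{m+1}\sum_{j\notin\mcB}\frac{\sigma_l}{\sqrt{n}\sum_k\psi(\Delta_k)}\sum_k\{G_{j,l}(\Delta_k)-I_{j,l}(\Delta_k)\}$, whose summands are bounded so that summing variances over $l$ gives exactly $O(p/(mn))$ with no logarithmic loss; for all remainder terms, exploit that Lemmas \ref{lemma:berry_esseen}--\ref{lemma:concen_mom_byzantine} deliver failure probabilities $O(n^{-\gamma})$ for arbitrary $\gamma$ at thresholds that do not depend on $\gamma$, so a union bound over the $p$ coordinates is free once $p$ is polynomial in $n$ (as forced by \eqref{eq:multi_vrmom_conv_rate}); and replace the Marcinkiewicz--Zygmund step for $\hat{\sigma}$ by a truncation argument giving polynomial tail control uniformly in $l$. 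With that bookkeeping your coordinate-wise summation goes through.
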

}

{As we can see from the theorem, the convergence rate of multi-dimensional VRMOM estimator is simply the rate in Theorem \ref{thm:concentration_vrmom_byzantine} multiplied with $\sqrt{p}$, which is not surprising because the VRMOM estimator is applied coordinate-wisely. Moreover, to guarantee consistency of the proposed estimator, we require the rate on the right hand side of \eqref{concen_multi_vrmom.ineq} to be the order of $o_{\mbP}(1)$, which implies that
\begin{equation}	\label{eq:multi_vrmom_conv_rate}
	p=o\Big(\min\Big\{\frac{n}{\alpha^2_n},mn,n^{\frac{3\kappa_2+2}{\kappa_2+2}},\frac{nm^{3/2}}{\log^{3/2}n}\Big\}\Big).
\end{equation} 
In particular, we are more interested in the asymptotic normality of the multi-dimensional VRMOM estimator, which will  be presented in the next theorem.}

{By definition we know $\sigma_{l_1,l_2}=\cov\{X^{(l_1)},X^{(l_2)}\}$ is the $(l_1,l_2)$-entry of covariance matrix of $\vect{X}$. Let $(Z_{l_1},Z_{l_2})$ admit the following bivariate normal distribution 
	\begin{equation}	\label{eq:bivariate_normal}
		\mcN\left(\vect{0},\bm{\Sigma}_{l_1,l_2} \right),\quad\text{where}\quad\bm{\Sigma}_{l_1,l_2}=
			\begin{pmatrix}
				1	&	\frac{\sigma_{l_1,l_2}}{\sqrt{\sigma_{l_1,l_1}\sigma_{l_2,l_2}}}	\\
				\frac{\sigma_{l_1,l_2}}{\sqrt{\sigma_{l_1,l_1}\sigma_{l_2,l_2}}}	&	1	
			\end{pmatrix}.
	\end{equation}
	Next we define the $p \times p$ matrix $\vect{\mathcal{C}}$ with its $(l_1,l_2)$-entry given by the following formula:
	\begin{equation}	\label{eq:normal_cov_entry}
		\mathcal{C}_{l_1,l_2}=\frac{\sum_{k_1,k_2=1}^K(\tau^{l_1,l_2}_{k_1,k_2}-\tau_{k_1}\tau_{k_2})}{\{\sum_{k=1}^K\psi(\Delta_k)\}^2}\sqrt{\sigma_{l_1,l_1}\sigma_{l_2,l_2}},
	\end{equation}
	where $\tau_k=k/(K+1)$,  and $\tau^{l_1,l_2}_{k_1,k_2}=\mbP(Z_{l_1}\leq \Delta_{k_1},Z_{l_2} \leq \Delta_{k_2})$. Then we can prove the following asymptotic normality result:
\begin{theorem}[Asymptotic normality of multi-dimensional VRMOM estimator]	\label{cor:VRMOM_limit}
	Under the same assumption as in Theorem \ref{thm:concen_multi_vrmom}, and additionally, we assume the rate constraints $p=o(\min\{\frac{m^{1/2}}{\log^{3/2}n},\frac{n^{2\kappa_2/(\kappa_2+2)}}{m}\})$, and $\alpha_n=o(1/\sqrt{mp})$. Then for any vector $\vect{v}\in\mbR^p$ with $|\vect{v}|_2 = 1$, we have that
	\begin{equation}	\label{eq:multi_vrmom_normality}
		\frac{\sqrt{N}}{\sigma_{\vect{v}}}\left\langle \vect{v},\,\bar{\vect{\mu}}-\vect{\mu}\right\rangle\xrightarrow{d}\mcN(0,1),
	\end{equation}	
	as $n\to\infty$, where $\sigma^2_{\vect{v}}=\vect{v}^{\tp}\vect{\mathcal{C}}\vect{v}$.
\end{theorem}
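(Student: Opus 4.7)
The plan is to reduce the multivariate normality claim to a one-dimensional central limit theorem via the Cramér--Wold device, and then exploit a coordinate-wise Bahadur-type linearization that mirrors the key expansion already developed in Theorem~\ref{thm:normality_vrmom}. Fix a unit vector $\vect{v}\in\mbR^p$ and write
\begin{equation*}
\sqrt{N}\langle\vect{v},\bar{\vect{\mu}}-\vect{\mu}\rangle=\sum_{l=1}^p v_l\sqrt{N}(\bar{\mu}^{(l)}-\mu^{(l)}).
\end{equation*}
Applying the Newton-step decomposition from the univariate proof coordinate by coordinate, and setting $Z_j^{(l)}:=\sqrt{n}(\bar{X}_j^{(l)}-\mu^{(l)})/\sqrt{\sigma_{l,l}}$, I expect each coordinate to admit the representation
\begin{equation*}
\sqrt{N}(\bar{\mu}^{(l)}-\mu^{(l)})=-\frac{\sqrt{\sigma_{l,l}}}{\sqrt{m+1}\,\sum_{k=1}^K\psi(\Delta_k)}\sum_{j\notin\mcB}\sum_{k=1}^K\bigl\{\mbI(Z_j^{(l)}\le\Delta_k)-\tau_k\bigr\}+R_n^{(l)},
\end{equation*}
where $R_n^{(l)}$ collects the effects of (a) replacing $\hat{\mu}^{(l)}$ and $\hat{\sigma}^{(l)}$ by $\mu^{(l)}$ and $\sqrt{\sigma_{l,l}}$ in the Newton step, (b) the finite-sample normal approximation of $\bar{X}_j^{(l)}$, and (c) the Byzantine contributions from $j\in\mcB$.

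Next, I would establish a CLT for the (non-Byzantine) leading term. Define
\begin{equation*}
Y_j:=\sum_{l=1}^p v_l\sqrt{\sigma_{l,l}}\sum_{k=1}^K\bigl\{\mbI(Z_j^{(l)}\le\Delta_k)-\tau_k\bigr\},\qquad j\notin\mcB,
\end{equation*}
so that the leading term equals $-\sum_{j\notin\mcB}Y_j\big/\bigl(\sqrt{m+1}\sum_k\psi(\Delta_k)\bigr)$. Since $(Z_j^{(1)},\dots,Z_j^{(p)})$ is asymptotically Gaussian with the pairwise covariance structure underlying \eqref{eq:bivariate_normal}, a direct computation using $\cov\bigl(\mbI(Z^{(l_1)}\le\Delta_{k_1}),\mbI(Z^{(l_2)}\le\Delta_{k_2})\bigr)=\tau_{k_1,k_2}^{l_1,l_2}-\tau_{k_1}\tau_{k_2}$ yields $\var(Y_j)\to\sigma_{\vect{v}}^2\bigl[\sum_k\psi(\Delta_k)\bigr]^2$, in view of \eqref{eq:normal_cov_entry}. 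Since $|Y_j|\le K\sqrt{p}\,\max_l\sqrt{\sigma_{l,l}}$ by Cauchy--Schwarz, the Lindeberg condition for the standardized sum reduces to $p/m\to 0$, which is guaranteed by the hypothesis. Hence $\sum_{j\notin\mcB}Y_j\big/\bigl(\sqrt{m+1}\,\sigma_{\vect{v}}\sum_k\psi(\Delta_k)\bigr)\xrightarrow{d}\mcN(0,1)$.

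The main obstacle is showing that the aggregated remainder $\sum_{l=1}^p v_l R_n^{(l)}$ is $o_p(1)$, which is precisely where the dimension-dependent rate constraints enter. By Cauchy--Schwarz, it suffices to control $\bigl(\sum_{l=1}^p(R_n^{(l)})^2\bigr)^{1/2}$, and each of the three pieces of $R_n^{(l)}$ must be handled separately and uniformly in $l$. The initial-estimator and variance plug-in piece inherits from the univariate analysis a rate of order $\log^{3/4}n/m^{1/4}$ after multiplication by $\sqrt{N}$, so summing over $p$ coordinates requires $p=o(m^{1/2}/\log^{3/2}n)$; the Berry--Esseen-type finite-sample correction for $\bar{X}_j^{(l)}$ contributes order $\sqrt{m}/n^{\kappa_2/(\kappa_2+2)}$ per coordinate, giving $p=o(n^{2\kappa_2/(\kappa_2+2)}/m)$; and each Byzantine summand contributes at most $O(1/(m\sqrt{n}))$ to $|\bar{\mu}^{(l)}-\mu^{(l)}|$ which, after multiplication by $\sqrt{N}$ and aggregation in $\ell_2$, is bounded by $\alpha_n\sqrt{mp}$, matching $\alpha_n=o(1/\sqrt{mp})$. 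These three requirements are exactly the hypotheses of the theorem, so tight bookkeeping of the three remainder pieces, combined with the CLT for $\sum_j Y_j$, Slutsky's lemma, and the Cramér--Wold device, closes the argument.
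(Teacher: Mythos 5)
Your proposal is correct and follows essentially the same route as the paper: the paper likewise applies the univariate linearization \eqref{eq:simplified_diff} coordinate-wise, projects onto $\vect{v}$, invokes the CLT for the resulting i.i.d.\ sum over $j\notin\mcB$ with the limiting covariance identified through the multivariate Berry--Esseen theorem (giving $\tilde{\mathcal{C}}_{l_1,l_2}=\mathcal{C}_{l_1,l_2}+O(n^{-1/2})$), and uses the three rate constraints to force the aggregated remainder to be $o_{\mbP}(1/\sqrt{mn})$, exactly as in your bookkeeping. Your explicit Lindeberg check via $|Y_j|\le K\sqrt{p}\max_l\sqrt{\sigma_{l,l}}$ and $p/m\to0$ is a detail the paper leaves implicit, but it does not change the argument.
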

}

{To prove asymptotic normality result for the multi-dimensional VRMOM estimator, we need a more restrictive constraint on the dimension $p$ than the one in \eqref{eq:multi_vrmom_conv_rate}. 
Moreover, we require the number of Byzantine machines is $o(\sqrt{m/p})$, i.e., the fraction $\alpha_n=o(1/\sqrt{mp})$. As compared to the condition $\alpha_n=o(1/ \sqrt{m})$ in Theorem \ref{thm:normality_vrmom}, there is an extra $1/\sqrt{p}$ in the condition since we are dealing with a $p$-dimensional multivariate inference problem. }

{In order to illustrate the efficiency of our VRMOM estimator in multi-dimensional case, let us consider the multi-dimensional median-of-means (MOM) estimator $\hat{\vect{\mu}}_{\mathrm{MOM}}$. More specifically, we also apply the MOM estimator at each coordinate and establish the following parallel asymptotic normality result for the multi-dimensional MOM estimator.}

{
\begin{proposition}[Asymptotic normality of multi-dimensional MOM estimator]	\label{cor:MOM_limit}
	Under the same assumption as in Theorem \ref{thm:concen_multi_vrmom}, and additionally, we assume the rate constraints $p=o(\min\{\frac{m^{1/2}}{\log^{3/2}n},\frac{n^{2\kappa_2/(\kappa_2+2)}}{m}\})$, and $\alpha_n=o(1/\sqrt{mp})$. Then for any vector $\vect{v}\in\mbR^p$ with $|\vect{v}|_2 = 1$, we have that
	\begin{equation}	\label{eq:normality_cor_med}
		\frac{\sqrt{N}}{\sigma_{\mathrm{MOM},\vect{v}}}\left\langle \vect{v},\,\hat{\vect{\mu}}_{\rm MOM}-\vect{\mu}\right\rangle\xrightarrow{d}\mcN(0,1),
	\end{equation}
	as $n\to\infty$, where $\sigma^2_{\mathrm{MOM},\vect{v}}=\vect{v}^{\tp}\vect{\mathcal{C}}_{\rm MOM}\vect{v}$, and $\vect{\mathcal{C}}_{\rm MOM}$ is a $p\times p$ matrix with each $(l_1,l_2)$-entry taking the following form,
	\begin{equation}	\label{eq:med_normal_entry}
		\mathcal{C}_{\mathrm{MOM},l_1,l_2}=\left(2\pi\tau^{l_1,l_2}_{(K+1)/2,(K+1)/2}-\frac{\pi}{2}\right)\sqrt{\sigma_{l_1,l_1}\sigma_{l_2,l_2}}.
	\end{equation}
\end{proposition}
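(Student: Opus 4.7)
The plan is to establish a Bahadur-type linear representation for the coordinate-wise sample median of the batch means, reduce the multivariate limit to a univariate one via the Cram\'er--Wold device, and then apply a Lyapunov CLT to the resulting sum of i.i.d.\ bounded functionals of the batch means. The argument parallels that of Theorem~\ref{cor:VRMOM_limit}: since the MOM estimator plays the role of the initial estimator in the VRMOM proof, much of the technical machinery developed there carries over by specializing to the single quantile level $\tau=1/2$.

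Concretely, fix a unit vector $\vect{v}\in\mbR^p$. For each coordinate $l$ let $F_l$ and $f_l$ denote the CDF and density of the batch mean $\bar{X}^{(l)}$. First I would derive, for each $l$, the Bahadur-type expansion
\[
\hat{\mu}_{\rm MOM}^{(l)}-\mu^{(l)}=\frac{1}{(m+1)f_l(\mu^{(l)})}\sum_{j\notin\mcB}\bigl\{\tfrac12-\mbI(\bar{X}_j^{(l)}\le\mu^{(l)})\bigr\}+R_l^{\rm Byz}+R_l^{\rm Bah}+R_l^{\rm bias},
\]
where $R_l^{\rm Byz}$ accounts for the $\lfloor\alpha_n m\rfloor$ Byzantine coordinates, $R_l^{\rm Bah}$ is the standard Bahadur--Kiefer remainder of order $O_{\mbP}(\log^{3/4}m/(m^{3/4}\sqrt{n}))$ (after rescaling by the batch-mean spread $\sigma_{l,l}^{1/2}/\sqrt{n}$), and $R_l^{\rm bias}$ captures the deviation of the median of $\bar{X}^{(l)}$ from $\mu^{(l)}$ caused by the non-Gaussianity of the batch average. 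Using an Edgeworth-type expansion of $F_l$ under the $(2+\kappa)$-th moment bound one obtains $R_l^{\rm bias}=O(n^{-(3\kappa_2+2)/(2\kappa_2+4)})$ and $f_l(\mu^{(l)})=\sqrt{n}/\sqrt{2\pi\sigma_{l,l}}\cdot(1+o(1))$, exactly as in the VRMOM analysis.

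Summing $v_l$ times these expansions across $l$ produces, up to remainders, the linear statistic
\[
\frac{1}{(m+1)\sqrt{n}}\sum_{j\notin\mcB}W_j(\vect{v}),\qquad W_j(\vect{v})=\sum_{l=1}^p v_l\sqrt{2\pi\sigma_{l,l}}\bigl\{\tfrac12-\mbI(\bar{X}_j^{(l)}\le\mu^{(l)})\bigr\},
\]
which is an average of i.i.d.\ bounded variables. Its per-sample variance is $2\pi\sum_{l_1,l_2}v_{l_1}v_{l_2}\sqrt{\sigma_{l_1,l_1}\sigma_{l_2,l_2}}\bigl(\mbP(\bar{X}^{(l_1)}\le\mu^{(l_1)},\bar{X}^{(l_2)}\le\mu^{(l_2)})-\tfrac14\bigr)$, and by the bivariate CLT applied to $\sqrt{n}(\bar{X}^{(l_1)}-\mu^{(l_1)},\bar{X}^{(l_2)}-\mu^{(l_2)})$ this probability converges to $\mbP(Z_{l_1}\le 0,Z_{l_2}\le 0)=\tau^{l_1,l_2}_{(K+1)/2,(K+1)/2}$. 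Hence the per-sample variance tends to $\sigma_{\rm MOM,\vect{v}}^2$, and Lyapunov's CLT applied to the bounded $W_j(\vect{v})$ yields $\sqrt{N}\langle\vect{v},\hat{\vect{\mu}}_{\rm MOM}-\vect{\mu}\rangle/\sigma_{\rm MOM,\vect{v}}\xrightarrow{d}\mcN(0,1)$ once the remainders are shown to be negligible after scaling by $\sqrt{N}$.

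The remainder bookkeeping dictates the three imposed rate constraints. The Byzantine contribution is of order $\sqrt{N}\cdot\sqrt{p}\alpha_n/\sqrt{n}=\sqrt{mp}\,\alpha_n$, which is $o(1)$ because $\alpha_n=o(1/\sqrt{mp})$; the Bahadur--Kiefer remainder contributes at most $\sqrt{N}\cdot\sqrt{p}\log^{3/4}n/(m^{3/4}\sqrt{n})=\sqrt{p}\log^{3/4}n/m^{1/4}=o(1)$ by $p=o(m^{1/2}/\log^{3/2}n)$; and the Edgeworth bias contributes $\sqrt{N}\cdot\sqrt{p}/n^{(3\kappa_2+2)/(2\kappa_2+4)}=\sqrt{pm}/n^{\kappa_2/(\kappa_2+2)}=o(1)$ by $p=o(n^{2\kappa_2/(\kappa_2+2)}/m)$. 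The main obstacle is controlling the Bahadur--Kiefer remainder uniformly across all $p$ coordinates simultaneously in the presence of Byzantine contamination, which requires a multivariate empirical process argument matched to the $(2+\kappa)$-th moment condition. The corresponding control has effectively been carried out in the proof of Theorem~\ref{thm:concen_multi_vrmom} and in the CLT step of Theorem~\ref{cor:VRMOM_limit}, and the present proposition follows by extracting the leading linear term in those arguments with only the middle quantile level $\Delta_{(K+1)/2}=0$ retained.
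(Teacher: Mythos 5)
Your proposal is correct and follows essentially the same route as the paper: the paper's proof rests on exactly the Bahadur-type representation you describe (its Lemma~\ref{lem:mom_bahadur} gives $\hat{\mu}-\mu=-\frac{\sqrt{2\pi}\sigma}{(m+1)\sqrt{n}}\sum_{j\notin\mcB}\{\mbI(\bar{X}_j\leq\mu)-\mbP(\bar{X}_j\leq\mu)\}$ plus remainders of the same orders you list), followed by coordinate-wise aggregation against $\vect{v}$, the CLT for the resulting bounded i.i.d.\ sum, and a multivariate Berry--Esseen step to identify the limiting covariance entries as \eqref{eq:med_normal_entry}. Your remainder bookkeeping against the three rate constraints matches the paper's as well.
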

}

{When each coordinate of random vector $\vect{X}$ is independent, the off-diagonal entries of the matrices $\vect{\mathcal{C}}$ and $\vect{\mathcal{C}}_{\rm MOM}$ are  all  zero (in this case $\tau_{k_1,k_2}^{l_1,l_2}=\tau_{k_1}\tau_{k_2}$ for $l_1\neq l_2$). For diagonal entries, we can readily compute that
	\begin{equation*}
	\mathcal{C}_{l,l}=\frac{\sum_{k_1,k_2=1}^K\min(\tau_{k_1},\tau_{k_2})\{1-\max(\tau_{k_1},\tau_{k_2})\}}{\{\sum_{k=1}^K\psi(\Delta_k)\}^2}\sigma_{l,l},\quad\mathcal{C}_{\mathrm{MOM},l,l}=\frac{\pi}{2}\sigma_{l,l}.
	\end{equation*}
According to Theorem \ref{thm:normality_vrmom}, when $K \rightarrow \infty$, we have $\mathcal{C}_{l,l} \rightarrow \frac{\pi}{3}\sigma_{l,l}$, which suggests that our multi-dimensional VRMOM estimator has a higher statistical efficiency than the corresponding MOM estimator.}
{
\begin{remark}	\label{rem:positive_def}
	In two-dimensional case, the covariance matrix of $\vect{X}$ can be written as the following form
	\begin{equation*}
		\vect{\Sigma}=
		\begin{pmatrix}
			\sigma_{1,1}	&	\sin\phi\sqrt{\sigma_{1,1}\sigma_{2,2}}\\
			\sin\phi\sqrt{\sigma_{1,1}\sigma_{2,2}}	&	\sigma_{2,2}
		\end{pmatrix},
	\end{equation*} 
	where $\phi\in[-\frac{\pi}{2},\frac{\pi}{2}]$. Then we have that our 2-dimensional $\mathrm{VRMOM}$ estimator $\bar{\vect{\mu}}$ has higher statistical efficiency than the 2-dimensional $\mathrm{MOM}$ estimator $\hat{\vect{\mu}}$ as $K$ tends to infinity. The detailed argument is relegated to Appendix \ref{sec:posit_def}. In the  higher dimension case when $p>2$, we believe that the superiority in efficiency of our $\mathrm{VRMOM}$ estimator still holds. We leave the theoretical investigation as a future work.
\end{remark}
}


\section{{Application for Byzantine Distributed Statistical Optimization}}
\label{sec:robust-csl}

{As an important application of the proposed VRMOM estimator, in this section, we consider the general distributed statistical optimization problem in \eqref{eq:true_para} under Byzantine setup. In particular, we propose a Byzantine robust distributed approximate newton method, called Robust Communication-efficient Surrogate Likelihood (RCSL) Method.}


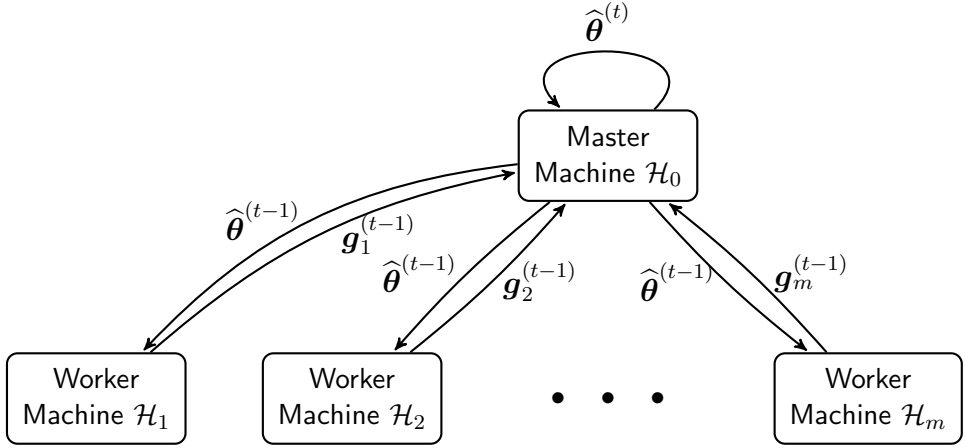
\begin{figure}[!t]
	\begin{center}
		\begin{tikzpicture}[
		font=\sffamily,
		every matrix/.style={ampersand replacement=\&,column sep=1cm,row sep=2cm},
		process/.style={draw,thick,rounded corners, inner sep=.2cm},
		dots/.style={gray,scale=2},
		to/.style={->,>=stealth',shorten >=1pt,thick,font=\sffamily\large},
		every node/.style={align=center}]
		
		\matrix{
			\&	\&	\node[process] (master) {Master\\ Machine $\mathcal{H}_0$}; \& \& \\
			
			\node[process] (worker1) {Worker\\ Machine $\mathcal{H}_1$};	\&\node[process] (worker2) {Worker\\ Machine $\mathcal{H}_2$};	\& \node[datastore] (buffer) {$\bullet\quad\bullet\quad\bullet$};\&
			\node[process] (workerm) {Worker\\ Machine $\mathcal{H}_m$};\\
		};
		
		\draw[to] (master)  to[bend right=20]
		node[midway,left] {$\hat{\vect{\theta}}^{(t-1)}$} (worker1);
		\draw[to] (master) to[bend right=5]
		node[midway,left] {$\hat{\vect{\theta}}^{(t-1)}$} (worker2);
		\draw[to] (master) to[bend right=5]
		node[midway,left] {$\hat{\vect{\theta}}^{(t-1)}$} (workerm);
		\draw[to] (worker1) to[bend left=15]
		node[midway,right=4] {$\vect{g}^{(t-1)}_1$} (master);
		\draw[to] (worker2) to[bend right=5]
		node[midway,right] {$\vect{g}^{(t-1)}_2$} (master);
		\draw[to] (workerm) to[bend right=5]
		node[midway,right=5] {$\vect{g}^{(t-1)}_m$} (master);
		\draw[to] (master) to[loop, above, looseness=3]
		node[above] {$\hat{\vect{\theta}}^{(t)}$} (master);
		\end{tikzpicture}
	\end{center}
	\caption{Communication protocol of the robust CSL (RCSL) method. In the $t$-th iteration, the master machine $\mcH_0$ distributes the parameter $\hat{\vect{\theta}}^{(t-1)}$ to each worker machine. The $j$-th worker machine computes the local gradient $\vect{g}_j^{(t-1)}$ and sends it back to the master machine. Then $\mcH_0$ updates the new parameter $\hat{\vect{\theta}}^{(t)}$ and repeats the procedure.}
	\label{fig:distribute}
\end{figure}

For the ease of presentation, we adopt the master/worker setting in \cite{jordan_etal.2019}, where $\mcH_0$ denotes the master machine and the rest are worker machines. We assume that the master machine $\mcH_0$ stores $n$ observations as each local worker and the data on the master machine will not be corrupted. In practice, it is easier to use one powerful machine as the master machine that is robust. 
Let $\hat{\vect{\theta}}^{(0)}$ be an initial estimator of $\vect{\theta}^*$. At the beginning, the master machine $\mcH_0$ broadcasts the parameter $\hat{\vect{\theta}}^{(0)}$ to all worker machines $\mcH_1,\dots,\mcH_{m}$. The $j$-th worker computes a local gradient $\vect{g}_j^{(0)}=n^{-1}\sum_{i\in \mcH_j}\nabla|_{\vect{\theta}} f(X_i,\hat{\vect{\theta}}^{(0)})$ and sends it back to master. Then the master machine applies the VRMOM estimator to every coordinate. More precisely, for each of the $l$-th coordinate (we will use the subscript $l$ to represent the entry of a vector), master machine computes the VRMOM estimator
\begin{equation}	\label{eq:gbar_def}
\begin{aligned}
	\bar{g}_l^{(0)}=&\hat{g}_l^{(0)}-\frac{\hat{\sigma}_l^{(0)}}{(m+1)\sqrt{n}\sum_{k=1}^K\psi(\Delta_k)}\times\\
	&\sum_{j=0}^{m}\left\{\frac{K}{2}+1-\left\lceil(K+1)\Phi\left(\frac{\sqrt{n}(g_{j,l}^{(0)}-\hat{g}_l^{(0)})}{\hat{\sigma}^{(0)}_l}\right)\right\rceil\right\},
\end{aligned}
\end{equation}
where $\hat{g}_l^{(0)}=\text{med}\{g_{0,l}^{(0)},...,g_{m,l}^{(0)}\}$ is the median and  
\begin{equation*}
	(\hat{\sigma}_l^{(0)})^{2}=\frac{1}{n}\sum_{i\in \mcH_0}\left\{\nabla|_{\vect{\theta}} f_l(X_i,\hat{\vect{\theta}}^{(0)})-g_{0,l}^{(0)}\right\}^2,
\end{equation*} 
is the local sample variance. Here, $\nabla|_{\vect{\theta}} f_l(X_i,\hat{\vect{\theta}}^{(0)})$ denotes the $l$-th coordinate of $\nabla|_{\vect{\theta}} f(X_i,\hat{\vect{\theta}}^{(0)})$. Let  $\bar{\vect{g}}^{(0)}=(\bar{g}_1^{(0)},\dots\bar{g}_p^{(0)})^{\rm T}$ denote the VRMOM aggregated gradient. The master machine solves the following surrogate loss introduced by \cite{jordan_etal.2019} to update the parameter,
\begin{equation}	\label{eq:rcsl_estimator}
	\hat{\vect{\theta}}^{(1)}=\argmin{\vect{\theta}\in\mbR^p}\left\{\frac{1}{n}\sum_{i\in \mcH_0}f(X_i,\vect{\theta})-\left\langle \vect{g}_0^{(0)}-\bar{\vect{g}}^{(0)},\vect{\theta}\right\rangle\right\},
\end{equation} 
where $\vect{g}_0^{(0)}=n^{-1}\sum_{i\in \mcH_0}\nabla|_{\vect{\theta}} f(X_i,\hat{\vect{\theta}}^{(0)})$ is the local gradient computed on the master machine. As shown in \cite{jordan_etal.2019} and our experiments, for a wide range of statistical learning problems, the surrogate loss in \eqref{eq:rcsl_estimator} can be easily minimized by existing optimization solvers. Moreover, this surrogate loss minimization is done on the master machine, and thus does not involve any communication.

\begin{algorithm}[!t]
	\caption{{\small Robust CSL (RCSL) Method}}
	\label{alg:RCSL}
	\hspace*{\algorithmicindent} \hspace{-0.7cm}   {\textbf{Input:}  The   data $\{X_1,...,X_N\}$  is evenly distributed on  $m+1$ machines $\{\mcH_0,\dots,\mcH_{m}\}$. Let $\mcH_0$ be the master and the rest be workers.} 	
	\begin{algorithmic}[1]
		\STATE Compute an initial estimator $\hat{\vect{\theta}}^{(0)}$ on the master machine $\mcH_0$. 
		\FOR{$t=1,\dots, T$}
		\STATE Distribute $\hat{\vect{\theta}}^{(t-1)}$ to each local machine $j=1,2,\dots,m$.
		\FOR{$j=0,\dots, m$}
		\STATE The $j$-th worker machine computes the local gradient
		\begin{equation*}
		\vect{g}_j^{(t-1)}=
		\begin{cases}
		n^{-1}\sum_{i\in\mcH_j}\nabla|_{\vect{\theta}} f(X_i,\hat{\vect{\theta}}^{(t-1)})\quad&\text{if $\mcH_j$ is normal},\\
		*	\quad&\text{if $\mcH_j$ is Byzantine},
		\end{cases}
		\end{equation*}
		where $*$ denotes arbitrary values. Then the $j$-th worker sends $\vect{g}_j^{(t-1)}$ back to master machine.
		\ENDFOR
		\STATE Master machine constructs the VRMOM aggregated gradient $\bar{\vect{g}}^{(t-1)}=(\bar{g}^{(t-1)}_1,\dots\bar{g}^{(t-1)}_p)^{\rm T}$, where each $l$-th coordinate takes the following form,
 
 		\begin{equation}\label{eq:gbar_iterate}
		\begin{aligned}
		\bar{g}_l^{(t-1)}=&\hat{g}_l^{(t-1)}-\frac{\hat{\sigma}_l^{(t-1)}}{(m+1)\sqrt{n}\sum_{k=1}^K\psi(\Delta_k)}\times\\
		&\sum_{j=0}^{m}\left\{\frac{K}{2}+1-\left\lceil(K+1)\Phi\left(\frac{\sqrt{n}(g_{j,l}^{(t-1)}-\hat{g}_l^{(t-1)})}{\hat{\sigma}^{(t-1)}_l}\right)\right\rceil\right\},
		\end{aligned}
		\end{equation}
		where $\hat{g}_l^{(t-1)}=\text{med}\{g_{0,l}^{(t-1)},...,g_{m,l}^{(t-1)}\}$, and
		\begin{equation*}
		\left(\hat{\sigma}^{(t-1)}_l\right)^2=\frac{1}{n}\sum_{i\in \mcH_0}\left\{\nabla|_{\vect{\theta}} f_l(X_i,\hat{\vect{\theta}}^{(t-1)})-g_{0,l}^{(t-1)}\right\}^2.
		\end{equation*}
 		\STATE Master machine solves the following surrogate loss minimization problem
		\begin{equation}\label{eq:rdane_roundt}	
		\hat{\vect{\theta}}^{(t)}=\argmin{\vect{\theta}\in\mbR^{p}}\left\{\frac{1}{n}\sum_{i\in\mcH_0}f(X_i,\vect{\theta})-\left\langle\,\vect{g}_0^{(t-1)}-\bar{\vect{g}}^{(t-1)},\,\vect{\theta}\,\right\rangle\right\}.
		\end{equation}
		\ENDFOR
	\end{algorithmic}
	 \textbf{Output:}  The final estimator $\hat{\vect{\theta}}^{(T)}$.
\end{algorithm}

Repeating the above procedure, we develop a multi-round algorithm named Robust CSL (RCSL), which is presented in Algorithm \ref{alg:RCSL}.  We note that in the Byzantine setting, there is a subset of workers $\mcB\subseteq\{1,\dots,m\}$ , which return arbitrary values in each iteration. For $j\in\mcB$, we will use $\vect{g}_j^{(t-1)}=*$ to represent these nuisance values. To guarantee the consistency of the initial estimator, in Step 1 of Algorithm \ref{alg:RCSL}, we can compute $\hat{\vect{\theta}}^{(0)}$ by the local empirical risk minimization on the master machine $\mcH_0$, i.e.,
\begin{equation}\label{eq:init}
\hat{\vect{\theta}}^{(0)}=\argmin{\vect{\theta}\in\mbR^p}\left\{\frac{1}{n}\sum_{i\in \mcH_0}f(X_i,\vect{\theta})\right\}.
\end{equation}
We note that our theoretical result only requires the consistency of the initial estimator, and thus other consistent estimators could also be used as the initial estimator. 

Now we briefly comment on the communication cost of Algorithm \ref{alg:RCSL}. In each round, the communication cost is $O(mp)$, which is at the same order as other gradient descent algorithms in the literature (e.g., \cite{yin_etal.2018,yin_chen_etal.2018}). {Moreover, from Theorem \ref{thm:iter_rdane_byzantine_conv} below, our RCSL only takes a constant number of rounds of communication, as opposed to the order of $O(\log (N/p))$ rounds in other gradient-based methods. Therefore, the total communication cost of RCSL is only $O(mp)$. For the sake of clarity, we only present the result of multi-round convergence rate of the RCSL method in the following. More detailed technical conditions and theoretical results are relegated to Appendix \ref{sec:rcsl_theory}.}

\begin{theorem}[Multi-round convergence rate of RCSL method]	\label{thm:iter_rdane_byzantine_conv}
	Suppose Assumption \ref{assump:convexity}-\ref{assump:mnp_dependence} (see Appendix \ref{sec:tech_assump}) hold and the initial estimator $\hat{\vect{\theta}}^{(0)}$ satisfies $|\hat{\vect{\theta}}^{(0)}-\vect{\theta}^*|_2=O_{\mbP}(r_n)$. Further assume the fraction $\alpha_n$ of Byzantine machines satisfies $\alpha_n\leq 1/2-\delta$ for some fixed $\delta\in(0,1/2)$. The RCSL estimator in the $t$-th iteration $\hat{\vect{\theta}}^{(t)}$ defined in (\ref{eq:rdane_roundt}) satisfies
	\begin{equation}	\label{eq:iter_rcsl_conv}
		|\hat{\vect{\theta}}^{(t)}-\vect{\theta}^*|_2= O_{\mbP}\left(\frac{\alpha_n\sqrt{p}}{\sqrt{n}}+\sqrt{\frac{p\log n}{mn}}+\frac{p^{1/2}\log^{3/4}n}{n^{1/2}m^{3/4}}+r_np^t\left(\frac{\log n}{n}\right)^{t/2}\right).
	\end{equation}
\end{theorem}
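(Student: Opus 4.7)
The plan is to establish the result by induction on $t$, driven by a one-step contraction inequality. Specifically, I will show that
\begin{equation*}
|\hat{\vect{\theta}}^{(t)}-\vect{\theta}^*|_2 \;=\; O_{\mbP}(\mathcal{E}_{\mathrm{stat}}) \;+\; O_{\mbP}(\eta_n)\cdot|\hat{\vect{\theta}}^{(t-1)}-\vect{\theta}^*|_2,
\end{equation*}
where $\mathcal{E}_{\mathrm{stat}} = \alpha_n\sqrt{p}/\sqrt{n}+\sqrt{p\log n/(mn)}+p^{1/2}\log^{3/4}n/(n^{1/2}m^{3/4})$ is the irreducible statistical error of the VRMOM-aggregated gradient (an amplified version of Theorem \ref{thm:concen_multi_vrmom}), and $\eta_n = C\, p\sqrt{\log n/n}$ is the per-step contraction factor arising from the quality of the surrogate loss as a second-order approximation to the population risk on the master machine. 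Given this one-step inequality, unrolling the recursion $t$ times (and absorbing the geometric series in $\eta_n$, which is $o(1)$ by Assumption \ref{assump:mnp_dependence}) together with the initial bound $|\hat{\vect{\theta}}^{(0)}-\vect{\theta}^*|_2=O_{\mbP}(r_n)$ yields the statistical floor $\mathcal{E}_{\mathrm{stat}}$ plus the propagated initial error $\eta_n^{t}\, r_n = r_n p^t(\log n/n)^{t/2}$, exactly matching (\ref{eq:iter_rcsl_conv}).

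For the one-step inequality, I would begin from the first-order optimality condition of (\ref{eq:rdane_roundt}),
\begin{equation*}
\frac{1}{n}\sum_{i\in\mcH_0}\nabla f(X_i,\hat{\vect{\theta}}^{(t)}) \;=\; \vect{g}_0^{(t-1)} - \bar{\vect{g}}^{(t-1)}.
\end{equation*}
Taylor-expanding both the left-hand side and $\vect{g}_0^{(t-1)} = n^{-1}\sum_{i\in\mcH_0}\nabla f(X_i,\hat{\vect{\theta}}^{(t-1)})$ around $\vect{\theta}^*$ and rearranging, the new iterate decomposes into a Newton-type leading term plus remainders,
\begin{equation*}
\hat{\vect{\theta}}^{(t)}-\vect{\theta}^* \;=\; -[\nabla^2 L(\vect{\theta}^*)]^{-1}\bar{\vect{g}}^{(t-1)} \;+\; R_1 \;+\; R_2,
\end{equation*}
where $L(\vect{\theta}) = \mbE f(X,\vect{\theta})$, the term $R_1$ captures the Hessian-replacement error from swapping the sample Hessian on $\mcH_0$ with $\nabla^2 L(\vect{\theta}^*)$, and $R_2$ collects the quadratic Taylor remainders. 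Under the strong convexity and smoothness in Assumption \ref{assump:convexity} and a standard sample-Hessian concentration bound on $\mcH_0$, both $R_1$ and $R_2$ can be controlled by $\eta_n\cdot|\hat{\vect{\theta}}^{(t-1)}-\vect{\theta}^*|_2$.

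The remaining ingredient, and the principal technical obstacle, is to bound the VRMOM gradient error $|\bar{\vect{g}}^{(t-1)} - \nabla L(\hat{\vect{\theta}}^{(t-1)})|_2$ by $\mathcal{E}_{\mathrm{stat}}$. For a \emph{fixed} $\vect{\theta}$ in a neighborhood of $\vect{\theta}^*$, this is essentially the content of the multi-dimensional VRMOM rate in Theorem \ref{thm:concen_multi_vrmom}, applied coordinate-wise to the gradient vectors $\{\nabla f(X_i,\vect{\theta})\}$. The subtlety is that $\hat{\vect{\theta}}^{(t-1)}$ is data-dependent, so the bound is required \emph{uniformly} over $\vect{\theta}$ in a small ball around $\vect{\theta}^*$. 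I would obtain this uniformity through an $\epsilon$-net argument combined with Lipschitz control of the indicator-quantile quantities appearing in (\ref{eq:gbar_iterate}); the discretization error and the union bound over the net produce the extra $\sqrt{\log n}$ factor in $\mathcal{E}_{\mathrm{stat}}$ relative to (\ref{concen_multi_vrmom.ineq}), while the net-size calculation together with the Lipschitz moduli of the aggregated gradient produce the $p$ appearing in $\eta_n$. Once this uniform VRMOM concentration is in hand, plugging into the Newton-step decomposition above yields the one-step inequality, and induction on $t$ concludes the proof.
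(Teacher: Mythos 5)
Your overall architecture is sound and matches the paper's at the level of the recursion: a one-step bound of the form statistical floor plus a contraction factor $\eta_n=O(p\sqrt{\log n/n})$ times the previous error, unrolled $t$ times, with the uniform-over-a-ball concentration of the VRMOM-aggregated gradient (via an $\epsilon$-net and Lipschitz control of the quantile/indicator quantities) correctly identified as the principal technical obstacle and as the source of both the extra $\sqrt{\log n}$ and the $p$ in $\eta_n$. Where you genuinely diverge from the paper is in how the one-step bound is extracted. You start from the first-order optimality condition of the surrogate loss and perform a Newton-type expansion with a sample Hessian $\hat{H}_0$ on $\mcH_0$. The paper instead runs a localization argument: it shows that, uniformly over the sphere $\Theta_1=\{|\vect{\theta}-\vect{\theta}^*|_2=b_n\}$, the surrogate objective strictly exceeds its value at $\vect{\theta}^*$, which by convexity (Assumption~\ref{assump:convexity}) confines the minimizer to the ball of radius $b_n$. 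The decomposition there (display~\eqref{eq:convergence_taylor}) isolates a quadratic term $\tfrac12\langle\nabla\vect{\mu}(\vect{\theta}^*)(\vect{\theta}_1-\vect{\theta}^*),\vect{\theta}_1-\vect{\theta}^*\rangle$ against four error terms controlled by empirical-process bounds on $\vect{Z}(x,\vect{\theta})=\nabla f(x,\vect{\theta})-\nabla f(x,\vect{\theta}^*)-\vect{\mu}(\vect{\theta})+\vect{\mu}(\vect{\theta}^*)$ and by Lemma~\ref{lemma:gbar_lipschitz}. What the paper's route buys is that it never differentiates $\nabla f$: only the population Hessian $\nabla\vect{\mu}$ appears, so non-smooth losses such as Huber regression (Example~\ref{exp:huber_reg}, where $\nabla^2 f$ is discontinuous) are covered. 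Your "standard sample-Hessian concentration bound on $\mcH_0$" does not exist in that generality; to match the paper's scope you would have to replace it with the same empirical-process control of gradient increments, at which point the two arguments become close relatives.

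Two bookkeeping points deserve care if you execute this plan. First, your displayed decomposition $\hat{\vect{\theta}}^{(t)}-\vect{\theta}^*=-[\nabla^2L(\vect{\theta}^*)]^{-1}\bar{\vect{g}}^{(t-1)}+R_1+R_2$ is not self-consistent as written: after the cancellation of $\hat{\vect{\theta}}^{(t-1)}-\vect{\theta}^*$ against the linearization of $\nabla L(\hat{\vect{\theta}}^{(t-1)})$, the true leading term is $-[\nabla^2L(\vect{\theta}^*)]^{-1}\{\bar{\vect{g}}^{(t-1)}-\nabla L(\hat{\vect{\theta}}^{(t-1)})\}$, i.e.\ the inverse Hessian applied to the aggregation error only; you implicitly acknowledge this later but the display should reflect it. Second, you cannot bound $|\bar{\vect{g}}^{(t-1)}-\nabla L(\hat{\vect{\theta}}^{(t-1)})|_2$ by $\mathcal{E}_{\mathrm{stat}}$ alone uniformly over a ball of radius $\rho$: the correct uniform bound (the paper's Lemma~\ref{lemma:gbar_lipschitz}) is $\mathcal{E}_{\mathrm{stat}}+C\rho\,p\sqrt{\log n/n}$, and it is precisely the radius-proportional piece that must be routed into the contraction term $\eta_n|\hat{\vect{\theta}}^{(t-1)}-\vect{\theta}^*|_2$; if it were absorbed into the floor, the recursion would terminate after one step and the $r_np^t(\log n/n)^{t/2}$ term in \eqref{eq:iter_rcsl_conv} would be unexplained. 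Relatedly, attributing $\eta_n$ to the surrogate-loss Hessian quality alone understates matters: the master-machine empirical-process term contributes only $O(r_n\sqrt{p\log n/n})$, and the dominant $r_n p\sqrt{\log n/n}$ comes from the $\vect{\theta}$-dependence of the coordinate-wise robust aggregation itself. With these corrections your route goes through.
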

The proof of this theorem can be found in Appendix \ref{sec:theory:RCSL}. We note that the second term $\sqrt{p\log n/(mn)}$ in the right hand side of \eqref{eq:iter_rcsl_conv} matches the optimal rate $\sqrt{p/(mn)}$ up to a logarithmic factor. The third term is inherited from the last term in \eqref{concentration_mmm_byzantine.ineq} of Theorem \ref{thm:concentration_vrmom_byzantine}. It becomes $O(\sqrt{p\log n/(mn)})$ when $\log n=O(m)$. The first term is the price paid for the Byzantine failures. As we can see, after each iteration, the fourth term in \eqref{eq:iter_rcsl_conv} is improved by a factor $p\sqrt{\log n/n}$, which is of the order $o(1)$ by the rate constraints in Assumption \ref{assump:mnp_dependence} (see Appendix \ref{sec:tech_assump}). In stark contrast, for the Byzantine robust gradient descent, the convergence rate is only improved by a constant factor $c<1$ after each round of communication (See, e.g. Theorem 3 in \cite{yin_chen_etal.2018} and Theorem 3.4 in \cite{alistarh_zhu_li.2018}). Therefore, Theorem \ref{thm:iter_rdane_byzantine_conv} suggests that the proposed RCSL method enjoys faster convergence rate than vanilla Byzantine robust gradient descent, therefore is more communication-efficient. {We can also demonstrate the communication-efficiency of our RCSL method in another way. From the expression of \eqref{eq:iter_rcsl_conv}, we can see that when the number of $t$ is sufficiently large, i.e., 
\begin{equation}	\label{eq:iteration}
	t\geq\frac{\log m+\log p+2\log r_n}{\log n-2\log p-\log\log n}+1,
\end{equation}
the last term will become the order of $O(\sqrt{p\log n/(mn)})$. Moreover, with $p=O(n^{1/3}/\sqrt{\log n})$ and $r_n=o(1)$ in Assumption \ref{assump:mnp_dependence}, we have
\begin{equation*}
	\frac{\log m+\log p+2\log r_n}{\log n-2\log p-\log\log n}+1\leq \frac{3\log m}{\log n}+c_0,\quad \text{for some constant } c_0>0.
\end{equation*}
Therefore, when $t\geq c_0+3\log m/\log n$, the rate of $t$-th iteration is dominated by the first three terms. It would save a lot of communication cost as compared to gradient-based algorithms, where at least {$O(\log (N/p))$} iterations of communications are necessary.}

	Assume $\log n=O(m)$ and the number of iterations $t$ is sufficiently large, our rate in \eqref{eq:iter_rcsl_conv} will become $O_{\mbP}(\alpha_n\sqrt{p}/\sqrt{n}+\sqrt{p\log n/(mn)})$. It is interesting to compare this rate with contemporary results of the Byzantine perturbed gradient method with different aggregators (see Theorem 2 in \cite{yin_chen_etal.2018}). For example, median aggregator leads to the rate $\alpha_n\sqrt{p}/\sqrt{n}+p/\sqrt{mn}$ (we save a $\sqrt{p}$ in the second term), the trimmed-mean aggregator to the rate $\alpha_np/\sqrt{n}+p/\sqrt{mn}$ (saving a $\sqrt{p}$ in both terms), and the iterative filtering to the rate  $\sqrt{\alpha_n /n}+\sqrt{p/mn}$ (saving a  $\sqrt{\alpha_n }$ but losing a $\sqrt{p}$ in the first term). However, their filtering estimator involves solving convex programs iteratively. Moreover, in the theory of the filtering estimator, the Byzantine fraction $\alpha_n$ is required to be not larger than $1/4$ (see, \eg Theorem 1 in \cite{su_xu.2018} and Theorem 5 in \cite{yin_chen_etal.2018}), which is more restrictive than ours ($\alpha_n\leq 1/2-\delta$).
We would also like to note that the lower bound on the convergence rate is known to be $\Omega(\alpha_n/\sqrt{n}+\sqrt{p/mn})$. As compared to the lower bound, our upper bound is missing a $\sqrt{p}$ factor in the first term. When there is no Byzantine worker (i.e., $\alpha_n=0$) or the dimensionality $p$ is a constant, our rate is optimal upto logarithmic factors.  We believe this extra $\sqrt{p}$ in the first term comes out because the gradients are aggregated coordinate-wisely.
It would be interesting as a future direction to develop a new multi-variate aggregator based on VRMOM, which is both statistically and computationally efficient and achieves the optimal convergence rate. 

It is also worthwhile noting that we can extend our algorithm to a general scheme by replacing \eqref{eq:rcsl_estimator} with the following surrogate loss minimization, 
\begin{equation}	\label{eq:rdane_scheme}
	\argmin{\vect{\theta}\in\mbR^p}\left\{\frac{1}{n}\sum_{i\in \mcH_0}f(X_i,\vect{\theta})-\left\langle \vect{g}^{(0)}_0-\mathrm{Aggr}(\vect{g}^{(0)}_0,\dots,\vect{g}^{(0)}_m),\vect{\theta}\right\rangle\right\},
\end{equation}
where $\mathrm{Aggr}(\vect{g}^{(0)}_0,\dots,\vect{g}^{(0)}_m)$ can be any consistent estimator of the population gradient $\mbE\{ \nabla|_{\vect{\theta}}f(X,\hat{\vect{\theta}}^{(0)})\}$ given $\hat{\vect{\theta}}^{(0)}$. Any robust aggregator in the literature can be adopted in \eqref{eq:rdane_scheme}, e.g., median-of-means, trimmed mean \citep{yin_etal.2018}, geometric median \citep{feng_etal.2014, chen_su_etal.2017}, Krum \citep{blanchard_etal.2017}, marginal median \citep{xie_koyejo_gupta.2018}. In the original CSL framework \citep{jordan_etal.2019}, the aggregator is chosen as the coordinate-wise average, which is sensitive to corruptions. 

\begin{remark}
{It is worthwhile noting that when the target parameter admits some specific structures, e.g., sparsity structure, it is straightforward to extend the proposed framework \eqref{eq:rdane_scheme} to the following regularized problem
\begin{equation}	\label{eq:regularized_rcsl}
	\argmin{\vect{\theta}\in\mbR^p}\left\{\frac{1}{n}\sum_{i\in \mcH_0}f(X_i,\vect{\theta})-\left\langle \vect{g}^{(0)}_0-\mathrm{Aggr}(\vect{g}^{(0)}_0,\dots,\vect{g}^{(0)}_m),\vect{\theta}\right\rangle+\lambda_n\mcR(\vect{\theta})\right\},
\end{equation}
where $\mcR(\vect{\theta})$ is some regularizer, for example, the $\ell_1$-penalty \citep{tibshirani1996regression}, smooth clipped absolute deviation (SCAD) \citep{fan_li.2001}, and minimax concave penalty (MCP) \citep{zhang.2010}. With the above formulation, we are able to address the sparse learning problem in Byzantine robust setup. 
We leave more rigorous theoretical investigation of the penalized Byzantine robust estimation to future research. }
\end{remark}




\section{Simulation Studies}\label{sec:sim}

In our simulation studies, we conduct several experiments to show the effectiveness of the VRMOM and the robust CSL (RCSL) Method.

\subsection{Results for VRMOM}

In this section, we show the performance of the proposed VRMOM estimator for robust mean estimation problem. We first demonstrate how the number of quantile levels $K$ in \eqref{eq:vrmom_def2} affects the estimation accuracy of the VRMOM estimator, and then compare the statistical efficiency of our VRMOM estimator and the MOM estimator defined in \eqref{eq:mom_estimator}. We generate the random vectors $X_i$s from the normal distribution $\mathcal{N}(\bmu^{\ast},\bSigma_{X})$ where $\bmu^{\ast}=p^{-1/2}(1,(p-2)/(p-1),(p-3)/(p-1),\dots,0)$  and $\bSigma_{\bX}=\text{diag}(1,\dots,1)$. We choose $p=1$ and $p=30$ to consider both univariate and multivariate cases. The entire sample size is $N=1000 \times (100+1)$. By dividing the data into one master machine $\mcH_0$ and $100$ worker machines $\{\mcH_1,\dots,\mcH_{100}\}$. Note that the master machine $\mcH_0$ can never be corrupted in our setup. 
Therefore, each local sample size is $n=1000$. We consider the following settings: (1) $\alpha_n=0$ which denotes no Byzantine machine, (2) $\alpha_n>0$, which denotes the existence of Byzantine machine case. We vary the fraction of Byzantine machines $\alpha_n=0.05,0.1,0.15$. When $\alpha_n>0$, we replace the sample means in each Byzantine machine by a random vector whose entries are generated from $\mathcal{N}(0,200\mbI)$ independently. For each experiment, we repeat 500 independent simulations and report averaged root mean square estimation errors and standard deviations.

\subsubsection{Effect of $K$}

{In the first experiment, we vary the number of quantile levels $K$ from $\{10,20,50,100\}$ and investigate the estimation accuracy of the VRMOM estimator for different dimensions $p\in\{1,30\}$ and different fractions $\alpha_n\in\{0,0.05,0.1,0.15\}$. The results of the root mean square errors and the standard errors are presented in Table \ref{tab:choiceK}. As we can see from the table, for each fraction of Byzantine machine $\alpha_n$, the root mean square errors of the VRMOM estimator for different $K$s are almost the same. Based on this observation, in the following experiment, we fix $K$ to be $10$ for the ease of computation.}

\begin{table}[b]	
	\centering{
		\caption{\small The root mean square errors (RMSEs) and their standard errors (in parentheses) of the VRMOM under sample size $N=1000 \times  (100+1)$, local sample size $n=1000$ and number of quantile levels $K=10,20,50,100$. The sample means sent from Byzantine machines are generated from Gaussian $\mathcal{N}(0,200 \mbI)$.}\label{tab:choiceK}
		\bigskip
		\small
		\begin{tabular}{r|r|rrrr}
			\hline
			$p$ & $K$  & $\alpha_n=0$ & $\alpha_n=0.05$ & $\alpha_n=0.1$ & $\alpha_n=0.15$ \\ 
			\hline
			\multirow{2}{*}{1} & \text{10} & 0.0025 (0.0018) & 0.0027 (0.0020) & 0.0030 (0.0022) & 0.0032 (0.0024) \\ 
			& \text{20} & 0.0025 (0.0018) & 0.0027 (0.0021) & 0.0030 (0.0022) & 0.0034 (0.0026) \\ 
			& \text{50} & 0.0025 (0.0018) & 0.0028 (0.0021) & 0.0030 (0.0022) & 0.0033 (0.0024) \\ 
			& \text{100} & 0.0025 (0.0018) & 0.0028 (0.0020) & 0.0030 (0.0022) & 0.0032 (0.0024) \\
			\hline
			\multirow{2}{*}{30}  & \text{10} & 0.0175 (0.0022) & 0.0192 (0.0024) & 0.0209 (0.0026) & 0.0227 (0.0028) \\ 
			& \text{20} & 0.0174 (0.0022) & 0.0192 (0.0024) & 0.0209 (0.0026) & 0.0228 (0.0030) \\ 
			& \text{50} & 0.0174 (0.0022) & 0.0192 (0.0025) & 0.0208 (0.0026) & 0.0230 (0.0029) \\ 
			& \text{100} & 0.0174 (0.0022) & 0.0192 (0.0024) & 0.0208 (0.0027) & 0.0230 (0.0030) \\ 
			\hline
		\end{tabular}
	}
\end{table}

\subsubsection{Comparison between VRMOM and MOM}

\begin{table}[t]
	\centering{
	\caption{\small The root mean square errors (RMSEs) and their standard errors (in parentheses) of the VRMOM and MOM, and the ratios of RMSEs between VRMOM and MOM under sample size $N=1000 \times  (100+1)$, local sample size $n=1000$ and integer $K=10$. The sample means sent from Byzantine machines are generated from Gaussian $\mathcal{N}(0,200 \mbI)$.}\label{tab:vrmomratio}
	\bigskip
	\small
	\begin{tabular}{r|r|rrrr}
		\hline
		 $p$ &   & $\alpha_n=0$ & $\alpha_n=0.05$ & $\alpha_n=0.1$ & $\alpha_n=0.15$ \\ 
		\hline
		\multirow{2}{*}{1} & \text{VRMOM} & 0.0025 (0.0018) & 0.0027 (0.0020) & 0.0030 (0.0022) & 0.0032 (0.0024) \\ 
        & \text{MOM} & 0.0030 (0.0022) & 0.0031 (0.0023) & 0.0033 (0.0025) & 0.0035 (0.0026) \\
		& \text{Ratio} & 0.8613 & 0.8901 & 0.9044 & 0.9129 \\
		\hline
	    \multirow{2}{*}{30}  & \text{VRMOM} & 0.0175 (0.0022) & 0.0192 (0.0024) & 0.0209 (0.0026) & 0.0227 (0.0028) \\ 
        & \text{MOM} & 0.0211 (0.0028) & 0.0223 (0.0028) & 0.0234 (0.0030) & 0.0249 (0.0034) \\ 
		& \text{Ratio} & 0.8285 & 0.8601 & 0.8921 & 0.9108 \\ 
		\hline
	\end{tabular}
	}
\end{table}
 
In the second experiment, we compare the performance of the VRMOM estimator and the MOM estimator in terms of the root mean square errors and their standard errors. We fixed the total sample size as $N=1000\times(100+1)$, local sample size $n=1000$. We let the dimension $p\in\{1,30\}$ and the fraction of Byzantine machines varies from $\alpha_n\in\{0,0.05,0.1,0.15\}$. Throughout our experiment, we fix the number of quantiles $K$ in \eqref{eq:vrmom_def2} to be $K=10$. 

From Table \ref{tab:vrmomratio}, we observe that VRMOM has smaller root mean square errors than MOM as all the ratios are greater than 1. With the increase of the fraction of Byzantine machines, both methods has larger root mean square errors. The difference between VRMOM and MOM tends to be smaller with more Byzantine machines. {It is interesting to note that, when the dimension $p$ is $30$, the ratio of the root mean square errors between VRMOM and MOM is smaller than that when $p=1$. It suggests that the variance reduction effect of our VRMOM estimator becomes better for higher dimensions, although we have only proved the superiority when $p=1$ and $2$ in this paper (see Remark \ref{rem:positive_def}). }

\subsection{Results for Robust CSL Method}	\label{subsec:rcsl}
In this section, we consider the linear model and logistic regression model to demonstrate our robust CSL method. 
\paragraph{Settings for the linear regression model} For the linear model experiment, the data are generated as follows:
\[
Y_{i}=\vect{X}_i^{\rm T}\btheta^{\ast}+\epsilon_{i},\qquad i=1,2,\dots,n,
\]
where each $\vect{X}_i=(X_{i,1},\dots,X_{i,p})^{\rm T}$ is a $p$-dimensional covariate vector and $(X_{i,1},\dots,X_{i,p})$s are drawn $i.i.d.$ from a multivariate normal distribution $\mathcal{N}(0,\bm{\Sigma}_{X})$. The covariance  $\bm{\Sigma}_{X}$ is a symmetric Toeplitz matrix with $\Sigma_{ij}=0.5^{|i-j|}$ for $1\le i,j \le p$, which enforces correlation structure among covariates. We fix the dimension $p=30$ and generate the entries of the true coefficient vector $\btheta^{\ast}$ to be $p^{-1/2}(1,(p-2)/(p-1),(p-3)/(p-1),\dots,0)$. Similar to the previous experiment, the entire sample size is $N=1000 \times (100+1)$ and we divide the data into one master machine $\mcH_0$ and $100$ worker machines $\{\mcH_1,\dots,\mcH_{100}\}$. 
so that each local sample size $n=1000$.  We consider the standard normal noise distribution where the noise $\epsilon_{i}\sim\mathcal{N}(0,1)$. For the initial estimator $\widehat{\btheta}^{(0)}$, according to \eqref{eq:init}, we use the least square estimator with the data only on the master machine $\mcH_0$, i.e.,
$
\widehat{\btheta}^{(0)} =(\sum_{i\in\mcH_0}\vect{X}_i\vect{X}_i^{\rm T})^{-1}(\sum_{i\in\mcH_0}\vect{X}_iY_i).
$
The computation of the initial estimator is very efficient since it has closed form and does not require any communication. We also note that for the surrogate loss minimization problem \eqref{eq:rdane_roundt} at $t-$th iteration, we directly obtain the closed-form solution $\hat{\vect{\theta}}^{(t)}=(2n^{-1}\sum_{i\in\mcH_0}\vect{X}_i\vect{X}_i^{\rm T})^{-1}(2n^{-1}\sum_{i\in\mcH_0}\vect{X}_iY_i+\vect{g}_0^{(t-1)}-\bar{\vect{g}}^{(t-1)})$, which is also computationally efficient. Moreover, we generate corrupted gradients sent from Byzantine machines from the following attack models, 

\begin{itemize}
	\item[(a)] Gaussian attack: We replace the gradient vectors in the Byzantine machines by random vectors in which all the entries are generated from $\mathcal{N}(0,200\mbI)$ independently.
	
	\item[(b)] Omniscient attack:  For the Byzantine machines, we replace the true gradient vectors by the scaled negative gradients where the scale constant is extremely large ($1e10$ in our experiment).
	
	\item[(c)] Bit-flip attack:  For the Byzantine machines, we replace the true gradient vectors by flipping the sign fo the first five dimensions. 
\end{itemize}

\paragraph{Settings for the logistic regression model} 
For the logistic regression model experiment, the data are generated from the following:
\[
Y_{i}=
	\begin{cases}
		\quad1	\quad & \text{with probability }\mcL(\vect{X}_i^{\rm T}\btheta^{\ast}),\\
		\quad0	\quad & \text{with probability }1-\mcL(\vect{X}_i^{\rm T}\btheta^{\ast}),
	\end{cases}\qquad i=1,2,\dots,n
\]
where the link function $\mcL(x) = {e^x}/{(1 + e^x)}$ and each $\vect{X}_i=(X_{i,1},\dots,X_{i,p})^{\rm T}$ is a $p$-dimensional covariate vector which is drawn $i.i.d.$ from a multivariate normal distribution $\mathcal{N}(\bm{\mu}_x,\bm{\Sigma}_{X})$, which is coincident with the setting in the linear regression model. We choose $\bm{\mu}_x=(\mu_x,\dots,\mu_x)^{\rm T}$ and adopt two settings that $\mu_x=0$ and $\mu_x=0.5$. Here $\mu_x=0$ corresponds to the balanced response case that $50\%$ $Y_i$s are $1$ and $50\%$ $Y_i$s are $0$. And $\mu_x=0.5$ corresponds to the imbalanced response case where $76\%$ $Y_i$s are $1$ and $24\%$ $Y_i$s are $0$. We also fix the dimension $p=30$ and adopt the same true coefficient vector $\btheta^{\ast}$ as before. For each setting, we repeat 500 independent simulations and report averaged root mean square estimation errors and standard deviations. For the initial estimator $\widehat{\btheta}^{(0)}$, we use the logistic regression estimator with the data only on master machine $\mcH_0$, i.e.,
$
\widehat{\btheta}^{(0)}=\argmin{\btheta\in\mbR^p}\left[\frac{1}{n}\sum_{i\in \mcH_0}\left\{\log\left(1+e^{\vect{X}_i^{\rm T}\vect{\theta}}\right)-Y_i\vect{X}_i^{\rm T}\vect{\theta}\right\}\right].
$
The surrogate loss minimization problem \eqref{eq:rdane_roundt} at $t-$th iteration, i.e., 
	\begin{align*}
	\hat{\btheta}^{(t)}=&\argmin{\btheta\in\mbR^p}\Big[\frac{1}{n}\sum_{i\in \mcH_0}\Big\{\log\left(1+e^{\vect{X}_i^{\rm T}\vect{\theta}}\right)\Big\}\\
	&-\Big\{\Big(\frac{1}{n}\sum_{i\in \mcH_0}Y_i\vect{X}_i^{\rm T}\Big)+\vect{g}_0^{(t-1)}-\bar{\vect{g}}^{(t-1)}\Big\}\vect{\theta}\Big],
	\end{align*}
can be efficiently solved by standard gradient descent or quasi-Newton approaches \citep{Nocedal:06:numerical} on the center machine without any communication. 

As for the attack mode of Byzantine machines in the logistic regression model, we simulate the transmitted message in the following way. We replace every response $Y$ by $1-Y$ and compute the gradients based on these transformed local data on each Byzantine machine.

\begin{table}[t]
	\centering{
	\caption{\small The root mean square errors (RMSEs) and their standard errors (in parentheses) of the RCSL and MOM-RCSL, and the ratios of RMSEs between RCSL and MOM-RCSL under sample size $N=1000 \times  (100+1)$, local sample size $n=1000$ and integer $K=10$.  The corrupted gradients sent from Byzantine machines are generated from Gaussian, Omniscient and Bit-flip attacks.  The tolerance parameter for the stopping rule is set to $e_{r}=10^{-4}$. }\label{tab:linearratio}
	\bigskip
	\small
	\begin{tabular}{r|rrr}
		\hline
		Attack & \multicolumn{3}{c}{None} \\
		$\alpha_n$ & \multicolumn{3}{c}{$0$} \\ 
		\hline
		\text{RCSL} & \multicolumn{3}{c}{0.0231 (0.0036)}  \\ 
		\text{MOM-RCSL} & \multicolumn{3}{c}{0.0319 (0.0050)}  \\ 
		\text{Ratio} & \multicolumn{3}{c}{0.7243}  \\ 
		\hline
		Attack &\multicolumn{3}{c}{Gaussian} \\
		$\alpha_n$ & $0.05$ & $0.1$ & $0.15$ \\ 
		\hline
		\text{RCSL} & 0.0270 (0.0044) & 0.0312 (0.0049) & 0.0351 (0.0060) \\ 
		\text{MOM-RCSL} & 0.0343 (0.0054) & 0.0369 (0.0058) & 0.0398 (0.0063) \\ 
		\text{Ratio} & 0.7863 & 0.8434 & 0.8817 \\
		\hline
		Attack & \multicolumn{3}{c}{Omniscient}  \\
		$\alpha_n$ & $0.05$ & $0.1$ & $0.15$  \\
		\hline
		\text{RCSL} & 0.0276 (0.0042) & 0.0328 (0.0051) & 0.0396 (0.0060) \\
		\text{MOM-RCSL} & 0.0355 (0.0057) & 0.0395 (0.0061) & 0.0449 (0.0069) \\ 
		\text{Ratio} & 0.7774 & 0.8296 & 0.8815 \\
		\hline
		Attack & \multicolumn{3}{c}{Bit-flip} \\
		$\alpha_n$ &  $0.05$ & $0.1$ & $0.15$ \\
		\hline
		\text{RCSL} & 0.0236 (0.0037) & 0.0242 (0.0039) & 0.0250 (0.0041) \\ 
		\text{MOM-RCSL} & 0.0325 (0.0051) & 0.0334 (0.0053) & 0.0343 (0.0058) \\ 
		\text{Ratio} & 0.7276 & 0.7248 & 0.7291 \\ 
		\hline
	\end{tabular}
	}
\end{table}

\paragraph{More settings in the simulation} 
{
For the fraction of Byzantine machines, we consider the following settings: (1) $\alpha_n=0$ which denotes no Byzantine machine, (2) $\alpha_n>0$, which denotes the existence of Byzantine machine case. We vary the fraction of Byzantine machines $\alpha_n=0.05,0.1,0.15$. We also discuss the stopping criteria for Algorithm \ref{alg:RCSL}. Throughout the experiments, we use the tolerance parameter $e_{r}=10^{-4}$ as the stopping criterion. In particular,  at the $t$-th iteration of Algorithm \ref{alg:RCSL}, we compute $e=|\hat\btheta^{(t)}-\hat\btheta^{(t-1)}|_2^2/|\hat\btheta^{(t-1)}|_2^2$ and stop the algorithm once $e\le e_{r}$. In our experiments, it only requires $4$ to $8$ iterations to stop. We also provide the results with simple fixed number of iterations with $T=5$ and $T=10$.
}

{Since the main focus of this paper is the variance reduction effect of the proposed VRMOM estimator, in the simulation study, we mainly compare the performance of our Robust CSL algorithm (RCSL) with the MOM-based Robust CSL algorithm (MOM-RCSL). More specifically, let $\hat{\vect{\theta}}^{(0)}$ be the initial estimator obtained in the master machine $\mcH_0$, the refined estimator $\hat{\vect{\theta}}^{(1)}_{\rm{MOM}}$ is defined as the solution of \eqref{eq:rdane_scheme} with   $\mathrm{Aggr}(\vect{g}^{(0)}_0,\dots,\vect{g}^{(0)}_m)$ being the MOM aggregator of the gradients. Repeat the procedure, and we denote the $t$-th round MOM-RCSL estimator by $\hat{\vect{\theta}}^{(t)}_{\rm{MOM}}$.}


\subsubsection{Results for linear regression model}
\label{sec:linear}

The results for linear regression model are presented in  Table \ref{tab:linearratio} (for adaptive stopping criterion) and \ref{tab:linearfixratio} (for fixed number of iterations). 
From Table \ref{tab:linearratio},  RCSL has smaller root mean square errors than MOM-RCSL and all the ratios are greater than 1 for different kinds of attacks. With the increase of Byzantine fractions $\alpha_n$, both methods have larger root mean square errors. The difference between RCSL and MOM-RCSL tends to be smaller with more Byzantine machines, {which is coincident with the phenomenon of the mean estimation problem (See Table \ref{tab:vrmomratio}).} Among these three attack models, it seems that the omniscient attacker has the largest root mean square error in general. It is quite natural since this attacker makes the parameter vector go into the opposite direction by negative gradients with an extremely large scale factor (i.e., $1e10$), which exerts the most negative impact on the gradient aggregation step. On the other hand, even for such a strong attack model, our RCSL method still performs quite well. { It is also interesting to compare Table \ref{tab:vrmomratio} with the Gaussian attacker part in Table \ref{tab:linearratio} and \ref{tab:linearfixratio}. All simulations share the same attack mode and the same dimension. It seems that the variance reduction effect of VRMOM is more significant when it is performed as an iterative gradient aggregator than as a mean estimator.}

\begin{table}[p]
	\centering{
	\caption{\small The root mean square errors (RMSEs) and their standard errors (in parentheses) of the RCSL and MOM-RCSL, and the ratios of RMSEs between RCSL and MOM-RCSL under sample size $N=1000 \times  (100+1)$, local sample size $n=1000$ and integer $K=10$. The iteration numbers $T$ are fixed to be $5$ and $10$. The corrupted gradients sent from Byzantine machines are generated from Gaussian, omniscient and bit-flip attacks. }\label{tab:linearfixratio}
	\bigskip
	\small
	\begin{tabular}{r|r|rrr}
		\hline
		\multirow{2}{*}{$T$} & Attack & \multicolumn{3}{c}{None}  \\
		& $\alpha_n$ & \multicolumn{3}{c}{$0$}  \\  
		\hline
		\multirow{3}{*}{$5$} & \text{RCSL} & \multicolumn{3}{c}{0.0231 (0.0036)}  \\ 
		& \text{MOM-RCSL} & \multicolumn{3}{c}{0.0319 (0.0051)}  \\ 
		& \text{Ratio} & \multicolumn{3}{c}{0.7236}   \\ 
		\hline
		\multirow{3}{*}{$10$} & \text{RCSL} & \multicolumn{3}{c}{0.0231 (0.0036)}  \\
		& \text{MOM-RCSL} & \multicolumn{3}{c}{0.0319 (0.0051)}  \\ 
		& \text{Ratio} & \multicolumn{3}{c}{0.7233}  \\ 
		\hline
		\multirow{2}{*}{$T$} & Attack & \multicolumn{3}{c}{Gaussian} \\
		& $\alpha_n$ &  $0.05$ & $0.1$ & $0.15$ \\  
		\hline
		\multirow{3}{*}{$5$} & \text{RCSL}  & 0.0271 (0.0043) & 0.0310 (0.005) & 0.0354 (0.0057) \\ 
		& \text{MOM-RCSL} & 0.0343 (0.0056) & 0.0373 (0.006) & 0.0400 (0.0063) \\ 
		& \text{Ratio} & 0.7897 & 0.8305 & 0.8859 \\
		\hline
		\multirow{3}{*}{$10$} & \text{RCSL}  & 0.0272 (0.0042) & 0.0313 (0.0051) & 0.0348 (0.0058) \\ 
		& \text{MOM-RCSL}  & 0.0344 (0.0053) & 0.0368 (0.0058) & 0.0398 (0.0065) \\ 
		& \text{Ratio}  & 0.7905 & 0.8483 & 0.8750 \\
		\hline
		\multirow{2}{*}{$T$} & Attack & \multicolumn{3}{c}{Omniscient}  \\
		& $\alpha_n$ & $0.05$ & $0.1$ & $0.15$  \\
		\hline
		\multirow{3}{*}{$5$} & \text{RCSL} & 0.0276 (0.0042) & 0.0328 (0.0052) & 0.0396 (0.0061) \\ 
		& \text{MOM-RCSL} & 0.0355 (0.0057) & 0.0395 (0.0061) & 0.045 (0.0069) \\ 
		& \text{Ratio} & 0.7768 & 0.8304 & 0.8811 \\ 
		\hline
		\multirow{3}{*}{$10$} & \text{RCSL} & 0.0276 (0.0042) & 0.0329 (0.0052) & 0.0398 (0.0061) \\ 
		& \text{MOM-RCSL} & 0.0355 (0.0057) & 0.0396 (0.0061) & 0.0451 (0.0069) \\ 
		& \text{Ratio} & 0.7769 & 0.8311 & 0.8820 \\ 
		\hline
		\multirow{2}{*}{$T$} & Attack  & \multicolumn{3}{c}{Bit-flip} \\
		& $\alpha_n$ & $0.05$ & $0.1$ & $0.15$ \\
		\hline
		\multirow{3}{*}{$5$} & \text{RCSL} & 0.0236 (0.0037) & 0.0242 (0.0039) & 0.0250 (0.0041) \\ 
		& \text{MOM-RCSL}  & 0.0325 (0.0051) & 0.0334 (0.0053) & 0.0343 (0.0058) \\ 
		& \text{Ratio}  & 0.7268 & 0.7240 & 0.7281 \\ 
		\hline
		\multirow{3}{*}{$10$} & \text{RCSL} & 0.0236 (0.0037) & 0.0242 (0.0039) & 0.0250 (0.0041) \\ 
		& \text{MOM-RCSL} & 0.0325 (0.0051) & 0.0335 (0.0053) & 0.0344 (0.0058) \\ 
		& \text{Ratio} & 0.7259 & 0.7235 & 0.7260 \\ 
		\hline
	\end{tabular}
	}
\end{table}

In Table \ref{tab:linearfixratio}, we fix the iteration number to be $5$ and $10$.  
Table \ref{tab:linearfixratio} shows similar patterns as Table \ref{tab:linearratio}. There are almost no difference between $T=5$ and $T=10$ iterations. In other words, it shows that only using a very small number of iterations (i.e., $T=5$), our RCSL estimator has already converged. This experiment suggests that the RCSL is communicationally efficient.


\subsubsection{Results for logistic regression model}

\begin{table}[t!]
	\centering{
	\caption{\small The root mean square errors (RMSEs) and their standard errors (in parentheses) of the RCSL and MOM-RCSL, and the ratios of RMSEs between RCSL and MOM-RCSL under sample size $N=1000 \times  (100+1)$, local sample size $n=1000$ and integer $K=10$.  The tolerance parameter for the stopping rule is set to $e_{r}=10^{-4}$. The parameter $\mu_x$ controls the class balance.}\label{tab:logisticratio}
		\bigskip
		\small
	\begin{tabular}{r|rrrr}
		\hline
		$\mu_x$ & \multicolumn{4}{c}{0} \\
		$\alpha_n$ & $0$ & $0.05$ & $0.1$ & $0.15$ \\ 
		\hline
		\text{RCSL} & 0.0504 (0.0075) & 0.0531 (0.0075) & 0.0600 (0.0072) & 0.0701 (0.0075) \\  
		\text{MOM-RCSL} & 0.0699 (0.0109) & 0.0716 (0.0107) & 0.0765 (0.0108) & 0.0830 (0.0112) \\ 
		\text{Ratio} & 0.7215 & 0.7418 & 0.7844 & 0.8452 \\  
		\hline
		$\mu_x$ & \multicolumn{4}{c}{0.5} \\
		$\alpha_n$ & $0$ & $0.05$ & $0.1$ & $0.15$ \\ 
		\hline
		\text{RCSL} & 0.0583 (0.0087) & 0.0601 (0.0087) & 0.0632 (0.0091) & 0.0669 (0.0096) \\
		\text{MOM-RCSL} & 0.0830 (0.0135) & 0.0841 (0.0132) & 0.0868 (0.0134) & 0.0905 (0.0141) \\ 
		\text{Ratio} & 0.7024 & 0.7142 & 0.7281 & 0.7395 \\ 
		\hline
	\end{tabular}
	}
\end{table}

The results for logistic regression model are presented in Table \ref{tab:logisticratio} (for adaptive stopping criterion) and \ref{tab:logisticfixratio} (for fixed number of iterations).  From Table \ref{tab:logisticratio}, RCSL has smaller root mean square errors than MOM-RCSL since all the ratios are greater than 1. With the increase of Byzantine fractions $\alpha_n$, both methods has larger root mean square errors. Moreover, the RCSL leads to a more significant improvement over the MOM-RCSL for imbalanced class case. Similar observations can be made from  Table \ref{tab:logisticfixratio}, where the number of iterations have been pre-determined.  Table \ref{tab:logisticfixratio} also shows that our RCSL is communicationally efficient since $T=5$ iterations have been sufficient for the convergence.

\begin{table}[h]
	\centering{
	\caption{\small The root mean square errors (RMSEs) and their standard errors (in parentheses) of the RCSL and MOM-RCSL, and the ratios of RMSEs between RCSL and MOM-RCSL under sample size $N=1000 \times  (100+1)$, local sample size $n=1000$ and integer $K=10$. The iteration numbers $T$ are fixed to be $5$ and $10$.  The parameter $\mu_x$ controls the class balance. }\label{tab:logisticfixratio}
	\bigskip
	\small
	\begin{tabular}{r|r|rrrr}
		\hline
		\multirow{2}{*}{$T$} & $\mu_x$ & \multicolumn{4}{c}{$0$} \\
		& $\alpha_n$ & $0$ & $0.05$ & $0.1$ & $0.15$ \\ 
		\hline
		\multirow{3}{*}{$5$} & \text{RCSL} & 0.0505 (0.0075) & 0.0531 (0.0076) & 0.0600 (0.0072) & 0.0702 (0.0075) \\ 
	    & \text{MOM-RCSL} & 0.0699 (0.0109) & 0.0716 (0.0107) & 0.0765 (0.0108) & 0.0830 (0.0113) \\ 
		& \text{Ratio} & 0.7112 & 0.7371 & 0.7901 & 0.8646 \\
		\hline
		\multirow{3}{*}{$10$} & \text{RCSL} & 0.0504 (0.0075) & 0.0531 (0.0075) & 0.0600 (0.0072) & 0.0701 (0.0075) \\ 
		& \text{MOM-RCSL} & 0.0699 (0.0109) & 0.0716 (0.0107) & 0.0765 (0.0108) & 0.0830 (0.0112) \\ 
		& \text{Ratio} & 0.7218 & 0.7418 & 0.7845 & 0.8452 \\ 
		\hline
		\multirow{2}{*}{$T$} & $\mu_x$ & \multicolumn{4}{c}{$0.5$} \\
		& $\alpha_n$ & $0$ & $0.05$ & $0.1$ & $0.15$ \\ 
		\hline
		\multirow{3}{*}{$5$} & \text{RCSL} & 0.0583 (0.0087) & 0.0601 (0.0087) & 0.0632 (0.0091) & 0.0670 (0.0096) \\ 
	    & \text{MOM-RCSL} & 0.0829 (0.0135) & 0.0841 (0.0132) & 0.0868 (0.0134) & 0.0906 (0.0140) \\ 
		& \text{Ratio} & 0.7028 & 0.7145 & 0.7281 & 0.7395 \\ 
		\hline
		\multirow{3}{*}{$10$} & \text{RCSL} & 0.0583 (0.0087) & 0.0601 (0.0087) & 0.0632 (0.0091) & 0.0670 (0.0096) \\ 
		& \text{MOM-RCSL} & 0.0830 (0.0135) & 0.0841 (0.0132) & 0.0868 (0.0134) & 0.0905 (0.0141) \\ 
		& \text{Ratio} & 0.7025 & 0.7145 & 0.7281 & 0.7395 \\ 
		\hline
	\end{tabular}
	}
\end{table}

{
\section{Conclusions and Future Work}\label{sec:conclude}
In this paper, we design a Byzantine tolerant algorithm to address a general class of estimation and inference problems in a distributed setting. The first contribution is to improve the statistical efficiency of the widely used Median-of-Means (MOM) estimator by proposing a new  Variance Reduced MOM (VRMOM) estimator. It achieves a nearly optimal convergence rate upto a logarithmic factor and has the same order of computational complexity as the MOM estimator. Inspired by the VRMOM estimator, we further develop the Robust CSL (RCSL) method for general statistical inference problem. The convergence rate improves the previous results in \citet{yin_etal.2018} using either median or trimmed-mean as  the gradient aggregator. Moreover, we establish the asymptotic normality result for our RCSL method.}

{
To highlight our main idea behind VRMOM, we choose to focus on the one-dimensional case and extend to multi-variate case in a coordinate-wise way. Therefore, the convergence rate in Theorem \ref{thm:iter_rdane_byzantine_conv} has an extra $\sqrt{p}$ in the term related to Byzantine failures (i.e., $\alpha_n \sqrt{p}/\sqrt{n}$). How to remove this extra $\sqrt{p}$ in a computationally efficient way while still achieving the same statistical efficiency is still an open problem. As an important future direction, we will address this challenge by developing new multivariate efficient aggregators based on the VRMOM estimator. On the other hand, by adding additional regularizers as in \eqref{eq:regularized_rcsl}, we can address a wider class of problems in Byzantine setup. We leave these problems in the future study.
}

\section*{Acknowledgement}
Weidong Liu and Xiaojun Mao are the co-corresponding authors. Weidong Liu's research is supported by National Program on Key Basic Research Project (973 Program, 2018AAA0100704), NSFC Grant No. 11825104 and 11690013, Youth Talent Support Program, and a grant from Australian Research Council. Xiaojun Mao's research is supported by NSFC Grant No. 12001109, Shanghai Sailing Program 19YF1402800, Major Research Plan of NSFC Grant No. 92046021, and the Science and Technology Commission of Shanghai Municipality grant 20dz1200600. {The authors would like to thank the action editor and two anonymous referees for their constructive comments, which greatly improves the quality of the paper}. 

\newpage

\appendix

\section*{Appendix}

{The appendix consists of four parts. In Appendix \ref{sec:proof}, we provide detailed proof for the theoretical results of VRMOM estimator presented in Section \ref{sec:theory:VRMOM}. In Appendix \ref{sec:posit_def}, we prove the positive definiteness of $\vect{\mathcal{C}}_{\mathrm{MOM}}-\vect{\mathcal{C}}$ in two-dimensional case, which have been mentioned in Remark \ref{rem:positive_def} of the main paper. In Appendix \ref{sec:rcsl_theory}, we present the theoretical results and some technical assumptions for the RCSL method. Lastly, in Appendix \ref{sec:examples_verification}, we will show that a large class of generalized linear models and $M$-estimators suffice the proposed assumptions in Appendix \ref{sec:tech_assump}.}


\section{Proof of Theories for VRMOM Estimator}
\label{sec:proof}

{In this appendix, we mainly prove the theoretical results of the proposed VRMOM estimator in Section \ref{sec:theory:VRMOM}. In Appendix \ref{sec:vrmom_lemma}, we introduce several lemmas which are useful for proofing the main theorems. Next, we present the main proofs in Appendix \ref{sec:proof_VRMOM}.}

\subsection{Technical Lemmas}\label{sec:vrmom_lemma}


\begin{lemma}	\label{lemma:berry_esseen}
	(Berry-Esseen Theorem, Theorem 9.1.3 in \cite{chow_teicher.2012}) If $\{X_i, \; i\geq1\}$ are i.i.d. mean-zero random variables with $\mbE|X_1|^2=\sigma^2,\mbE|X_1|^{2+\kappa}<\infty$, where $\kappa\in(0,1]$
	. Then there exists a constant $C_{\kappa}>0$ such that
	\begin{equation*}
		\sup_{-\infty<x<\infty}\left|\mbP\left\{\sum_{i=1}^nX_i<x\sigma n^{1/2}\right\}-\Phi(x)\right|\leq C_{\kappa}\frac{\mbE|X_1|^{2+\kappa}}{\sigma^{2+\kappa}n^{\kappa/2}}.
	\end{equation*}
\end{lemma}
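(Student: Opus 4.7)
The statement is the classical Berry--Esseen theorem with a fractional moment of order $2+\kappa$ for $\kappa\in(0,1]$, quoted from Chow and Teicher. Since it is cited rather than re-proved in the paper, my proposal is to outline the standard characteristic-function route that yields this precise bound, emphasizing the fractional moment case (the integer third-moment case $\kappa=1$ is the textbook version).

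My plan is to invoke the Esseen smoothing inequality. Let $F_n$ denote the distribution function of $S_n/(\sigma\sqrt{n})$, where $S_n=\sum_{i=1}^n X_i$, and let $\varphi(t)=\mbE[e^{itX_1/\sigma}]$ so that the characteristic function of $S_n/(\sigma\sqrt{n})$ is $\varphi_n(t):=\varphi(t/\sqrt{n})^n$. Esseen's smoothing lemma gives, for every $T>0$,
\begin{equation*}
\sup_x|F_n(x)-\Phi(x)|\;\leq\;\frac{1}{\pi}\int_{-T}^{T}\left|\frac{\varphi_n(t)-e^{-t^2/2}}{t}\right|\diff t+\frac{C}{T}.
\end{equation*}
The strategy is then to choose $T\asymp n^{\kappa/2}/(\mbE|X_1/\sigma|^{2+\kappa})^{-1}$ (the precise scaling will be dictated by the bounds below), and to show that the integral is of order $\mbE|X_1|^{2+\kappa}/(\sigma^{2+\kappa}n^{\kappa/2})$.

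The first key step is a fractional-moment Taylor expansion of $\varphi$. For a random variable $Y=X_1/\sigma$ with $\mbE Y=0$, $\mbE Y^2=1$, $\mbE|Y|^{2+\kappa}<\infty$, one shows
\begin{equation*}
\left|\varphi(t)-1+\tfrac{t^2}{2}\right|\;\leq\;C_{\kappa}\,\mbE|Y|^{2+\kappa}\,|t|^{2+\kappa},
\end{equation*}
which follows from the pointwise inequality $|e^{ix}-1-ix+x^2/2|\leq C_\kappa|x|^{2+\kappa}$ (valid for $\kappa\in(0,1]$ by interpolating $|e^{ix}-1-ix|\leq |x|^2/2$ with $|e^{ix}-1-ix+x^2/2|\leq |x|^3/6$) and taking expectations. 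The second key step is to lift this to the $n$-fold convolution: within the regime $|t|\leq c\sqrt{n}/\mbE|Y|^{2+\kappa})^{1/\kappa}$ one writes $\varphi_n(t)-e^{-t^2/2}$ using the identity
\begin{equation*}
a^n-b^n=(a-b)\sum_{k=0}^{n-1}a^k b^{n-1-k},
\end{equation*}
with $a=\varphi(t/\sqrt{n})$ and $b=e^{-t^2/(2n)}$. Combined with $|a|\leq e^{-ct^2/n}$ for small $|t|/\sqrt{n}$ (a consequence of the expansion above), this yields
\begin{equation*}
|\varphi_n(t)-e^{-t^2/2}|\;\leq\;C\,\frac{\mbE|Y|^{2+\kappa}|t|^{2+\kappa}}{n^{\kappa/2}}\,e^{-c t^2},
\end{equation*}
after which the $\diff t/t$ integrand is bounded by an integrable function whose integral contributes the desired factor $\mbE|Y|^{2+\kappa}/n^{\kappa/2}$.

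The main obstacle is twofold. First, the pointwise expansion with fractional remainder $|x|^{2+\kappa}$ must be proved carefully; the interpolation trick above is the shortest route but requires splitting $|x|\leq 1$ versus $|x|>1$ and handling each with a different Taylor bound. Second, the choice of $T$ must be coordinated with the range on which the expansion of $\varphi_n$ is valid, so that the smoothing error $C/T$ is of the same order as the integral term; this is the heart of the Berry--Esseen argument and is where the constant $C_\kappa$ (which blows up as $\kappa\downarrow 0$) enters. Outside the small-$|t|$ regime, one needs an elementary tail bound $|\varphi_n(t)|\leq e^{-c}$ which, together with the Gaussian tail $e^{-t^2/2}$, gives a negligible contribution. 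Assembling these pieces yields the stated bound, and for the application in the paper we simply cite Chow and Teicher.
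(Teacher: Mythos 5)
The paper does not prove this lemma at all: it is quoted verbatim as Theorem 9.1.3 of \cite{chow_teicher.2012} and used as a black box, so there is no in-paper argument to compare against. Your outline is the standard Fourier-analytic proof (Esseen smoothing, fractional Taylor expansion of the characteristic function via $|e^{ix}-1-ix+x^2/2|\leq C\min(x^2,|x|^3)\leq C|x|^{2+\kappa}$, telescoping $a^n-b^n$), and its architecture is sound; the interpolation justification for the fractional remainder is exactly right, and you correctly identify the coordination of $T$ with the validity range of the expansion as the crux.

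Two points deserve correction. First, the displayed choice of $T$ is garbled ($n^{\kappa/2}$ divided by an \emph{inverse} moment); you mean $T\asymp n^{\kappa/2}\sigma^{2+\kappa}/\mbE|X_1|^{2+\kappa}$, which is what makes the smoothing error $C/T$ match the target rate. Second, and more substantively, the claim that ``outside the small-$|t|$ regime one needs an elementary tail bound $|\varphi_n(t)|\leq e^{-c}$'' is both unnecessary and, as stated, false in general: for lattice distributions $|\varphi(t)|=1$ at the reciprocal lattice points, so no uniform decay of $|\varphi_n|$ away from the origin is available without extra hypotheses (this is precisely why Berry--Esseen, unlike Edgeworth expansions, needs no Cram\'er-type condition). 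The resolution is that with $T$ chosen as above one has $T/\sqrt{n}\lesssim (\mbE|Y|^{2+\kappa})^{-1}\leq 1$ (using $\mbE|Y|^{2+\kappa}\geq(\mbE Y^2)^{(2+\kappa)/2}=1$ by Lyapunov), so the \emph{entire} integration range $|t|\leq T$ lies in the regime where $|\varphi(t/\sqrt{n})|\leq e^{-ct^2/n}$ and the expansion applies; no separate large-$t$ estimate is needed. With that repair the sketch assembles into the classical proof, and for the paper's purposes the citation suffices.
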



\begin{lemma}	\label{lemma:cai_liu}
	(Exponential Inequality, Lemma 1 in \cite{cai_liu.2011}) Let $X_1,...,X_n$ be i.i.d. random variables with zero mean. Suppose that there exist some $\eta>0$ and $C>0$ such that $\mbE (X_1^2e^{\eta|X_1|})\leq C$. Then uniformly for $0<x\leq C$ and $n\geq 1$, there is
	\begin{equation*}
		\mbP\left\{\frac{1}{n}\sum_{i=1}^nX_i\geq (\eta+\eta^{-1})x\right\}\leq\exp\left(-\frac{nx^2}{C}\right).
	\end{equation*}
\end{lemma}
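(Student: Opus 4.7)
The plan is to prove this via the standard Chernoff/Cramér exponential-moment method, starting from the observation that the assumption $\mbE(X_1^2 e^{\eta|X_1|}) \leq C$ gives us good control on the moment generating function of each $X_i$ in a neighborhood of the origin.

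First I would establish a sub-Gaussian-type bound on the MGF of a single summand for $0 < \lambda \leq \eta$, namely $\mbE e^{\lambda X_1} \leq \exp(C\lambda^2/2)$. To do this, Taylor-expand $e^{\lambda X_1} = 1 + \lambda X_1 + \sum_{k \geq 2} (\lambda X_1)^k/k!$, take expectations (the linear term vanishes since $\mbE X_1 = 0$), and factor $\lambda^2 X_1^2/2$ out of the tail:
\[
 \sum_{k \geq 2} \frac{(\lambda X_1)^k}{k!}
 \;\leq\; \frac{\lambda^2 X_1^2}{2}\sum_{k \geq 2}\frac{(\lambda |X_1|)^{k-2}}{(k-2)!}
 \;\leq\; \frac{\lambda^2 X_1^2}{2}\, e^{\eta |X_1|},
\]
using $k!\geq 2(k-2)!$ and $\lambda \leq \eta$. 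Taking expectations and using the moment hypothesis gives $\mbE e^{\lambda X_1} \leq 1 + C\lambda^2/2 \leq e^{C\lambda^2/2}$.

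Next I would apply the Chernoff bound in its i.i.d. form: for $\lambda \in (0,\eta]$,
\[
 \mbP\!\left(\sum_{i=1}^n X_i \geq t\right) \leq e^{-\lambda t}\bigl(\mbE e^{\lambda X_1}\bigr)^n \leq \exp\!\bigl(-\lambda t + nC\lambda^2/2\bigr),
\]
and set $t = n(\eta + \eta^{-1}) x$. The main (small) technical point is choosing $\lambda$: optimizing in $\lambda$ would require $\lambda^* = t/(nC)$, which may exceed $\eta$ when $x$ is comparable to $C$ and $\eta$ is small. Instead I would pick the admissible value $\lambda = \eta x / C$, which lies in $(0,\eta]$ because of the hypothesis $x \leq C$. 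Plugging this in,
\[
 -\lambda t + \tfrac{1}{2}nC\lambda^2
 = -\tfrac{n x^2}{C}\bigl(\eta^2 + 1\bigr) + \tfrac{n\eta^2 x^2}{2C}
 = -\tfrac{n x^2}{C}\Bigl(1 + \tfrac{\eta^2}{2}\Bigr)
 \leq -\tfrac{n x^2}{C},
\]
which is exactly the claimed exponent.

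The whole argument is essentially the Bernstein-style truncated-Taylor trick, so the only real obstacle is the bookkeeping in the last step: one has to verify that the intended choice $\lambda = \eta x / C$ is simultaneously (i) within the admissible range $(0,\eta]$ under the stated hypothesis $x \leq C$, and (ii) yields a leading coefficient at least $1$ on $-nx^2/C$. Both conditions fall out cleanly, which is why the authors can quote it as a black-box lemma; no sharper tail (sub-exponential regime) is needed here because the range of $x$ is pre-restricted to $x \leq C$, keeping us safely in the Gaussian portion of the Bernstein regime.
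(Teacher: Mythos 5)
Your argument is correct and is essentially the standard proof of this result: the paper itself quotes the lemma as a black box from Cai and Liu (2011), and the derivation there is the same Bernstein-type one you give, namely bounding $\mbE e^{\lambda X_1}\leq 1+\tfrac{\lambda^2}{2}\mbE(X_1^2e^{\eta|X_1|})\leq e^{C\lambda^2/2}$ for $0<\lambda\leq\eta$ via the truncated Taylor expansion and then applying Chernoff with the admissible choice $\lambda=\eta x/C$. Your bookkeeping in the final exponent, $-\tfrac{nx^2}{C}(1+\eta^2/2)\leq -\tfrac{nx^2}{C}$, checks out, so nothing is missing.
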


This lemma will be the workhorse throughout all proofs in this article. For ease of notations, we will use
\begin{equation}	\label{cai_liu.symbol}
	\mfE(\eta,X):=\mbE\left(X^2e^{\eta|X|}\right),
\end{equation}
when the expression of $X$ is too complicated.

\begin{lemma}	\label{lemma:empirical}
	Let $X_1,...,X_m$ be i.i.d. random variables with cumulative distribution function $G(x)=\mbP(X_1\leq x)$. And there exists some constants $C>0$ and $\kappa\in(0,1]$ such that there is
	\begin{equation}	\label{empirical_condition.ineq}
		|G(x_1)-G(x_2)|\leq C|x_1-x_2|+\frac{C}{n^{\kappa/2}},
	\end{equation}
	holds for any $x_1,x_2\in\mbR$. Further define the function
	\begin{align*}
		Z_i(x)=\mbI\{X_i\leq x\}- G(x).
	\end{align*}
	Let $\delta_n=O(1)$ be some rate, then there exists $\tilde{C}$ large enough such that
	\begin{equation*}
		\mbP\left[\frac{1}{m}\sup_{|x|\leq\delta_n}\left|\sum_{i=1}^m\left\{Z_i(x)-Z_i(0)\right\}\right|\geq \tilde{C}\left(\sqrt{\frac{\delta_n\log n}{m}}+\frac{1}{n^{\kappa/2}}\right)\right]= O(n^{-\gamma}),
	\end{equation*}
	provided the rates constraints $\log n=O(m)$ and $\max\{m^{-1}\log n,n^{-\kappa/2}\}=O(\delta_n)$.
\end{lemma}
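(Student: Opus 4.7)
}
The plan is a standard one-scale chaining argument: discretize the interval $[-\delta_n,\delta_n]$ into a grid, control the centered empirical process at each grid point via the exponential inequality in Lemma \ref{lemma:cai_liu}, take a union bound over the grid, and use monotonicity of the indicator $\mbI\{X_i\leq\cdot\}$ together with the Lipschitz-type condition \eqref{empirical_condition.ineq} to handle the oscillation between consecutive grid points. This matches the two terms in the target bound: the square-root term comes from the fluctuations at the grid points, and the $n^{-\kappa/2}$ term from the slack in \eqref{empirical_condition.ineq}.

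First, I set the grid spacing $\epsilon_n=\sqrt{\delta_n\log n/m}$ and define $x_l=-\delta_n+l\epsilon_n$ for $0\leq l\leq L$ with $L=O(\sqrt{m\delta_n/\log n})$. For any $x\in[x_l,x_{l+1}]$, the monotonicity of $\mbI\{X_i\leq\cdot\}$ and of $G$ gives
\[
\frac{1}{m}\sum_{i=1}^{m}\{Z_i(x)-Z_i(0)\}\leq \frac{1}{m}\sum_{i=1}^{m}\{Z_i(x_{l+1})-Z_i(0)\}+\bigl\{G(x_{l+1})-G(x_l)\bigr\},
\]
together with the symmetric lower bound using $x_l$. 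By \eqref{empirical_condition.ineq}, $G(x_{l+1})-G(x_l)\leq C\epsilon_n+C n^{-\kappa/2}$, which is exactly of the order $\sqrt{\delta_n\log n/m}+n^{-\kappa/2}$. Thus it suffices to prove the deviation inequality at each grid point.

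Second, for a fixed grid point $x_l$, let $Y_i=Z_i(x_l)-Z_i(0)$. Then $Y_i$ are i.i.d., mean zero, bounded in $[-2,2]$, with $\mbE[Y_i^2]\leq |G(x_l)-G(0)|\leq C\delta_n+C n^{-\kappa/2}=O(\delta_n)$ under the hypothesis $n^{-\kappa/2}=O(\delta_n)$. Choosing $\eta=1$ in the notation of \eqref{cai_liu.symbol} gives $\mfE(1,Y_i)\leq e^{2}\mbE[Y_i^2]=:C_n=O(\delta_n)$. Applying Lemma \ref{lemma:cai_liu} with this $C_n$ and with $x=\sqrt{C_n(\gamma+2)\log n/m}$, the requirement $x\leq C_n$ reduces to $(\gamma+2)\log n/m\leq C_n$, which holds under $\log n=O(m)$ and $n^{-\kappa/2}=O(\delta_n)$. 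Hence
\[
\mbP\left\{\frac{1}{m}\sum_{i=1}^{m}Y_i\geq 2\sqrt{\frac{C_n(\gamma+2)\log n}{m}}\right\}\leq n^{-(\gamma+2)},
\]
and the same bound applies to $-Y_i$. A union bound over the $L=O(\sqrt{m\delta_n/\log n})$ grid points (which is polynomial in $n$ under the stated rate constraints) loses at most a $\log n$ factor in the exponent and yields the desired control at all grid points with probability $1-O(n^{-\gamma})$, after absorbing the polynomial factor by enlarging $\tilde C$.

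Combining the two steps, the grid-point deviation is $O(\sqrt{\delta_n\log n/m})$ and the oscillation between grid points adds $O(\sqrt{\delta_n\log n/m}+n^{-\kappa/2})$, giving the claimed bound. The main delicate point is the coupling between the grid spacing and the variance scale: the spacing $\epsilon_n$ must be small enough that $C\epsilon_n$ is of the target order, yet not so small that the union bound destroys the $n^{-(\gamma+2)}$ per-point probability, and simultaneously the variance proxy $C_n=O(\delta_n)$ must be compatible with the admissibility condition $x\leq C_n$ in Lemma \ref{lemma:cai_liu}. The stated hypotheses $\log n=O(m)$ and $\max\{m^{-1}\log n,n^{-\kappa/2}\}=O(\delta_n)$ are exactly what is needed to make all three requirements hold simultaneously.
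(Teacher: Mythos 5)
Your argument is correct in substance, but it controls the oscillation between grid points by a genuinely different (and more classical) mechanism than the paper. The paper uses a very fine grid of $2n$ points with spacing $\delta_n/n$, and bounds the oscillation \emph{stochastically}: $|Z_i(x)-Z_i(\tilde x)|$ is dominated by $\mbI\{|X_i-\tilde x|\leq \delta_n/n\}$ plus a deterministic slack, and the count of sample points falling in these tiny intervals is then concentrated via Lemma~\ref{lemma:cai_liu}, contributing an $O(\log n/m + n^{-\kappa/2})$ term. You instead take a coarse grid with spacing $\epsilon_n=\sqrt{\delta_n\log n/m}$ and exploit monotonicity of $\mbI\{X_i\leq\cdot\}$ and $G$ to sandwich the process between its values at adjacent grid points up to the \emph{deterministic} increment $G(x_{l+1})-G(x_l)\leq C\epsilon_n+Cn^{-\kappa/2}$, which is already of the target order. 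Your route avoids one application of the exponential inequality and makes the matching of the two error terms transparent; the price is that your union bound runs over $L=O(\sqrt{m\delta_n/\log n})$ points rather than $O(n)$ points, so your claim that $L$ is ``polynomial in $n$ under the stated rate constraints'' is not literally implied by the hypotheses ($\log n=O(m)$ permits $m$ superpolynomial in $n$). This is harmless in every application in the paper, where $m=O(n)$, but to prove the lemma exactly as stated you would either need to assume $\log m=O(\log n)$ or take the per-point deviation to be $\sqrt{C_n(\gamma\log n+\log L)/m}$ and check it stays of order $\sqrt{\delta_n\log n/m}$. Two smaller points: the admissibility condition $x\leq C_n$ in Lemma~\ref{lemma:cai_liu} is guaranteed by $m^{-1}\log n=O(\delta_n)$ (not by $\log n=O(m)$ together with $n^{-\kappa/2}=O(\delta_n)$, as you cite), and you should take the constant $C$ in Lemma~\ref{lemma:cai_liu} to be a fixed multiple of $\delta_n$ bounding $\mfE(1,Y_i)$ from above rather than $e^2\mbE[Y_i^2]$ itself, which could degenerate.
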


\begin{proof}
	Evenly divide the interval $[-\delta_n,\delta_n]$ into $2n$ pieces and denote the set $\mfN=\{-\delta_n,-\frac{n-1}{n}\delta_n,...,\delta_n\}$. Then
	\begin{align*}	
		\frac{1}{m}\sup_{|x|\leq\delta_n}\left|\sum_{i=1}^m\{Z_i(x)-Z_i(0)\}\right|\leq&\max_{\tilde{x}\in\mfN}\left|\frac{1}{m}\sum_{i=1}^m\{Z_i(\tilde{x})-Z_i(0)\}\right|\stepcounter{equation}\tag{\theequation}\label{eq:empirical_lemma_term1}\\
		&+\max_{\tilde{x}\in\mfN}\sup_{\{x:|\tilde{x}-x|\leq\delta_n/n\}}\left|\frac{1}{m}\sum_{i=1}^m\{Z_i(\tilde{x})-Z_i(x)\}\right|.
	\end{align*}
	For the second term, notice that
	\begin{align*}
		\sup_{\{x:|\tilde{x}-x|\leq\delta_n/n\}}\left|\frac{1}{m}\sum_{i=1}^m\{Z_i(\tilde{x})-Z_i(x)\}\right|\leq \frac{1}{m}\sum_{i=1}^m\mbI\left\{|X_i-\tilde{x}|\leq \frac{\delta_n}{n}\right\}+C\left(\frac{2\delta_n}{n}+\frac{1}{n^{\kappa/2}}\right).
	\end{align*}
	For every fixed $\tilde{x}\in\mfN$, we know
	\begin{align*}
		&\mbE\left\{\mbI\left(|X_i-\tilde{x}|\leq \frac{\delta_n}{n}\right)\right\}=\mbP\left(|X_i-\tilde{x}|\leq \frac{\delta_n}{n}\right)\leq C\left(\frac{2\delta_n}{n}+\frac{1}{n^{\kappa/2}}\right),\\
		&\mathfrak{E}\left\{1,\mbI\left(|X_i-\tilde{x}|\leq \frac{\delta_n}{n}\right)-\mbP\left(|X_i-\tilde{x}|\leq \frac{\delta_n}{n}\right)\right\}\\
			\leq&e\mbP\left(|X_i-\tilde{x}|\leq \frac{\delta_n}{n}\right)\left\{\mbP\left(|X_i-\tilde{x}|\leq \frac{\delta_n}{n}\right)+1\right\}=O\left(\frac{\log n}{m}+\frac{1}{n^{\kappa/2}}\right),
	\end{align*}
	since $\delta_n/n=O(n^{-\kappa/2})$ from the rates constraints. Then applying Lemma \ref{lemma:cai_liu} we have
	\begin{equation}	\label{ineq:empirical_term1}
		\begin{aligned}
			&\mbP\left[\max_{\tilde{x}\in\mfN}\sup_{\{x:|\tilde{x}-x|\leq\delta_n/n\}}\frac{1}{m}\left|\sum_{i=1}^m\left\{Z_i(\tilde{x})-Z_i(x)\right\}\right|\geq C_1\left(\frac{1}{n^{\kappa/2}}+\frac{\log n}{m}\right)+C\left(\frac{2\delta_n}{n}+\frac{1}{n^{\kappa/2}}\right)\right]\\
				\leq&2n\max_{\tilde{x}\in\mfN}\mbP\left\{\frac{1}{m}\sum_{i=1}^m\mbI\left(|X_i-\tilde{x}|\leq \frac{\delta_n}{n}\right)-\mbP\left(|X_1-\tilde{x}|\leq \frac{\delta_n}{n}\right)\geq C_1\left(\frac{\log n}{m}+\frac{1}{n^{\kappa/2}}\right)\right\}\\
				=&O(n^{-\gamma}),
		\end{aligned}
	\end{equation}
	by letting $C_1$ large enough. Now for the first term in \eqref{eq:empirical_lemma_term1}, again we apply Lemma \ref{lemma:cai_liu} with
	\begin{align*}
		&\mbE[Z_i(x)-Z_i(0)]=0,\\
		\sup_{-\delta_n\leq x\leq\delta_n}&\mbE\left[\{Z_i(x)-Z_i(0)\}^2e^{|Z_i(x)-Z_i(0)|}\right]=O(\delta_n),
	\end{align*}
	there is
	\begin{equation}	\label{empirical_term2.ineq}
		\begin{aligned}
			&\mbP\left[\max_{\tilde{x}\in\mfN}\left|\frac{1}{m}\sum_{i=1}^m\{Z_i(\tilde{x})-Z_i(0)\}\right|\geq C_2\sqrt{\frac{\delta_n\log n}{m}}\right]\\
				\leq&2n\max_{\tilde{x}\in\mfN}\mbP\left[\frac{1}{m}\left|\sum_{i=1}^m\{Z_i(\tilde{x})-Z_i(0)\}\right|\geq C_2\sqrt{\frac{\delta_n\log n}{m}}\right]= O(n^{-\gamma}),
		\end{aligned}
	\end{equation}
	for some $C_2$ large enough. Then the lemma is proved by combining (\ref{ineq:empirical_term1}) and (\ref{empirical_term2.ineq}).
\end{proof}


\begin{lemma}	\label{lemma:concen_mom_byzantine}
	(Concentration of median-of-means with Byzantine machines) Let $N$($=(m+1)n$) i.i.d. random variables $X_1,...,X_{N}$ evenly distributed in $m+1$ subsets $\mcH_0,...,\mcH_m$. There is a subset of index $\mcB\subseteq\{1,\dots,m\}$ with $\card(\mcB)=\lfloor\alpha_nm\rfloor$. Define
	\begin{equation*}
		\bar{X}_j=
		\begin{cases}
			 \frac{1}{n}\sum_{i\in\mcH_j}X_i\quad&j\notin\mcB,\\
			 \quad *\quad&j\in\mcB,
		\end{cases}\qquad
		\hat{X}=\text{med}\left(\bar{X}_j\mid 0\leq j\leq m\right).
	\end{equation*}
	Suppose $\mbE(X_1)=0,\var(X_1)=\sigma^2,\mbE|X_1|^{2+\kappa}<\infty$, where $\kappa\in(0,1]$. The fraction $\alpha_n$ satisfies $\alpha_n\leq1/2-\delta$
	for some $\delta>0$. Then for every $\gamma>1$, there exists $\tilde{C}>0$ large enough, such that
	\begin{equation*}
		\mbP\left\{\left|\hat{X}\right|\geq \frac{\tilde{C}}{\sqrt{n}}
		\left(\alpha_n+\frac{1}{n^{\kappa/2}}+\sqrt{\frac{\log n}{m}}\right)\right\}=O(n^{-\gamma}).
	\end{equation*}
\end{lemma}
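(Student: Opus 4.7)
}

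The plan is to reduce the event $\{|\hat{X}| > t\}$ to a binomial tail event on the uncorrupted machines, then use Berry--Esseen (Lemma~\ref{lemma:berry_esseen}) to control the single-machine success probability and the exponential inequality (Lemma~\ref{lemma:cai_liu}) to control the binomial deviation. Set $t = (\tilde{C}/\sqrt{n})\bigl(\alpha_n + n^{-\kappa/2} + \sqrt{\log n / m}\bigr)$ for $\tilde{C}$ to be chosen later, and focus on one tail, say $\{\hat{X} > t\}$ (the other is symmetric). By the definition of the sample median, on this event at least $\lceil (m+1)/2\rceil$ of the values $\bar{X}_0,\dots,\bar{X}_m$ exceed $t$; since at most $\lfloor\alpha_n m\rfloor$ of them come from Byzantine indices, at least
\[
L := \lceil (m+1)/2\rceil - \lfloor \alpha_n m\rfloor
\]
honest machines satisfy $\bar{X}_j > t$. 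Let $M := (m+1) - \lfloor\alpha_n m\rfloor$ be the number of honest machines and write $S = \sum_{j \text{ honest}} \mathbb{I}\{\bar{X}_j > t\}$. Thus $\mathbb{P}(\hat{X} > t) \le \mathbb{P}(S \ge L)$.

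Next I would control $p_t := \mathbb{P}(\bar{X}_j > t)$ for an honest $j$. Writing $\bar{X}_j = (1/n)\sum_{i\in\mathcal{H}_j} X_i$ with $\mathbb{E} X_i = 0$ and $\mathrm{Var}(X_i) = \sigma^2$, Lemma~\ref{lemma:berry_esseen} gives
\[
p_t \le 1 - \Phi\!\left(\tfrac{\sqrt{n}\,t}{\sigma}\right) + \frac{C_\kappa \mathbb{E}|X_1|^{2+\kappa}}{\sigma^{2+\kappa} n^{\kappa/2}}.
\]
Since $\sqrt{n}\,t/\sigma = O(1)$ (by the rate constraints, $\alpha_n$, $n^{-\kappa/2}$, $\sqrt{\log n/m}$ are all bounded), the mean-value theorem applied to $\Phi$ on a bounded interval yields $1 - \Phi(\sqrt{n}\,t/\sigma) \le 1/2 - c_0 \tilde{C}\bigl(\alpha_n + n^{-\kappa/2} + \sqrt{\log n/m}\bigr)$ for some $c_0>0$ depending on $\sigma$. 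Combined with the Berry--Esseen remainder, this gives
\[
p_t \le \tfrac{1}{2} - c_1 \tilde{C}\bigl(\alpha_n + \sqrt{\log n/m}\bigr) - (c_1\tilde{C}-C')\,n^{-\kappa/2},
\]
so for $\tilde{C}$ large enough, $p_t \le 1/2 - c_2\tilde{C}(\alpha_n + n^{-\kappa/2} + \sqrt{\log n/m})$.

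Now I compare the threshold fraction $L/M$ with $p_t$. Since $\alpha_n \le 1/2 - \delta$,
\[
\frac{L}{M} \ge \frac{(m+1)/2 - \alpha_n m}{(m+1) - \lfloor \alpha_n m\rfloor} \ge \tfrac{1}{2} - \alpha_n\,\frac{1}{1-\alpha_n} \ge \tfrac{1}{2} - \tilde{c}\alpha_n,
\]
for some $\tilde c = \tilde c(\delta) > 0$. Hence, provided $\tilde{C}$ satisfies $c_2 \tilde{C} \ge \tilde{c} + 1$,
\[
\frac{L}{M} - p_t \ge c_3 \tilde{C}\bigl(\alpha_n + n^{-\kappa/2} + \sqrt{\log n/m}\bigr) =: \eta_n,
\]
with $c_3 > 0$. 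The event $\{S \ge L\}$ then says that the centered average of $M$ bounded i.i.d.\ indicators (each in $[-1,1]$, with $\mathfrak{E}(1,\cdot) \le e$) exceeds $\eta_n$. Applying Lemma~\ref{lemma:cai_liu},
\[
\mathbb{P}(S \ge L) \le \exp\!\bigl(-c_4 M \eta_n^2\bigr) \le \exp\!\bigl(-c_5 \tilde{C}^2 \log n\bigr) = O(n^{-\gamma}),
\]
where the last step uses that $M \asymp m$ and $\eta_n^2 \ge c_3^2 \tilde{C}^2 \log n /m$. Choosing $\tilde{C}$ large enough so that $c_5 \tilde{C}^2 \ge \gamma$ and combining with the symmetric bound for $\{\hat{X} < -t\}$ finishes the proof.

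\paragraph{Main obstacle.} The delicate step is the calibration in the third paragraph: we must simultaneously absorb (i) the $\alpha_n$ ``budget'' forced on us by Byzantine indices into the gap $L/M - p_t$, (ii) the Berry--Esseen remainder $O(n^{-\kappa/2})$ which inflates $p_t$, and (iii) guarantee that the remaining gap still scales like $\sqrt{\log n / m}$ so that the exponential inequality produces $n^{-\gamma}$. This is exactly why the threshold $t$ is taken to be a large constant multiple of $\alpha_n + n^{-\kappa/2} + \sqrt{\log n / m}$ (divided by $\sqrt{n}$), and why $\alpha_n \le 1/2 - \delta$ with $\delta > 0$ cannot be dropped.
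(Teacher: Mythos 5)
Your proposal is correct and follows essentially the same route as the paper's proof: reduce the median event to a count over the honest machines, control the single-machine exceedance probability via Berry--Esseen, translate the threshold $t$ into a probability gap of order $\alpha_n+n^{-\kappa/2}+\sqrt{\log n/m}$ using the bounded derivative of $\Phi$ near $0$ (the paper does this through $\Phi^{-1}$, you through the mean-value theorem on $\Phi$ --- equivalent), and finish with the Cai--Liu exponential inequality on the i.i.d.\ indicators. The only cosmetic difference is that the paper defines the threshold implicitly as $\Phi^{-1}$ of a prescribed probability and then bounds it, whereas you fix $t$ first and bound the probability.
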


\begin{proof}
	By definition of medians, for any $x>0$, there is
	\begin{align*}
		&\mbP\left(\hat{X}\geq x\right)=\mbP\left\{\sum_{j=0}^{m}\mbI\left(\bar{X}_j<x\right)\leq\frac{m+1}{2}\right\}\leq\mbP\left\{\sum_{j\notin\mcB}\mbI\left(\bar{X}_j<x\right)\leq\frac{m+1}{2}\right\}\\
			=&\mbP\left\{\frac{1}{(1-\alpha_n)(m+1)}\sum_{j\notin\mcB}\mbI\left(\bar{X}_j<x\right)-\mbP\left(\bar{X}_1<x\right)\leq\frac{1}{2}+\frac{\alpha_n}{2(1-\alpha_n)}-\mbP\left(\bar{X}_1<x\right)\right\}\\
			\leq&\mbP\left\{\frac{1}{(1-\alpha_n)(m+1)}\sum_{j\notin\mcB}\mbI\left(\bar{X}_j<x\right)-\mbP\left(\bar{X}_1<x\right)\leq\frac{1}{2}+\frac{\alpha_n}{2(1-\alpha_n)}+C_{\kappa}\frac{\mbE|X_1|^{2+\kappa}}{\sigma^{2+\kappa}n^{\kappa/2}}-\Phi\left(\frac{\sqrt{n}x}{\sigma}\right)\right\}\\
			=&\mbP\left\{\frac{1}{(1-\alpha_n)(m+1)}\sum_{j\notin\mcB}\mbI\left(\bar{X}_j<x\right)-\mbP\left(\bar{X}_1<x\right)\leq-\sqrt{\frac{c\log n}{(1-\alpha_n)(m+1)}}\right\},
	\end{align*}
	where the last line uses Berry-Esseen Theorem (Lemma \ref{lemma:berry_esseen}), and $x$ is given by 
	\begin{equation*}
		x=\frac{\sigma}{\sqrt{n}}\Phi^{-1}\left(\frac{1}{2}+\frac{\alpha_n}{2(1-\alpha_n)}+C_{\kappa}\frac{\mbE|X_1|^{2+\kappa}}{\sigma^{2+\kappa}n^{\kappa/2}}+\sqrt{\frac{c\log n}{(1-\alpha_n)(m+1)}}\right).
	\end{equation*}
	Now apply Lemma \ref{lemma:cai_liu} for the i.i.d. sequence $\mbI\left(\bar{X}_j<x\right)-\mbP\left(\bar{X}_1<x\right)$ with
	\begin{equation*}
		\mathfrak{E}\left\{1,\mbI\left(\bar{X}_j<x\right)-\mbP\left(\bar{X}_1<x\right)\right\}\leq e,
	\end{equation*}
	we have
	\begin{equation*}
		\mbP\left\{\frac{1}{(1-\alpha_n)(m+1)}\sum_{j\notin\mcB}\mbI\left(\bar{X}_j<x\right)-\mbP\left(\bar{X}_1<x\right)\leq-\sqrt{\frac{c\log n}{(1-\alpha_n)(m+1)}}\right\}=O( n^{-\gamma}),
	\end{equation*}
	with $c= 4e\gamma$. Moreover, we have the following elementary facts
	\begin{equation*}
		\Phi^{-1}(x_0)=\Phi^{-1}(x_0)-\Phi^{-1}\left(1/2\right)\leq\left(x_0-1/2\right)\left(\Phi^{-1}\right)'(x_0)\leq\frac{x_0-1/2}{\psi\left\{\Phi^{-1}(1-\delta/2)\right\}},
	\end{equation*}
	holds for any $1/2\leq x_0<1-\delta/2$. Also we know $1/2\leq\Phi(\sqrt{n}x/\sigma)<1-\delta/2$ holds for $m,n$ sufficiently large. Denote $C_{\delta}=1/\psi\{\Phi^{-1}(1-\delta/2)\}$, then there is
	\begin{align*}
		&\mbP\left\{\hat{X}\geq \frac{\sigma C_{\delta}}{\sqrt{n}}\left(\frac{\alpha_n}{2(1-\alpha_n)}+C_{\kappa}\frac{\mbE|X_1|^{2+\kappa}}{\sigma^{2+\kappa}n^{\kappa/2}}+\sqrt{\frac{c\log n}{(1-\alpha_n)(m+1)}}\right)\right\}\leq\mbP\left(\hat{X}\geq x\right)\\
			\leq&\mbP\left\{\frac{1}{(1-\alpha_n)(m+1)}\sum_{j\notin\mcB}\mbI\left(\bar{X}_j<x\right)-\mbP\left(\bar{X}_1<x\right)\leq-\sqrt{\frac{c\log n}{(1-\alpha_n)(m+1)}}\right\}\leq n^{-\gamma}.
	\end{align*}
	Now do the same thing for $\mbP(\hat{X}\leq-x)$, we finally get the desired result.
\end{proof}


\subsection{Proofs of the results in Section \ref{sec:theory:VRMOM}}
\label{sec:proof_VRMOM}

We firstly provide the proofs related to the univariate VRMOM estimator.


\begin{proof}[Proof of Theorem \ref{thm:normality_vrmom}, Theorem \ref{thm:concentration_vrmom_byzantine}, and Theorem \ref{thm:concen_multi_vrmom}]
	Denote $\kappa_1=\min\{1,\kappa\}$ and $\kappa_2=\min\{2,\kappa\}$. To prove this result, let's firstly give a convergence rate for sample variance $\hat{\sigma}-\sigma$. From the moment bound $\mbE|X_1-\mu|^{2+\kappa}<\infty$, 
	 by Marcinkiewicz-Zygmund theorem (Theorem 5.2.2 in \cite{chow_teicher.2012}), we have
	\begin{align*}
		\frac{1}{n}\sum_{i\in\mcH_0}(X_i-\mu)^2-\sigma^2=o_{\mbP}\left(\frac{1}{n^{\kappa_2/(2+\kappa_2)}}\right),\quad\frac{1}{n}\sum_{i\in\mcH_0}(X_i-\mu)=o_{\mbP}\left(\frac{1}{n^{\kappa_2/(2+\kappa_2)}}\right).
	\end{align*} 
	Therefore we have
	\begin{equation}	\label{eq:th1_vrmom_sigma}
		\begin{aligned}
			\left|\hat{\sigma}-\sigma\right|=&\frac{1}{\hat{\sigma}+\sigma}\left|\frac{1}{n}\sum_{i\in\mcH_0}\left(X_i-\bar{X}_0\right)^2-\sigma^2\right|\\
			\leq&\frac{1}{\sigma}\left|\frac{1}{n}\sum_{i\in\mcH_0}(X_i-\mu)^2-\sigma^2-\left(\bar{X}_0-\mu\right)^2\right|=o_{\mbP}\left(\frac{1}{n^{\kappa_2/(2+\kappa_2)}}\right).
		\end{aligned} 
	\end{equation}
	Next pick out one summand in \eqref{eq:vrmom_def}. Denote 
	\begin{align*}
		G_j(x):=\mbP\left\{\frac{\sqrt{n}(\bar{X}_j-\mu)}{\sigma}\leq x\right\},\quad I_j(x):=\mbI\left\{\frac{\sqrt{n}(\bar{X}_j-\mu)}{\sigma}\leq x\right\},
	\end{align*}
	then 
	\begin{align*}
		&\mbI\left\{\bar{X}_j\leq\hat{\mu}+\frac{\hat{\sigma}\Delta_k}{\sqrt{n}}\right\}-\frac{k}{K+1}\\
			=&I_j\left\{\frac{\sqrt{n}(\hat{\mu}-\mu)}{\sigma}+\frac{\hat{\sigma}}{\sigma}\Delta_k\right\}-G_j\left\{\frac{\sqrt{n}(\hat{\mu}-\mu)}{\sigma}+\frac{\hat{\sigma}}{\sigma}\Delta_k\right\}+\underbrace{G_j\left\{\frac{\sqrt{n}(\hat{\mu}-\mu)}{\sigma}+\frac{\hat{\sigma}}{\sigma}\Delta_k\right\}-\Phi(\Delta_k)}_{T}.
	\end{align*}
	For the term $T$,
	\begin{equation}	\label{mmm.term1}
	\begin{aligned}
		T=&G_j\left\{ \frac{\sqrt{n} (\hat{\mu}-\mu)}{\sigma}+\frac{\hat{\sigma}}{\sigma} \Delta_k \right\}-\Phi \left\{ \frac{\sqrt{n} (\hat{\mu}-\mu)}{\sigma}+\frac{\hat{\sigma}}{\sigma}\Delta_k \right\}+\Phi \left\{ \frac{\sqrt{n}(\hat{\mu}-\mu)}{\sigma}+\frac{\hat{\sigma}}{\sigma} \Delta_k \right\}-\Phi(\Delta_k)\\
			=&O(n^{-\kappa_1/2})+\psi(\Delta_k) \frac{\sqrt{n} (\hat{\mu}-\mu)}{\sigma}+\psi(\Delta_k)\frac{\hat{\sigma}-\sigma}{\sigma}\Delta_k +O\left(\frac{\sqrt{n}(\hat{\mu}-\mu)}{\sigma}+\frac{\hat{\sigma} -\sigma}{\sigma} \Delta_k \right)^2\\
			=&\psi(\Delta_k)\frac{\sqrt{n}(\hat{\mu}-\mu)}{\sigma}+O_{\mbP}\left(\alpha_n^2+\frac{\log n}{m}+\frac{1}{n^{\kappa_2/(2+\kappa_2)}}\right),
	\end{aligned}
	\end{equation}
	where line 1 to line 2 uses Berry-Esseen Inequality (Lemma \ref{lemma:berry_esseen}) and Taylor expansion, line 2 to line 3 uses \eqref{eq:th1_vrmom_sigma} and concentration inequalities for 
	$\hat{\mu}$ (Lemma \ref{lemma:concen_mom_byzantine}).
	By using Lemma \ref{lemma:concen_mom_byzantine} and \eqref{eq:th1_vrmom_sigma} again we have that
	\begin{equation*}
		\frac{\sqrt{n}(\hat{\mu}-\mu)}{\sigma}+\frac{\hat{\sigma}-\sigma}{\sigma}\Delta_k=O_{\mbP}\left(\alpha_n+\sqrt{\frac{\log n}{m}}+\frac{1}{n^{\kappa_2/(2+\kappa_2)}}\right).
	\end{equation*} 
	Also note that Berry-Esseen Inequality guarantees that $\sqrt{n}(Y_j-\mu)/\sigma$ satisfies
	\begin{equation*}
		|G_j(x_1)-G_j(x_2)|\leq \psi(0)|x_1-x_2|+2C_{\kappa_1}\frac{\mbE|X_1-\mu|^{2+\kappa_1}}{\sigma^{2+\kappa_1}n^{\kappa_1/2}}.
	\end{equation*} 
	So we can apply Lemma \ref{lemma:empirical} with $\delta_n=O(\alpha_n+\sqrt{\frac{\log n}{m}}+\frac{1}{n^{\kappa_2/(2+\kappa_2)}})$, which yields
	\begin{align*}
		&\frac{1}{m+1}\left|\sum_{j\notin\mcB}\left[I_j\left\{\frac{\sqrt{n}(\hat{\mu}-\mu)}{\sigma}+\frac{\hat{\sigma}}{\sigma}\Delta_k\right\}-G_j\left\{\frac{\sqrt{n}(\hat{\mu}-\mu)}{\sigma}+\frac{\hat{\sigma}}{\sigma}\Delta_k\right\}-I_j(\Delta_k)+G_j(\Delta_k)\right]\right|\\
		\leq&\frac{1}{m+1}\sup_{|x-\Delta_k|\leq\delta_n}\left|\sum_{j\notin\mcB}\left\{I_j(x+\Delta_k)-G_j(x+\Delta_k)-I_j(\Delta_k)+G_j(\Delta_k)\right\}\right|\\
			=&O_{\mbP}\left(\sqrt{\frac{\delta_n\log n}{m}}+\frac{1}{n^{\kappa_1/2}}\right)\\
			=&O_{\mbP}\left(\sqrt{\frac{\alpha_n\log n}{m}}+\Big(\frac{\log n}{m}\Big)^{3/4}+\frac{\log^{1/2}n}{m^{1/2}n^{\kappa_2/(4+2\kappa_2)}}+\frac{1}{n^{\kappa_1/2}}\right).
	\end{align*}
	Thus it implies that
	\begin{equation}	\label{mmm.term2}
		\begin{aligned}
			&\frac{1}{m+1}\sum_{j=0}^{m}\left[I_j\left\{\frac{\sqrt{n}(\hat{\mu}-\mu)}{\sigma}+\frac{\hat{\sigma}}{\sigma}\Delta_k\right\}-G_j\left\{\frac{\sqrt{n}(\hat{\mu}-\mu)}{\sigma}+\frac{\hat{\sigma}}{\sigma}\Delta_k\right\}\right]\\
				=&\frac{1}{m+1}\sum_{j=0}^{m}\left\{I_j(\Delta_k)-G_j(\Delta_k)\right\}\\
				&+O_{\mbP}\left(\sqrt{\frac{\alpha_n\log n}{m}}+\Big(\frac{\log n}{m}\Big)^{3/4}+\frac{\log^{1/2}n}{m^{1/2}n^{\kappa_2/(4+2\kappa_2)}}+\frac{1}{n^{\kappa_1/2}}\right).
		\end{aligned}
	\end{equation}
	Again from \eqref{eq:th1_vrmom_sigma} we have 
	\begin{equation}	\label{mmm.term3}
		\begin{aligned}
			&\left|\frac{\hat{\sigma}-\sigma}{(m+1)\sqrt{n}\sum_{k=1}^K\psi(\Delta_k)}\sum_{k=1}^K\sum_{j=0}^{m}\left[\mbI\left(\bar{X}_j\leq\hat{\mu}+\frac{\hat{\sigma}\Delta_k}{\sqrt{n}}\right)-\frac{k}{K+1}\right]\right|\\
				\leq&\frac{K|\hat{\sigma}-\sigma|}{\sqrt{n}\sum_{k=1}^K\psi(\Delta_k)}=O_{\mbP}\left(\frac{1}{n^{(3\kappa_2+2)/(2\kappa_2+4)}}\right).
		\end{aligned}
	\end{equation}
	Combining equations (\ref{mmm.term1}) (\ref{mmm.term2}) (\ref{mmm.term3}), we have
	\begin{equation}	\label{eq:simplified_diff}
	\begin{aligned}
		\bar{\mu}-\mu=&\hat{\mu}-\mu-\frac{\hat{\sigma}}{(m+1)\sqrt{n}\sum_{k=1}^K\psi(\Delta_k)}\sum_{k=1}^K\sum_{j=0}^{m}\left\{\mbI\left(\bar{X}_j\leq\hat{\mu}+\frac{\hat{\sigma}\Delta_k}{\sqrt{n}}\right)-\frac{k}{K+1}\right\}\\
			=&\hat{\mu}-\mu-\frac{\sigma}{(m+1)\sqrt{n}\sum_{k=1}^K\psi(\Delta_k)}\sum_{k=1}^K\sum_{j\notin\mcB}\left\{\mbI\left(\bar{X}_j\leq\hat{\mu}+\frac{\hat{\sigma}\Delta_k}{\sqrt{n}}\right)-\frac{k}{K+1}\right\}\\
			&+O_{\mbP}\left(\frac{\alpha_n}{\sqrt{n}}+\frac{1}{n^{(3\kappa_2+2)/(2\kappa_2+4)}}\right)\\
			=&\frac{\sigma}{\sqrt{n}\sum_{k=1}^K\psi(\Delta_k)}\sum_{k=1}^K\frac{1}{m+1}\sum_{j\notin\mcB}\left[G_j\left\{\frac{\sqrt{n}(\hat{\mu}-\mu)}{\sigma}+\frac{\hat{\sigma}}{\sigma}\Delta_k\right\}-I_j\left\{\frac{\sqrt{n}(\hat{\mu}-\mu)}{\sigma}+\frac{\hat{\sigma}}{\sigma}\Delta_k\right\}\right]\\
			&+O_{\mbP}\left(\frac{\alpha_n}{\sqrt{n}}+\frac{1}{n^{(3\kappa_2+2)/(2\kappa_2+4)}}+\frac{\log n}{m\sqrt{n}}\right)\\
			=&\frac{1}{m+1}\sum_{j\notin\mcB}\frac{\sigma}{\sqrt{n}\sum_{k=1}^K\psi(\Delta_k)}\sum_{k=1}^K\{G_j(\Delta_k)-I_j(\Delta_k)\}\\
			&+O_{\mbP}\left(\frac{\alpha_n}{\sqrt{n}}+\frac{\log n}{m\sqrt{n}}+\frac{1}{n^{(3\kappa_2+2)/(2\kappa_2+4)}}+\frac{\log^{3/4} n}{n^{1/2}m^{3/4}}\right).
	\end{aligned}
	\end{equation}
	From central limit theorem, we have
	\begin{equation}	\label{eq:mu_bar_concen}
		\bar{\mu}-\mu=O_{\mbP}\left(\frac{\alpha_n}{\sqrt{n}}+\frac{1}{\sqrt{mn}}+\frac{1}{n^{(3\kappa_2+2)/(2\kappa_2+4)}}+\frac{\log^{3/4} n}{n^{1/2}m^{3/4}}\right),
	\end{equation}
	which proves Theorem \ref{thm:concentration_vrmom_byzantine}. Moreover, when $m=o(n^{2\kappa_2/(\kappa_2+2)})=o(\min\{n^{2\kappa/(\kappa+2)},n\}), \log^3n=o(m)$ and $\alpha_n=o(1/\sqrt{m})$, the remainder in \eqref{eq:simplified_diff} will be of the order $o_{\mbP}(\frac{1}{\sqrt{mn}})$, while the major term becomes a summation of i.i.d. sequence with
	\begin{align*}
		&\mbE\left[\sum_{k=1}^K\left\{G_j(\Delta_k)-I_j(\Delta_k)\right\}\right]=0,\:\text{ and}\\
		&\text{Var}\left[\sum_{k=1}^K\left\{G_j(\Delta_k)-I_j(\Delta_k)\right\}\right]\\
			=&\sum_{k_1,k_2=1}^KG_{j}\left\{\min(\Delta_{k_1},\Delta_{k_2})\right\}-G_j(\Delta_{k_1})G_j(\Delta_{k_2})\\
			=& \sum_{k_1,k_2=1}^K\min(\tau_{k_1},\tau_{k_2})\{1-\max(\tau_{k_1},\tau_{k_2})\}+O(n^{-1/2}),
	\end{align*}
	where the last line again follows from Berry-Esseen theorem (\autoref{lemma:berry_esseen}). Applying standard central limit theorem we have
	\begin{equation*}
		\sqrt{N}(\,\bar{\mu}-\mu)\xrightarrow{d}\mathcal{N}(0,\sigma_K^2),
	\end{equation*}
	where $\sigma^2_K$, as defined in \eqref{eq:vrmom_variance}, tends to $\pi\sigma^2/3$ according to Lemma \ref{lem:cl_lim} in Appendix \ref{sec:posit_def}. Therefore Theorem \ref{thm:normality_vrmom} is proved.
	
	{To prove Theorem \ref{thm:concen_multi_vrmom}, we just apply \eqref{eq:mu_bar_concen} to each coordinate and obtain that
	\begin{equation*}
		|\bar{\vect{\mu}}-\vect{\mu}|_2=\sqrt{\sum_{l=1}^p|\bar{\mu}_l-\mu_l|^2}=O_{\mbP}\left(\frac{\alpha_n\sqrt{p}}{\sqrt{n}}+\sqrt{\frac{p}{mn}}+\frac{\sqrt{p}}{n^{(3\kappa_2+2)/(2\kappa_2+4)}}+\frac{p^{1/2}\log^{3/4} n}{n^{1/2}m^{3/4}}\right).
	\end{equation*}}
\end{proof}

\begin{proof}[Proof of Theorem \ref{cor:VRMOM_limit}]
	{
	To prove asymptotic normality of multi-dimensional VRMOM estimator, using \eqref{eq:simplified_diff} and the rate constraints, for each coordinate $l$ (where $1\leq l\leq p$), there is
	\begin{equation*}
		\bar{\mu}_l-\mu_l=\frac{1}{m+1}\sum_{j\notin\mcB}\frac{\sqrt{\sigma_{l,l}}}{\sqrt{n}\sum_{k=1}^K\psi(\Delta_k)}\sum_{k=1}^K\{G_{j,l}(\Delta_k)-I_{j,l}(\Delta_k)\}+o_{\mbP}\left(\frac{1}{\sqrt{pmn}}\right),
	\end{equation*}
	where
	\begin{align*}
		G_{j,l}(x):=\mbP\left\{\frac{\sqrt{n}(\bar{X}_{j,l}-\mu_l)}{\sqrt{\sigma_{l,l}}}\leq x\right\},\quad I_{j,l}(x):=\mbI\left\{\frac{\sqrt{n}(\bar{X}_{j,l}-\mu_l)}{\sqrt{\sigma_{l,l}}}\leq x\right\}.
	\end{align*}
	For any vector $|\vect{v}|_2=1$, there is
	\begin{align*}
		\left\langle\bar{\vect{\mu}}-\vect{\mu},\vect{v}\right\rangle= \frac{1}{m+1}\sum_{j\notin\mcB}\frac{1}{\sqrt{n}\sum_{k=1}^K\psi(\Delta_k)}\sum_{k=1}^K\sum_{l=1}^p \sqrt{\sigma_{l,l}}\big\{ I_{j,l}(\Delta_k)- G_{j,l}(\Delta_k)\big\}v_{l} +o_{\mbP}\left( \frac{1}{\sqrt{mn}}\right).
	\end{align*}
	Now we apply central limit theorem and yield
	\begin{align*}
		&\frac{\sqrt{(m+1)n}}{\tilde{\sigma}_{\vect{v}}}\big\langle\bar{\vect{\mu}}-\vect{\mu},\vect{v}\big\rangle\xrightarrow{d}\mcN(0,1),\\
		&\text{where }\quad \tilde{\sigma}_{\vect{v}}^2=\vect{v}^{\rm T}\tilde{\vect{\mathcal{C}}}\vect{v}.
	\end{align*}
	Here $\tilde{\vect{\mathcal{C}}}\in\mbR^{p\times p}$ has its $(l_1,l_2)$-entry defined as
	\begin{align*}
		\tilde{\mathcal{C}}_{l_1,l_2}=&\frac{\sqrt{\sigma_{l_1,l_1}\sigma_{l_2,l_2}}}{\{\sum_{k=1}^K\psi(\Delta_k)\}^2}\mbE\left[\sum_{k=1}^K\{I_{0,l_1}(\Delta_k)-G_{0,l_1}(\Delta_k)\}\sum_{k=1}^K\{I_{0,l_2}(\Delta_k)-G_{0,l_2}(\Delta_k)\}\right]\\
			=&\frac{\sqrt{\sigma_{l_1,l_1}\sigma_{l_2,l_2}}}{\{\sum_{k=1}^K\psi(\Delta_k)\}^2}\sum_{k_1,k_2}\left\{\mbP\left(\frac{\sqrt{n}(\bar{X}_{j,l_1}-\mu_{l_1})}{\sqrt{\sigma_{l_1,l_1}}}\leq\Delta_{k_1}, \frac{\sqrt{n}(\bar{X}_{j,l_2}-\mu_{l_2})}{\sqrt{\sigma_{l_2,l_2}}}\leq\Delta_{k_2}\right)-G_{0,l_1}(\Delta_{k_1})G_{0,l_2}(\Delta_{k_2})\right\},
	\end{align*}
	Moreover, we can apply multivariate Berry-Esseen theorem (See Theorem 1.3 in \cite{gotze.1991}) and give
	\begin{align*}
		\tilde{\mathcal{C}}_{l_1,l_2}&=\mathcal{C}_{l_1,l_2}+O(n^{-1/2}),
	\end{align*}
	where $\mathcal{C}_{l_1,l_2}$ is defined in \eqref{eq:normal_cov_entry}. Thus the theorem is proved.}
\end{proof}

\begin{proof}[Proof of Proposition \ref{cor:MOM_limit}]
	{
	For the asymptotic normality of multi-dimensional MOM estimator, together with Lemma \ref{lem:mom_bahadur} below and the rate constraint, we can show that
	\begin{equation*}
		\hat{\mu}_{\mathrm{MOM},l}-\mu_l=\frac{\sqrt{2\pi\sigma_{l,l}}}{(m+1)\sqrt{n}}\sum_{j\notin\mcB}\left\{G_{j,l}(0)-I_{j,l}(0)\right\}+o_{\mbP}\left(\frac{1}{\sqrt{pmn}}\right).
	\end{equation*}
	For any vector $|\vect{v}|_2=1$, there is
	\begin{align*}
		\left\langle\hat{\vect{\mu}}_{\mathrm{MOM}}-\vect{\mu},\vect{v}\right\rangle= \frac{\sqrt{2\pi}}{(m+1)\sqrt{n}}\sum_{j\notin\mcB}\sum_{l=1}^p \sqrt{\sigma_{l,l}}\big\{ I_{j,l}(\Delta_k)- G_{j,l}(\Delta_k)\big\}v_{l} +o_{\mbP}\left( \frac{1}{\sqrt{mn}}\right).
	\end{align*}
	Now we apply central limit theorem and yield
	\begin{align*}
		&\frac{\sqrt{(m+1)n}}{\tilde{\sigma}_{\vect{v}}}\big\langle\hat{\vect{\mu}}_{\mathrm{MOM}}-\vect{\mu},\vect{v}\big\rangle\xrightarrow{d}\mcN(0,1),\\
		&\text{where }\quad \tilde{\sigma}_{\vect{v}}^2=\vect{v}^{\rm T}\tilde{\vect{\mathcal{C}}}_{\mathrm{MOM}}\vect{v}.
	\end{align*}
	Here $\tilde{\vect{\mathcal{C}}}_{\mathrm{MOM}}\in\mbR^{p\times p}$ has its $(l_1,l_2)$-entry defined as
	\begin{align*}
		\tilde{\mathcal{C}}_{\mathrm{MOM}, l_1,l_2}=&2\pi\sqrt{\sigma_{l_1,l_1}\sigma_{l_2,l_2}}\;\mbE\left[\{I_{0,l_1}(0)-G_{0,l_1}(0)\}\{I_{0,l_2}(0)-G_{0,l_2}(0)\}\right]\\
			=&2\pi\sqrt{\sigma_{l_1,l_1}\sigma_{l_2,l_2}}\left\{\mbP\left(\frac{\sqrt{n}(\bar{X}_{j,l_1}-\mu_{l_1})}{\sqrt{\sigma_{l_1,l_1}}}\leq0, \frac{\sqrt{n}(\bar{X}_{j,l_2}-\mu_{l_2})}{\sqrt{\sigma_{l_2,l_2}}}\leq0\right)-G_{0,l_1}(0)G_{0,l_2}(0)\right\},
	\end{align*}
	Moreover, we can apply multivariate Berry-Esseen theorem (See Theorem 1.3 in \cite{gotze.1991}) and give
	\begin{align*}
		\tilde{\mathcal{C}}_{\mathrm{MOM}, l_1,l_2}&=\mathcal{C}_{\mathrm{MOM}, l_1,l_2}+O(n^{-1/2}),
	\end{align*}
	where $\mathcal{C}_{\mathrm{MOM}, l_1,l_2}$ is defined in \eqref{eq:med_normal_entry}. Thus the proposition is proved.}
\end{proof}

\begin{lemma}	\label{lem:mom_bahadur}
	Let $N=(m+1)n$ i.i.d. random variables $X_1,...,X_N$ evenly distributed in $(m+1)$ subsets $\mcH_0,\dots,\mcH_m$. There is a subset of index $\mcB\subset\{1,\dots,m\}$ with $\card(\mcB)=\lfloor\alpha_nm\rfloor$. Define
	\begin{equation*}
		\bar{X}_j=
		\begin{cases}
			\frac{1}{n}\sum_{i\in\mcH_j}X_i\quad&j\notin\mcB,\\
			*\quad&j\in\mcB,
		\end{cases}\quad
		\hat{\mu}=\mathrm{med}(\bar{X}_j\mid0\leq j\leq m).
	\end{equation*} 
	Suppose $X_1$ satisfies $\mbE(X_1)=\mu,\var(X_1)=\sigma^2$, and $\mbE|X_1-\mu|^3<\infty$. The fraction $\alpha_n$ satisfies $\alpha_n\leq1/2-\delta$ for some $\delta>0$. Then $\hat{\mu}$ admits the following representation:
	\begin{equation*}
		\hat{\mu}=\mu-\frac{\sqrt{2\pi}\sigma}{(m+1)\sqrt{n}}\sum_{j\notin\mcB}\left\{\mbI(\bar{X}_j\leq \mu)-\mbP(\bar{X}_j\leq\mu)\right\}+O_{\mbP}\left(\frac{\alpha_n}{\sqrt{n}}+\frac{\log n}{m\sqrt{n}}+\frac{\log^{3/4}n}{m^{3/4}n^{1/2}}+\frac{1}{n}\right).
	\end{equation*}
\end{lemma}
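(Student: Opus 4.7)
The plan is to derive a Bahadur-type linearization of the sample median around $\mu$ adapted to the Byzantine setting. As a first step I would invoke Lemma~\ref{lemma:concen_mom_byzantine} with $\kappa=1$ (justified by $\mbE|X_1-\mu|^3<\infty$) to obtain the crude localization $|\hat{\mu}-\mu|\leq\delta_n$ with probability $1-O(n^{-\gamma})$, where $\delta_n:=C(\alpha_n+n^{-1/2}+\sqrt{\log n/m})/\sqrt{n}$. This preliminary bound is the prerequisite that allows the uniform empirical-process step below to be applied only on a shrinking neighborhood of $\mu$.

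Next I would exploit the defining property of the sample median, $\bigl|\sum_{j=0}^{m}\mbI(\bar{X}_j\leq\hat{\mu})-(m+1)/2\bigr|\leq 1$. Splitting the sum over $j\notin\mcB$ and $j\in\mcB$ and bounding the Byzantine contribution crudely by $|\mcB|=\lfloor\alpha_n m\rfloor$ yields
$$\sum_{j\notin\mcB}\mbI(\bar{X}_j\leq\hat{\mu})=\frac{m+1-|\mcB|}{2}+\eta,\qquad|\eta|\leq \frac{|\mcB|}{2}+1.$$
Writing $F_n(x)=\mbP(\bar{X}_1\leq x)$ for a non-Byzantine sample mean, Lemma~\ref{lemma:empirical} applied to $\sqrt{n}(\bar{X}_1-\mu)/\sigma$ (whose CDF satisfies the hypothesis of that lemma by Berry--Esseen) bounds the centered empirical process
$$\frac{1}{m+1}\sum_{j\notin\mcB}\bigl\{[\mbI(\bar{X}_j\leq\hat{\mu})-F_n(\hat{\mu})]-[\mbI(\bar{X}_j\leq\mu)-F_n(\mu)]\bigr\}=O_{\mbP}\bigl(\sqrt{\delta_n\log n/m}+n^{-1/2}\bigr),$$
while Lemma~\ref{lemma:berry_esseen} combined with a second-order Taylor expansion of $\Phi$ at $0$ gives $F_n(\hat{\mu})-F_n(\mu)=\sqrt{n}(\hat{\mu}-\mu)/(\sigma\sqrt{2\pi})+O(n(\hat{\mu}-\mu)^2)+O(n^{-1/2})$.

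Substituting both expansions into the median identity and solving for $\hat{\mu}-\mu$ yields
$$\hat{\mu}-\mu=-\frac{\sigma\sqrt{2\pi}}{(m+1-|\mcB|)\sqrt{n}}\sum_{j\notin\mcB}[\mbI(\bar{X}_j\leq\mu)-F_n(\mu)]+R,$$
where $R$ collects the empirical-process error, the Taylor remainder of order $\sqrt{n}(\hat{\mu}-\mu)^2$, the Byzantine imbalance of order $\alpha_n/\sqrt{n}$, and the Berry--Esseen error of order $1/n$. Since $1/(m+1-|\mcB|)=(1+O(\alpha_n))/(m+1)$ and the leading centered sum is $O_{\mbP}(\sqrt{m})$, the $O(\alpha_n)$ correction on the leading coefficient contributes only $O(\alpha_n/\sqrt{mn})$, which is absorbed; replacing $F_n(\mu)$ by $\mbP(\bar{X}_j\leq\mu)$ is exact for $j\notin\mcB$, giving the stated leading term. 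The main bookkeeping obstacle is checking that, after the $\sigma\sqrt{2\pi}/((m+1)\sqrt{n})$ division, each piece of $R$ is dominated by one of the four listed remainder terms: plugging the dominant piece $\delta_n\asymp\sqrt{\log n/(mn)}$ into the empirical-process bound gives $\log^{3/4}n/(m^{3/4}n^{3/4})$, which is no larger than the stated $\log^{3/4}n/(m^{3/4}n^{1/2})$, and $\sqrt{n}(\hat{\mu}-\mu)^2\lesssim\alpha_n^2/\sqrt{n}+\log n/(m\sqrt{n})+n^{-3/2}$ is similarly dominated by $\alpha_n/\sqrt{n}+\log n/(m\sqrt{n})+1/n$, completing the reduction.
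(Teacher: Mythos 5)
Your proposal follows essentially the same route as the paper's proof: crude localization of $\hat{\mu}$ via Lemma~\ref{lemma:concen_mom_byzantine}, the defining identity of the median, the uniform empirical-process modulus from Lemma~\ref{lemma:empirical} applied to the standardized block means, and a Berry--Esseen-plus-Taylor expansion of $\Phi$, so the argument is correct and matches the paper (your explicit handling of the $(m+1-|\mcB|)$ versus $(m+1)$ normalization is a detail the paper absorbs into the $O_{\mbP}(\alpha_n/\sqrt{n})$ term). One harmless bookkeeping wobble: applying Lemma~\ref{lemma:empirical} on the standardized scale means the relevant radius is $\sqrt{n}\delta_n\asymp\sqrt{\log n/m}$, so the empirical-process contribution after dividing by $\sqrt{n}$ is exactly $\log^{3/4}n/(m^{3/4}n^{1/2})$ rather than your smaller $\log^{3/4}n/(m^{3/4}n^{3/4})$ --- still precisely one of the listed remainder terms, so the conclusion is unaffected.
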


\begin{proof}
	Using Taylor expansion we have
	\begin{align*}
		\Phi\left\{\frac{\sqrt{n}(\hat{\mu}-\mu)}{\sigma}\right\}-\Phi(0)=&\frac{\sqrt{n}}{\sqrt{2\pi}\sigma}(\hat{\mu}-\mu)+O(n(\hat{\mu}-\mu)^2)\\
			=&\frac{\sqrt{n}}{\sqrt{2\pi}\sigma}(\hat{\mu}-\mu)+O_{\mbP}\left(\alpha_n^2+\frac{\log n}{m}\right),\stepcounter{equation}\tag{\theequation}\label{eq:mom_lemma_term1}
	\end{align*}
	since $\hat{\mu}-\mu=O_{\mbP}(\frac{\alpha_n}{\sqrt{n}}+\sqrt{\frac{\log n}{mn}})$ by Lemma \ref{lemma:concen_mom_byzantine}. On the other hand, denote
	\begin{equation*}
		G(x)=\mbP\left\{\frac{\sqrt{n}(\bar{X}_1-\mu)}{\sigma}\leq x\right\},\quad I_j(x)=\mbI\left\{\frac{\sqrt{n}(\bar{X}_j-\mu)}{\sigma}\leq x\right\}.
	\end{equation*} 
	From Berry-Essen theorem (Lemma \ref{lemma:berry_esseen}) we know Lemma \ref{lemma:empirical} is applicable and yields
	\begin{align*}
		&\frac{1}{m+1}\sum_{j=0}^mI_j\left\{\frac{\sqrt{n}(\hat{\mu}-\mu)}{\sigma}\right\}-G\left\{\frac{\sqrt{n}(\hat{\mu}-\mu)}{\sigma}\right\}-\frac{1}{m+1}\sum_{j=0}^mI_j(0)+G(0)\\
		=&O_{\mbP}\left(\alpha_n+\frac{\log^{3/4}n}{m^{3/4}}+\frac{1}{\sqrt{n}}\right).
	\end{align*}	
	By definition of median we know 
	\begin{equation*}
		\Phi(0)=\frac{1}{2}=\frac{1}{m+1}\sum_{j=0}^mI_j\left\{\frac{\sqrt{n}(\hat{\mu}-\mu)}{\sigma}\right\}+O\left(\frac{1}{m}\right).
	\end{equation*}	
	Thus
	\begin{align*}
		\Phi\left\{\frac{\sqrt{n}(\hat{\mu}-\mu)}{\sigma}\right\}-\Phi(0)=&G\left\{\frac{\sqrt{n}(\hat{\mu}-\mu)}{\sigma}\right\}-\frac{1}{m+1}\sum_{j=0}^mI_j\left\{\frac{\sqrt{n}(\hat{\mu}-\mu)}{\sigma}\right\}+O\left(\frac{1}{m}+\frac{1}{\sqrt{n}}\right)\\
			=&-\frac{1}{m+1}\sum_{j\notin\mcB}\{I_j(0)-G(0)\}+O_{\mbP}\left(\alpha_n+\frac{\log^{3/4}n}{m^{3/4}}+\frac{1}{\sqrt{n}}\right).\stepcounter{equation}\tag{\theequation}\label{eq:mom_lemma_term2}
	\end{align*}
	 Combining \eqref{eq:mom_lemma_term1} and \eqref{eq:mom_lemma_term2} we have
	 \begin{align*}
	 	\hat{\mu}=&\mu-\frac{\sqrt{2\pi}\sigma}{(m+1)\sqrt{n}}\sum_{j\notin\mcB}\left\{I_j(0)-G(0)\right\}+O_{\mbP}\left(\frac{\alpha_n}{\sqrt{n}}+\frac{\log n}{m\sqrt{n}}+\frac{\log^{3/4}n}{m^{3/4}n^{1/2}}+\frac{1}{n}\right)\\
			=&\mu-\frac{\sqrt{2\pi}\sigma}{(m+1)\sqrt{n}}\sum_{j\notin\mcB}\left\{\mbI(\bar{X}_j\leq\mu)-\mbP(\bar{X}_j\leq\mu)\right\}+O_{\mbP}\left(\frac{\alpha_n}{\sqrt{n}}+\frac{\log n}{m\sqrt{n}}+\frac{\log^{3/4}n}{m^{3/4}n^{1/2}}+\frac{1}{n}\right),
	 \end{align*}
	 which is exactly what we want to prove.
\end{proof}

\section{Positive Definiteness of $\mathcal{C}_{\mathrm{MOM}}-\mathcal{C}$}\label{sec:posit_def}

{From Theorem \ref{cor:VRMOM_limit} and Proposition \ref{cor:MOM_limit}, in order to show that our proposed multi-dimensional VRMOM estimator $\bar{\vect{\mu}}$ has higher statistical efficiency than the corresponding MOM estimator $\hat{\vect{\mu}}$, it is equivalent to prove that $\vect{\mathcal{C}}\preceq \vect{\mathcal{C}}_{\mathrm{MOM}}$ holds true. In this appendix, we will verify that the covariance difference $\vect{\mathcal{C}}_{\mathrm{MOM}}-\vect{\mathcal{C}}$ is positive definite in dimension 2 as $K$ tends to infinity. First of all, we shall give a general formulation for the entry $\mathcal{C}_{l_1,l_2}$ as $K\rightarrow\infty$.}

\begin{lemma}	\label{lem:cl_lim}
	Denote $\mathcal{C}_{l_1,l_2}^K$ as the $(l_1,l_2)$-entry of the matrix $\vect{\mathcal{C}}$ defined in \eqref{eq:normal_cov_entry}. Then we have
	\begin{equation}	\label{eq:cov_entry_infK}
		\lim_{K\rightarrow\infty}\mathcal{C}_{l_1,l_2}^K=\Big\{4\pi\int_{-\infty}^{\infty}\int_{-\infty}^{\infty}\psi(y_1)\psi(y_2)F_{l_1,l_2}(y_1,y_2)\diff y_1\diff y_2-\pi\Big\}\sqrt{\sigma_{l_1,l_1}\sigma_{l_2,l_2}}.
	\end{equation}
	In particular, when $l_1=l_2=l$, we have
	\begin{equation}	\label{eq:cov_diag_infK}
		\lim_{K\rightarrow\infty}\mathcal{C}_{l,l}^K=\frac{\pi}{3}\sigma_{l,l}.
	\end{equation}
\end{lemma}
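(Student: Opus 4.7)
The plan is to recognize the numerator and denominator of $\mathcal{C}_{l_1,l_2}^K$ in \eqref{eq:normal_cov_entry} as Riemann sums on the partition $\{\tau_k = k/(K+1)\}_{k=1}^K$ of $(0,1)$, take $K\to\infty$, and then evaluate the resulting integrals via the change of variables $u=\Phi(y)$, $du=\psi(y)\,dy$.

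First I would handle the denominator. Dividing by $(K+1)$, the Riemann sum
\[
\frac{1}{K+1}\sum_{k=1}^{K}\psi(\Delta_k) \;=\; \frac{1}{K+1}\sum_{k=1}^{K}\psi\!\bigl(\Phi^{-1}(\tau_k)\bigr)
\]
converges, as $K\to\infty$, to $\int_{0}^{1}\psi(\Phi^{-1}(u))\,du$, which via $u=\Phi(y)$ equals $\int_{-\infty}^{\infty}\psi(y)^2\,dy = 1/(2\sqrt{\pi})$. Hence $\bigl\{\sum_{k=1}^{K}\psi(\Delta_k)\bigr\}^2/(K+1)^2 \to 1/(4\pi)$.

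Next I would handle the numerator. Writing $\tau_{k_1,k_2}^{l_1,l_2} = F_{l_1,l_2}(\Delta_{k_1},\Delta_{k_2})$, the sum divided by $(K+1)^2$ is a two-dimensional Riemann sum:
\[
\frac{1}{(K+1)^2}\sum_{k_1,k_2=1}^{K}\!\bigl\{F_{l_1,l_2}(\Delta_{k_1},\Delta_{k_2}) - \tau_{k_1}\tau_{k_2}\bigr\} \;\longrightarrow\; \int_0^1\!\!\int_0^1 \!\bigl\{F_{l_1,l_2}(\Phi^{-1}(u_1),\Phi^{-1}(u_2)) - u_1u_2\bigr\}du_1\,du_2.
\]
The functions $F_{l_1,l_2}\circ(\Phi^{-1},\Phi^{-1})$ and the product $u_1u_2$ are bounded and continuous on $(0,1)^2$, so Riemann sum convergence applies. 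The second integral equals $1/4$; applying the same change of variables as above to the first gives $\int\!\!\int F_{l_1,l_2}(y_1,y_2)\psi(y_1)\psi(y_2)\,dy_1\,dy_2$. Dividing the limiting numerator by the limiting denominator $1/(4\pi)$ and multiplying by $\sqrt{\sigma_{l_1,l_1}\sigma_{l_2,l_2}}$ yields exactly \eqref{eq:cov_entry_infK}.

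For the diagonal specialization \eqref{eq:cov_diag_infK}, when $l_1=l_2=l$ the pair $(Z_l,Z_l)$ is perfectly correlated, so $F_{l,l}(y_1,y_2)=\Phi(\min(y_1,y_2))$. By symmetry I would reduce the double integral to
\[
\int\!\!\int \Phi(\min(y_1,y_2))\psi(y_1)\psi(y_2)\,dy_1\,dy_2 \;=\; \int_{-\infty}^{\infty}\Phi(y)^2 \psi(y)\,dy \;=\; \int_0^1 u^2\,du \;=\; \tfrac{1}{3},
\]
after the substitution $u=\Phi(y)$. Substituting back gives $\{4\pi/3-\pi\}\sigma_{l,l} = (\pi/3)\sigma_{l,l}$, as claimed.

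The only step requiring care is the Riemann sum convergence, since $\Phi^{-1}(u)$ blows up at the endpoints $u\in\{0,1\}$. However, both $\psi\circ\Phi^{-1}$ and $F_{l_1,l_2}\circ(\Phi^{-1},\Phi^{-1})$ remain bounded (the former vanishes at the endpoints, the latter is a CDF), so the standard Riemann sum theorem for bounded continuous integrands on $(0,1)^d$ applies after a routine argument splitting off arbitrarily thin strips near the boundary. This is the only mildly technical point; the remainder is an exercise in substitution.
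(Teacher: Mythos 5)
Your proposal is correct and follows essentially the same route as the paper: interpret the normalized numerator and denominator as Riemann sums over $\{\tau_k\}$, pass to the limit, and evaluate the integrals via the substitution $u=\Phi(y)$, with the same reduction $F_{l,l}(y_1,y_2)=\Phi(\min(y_1,y_2))$ and $\int\Phi^2\psi = 1/3$ for the diagonal case. Your added remark about boundedness of the integrands near the endpoints of $(0,1)$ is a minor point of rigor the paper leaves implicit.
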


\begin{proof}
	For the denominator in \eqref{eq:normal_cov_entry}, we compute that
\begin{equation}	\label{eq:part1}
\begin{aligned}
	\lim_{K\rightarrow\infty}\frac{1}{K}\sum_{k=1}^K\psi(\Delta_k)=&\lim_{K\rightarrow\infty}\frac{K+1}{K}\sum_{k=1}^K\psi\Big(\Phi^{-1}\Big(\frac{k}{K+1}\Big)\Big)\frac{1}{K+1}\\
		=&\int_{0}^{1}\psi\big(\Phi^{-1}(x)\big)\diff x\\
		(\text{change variable } x=\Phi(y))=&\int_{-\infty}^{\infty}\psi^2(y)\diff y\\
		=&\frac{1}{2\pi}\int_{-\infty}^{\infty}e^{-x^2}\diff x=\frac{1}{2\sqrt{\pi}}.
\end{aligned}
\end{equation}
For the numerator, on the one hand, we have
\begin{equation}	\label{eq:part2}
\begin{aligned}
	\lim_{K\rightarrow\infty}\frac{1}{K^2}\sum_{k_1,k_2=1}^K\tau_{k_1}\tau_{k_2}=&\lim_{K\rightarrow\infty}\Big(\frac{1}{K}\sum_{k=1}^K\tau_{k}\Big)^2\\
		=&\lim_{K\rightarrow\infty}\Big(\frac{1}{K}\sum_{k=1}^K\frac{k}{K+1}\Big)^2\\
		=&\Big(\int_0^1x\diff x\Big)^2=\frac{1}{4}.
\end{aligned}
\end{equation}
On the other hand, we have that
\begin{equation}	\label{eq:part3}
\begin{aligned}
	\lim_{K\rightarrow\infty}\frac{1}{K^2}\sum_{k_1,k_2=1}^K\tau^{l_1,l_2}_{k_1,k_2}=&\int_{0}^{1}\int_{0}^{1}F_{l_1,l_2}(\Phi^{-1}(x_1),\Phi^{-1}(x_2))\diff x_1\diff x_2\\
		=&\int_{-\infty}^{\infty}\int_{-\infty}^{\infty}\psi(y_1)\psi(y_2)F_{l_1,l_2}(y_1, y_2)\diff y_1\diff y_2,\\
\end{aligned}
\end{equation}
where $F_{l_1,l_2}(y_1,y_2)=\mbP(Z_{l_1}\leq y_1,Z_{l_2}\leq y_2)$. Combining \eqref{eq:part1}, \eqref{eq:part2} and \eqref{eq:part3} we have
\begin{equation}	\label{eq:cl_vrmom}
\begin{aligned}
	\lim_{K\rightarrow\infty}\mathcal{C}_{l_1,l_2}^K=\Big\{4\pi\int_{-\infty}^{\infty}\int_{-\infty}^{\infty}\psi(y_1)\psi(y_2)F_{l_1,l_2}(y_1,y_2)\diff y_1\diff y_2-\pi\Big\}\sqrt{\sigma_{l_1,l_1}\sigma_{l_2,l_2}}.
\end{aligned}
\end{equation}
In particular, when $l_1=l_2=l$, we have
\begin{equation*}
	F_{l,l}(y_1,y_2)=\Phi(\min(y_1,y_2)).
\end{equation*}
Substitute it in \eqref{eq:part3}, we can obtain 
\begin{align*}
	\lim_{K\rightarrow\infty}\frac{1}{K^2}\sum_{k_1,k_2=1}^K\tau^{l,l}_{k_1,k_2}=&\int_{-\infty}^{\infty}\int_{-\infty}^{\infty}\psi(y_1)\psi(y_2)\Phi(\min(y_1,y_2))\diff y_1\diff y_2\\
		=&2\int_{-\infty}^{\infty}\psi(y_1)\int_{-\infty}^{y_1}\psi(y_2)\Phi(y_2)\diff y_2\diff y_1\\
		=&\int_{-\infty}^{\infty}\psi(y_1)\Phi^2(y_1)\diff y_1=\frac{1}{3}.
\end{align*}
Therefore we have
\begin{equation*}
	\lim_{K\rightarrow\infty}\mathcal{C}_{l,l}=\frac{1/3-1/4}{1/(4\pi)}\sigma_{l,l}=\frac{\pi}{3}\sigma_{l,l},
\end{equation*}
which completes the proof.
\end{proof}

\begin{proof}[Verification of $\vect{\mathcal{C}}\preceq\vect{\mathcal{C}}_{\mathrm{MOM}}$]
	In the case of dimension $2$, we assume the gradient $\nabla f(X,\vect{\theta}^*) = (\nabla_1 f(X,\vect{\theta}^*),$ \mbox{} $ \nabla_2 f(X,\vect{\theta}^*))^{\tp}$ has covariance matrix
\begin{equation*}
\vect{\Sigma} = \begin{pmatrix}
	\sigma_{1,1}	&	\sin\phi\sqrt{\sigma_{1,1}\sigma_{2,2}}\\
	\sin\phi\sqrt{\sigma_{1,1}\sigma_{2,2}}	& \sigma_{2,2}
\end{pmatrix},
\quad\text{therefore, }\quad
\vect{\Sigma}_{1,2}= \begin{pmatrix}
	1	&	\sin\phi\\
	\sin\phi	&	1
\end{pmatrix}.
\end{equation*}
From Lemma \ref{lem:cl_lim}, we have that $\mathcal{C}_{\mathrm{MOM},l,l}-\mathcal{C}_{l,l} = \pi\sigma_{l,l}/6$ as $K\rightarrow\infty$, and 
\begin{align*}
	\mathcal{C}_{1,2}=&4\pi\left\{\int_{-\infty}^{\infty}\int_{-\infty}^{\infty}\psi(y_1)\psi(y_2)F_{1,2}(y_1, y_2)\diff y_1\diff y_2-1/4\right\}\sqrt{\sigma_{1,1}\sigma_{2,2}}\\
		=&4\pi\left\{\int_{-\infty}^{\infty}\psi(y_2)\diff y_2\int_{-\infty}^{y_2}\diff x_2\int_{-\infty}^{\infty}\int_{-\infty}^{y_1}\psi(y_1)\psi_{\phi}(x_1, x_2)\diff x_1\diff y_1-1/4\right\}\sqrt{\sigma_{1,1}\sigma_{2,2}}\\
		=&4\pi\left\{\int_{-\infty}^{\infty}(1-\Phi(x_1))\diff x_1\int_{-\infty}^{\infty}\int_{-\infty}^{y_2}\psi(y_2)\psi_{\phi}(x_1, x_2)\diff x_2\diff y_2-1/4\right\}\sqrt{\sigma_{1,1}\sigma_{2,2}}\\
		=&\left\{4\pi\int_{-\infty}^{\infty}\int_{-\infty}^{\infty}\Phi(-x_1)\Phi(-x_2)\psi_{\phi}(x_1,x_2)\diff x_1\diff x_2 - \pi\right\}\sqrt{\sigma_{1,1}\sigma_{2,2}},
\end{align*}
where $\psi_{\phi}(\cdot,\cdot)$ denotes the probability density function of the multivariate normal distribution with covariance matrix $\Sigma_{1,2}$, more precisely, we have
\begin{equation*}
	\psi_{\phi}(x_1,x_2) = \frac{1}{2\pi\cos\phi}\exp\Big\{-\frac{x_1^2-2\sin\phi\cdot x_1x_2+x_2^2}{2\cos^2\phi}\Big\}.
\end{equation*}
By symmetry we clearly see that
\begin{equation*}
	\psi_{\phi}(x_1,x_2) = \psi_{\phi}(-x_1,-x_2),\quad \psi_{\phi}(x_1,-x_2) = \psi_{\phi}(-x_1,x_2)=\psi_{-\phi}(x_1,x_2),\quad\Phi(x)+\Phi(-x)=1.
\end{equation*}
Then $\mathcal{C}_{1,2}$ can be further simplified as follows
\begin{align*}
	&\mathcal{C}_{1,2}/\sqrt{\sigma_{1,1}\sigma_{2,2}} \\
		= &4\pi\int_{-\infty}^0\int_{-\infty}^0\big\{\Phi(-x_1)\Phi(-x_2)+\Phi(-x_1)\Phi(x_2)+\Phi(x_1)\Phi(-x_2)+\Phi(x_1)\Phi(x_2)\big\}\psi_{\phi}(x_1,x_2)\diff x_1\diff x_2\\
		&+4\pi\int_{-\infty}^0\int_{-\infty}^0\big\{\Phi(-x_1)\Phi(x_2)+\Phi(x_1)\Phi(-x_2)\big\}\big\{\psi_{\phi}(x_1,-x_2)-\psi_{\phi}(x_1,x_2)\big\}\diff x_1\diff x_2-\pi\\
		= &2\pi\int_{-\infty}^0\int_{-\infty}^0\big\{1-2\Phi(x_2)\big\}\big\{1-2\Phi(x_1)\big\}\big\{\psi_{\phi}(x_1,x_2)-\psi_{-\phi}(x_1,x_2)\big\}\diff x_1\diff x_2.
\end{align*}
Similarly, $\mathcal{C}_{\mathrm{MOM},1,2}$ can be written in a similar form
\begin{equation*}
	\mathcal{C}_{\mathrm{MOM},1,2}/\sqrt{\sigma_{1,1}\sigma_{2,2}} = \pi\int_{-\infty}^0\int_{-\infty}^0\big\{\psi_{\phi}(x_1,x_2)-\psi_{-\phi}(x_1,x_2)\big\}\diff x_1\diff x_2=\phi.
\end{equation*}
Therefore, to prove the positive definiteness of the matrix $\vect{\mathcal{C}}_{\mathrm{MOM}}-\vect{\mathcal{C}}$, it left to prove that
\begin{equation*}
	|\mathcal{C}_{\mathrm{MOM},1,2}-\mathcal{C}_{1,2}|/\sqrt{\sigma_{1,1}\sigma_{2,2}}\leq |\mathcal{C}_{\mathrm{MOM},1,1}-\mathcal{C}_{1,1}|/\sigma_{1,1}=\frac{\pi}{6}.
\end{equation*}
It is equivalent to the following inequality
\begin{equation}	\label{eq:fphi}
\begin{aligned}
	|h(\phi)|:=&\left|\frac{\phi}{\pi}-2\int_{-\infty}^0\int_{-\infty}^0\big\{1-2\Phi(x_2)\big\}\big\{1-2\Phi(x_1)\big\}\big\{\psi_{\phi}(x_1,x_2)-\psi_{-\phi}(x_1,x_2)\big\}\diff x_1\diff x_2\right|\\
	\leq&\frac{1}{6},
\end{aligned}
\end{equation}
holds for all $\phi\in[-\frac{\pi}{2},\frac{\pi}{2}]$. In order to show this bound, we can draw the graph of $h(\phi)$ numerically.
\begin{figure}[t]
	\begin{center}
		\includegraphics[width=0.6\textwidth]{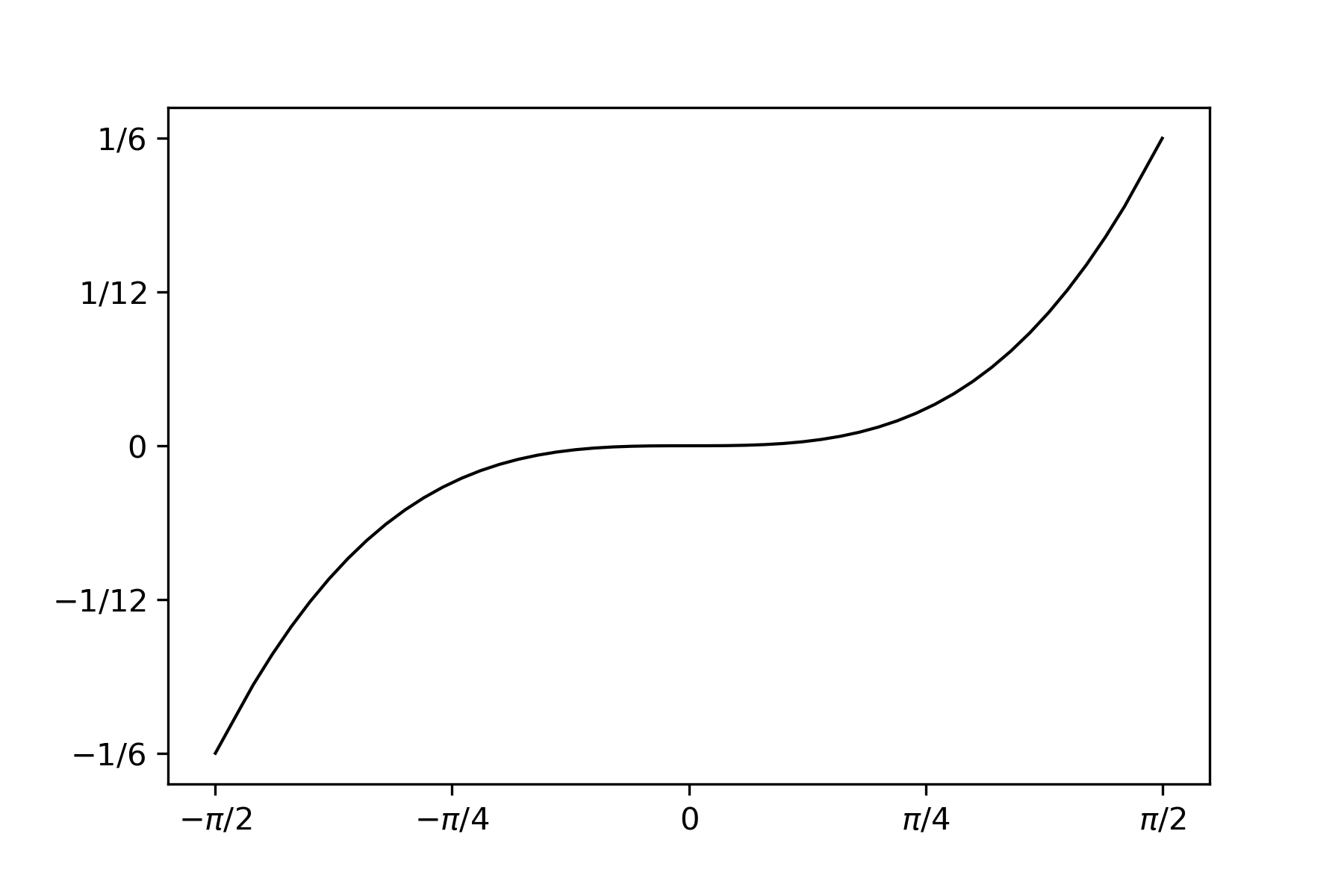}
	\end{center}
	\caption{The graph of function $h(\phi)$ on the interval $[-\pi/2,\pi/2]$. We can see that $h$ is increasing and has absolute value uniformly bounded by $1/6$.}
	\label{fig:dim2_positive}
\end{figure}
As shown in Figure \ref{fig:dim2_positive}, we can see that \eqref{eq:fphi} holds true, which implies that $\vect{\mathcal{C}}\preceq\vect{\mathcal{C}}_{\mathrm{MOM}}$.
\end{proof}


\section{Theories and Proofs for RCSL Estimator}
\label{sec:rcsl_theory}

{This appendix consists of the theoretical results and proofs for the robust CSL estimator. In Appendix \ref{subsec:rdane_theory}, we present the technical assumptions and the main theories for the robust CSL estimator. The proofs of the results will be given in Appendix \ref{sec:proof_RCSL}.}


\subsection{Theoretical Results for Robust CSL Estimator}
\label{subsec:rdane_theory}

{In this part, we present the theoretical results for the robust CSL estimator. Before that, we first introduce some notations and our technical assumptions.}

\subsubsection{Notations and technical assumptions}
\label{sec:tech_assump}

{
As we will demonstrate in Appendix \ref{sec:examples_verification}, all the technical assumptions hold on common statistical models, which suggests the wide applicability of these assumptions. For the loss function $f(x,\vect{\theta})$, we assume that $f(x,\vect{\theta})$ is differentiable with respect to $\vect{\theta}$ and denote $\nabla f(x,\vect{\theta}):=\nabla|_{\vect{\theta}} f(x,\vect{\theta})$ as the gradient of $f(x,\vect{\theta})$ at $\vect{\theta}$. For a given $\vect{\theta}$, we define the expected gradient $\vect{\mu}(\vect{\theta})$ and population standard deviation of the gradient for each coordinate $l$, $\sigma_l(\vect{\theta})$, as follows,
\begin{equation}	\label{eq:mu_sigma}
	\begin{aligned}
		&\vect{\mu}(\vect{\theta})=(\mu_1(\vect{\theta}),\dots\mu_p(\vect{\theta}))^{\tp}:=\mbE_{X \sim \mfX}\{\nabla f(X,\vect{\theta})\},\\
		&\sigma_l^2(\vect{\theta}):=\mbE_{X \sim \mfX}\left[\{\nabla f_l(X,\vect{\theta})-\mu_l(\vect{\theta})\}^2\right], \;\; \text{for} \; 1\leq l\leq p.
	\end{aligned}
\end{equation}
For notational simplicity, we will denote the expectation taken over the randomness of $X$ by $\mbE:=\mbE_{X \sim \mfX}$.
Recall that $\vect{\theta}^*$ is the true parameter that minimizes the loss function $\mbE f(X,\vect{\theta})$. Throughout the paper, we assume that $\vect{\mu}(\vect{\theta}^*)=\mbE\{\nabla f(X,\vect{\theta}^*)\}=\vect{0}$, which holds as long as the expectation and $\nabla$ can be interchanged. Finally, for the ease of presentation, some parameter-independent constants such as $\rho,\eta, C_M$ will be used across different assumptions when there is no confusion.}

{
\begin{assumption}	\label{assump:convexity}
	 For every fixed $x\in\mfX$, $f(x,\vect{\theta})$ is a convex function of $\vect{\theta}$ on $\mbR^p$.
\end{assumption}
\begin{assumption}	\label{assump:hess_eigen}
	 There exists a constant $\rho>0$ such that 
	\begin{equation*}
		\rho\leq\Lambda_{\min}\left[\nabla\vect{\mu}(\vect{\theta}^*)\right]\leq \Lambda_{\max}\left[\nabla\vect{\mu}(\vect{\theta}^*)\right]\leq\rho^{-1}.
	\end{equation*}
\end{assumption}
\begin{assumption}	\label{assump:liptchitz_hess}
	The Hessian of the population loss $\nabla\vect{\mu}$ is Lipschitz continuous. In particular, there exists a constant $C_H>0$ such that
	\begin{equation*}
		\Norm{\,\nabla\vect{\mu}(\vect{\theta}_1)-\nabla\vect{\mu}(\vect{\theta}_2)\,}\leq C_H|\vect{\theta}_1-\vect{\theta}_2|_2,
	\end{equation*}
	holds for arbitrary $\vect{\theta}_1,\vect{\theta}_2\in\mbR^p$.
\end{assumption}
\begin{assumption}	\label{assump:liptchitz_gradient}
	 For every $\vect{v}\in\mbS^{p-1}$ and $x\in\mfX$, define
	\begin{align*}
		M_{\vect{\theta}_1,\vect{\theta}_2}(x,\vect{v}):=&\frac{|\langle \vect{v},\nabla f(x,\vect{\theta}_1)-\nabla f(x,\vect{\theta}_2)\rangle|}{|\vect{\theta}_1-\vect{\theta}_2|_2},\\
		\bar{M}(x,\vect{v}):=&\sup_{\vect{\theta}_1,\vect{\theta}_2 \in \mathbb{R}^p}M_{\vect{\theta}_1,\vect{\theta}_2}(x,\vect{v}).
	\end{align*}
	There exists $C_M,\gamma_0, \eta>0$ such that
	\begin{subequations}		\label{eq:lipgrad_assumption}
		\begin{alignat}{2}
			&\sup_{\vect{v}\in\mbS^{p-1}}\sup_{\vect{\theta}_1,\vect{\theta}_2}\mbE\left[\exp\{\eta M_{\vect{\theta}_1,\vect{\theta}_2}(X,\vect{v})\}\right]\leq C_M,	\label{eq:lipgrad_assumption1}\\
			&\mbE\left[\sup_{\vect{v}\in\mbS^{p-1}}\exp\left\{p^{-\gamma_0}\bar{M}(X,\vect{v})\right\}\right]\leq C_M.	\label{eq:lipgrad_assumption2}
		\end{alignat}
	\end{subequations}
\end{assumption}
\begin{assumption}	\label{assump:bound_variance}
	 There exists a constant $\rho>0$ such that
	\begin{equation*} 
		\rho\leq\min_{1\leq l\leq p}\{\sigma_l(\vect{\theta}^*)\}\leq\max_{1\leq l\leq p}\{\sigma_l(\vect{\theta}^*)\}\leq\rho^{-1},
	\end{equation*}
	where $\sigma_l(\vect{\theta}^*)$ is defined in \eqref{eq:mu_sigma}.
\end{assumption}
\begin{assumption}	\label{assump:coordinate_exp}
	 There exists $\eta>0$ such that
	\begin{equation} 
		\max_{1\leq l\leq p}\mbE\left[\exp \left\{\eta\left|\nabla f_l(X,\vect{\theta}^*)-\mu_l(\vect{\theta}^*)\right|\right\}\right]\leq C_M. \label{eq:coorexp_assump}
	\end{equation}
\end{assumption}
\begin{assumption}	\label{assump:mnp_dependence}
	 The number of machines $m$, distributed sample size $n$, dimension of parameters $p$, and convergence rate $r_n$ of the initial estimator $\hat{\vect{\theta}}^{(0)}$(i.e. $|\hat{\vect{\theta}}^{(0)}-\vect{\theta}^*|_2=O_{\mbP}(r_n)$) satisfy the following relationships
	\begin{equation}	\label{eq:number_assumption}
	\begin{aligned}
		m=o(n), \quad p=O\left(\frac{n^{1/3}}{\sqrt{\log n}}\right), \quad r_n=O\left(\min\left\{\frac{1}{\log n},\; \frac{1}{\sqrt{p\log n}}\right\}\right).
	\end{aligned}
	\end{equation}
\end{assumption}
}

{
Assumptions \ref{assump:convexity} and \ref{assump:hess_eigen} assume the convexity of the loss and the local strong convexity of the population loss function around $\vect{\theta}^*$, which are commonly assumed in empirical risk minimization literature.  Assumptions \ref{assump:liptchitz_hess} and \ref{assump:liptchitz_gradient} are the standard smoothness assumptions, which appear in distributed learning literature  \citep{zhang_etal.2013, jordan_etal.2019}. 
In particular, the Lipschitz gradient assumption \ref{assump:liptchitz_gradient} is represented by two formulas \eqref{eq:lipgrad_assumption1} and \eqref{eq:lipgrad_assumption2} and the \eqref{eq:lipgrad_assumption2} mainly handles the diverging dimensionality, which allows $p$ to go to infinity. Similar conditions can be found in \cite{chen_liu_etal.2018} and \cite{su_xu.2018}. An additional remark is that, our smoothness assumptions \ref{assump:liptchitz_hess} and \ref{assump:liptchitz_gradient} are relatively weaker than those in existing literature. For example, the Assumption PD in \cite{jordan_etal.2019} requires Lipschitz continuity of the second-order derivatives of the loss function $f(X, \vect\theta)$. In contrast, we only require the expectation of the loss function $\mbE(f(X, \vect\theta))$ to be second-order differentiable and Lipschitz continuous (Assumption \ref{assump:liptchitz_hess}), and allow the gradient $\nabla f$ to be non-differentiable. Thus, our theoretical framework handles the Huber loss function as shown in Example \ref{exp:huber_reg}, while \cite{jordan_etal.2019} can not.}

{
Assumptions \ref{assump:bound_variance} and \ref{assump:coordinate_exp}  guarantee concentrating properties for coordinate-wise gradient variance. In Assumption \ref{assump:coordinate_exp} we assume the gradients to be sub-exponential, which is  weaker than the boundedness condition in \cite{alistarh_zhu_li.2018} and the sub-gaussian condition in \cite{yin_chen_etal.2018}. The sub-exponential condition is also assumed in \cite{chen_su_etal.2017} and \cite{su_xu.2018}. We note that the gradients can be sub-exponential in the case of least square regression (see Example \ref{exp:lin_reg} below for more details), which brings additional technical challenges in establishing concentration inequalities. In particular, \cite{yin_etal.2018} imposed bounded absolute skewness (the third-order moment) condition, which is weaker than ours. However, their theory did not consider diverging dimension $p$.}

{
The rate constraints on the quantities $m,n,p,r_n$ are given in Assumption \ref{assump:mnp_dependence}. The relationships on $m$ are inherited from Theorems \ref{thm:normality_vrmom} by letting $\kappa\geq2$. The condition on $p$  indicates that the dimension cannot diverge too fast. The final condition on $r_n$ ensures that the initial estimator is consistent. It is worth noting that the constraint on the initial rate $r_n$ is attainable. By definition, the initial estimator $\hat{\vect{\theta}}^{(0)}$ is the minimizer of the local empirical loss function \eqref{eq:init}. From the regularity assumptions \ref{assump:convexity}--\ref{assump:coordinate_exp}, it is not hard to show that $|\hat{\vect{\theta}}^{(0)}-\vect{\theta}^*|_2 = O_{\mbP}(\sqrt{p\log n/n})$. Plug in the constraint on dimension $p$ in Assumption \ref{assump:mnp_dependence}, we have that $|\hat{\vect{\theta}}^{(0)}-\vect{\theta}^*|_2 = O_{\mbP}(n^{-1/4})$. On the other hand, we can easily verify that $n^{-1/4}=O(\min\{1/\log n,1/\sqrt{p\log n}\})$. Therefore Assumption \ref{assump:mnp_dependence} can be satisfied.}

{
We provide the following two examples for better understanding of the proposed assumptions. In Appendix \ref{sec:examples_verification}, we will verify that Assumptions \ref{assump:convexity}--\ref{assump:coordinate_exp} hold for generalized linear models and a large class of $M$-estimators.
\begin{example}	\label{exp:lin_reg}
	(Linear regression) In linear regression model $Y=\vect{X}^{\tp}\vect{\theta}^*+\epsilon$, we assume that the covariate $\vect{X}=(X_1,\dots,X_p)^{\tp}$ and the noise $\epsilon$ are both sub-gaussian random variables. Define $\vect{\xi}=(Y,\vect{X}^{\tp})^{\tp}$ and the loss function $f(\vect{\xi},\vect{\theta})=(Y-\vect{X}^{\tp}\vect{\theta})^2$. Then we can compute the gradient of the loss function as
	\begin{equation*}
		\nabla f(\vect{\xi},\vect{\theta}^*)=\epsilon\vect{X}.
	\end{equation*}
	In the $l$-th coordinate, the gradient $\nabla f_l(\vect{\xi},\vect{\theta}^*)=\epsilon X_l$ is a product of two sub-gaussian random variables, hence is sub-exponential (See Proposition 2.7.1 of \cite{vershynin.2018}).
\end{example}
\begin{example}	\label{exp:huber_reg}
	(Huber regression) In Huber regression model $Y=\vect{X}^{\tp}\vect{\theta}^*+\epsilon$, similarly define $\vect{\xi}=(Y,\vect{X}^{\tp})^{\tp}$. The loss function is constructed as $f(\vect{\xi},\vect{\theta})=\mcL(Y-\vect{X}^{\tp}\vect{\theta})$, where
	\begin{equation*}
		\mcL(x)=
		\begin{cases}
			x^2/2\quad&\text{for }|x|\leq\delta,\\
			\delta(|x|-\delta/2)\quad&\text{otherwise.}
		\end{cases}
	\end{equation*} 
	Then we can compute that
	\begin{equation*}
		\mcL'(x)=
		\begin{cases}
			x\quad&\text{for }|x|\leq\delta,\\
			\delta\,\mathrm{sign}(x)\quad&\text{otherwise},
		\end{cases}\qquad\mcL''(x)=\mbI(|x|\leq\delta).
	\end{equation*}
	In this case, the Hessian $\nabla^2f(\vect{\xi}, \vect{\theta})=\vect{X}\vect{X}^{\tp}\mbI(|Y-\vect{X}^{\tp}\vect{\theta}|\leq\delta)$ is not continuous with respect to the parameter $\vect{\theta}$. However, if we assume the noise $\epsilon$ has a symmetric distribution and uniformly bounded probability density function, we can prove that Huber regression model fulfills the proposed assumptions. The detailed verification is delegated to Appendix \ref{sec:examples_verification}. 
\end{example}
}


\subsubsection{Theoretical results}
\label{sec:theory:RCSL}

{In this part, we provide the main theorems for the proposed RCSL estimator. We firstly provide single round convergence rate of RCSL estimator. To show the superiority in statistical efficiency of our VRMOM-based RCSL method, we present the asymptotic normality for our RCSL method and the MOM-based RCSL method. Then we will show that the VRMOM-RCSL method has smaller asymptotic variance than the MOM-based counter part.}

{
Firstly, we present our estimation result for one round of communication in the following theorem, which helps understand the improvement from the initial estimator for only one iteration.}

{
\begin{theorem}[One-round convergence rate of the RCSL method]	\label{thm:rdane_byzantine_conv}
	Suppose Assumptions \ref{assump:convexity}-\ref{assump:mnp_dependence} hold and the initial estimator $\hat{\vect{\theta}}^{(0)}$ satisfies $|\hat{\vect{\theta}}^{(0)}-\vect{\theta}^*|_2=O_{\mbP}(r_n)$. Further assume the fraction $\alpha_n$ of Byzantine machines satisfies $\alpha_n\leq 1/2-\delta$ for some fixed $\delta\in(0,1/2)$.
	Then the robust CSL estimator $\hat{\vect{\theta}}^{(1)}$ defined in (\ref{eq:rcsl_estimator}) satisfies
	\begin{equation}	\label{eq:first_rcsl_conv}
		|\hat{\vect{\theta}}^{(1)}-\vect{\theta}^*|_2= O_{\mbP}\left(\frac{\alpha_n\sqrt{p}}{\sqrt{n}}+\sqrt{\frac{p\log n}{mn}}+\frac{p^{1/2}\log^{3/4}n}{n^{1/2}m^{3/4}}+r_n\sqrt{\frac{p^2\log n}{n}}\right).
	\end{equation}
\end{theorem}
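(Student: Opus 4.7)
The plan is to start from the first-order optimality condition of (\ref{eq:rcsl_estimator}),
$$F_0(\hat{\vect{\theta}}^{(1)}) = F_0(\hat{\vect{\theta}}^{(0)}) - \bar{\vect{g}}^{(0)}, \qquad F_0(\vect{\theta}):=\frac{1}{n}\sum_{i\in\mcH_0}\nabla f(X_i,\vect{\theta}),$$
and to introduce the centered empirical process $U(\vect{\theta}):=F_0(\vect{\theta})-\vect{\mu}(\vect{\theta})$. Writing $F_0=\vect{\mu}+U$, using $\vect{\mu}(\vect{\theta}^*)=\vect{0}$, and Taylor-expanding $\vect{\mu}$ around $\vect{\theta}^*$ (via Assumption \ref{assump:liptchitz_hess}), the optimality condition rearranges to
$$\nabla\vect{\mu}(\vect{\theta}^*)(\hat{\vect{\theta}}^{(1)}-\vect{\theta}^*)= -\bigl[\bar{\vect{g}}^{(0)}-\vect{\mu}(\hat{\vect{\theta}}^{(0)})\bigr]-\bigl[U(\hat{\vect{\theta}}^{(1)})-U(\hat{\vect{\theta}}^{(0)})\bigr]-R_1,$$
with $|R_1|_2\le C_H|\hat{\vect{\theta}}^{(1)}-\vect{\theta}^*|_2^{\,2}$. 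Inverting $\nabla\vect{\mu}(\vect{\theta}^*)$ using Assumption \ref{assump:hess_eigen} yields
$$|\hat{\vect{\theta}}^{(1)}-\vect{\theta}^*|_2\le\rho^{-1}\Bigl(\underbrace{|\bar{\vect{g}}^{(0)}-\vect{\mu}(\hat{\vect{\theta}}^{(0)})|_2}_{\text{(I)}}+\underbrace{|U(\hat{\vect{\theta}}^{(1)})-U(\hat{\vect{\theta}}^{(0)})|_2}_{\text{(II)}}+C_H|\hat{\vect{\theta}}^{(1)}-\vect{\theta}^*|_2^{\,2}\Bigr).$$

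Term (I) is the VRMOM estimation error for the population gradient at the random point $\hat{\vect{\theta}}^{(0)}$. I would bound it by appealing to Theorem \ref{thm:concen_multi_vrmom} applied coordinate-wise; under the sub-exponential tail of Assumption \ref{assump:coordinate_exp}, $\kappa$ can be taken arbitrarily large so that $\kappa_2=2$, which absorbs the $n^{-(3\kappa_2+2)/(2\kappa_2+4)}$ contribution into the other terms and yields the first, second, and third summands of (\ref{eq:first_rcsl_conv}). Term (II) is an empirical-process increment; using Assumption \ref{assump:liptchitz_gradient} together with a symmetrisation/chaining argument (in the spirit of Lemma \ref{lemma:empirical}) gives the uniform Lipschitz-in-$\vect{\theta}$ estimate
$$\sup_{\vect{\theta},\vect{\theta}'\in B(\vect{\theta}^*,R)}|U(\vect{\theta})-U(\vect{\theta}')|_2= O_{\mbP}\Bigl(|\vect{\theta}-\vect{\theta}'|_2\sqrt{p^2\log n/n}\Bigr),$$
for any shrinking radius $R$, where $B(\vect{\theta}^*,R):=\{\vect{\theta}:|\vect{\theta}-\vect{\theta}^*|_2\le R\}$. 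Combining this with the triangle inequality $|\hat{\vect{\theta}}^{(0)}-\hat{\vect{\theta}}^{(1)}|_2\le r_n+|\hat{\vect{\theta}}^{(1)}-\vect{\theta}^*|_2$ produces the fourth summand. Because Assumption \ref{assump:mnp_dependence} forces $\sqrt{p^2\log n/n}=o(1)$ and keeps the quadratic remainder subdominant, a standard bootstrapping argument absorbs the $|\hat{\vect{\theta}}^{(1)}-\vect{\theta}^*|_2$-proportional terms on the left-hand side and gives (\ref{eq:first_rcsl_conv}).

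The main technical obstacle is bounding (I), since Theorem \ref{thm:concen_multi_vrmom} concerns the VRMOM of i.i.d.\ copies of a \emph{fixed} random vector, whereas $\bar{\vect{g}}^{(0)}$ aggregates gradients evaluated at the data-dependent point $\hat{\vect{\theta}}^{(0)}$, and the master's summand $\vect{g}_0^{(0)}$ is not independent of $\hat{\vect{\theta}}^{(0)}$. I plan to convert the pointwise bound of Theorem \ref{thm:concen_multi_vrmom} into a uniform one over the $\ell_2$-ball $B(\vect{\theta}^*,r_n)$ by combining a $\delta$-net covering with a union bound, together with the Lipschitz stability in $\vect{\theta}$ inherited from Assumption \ref{assump:liptchitz_gradient}, and then evaluate the uniform bound at $\hat{\vect{\theta}}^{(0)}$. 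This $\delta$-net argument is precisely what inflates the $\sqrt{p/(mn)}$ rate of Theorem \ref{thm:concen_multi_vrmom} to the $\sqrt{p\log n/(mn)}$ rate appearing in (\ref{eq:first_rcsl_conv}). A subsidiary but important task is to promote Assumption \ref{assump:coordinate_exp}'s sub-exponential moment bound from $\vect{\theta}^*$ to a whole shrinking neighborhood, which follows from Assumption \ref{assump:liptchitz_gradient} via the elementary inequality $|\nabla_l f(X,\vect{\theta})|\le|\nabla_l f(X,\vect{\theta}^*)|+\bar{M}(X,\vect{e}_l)|\vect{\theta}-\vect{\theta}^*|_2$ before the VRMOM rate may legitimately be applied at $\hat{\vect{\theta}}^{(0)}$.
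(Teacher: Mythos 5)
Your outer argument differs from the paper's. You invert the first-order condition $F_0(\hat{\vect{\theta}}^{(1)})=\vect{g}_0^{(0)}-\bar{\vect{g}}^{(0)}$ and solve for $\hat{\vect{\theta}}^{(1)}-\vect{\theta}^*$ through $\nabla\vect{\mu}(\vect{\theta}^*)$, whereas the paper never touches the stationarity equation: it shows in \eqref{eq:convergence_taylor} that the surrogate loss on the sphere $|\vect{\theta}_1-\vect{\theta}^*|_2=b_n$ strictly exceeds its value at $\vect{\theta}^*$, so by convexity (Assumption \ref{assump:convexity}) the minimizer lies inside the ball. The sphere argument buys you localization for free; your route does not. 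Concretely, your bound on term (II) and the absorption of the quadratic remainder $C_H|\hat{\vect{\theta}}^{(1)}-\vect{\theta}^*|_2^2$ both presuppose that $\hat{\vect{\theta}}^{(1)}$ already lies in a shrinking ball $B(\vect{\theta}^*,R)$, which is the conclusion you are after. "Standard bootstrapping" here requires a separate a priori consistency step (or a restriction to a compact set on which the uniform empirical-process bound holds and a separate argument that the minimizer cannot escape it); as written this is circular. Your decomposition of term (I) into the fixed-point VRMOM error plus a uniform stability term over $\Theta_0$ does match the paper's Lemma \ref{lemma:gbar_lipschitz}, and your observation that Assumption \ref{assump:coordinate_exp} must be propagated from $\vect{\theta}^*$ to the neighborhood is on target (cf.\ Lemmas \ref{lemma:concen_sigma} and \ref{lemma:lipschitz_sigma}).

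The more serious gap is in how you propose to uniformize term (I). A $\delta$-net plus "Lipschitz stability in $\vect{\theta}$ inherited from Assumption \ref{assump:liptchitz_gradient}" is not enough, because the VRMOM aggregator is not a Lipschitz functional of the local gradient means: it is built from the coordinatewise median $\hat{g}_l(\vect{\theta})$ and from the indicator counts $\sum_j\mbI\{g_{j,l}(\vect{\theta})\le \hat{g}_l(\vect{\theta})+\hat{\sigma}_l(\vect{\theta})\Delta_k/\sqrt{n}\}$, and an $O(\varepsilon)$ perturbation of the $g_{j,l}$'s can flip an indicator for every $j$ whose local mean sits within $\varepsilon$ of the threshold. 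Controlling this requires an anti-concentration/small-ball argument showing that only an $O_{\mbP}(\max\{\log n/m,\sqrt{n}\delta_n\})$ fraction of the (non-Byzantine) local means fall in a $\delta_n$-window of the shifted quantile — this is exactly the content of the paper's Lemma \ref{stab_modifier_byzantine.lemma}, supplemented by the quantile-perturbation Lemma \ref{lemma:stab_vrmom} and the quantile-gap Lemma \ref{lemma:quantile_gap_mom} to handle the Byzantine corruption of the median itself. Without these ingredients the stability bound on $\bar{\vect{g}}(\vect{\theta})-\bar{\vect{g}}(\vect{\theta}^*)$ over $\Theta_0$, and hence the $r_n\sqrt{p^2\log n/n}$ term in \eqref{eq:first_rcsl_conv}, does not follow. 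So the plan identifies the right obstacle but underestimates it: the missing piece is not a covering number but the discontinuity of the quantile-based aggregator.
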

Applying Theorem \ref{thm:rdane_byzantine_conv} inductively, we can obtain the convergence result for our RCSL estimator with $t$ rounds of aggregations, which is presented in Theorem \ref{thm:iter_rdane_byzantine_conv} in Section \ref{sec:robust-csl} of the main paper.
}

{
Moreover, similar as Theorem \ref{cor:VRMOM_limit} in the main paper, we can establish asymptotic normality for our RCSL estimator $\hat{\vect{\theta}}^{(t)}$, which has not been studied in previous robust distributed learning literature. To save symbols and avoid repeated definitions, we denote $\sigma_{l_1,l_2}=\cov\{\nabla f_{l_1}(X,\vect{\theta}^*),\nabla f_{l_2}(X,\vect{\theta}^*)\}$ to be the $(l_1,l_2)$-entry of covariance matrix of $\nabla f(X,\vect{\theta}^*)$, which is coincident with the notations in Theorem \ref{cor:VRMOM_limit}. Then we can define $\vect{\mathcal{C}}$ exactly the same as in \eqref{eq:bivariate_normal} and \eqref{eq:normal_cov_entry}. Then we can prove the following asymptotic normality result:
\begin{theorem}[Asymptotic normality of the RCSL method]	\label{cor:RCSL-VRMOM_limit}
	Suppose Assumptions \ref{assump:convexity}-\ref{assump:mnp_dependence} hold, and additionally, we assume the rate constraints $p=o(\min\{\frac{n^{1/3}}{\log^{2/3}n}, \frac{m^{1/2}}{\log^{3/2}n},\frac{n}{m}\}), \alpha_n=o(1/\sqrt{mp})$, and $\log^3n=o(m)$. Then for every iteration number $t$ satisfies \eqref{eq:iteration} and any vector $\vect{v}\in\mbR^p$ with $|\vect{v}|_2 = 1$, we have that
	\begin{equation}	\label{eq:normality_cor}
		\frac{\sqrt{N}}{\sigma_{\vect{v}}}\left\langle \vect{v},\,\hat{\vect{\theta}}^{(t+1)}-\vect{\theta}^*\right\rangle\xrightarrow{d}\mcN(0,1),
	\end{equation}	
	as $n\to\infty$, where
	\begin{equation}	\label{eq:normality_sigma}
		\sigma^2_{\vect{v}}=\vect{v}^{\tp}\{\nabla\vect{\mu}(\vect{\theta}^*)\}^{-1}\vect{\mathcal{C}}\{\nabla\vect{\mu}(\vect{\theta}^*)\}^{-1}\vect{v}.
	\end{equation}
\end{theorem}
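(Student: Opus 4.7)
Here is the plan. The core idea is to linearize the master-side first-order optimality condition and prove a Bahadur-type representation
\begin{equation*}
\hat{\vect{\theta}}^{(t+1)}-\vect{\theta}^* = -[\nabla\vect{\mu}(\vect{\theta}^*)]^{-1}\bar{\vect{g}}(\vect{\theta}^*) + o_{\mbP}(N^{-1/2}),
\end{equation*}
where $\bar{\vect{g}}(\vect{\theta})$ denotes the VRMOM aggregator of the local gradients $\{\vect{g}_j(\vect{\theta})\}_{j=0}^m$ with $\vect{g}_j(\vect{\theta})=n^{-1}\sum_{i\in\mcH_j}\nabla f(X_i,\vect{\theta})$. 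Given this, the conclusion follows from Theorem~\ref{cor:VRMOM_limit} applied to $\bar{\vect{g}}(\vect{\theta}^*)$---whose coordinates are centered (since $\vect{\mu}(\vect{\theta}^*)=\vect{0}$) and whose asymptotic covariance is $\vect{\mathcal{C}}$ by construction---combined with Slutsky's theorem: projecting onto $\vect{v}$ amounts to asking about $\langle[\nabla\vect{\mu}(\vect{\theta}^*)]^{-1}\vect{v},\bar{\vect{g}}(\vect{\theta}^*)\rangle$, whose asymptotic variance equals the $\sigma_{\vect{v}}^2$ in \eqref{eq:normality_sigma}.

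First I would control the iterates' size. Since $t$ satisfies \eqref{eq:iteration}, and the new rate hypotheses $\alpha_n=o(1/\sqrt{mp})$ and $\log^3 n=o(m)$ make $\sqrt{p\log n/(mn)}$ dominate the other terms in \eqref{eq:iter_rcsl_conv}, Theorem~\ref{thm:iter_rdane_byzantine_conv} yields $|\hat{\vect{\theta}}^{(s)}-\vect{\theta}^*|_2=O_{\mbP}(\sqrt{p\log n/(mn)})$ for $s\in\{t,t+1\}$. Starting from the master-side optimality $\vect{g}_0(\hat{\vect{\theta}}^{(t+1)})=\vect{g}_0(\hat{\vect{\theta}}^{(t)})-\bar{\vect{g}}(\hat{\vect{\theta}}^{(t)})$, I would expand
\begin{equation*}
\vect{g}_0(\vect{\theta})-\vect{g}_0(\vect{\theta}^*) = \nabla\vect{\mu}(\vect{\theta}^*)(\vect{\theta}-\vect{\theta}^*) + O_{\mbP}\bigl(|\vect{\theta}-\vect{\theta}^*|_2^2 + |\vect{\theta}-\vect{\theta}^*|_2\sqrt{p\log n/n}\bigr),
\end{equation*}
handling the quadratic term via Assumption~\ref{assump:liptchitz_hess} and the stochastic empirical-process increment by a standard chaining argument driven by Assumption~\ref{assump:liptchitz_gradient} over an $\varepsilon$-net of a shrinking ball around $\vect{\theta}^*$. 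Evaluated at $\hat{\vect{\theta}}^{(t+1)}$ and $\hat{\vect{\theta}}^{(t)}$, both corrections are $o_{\mbP}(N^{-1/2})$ under the stated rate constraints.

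The hard part will be the analogous linearization for the VRMOM aggregator,
\begin{equation*}
\bar{\vect{g}}(\hat{\vect{\theta}}^{(t)}) - \bar{\vect{g}}(\vect{\theta}^*) = \nabla\vect{\mu}(\vect{\theta}^*)(\hat{\vect{\theta}}^{(t)}-\vect{\theta}^*) + o_{\mbP}(N^{-1/2}),
\end{equation*}
because $\bar{\vect{g}}$ is a non-smooth functional of its input. The strategy is to revisit, coordinate by coordinate, the expansion \eqref{eq:simplified_diff} from the proof of Theorem~\ref{thm:concentration_vrmom_byzantine}, now applied with the local gradients $\vect{g}_j(\vect{\theta})$ playing the role of the batched data. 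The deterministic part of the shift in the distribution of $\vect{g}_j$ contributes $\nabla\vect{\mu}(\vect{\theta}^*)\Delta\vect{\theta}+O(|\Delta\vect{\theta}|_2^2)$ by Assumption~\ref{assump:liptchitz_hess}, which matches the claimed linear term exactly. The stochastic part requires controlling the difference of the indicator empirical processes appearing in \eqref{eq:simplified_diff} when the point of evaluation is moved from $\vect{\theta}^*$ to $\hat{\vect{\theta}}^{(t)}$; each $\vect{g}_j$ then changes by $O_{\mbP}(|\Delta\vect{\theta}|_2)$ per coordinate (Assumption~\ref{assump:liptchitz_gradient}), and a variant of Lemma~\ref{lemma:empirical} with window $\delta_n\propto|\Delta\vect{\theta}|_2$ bounds the oscillation of the indicator sums uniformly, giving a per-coordinate remainder whose size aggregates to $o_{\mbP}(N^{-1/2})$ in the $\vect{v}$-direction thanks to $p=o(n^{1/3}/\log^{2/3}n)$ and $p=o(m^{1/2}/\log^{3/2}n)$.

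Combining these pieces, the master-side optimality rearranges to $\nabla\vect{\mu}(\vect{\theta}^*)(\hat{\vect{\theta}}^{(t+1)}-\vect{\theta}^*) = -\bar{\vect{g}}(\vect{\theta}^*) + o_{\mbP}(N^{-1/2})$, and Assumption~\ref{assump:hess_eigen} makes $\nabla\vect{\mu}(\vect{\theta}^*)$ safely invertible, yielding the Bahadur expansion. Finally, Assumption~\ref{assump:coordinate_exp} supplies all polynomial moments for each coordinate of $\vect{g}_j(\vect{\theta}^*)$, so in Theorem~\ref{cor:VRMOM_limit} one may take $\kappa$ arbitrarily large; the hypothesis $p=o(n^{2\kappa_2/(\kappa_2+2)}/m)$ then collapses to $p=o(n/m)$, which is part of our assumptions. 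Applying Theorem~\ref{cor:VRMOM_limit} in the direction $\vect{u}\propto[\nabla\vect{\mu}(\vect{\theta}^*)]^{-1}\vect{v}$, normalizing by $\sigma_{\vect{v}}$, and invoking Slutsky's theorem delivers \eqref{eq:normality_cor}.
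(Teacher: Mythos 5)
Your proposal is correct and follows essentially the same route as the paper: localize the iterates at rate $\sqrt{p\log n/(mn)}$, linearize the surrogate-loss optimality condition $\vect{g}_0(\hat{\vect{\theta}}^{(t+1)})=\vect{g}_0(\hat{\vect{\theta}}^{(t)})-\bar{\vect{g}}(\hat{\vect{\theta}}^{(t)})$ into the Bahadur expansion $\nabla\vect{\mu}(\vect{\theta}^*)(\hat{\vect{\theta}}^{(t+1)}-\vect{\theta}^*)=-\bar{\vect{g}}(\vect{\theta}^*)+o_{\mbP}(1/\sqrt{mn})$ by showing the VRMOM aggregator's fluctuation $\bar{\vect{g}}(\cdot)-\vect{\mu}(\cdot)$ is stable between $\hat{\vect{\theta}}^{(t)}$ and $\vect{\theta}^*$ up to $o_{\mbP}(1/\sqrt{mn})$ (the paper does this via Lemma~\ref{lemma:gbar_lipschitz} sharpened by the quantile-gap Lemma~\ref{lemma:quantile_gap_mom} under $\alpha_n=o(1/\sqrt{mp})$), and then conclude by the multi-dimensional VRMOM CLT of Theorem~\ref{cor:VRMOM_limit} applied to the gradients in the direction $\{\nabla\vect{\mu}(\vect{\theta}^*)\}^{-1}\vect{v}$ with the covariance identified as $\vect{\mathcal{C}}$ via the multivariate Berry--Esseen theorem. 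The only cosmetic difference is that the paper re-derives the CLT from the coordinatewise representation \eqref{eq:simplified_diff} rather than citing Theorem~\ref{cor:VRMOM_limit} verbatim, but the argument is the same.
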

Compared with the constraints in Theorem \ref{cor:VRMOM_limit}, both of the theorems require the fraction $\alpha_n=o(1/\sqrt{mp})$, namely, the number of Byzantine machines is $o(\sqrt{m/p})$. However, the normality result of the RCSL method needs more restrictive constraints on the dimension $p$ than the one in Assumption \ref{assump:mnp_dependence} and in Theorem \ref{cor:VRMOM_limit}. 
}

{
As we can see from \eqref{eq:normality_sigma}, the asymptotic variance of the proposed RCSL estimator has a sandwich structure, which is commonly appeared in literatures (see, \eg, \cite{polyak_juditsky.1992, jordan_etal.2019, chen_lee_etal.2020}). However, since the past works only aggregate the gradients by sample mean, the centered covariance matrix in \eqref{eq:normality_sigma} is usually the covariance of the gradient, namely, $\vect{\mathcal{C}}=\mbE\{\nabla f(X,\vect{\theta}^*)\nabla f(X,\vect{\theta}^*)^{\tp}\}$. In contrast, as we aggregate the gradient robustly by our proposed multivariate VRMOM estimator, the structure of the matrix $\vect{\mathcal{C}}$ is much more complex, as we can see in \eqref{eq:bivariate_normal} and \eqref{eq:normal_cov_entry}. To the best of our knowledge, this is the first asymptotic normality result in the setting of Byzantine-robust distributed learning.
}

{
In order to illustrate the efficiency of our RCSL method, it is possible to prove an asymptotic normality result for the median-of-means (MOM) based RCSL method, which is named MOM-RCSL. The explicit construction of the MOM-RCSL method is described in the paragraph before Section \ref{sec:linear}. 
We can show that the asymptotic variance of the MOM-RCSL estimator has the same formulation as \eqref{eq:normality_sigma}, with $\vect{\mcC}$ replaced by $\vect{\mcC}_{\mathrm{MOM}}$ in \eqref{eq:med_normal_entry} of Proposition \ref{cor:MOM_limit}. The proof technique is simply the combination of Proposition \ref{cor:MOM_limit} and Theorem \ref{cor:RCSL-VRMOM_limit}. Therefore, we omit the presentation of this parallel result for brevity. The simulation results of comparison between our RCSL method and the MOM-RCSL method are already presented in Section \ref{subsec:rcsl}.
}


\subsection{Proofs of the Results for Robust CSL Estimator}\label{sec:proof_RCSL}

\subsubsection{More Technical Lemmas}

{In the following, we introduce additional lemmas that will be used for proofing the results related to RCSL estimator. For consistency with Assumption \ref{assump:coordinate_exp}, we assume the random variables admits sub-exponential tail.}


\begin{lemma}	\label{lemma:quantile_gap_mom}
	(Quantile gap of median-of-mean) Let $N$($=(m+1)n$) i.i.d. random variables $X_1,...,X_{N}$ evenly distributed in $m$ subsets $\mcH_0,...,\mcH_m$. Let $\bar{X}_j=n^{-1}\sum_{i\in\mcH_j}X_i$ and $\hat{X}^{\tau}$ be the $\tau$-th quantile of $\{\bar{X}_0,\dots,\bar{X}_m\}$. Suppose $\mbE(X_1)=0,\var(X_1)=\sigma^2$, and $\mbE[e^{\eta|X_1|}]<\infty$ for some $\eta>0$. There are two quantile levels $\tau_2>\tau_1$ satisfying $|\tau_2-\tau_1|=o(1)$ and $\tau_1,\tau_2\in(\delta,1-\delta)$ for some $\delta\in(0,1/2)$. Then for every $\gamma>1$, there exists $\tilde{C}$ such that
	\begin{equation*}
		\mbP\left\{\hat{X}^{\tau_2}-\hat{X}^{\tau_1}\geq \tilde{C}\left(\frac{\tau_2-\tau_1}{\sqrt{n}}+\frac{1}{n}+\frac{\log n}{m\sqrt{n}}\right)\right\}=O(n^{-\gamma}).
	\end{equation*}
\end{lemma}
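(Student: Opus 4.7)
The plan is to rescale via $Y_j := \sqrt{n}\,\bar{X}_j/\sigma$, so that $Y_j$ is approximately $\mcN(0,1)$ by the central limit theorem; the claim then reduces to showing $\hat{Y}^{\tau_2} - \hat{Y}^{\tau_1} \leq \tilde{C}((\tau_2 - \tau_1) + 1/\sqrt{n} + \log n/m)$ with probability $1-O(n^{-\gamma})$, since multiplying by $\sigma/\sqrt{n}$ returns the $x$-scale estimate. On a good event of probability $1 - O(n^{-\gamma})$, both $\hat{Y}^{\tau_k}$ lie inside a deterministic interval $[-C_\delta, C_\delta]$, by a Berry-Esseen plus Cai-Liu concentration argument essentially identical to the proof of Lemma \ref{lemma:concen_mom_byzantine} (here using $\tau_1, \tau_2 \in (\delta, 1-\delta)$).

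The heart of the proof is a localized empirical-process bound. Write $G_Y(y) := \mbP(Y_1 \leq y)$ and $\hat{G}_Y(y) := (m+1)^{-1}\sum_{j=0}^m \mbI(Y_j \leq y)$. I would show that, uniformly over pairs $y_1 < y_2$ in $[-C_\delta, C_\delta]$,
\begin{equation*}
	\bigl|\hat{G}_Y(y_2) - \hat{G}_Y(y_1) - G_Y(y_2) + G_Y(y_1)\bigr| \leq \tilde{C}\Bigl(\sqrt{(y_2 - y_1 + n^{-1/2})\log n/m} + \log n/m\Bigr),
\end{equation*}
with probability $1 - O(n^{-\gamma})$. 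The variance of a single indicator increment is $G_Y(y_2) - G_Y(y_1) \leq C(y_2 - y_1) + C/\sqrt{n}$ by the Berry-Esseen theorem (Lemma \ref{lemma:berry_esseen}), so Lemma \ref{lemma:cai_liu} yields a Bernstein-type bound of the above form for a fixed pair; uniformity follows by partitioning $[-C_\delta, C_\delta]$ into $O(n)$ subintervals and controlling the oscillation of $\hat{G}_Y - G_Y$ on each cell, in the spirit of the proof of Lemma \ref{lemma:empirical}.

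To finish, the sample-quantile identity $\hat{G}_Y(\hat{Y}^{\tau_k}) = \tau_k + O(1/m)$ combined with the uniform bound applied at $(y_1, y_2) = (\hat{Y}^{\tau_1}, \hat{Y}^{\tau_2})$ gives
\begin{equation*}
	G_Y(\hat{Y}^{\tau_2}) - G_Y(\hat{Y}^{\tau_1}) = (\tau_2 - \tau_1) + O_{\mbP}\Bigl(\sqrt{(\hat{Y}^{\tau_2} - \hat{Y}^{\tau_1} + n^{-1/2})\log n/m} + \log n/m\Bigr).
\end{equation*}
Since $G_Y$ has density bounded below by some $c > 0$ on $[-C_\delta, C_\delta]$ (Berry-Esseen plus strict positivity of $\psi$ on compact sets), the left-hand side dominates $c(\hat{Y}^{\tau_2} - \hat{Y}^{\tau_1})$. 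Applying $\sqrt{ab} \leq \epsilon a + (4\epsilon)^{-1}b$ with sufficiently small $\epsilon$ absorbs the $\sqrt{(\hat{Y}^{\tau_2}-\hat{Y}^{\tau_1})\log n/m}$ term into the left-hand side, and further using $\sqrt{n^{-1/2}\log n/m} \leq (1/\sqrt n + \log n/m)/2$ yields $\hat{Y}^{\tau_2} - \hat{Y}^{\tau_1} \leq C(\tau_2 - \tau_1) + C\log n/m + C/\sqrt{n}$; rescaling by $\sigma/\sqrt{n}$ proves the lemma.

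The main technical hurdle is the sharp localized concentration bound in the second step. The version stated in Lemma \ref{lemma:empirical} uses the uniform range $\delta_n$ as the variance proxy and would deliver only $\sqrt{\log n/m}$ instead of the $\log n/m$ rate we need; recovering the genuine Bernstein scaling requires tracking the variance by the true local gap $y_2 - y_1$ through either a peeling argument or a chaining argument over a discretization finer than the one used for Lemma \ref{lemma:empirical}.
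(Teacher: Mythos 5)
Your overall strategy is sound and would yield the stated rate, but it takes a genuinely different route from the paper, and the step you yourself flag as the main hurdle is precisely the step the paper's proof is engineered to avoid. You bound the gap by a self-bounding inequality: a two-parameter uniform bound on $\hat{G}_Y(y_2)-\hat{G}_Y(y_1)-G_Y(y_2)+G_Y(y_1)$ with a variance proxy depending on the \emph{actual} gap $y_2-y_1$, applied at the random pair $(\hat{Y}^{\tau_1},\hat{Y}^{\tau_2})$, followed by absorption via $\sqrt{ab}\leq\epsilon a+(4\epsilon)^{-1}b$. The paper instead guesses the answer: it fixes a deterministic candidate gap $y_0\asymp n^{-1/2}\{(\tau_2-\tau_1)+\log n/m+n^{-1/2}\}$ and shows, uniformly over $x$ in the localization interval for $\hat{X}^{\tau_1}$, that the empirical count of $\bar{X}_j$ in $(x,x+y_0]$ exceeds $(\tau_2-\tau_1)(m+1)$ with probability $1-O(n^{-\gamma})$; this immediately forces $\hat{X}^{\tau_2}\leq\hat{X}^{\tau_1}+y_0$. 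Because the gap is fixed in advance, the variance proxy in the Cai--Liu bound is the deterministic quantity $\mbP(\bar{X}_1\in(x,x+y_0])=O(\tau_2-\tau_1+\log n/m+n^{-1/2})$, a single application of Lemma \ref{lemma:cai_liu} plus a one-dimensional union bound over a grid in $x$ suffices, and since the target deviation is itself at least $\log n/m$, the Bernstein exponent $\exp(-mx^2/C)$ with $C\asymp x$ already gives $n^{-\gamma-1}$. No peeling or chaining over pairs is needed. Your route buys a statement that is uniform over all quantile pairs at once, which is more than the lemma asks for; the paper's route buys a much shorter proof.

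Two points you should be aware of if you carry your version out. First, the gap-dependent uniform bound is left as a sketch, and you are right that Lemma \ref{lemma:empirical} does not deliver it (one round of localization from the crude bound $\hat{Y}^{\tau_2}-\hat{Y}^{\tau_1}=O(\tau_2-\tau_1+\sqrt{\log n/m}+n^{-1/2})$ only improves $\sqrt{\log n/m}$ to $(\log n/m)^{3/4}$, so a genuine iterated peeling is required); this is real work that the paper's argument makes unnecessary. Second, your claim that $G_Y$ ``has density bounded below'' is not literally correct: $\bar{X}_1$ may be atomic, and Berry--Esseen only gives $G_Y(y_2)-G_Y(y_1)\geq c(y_2-y_1)-Cn^{-1/2}$ on the compact localization interval. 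This costs an extra additive $O(n^{-1/2})$ on the $Y$-scale, i.e.\ $O(1/n)$ after rescaling, which is exactly what the $1/n$ term in the lemma's rate is there to absorb, so the conclusion survives, but the inequality should be stated in the corrected form.
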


\begin{proof}
	Follow the proof of Lemma \ref{lemma:concen_mom_byzantine}, we can show that
	\begin{equation}	\label{eq:quantile_gap_term1}	
		\mbP\left\{\left|\hat{X}^{\tau_1}\right|\leq \frac{\sigma}{\sqrt{n}}\Phi^{-1}\left(\frac{1}{2}+\left|\tau_1-\frac{1}{2}\right|+\frac{C_1}{\sqrt{n}}+C_1\sqrt{\frac{\log n}{m}}\right)\right\}\geq 1-O(n^{-\gamma}),
	\end{equation}
	holds for some $C_1$ large enough. Denote
	\begin{equation*}
		\delta_n:=\frac{\sigma}{\sqrt{n}}\Phi^{-1}\left(\frac{1}{2}+\left|\tau_1-\frac{1}{2}\right|+\frac{C_1}{\sqrt{n}}+C_1\sqrt{\frac{\log n}{m}}\right).
	\end{equation*}
	Evenly divide the interval $[-\delta_n,\delta_n]$ into $2n$ pieces and define the set $\mfN:=\{-\delta_n,-\frac{n-1}{n}\delta_n,\dots,\delta_n\}$. Similar as proof in Lemma \ref{lemma:empirical}, we can find some $C_2>0$ such that for every $\tilde{x}\in\mfN$, there is
	\begin{equation}	\label{eq:quantile_gap_term2}
	\begin{aligned}
		&\frac{1}{m+1}\sum_{j=0}^m\sup_{\{x:|\tilde{x}-x|\leq\delta_n/n\}}\left|\mbI(\bar{X}_j\leq x+y)-\mbI(\bar{X}_j\leq x)-\mbI(\bar{X}_j\leq \tilde{x}+y)+\mbI(\bar{X}_j\leq \tilde{x})\right|\\
		\leq&\frac{1}{m+1}\sum_{j=0}^m\mbI\left(|\bar{X}_j-\tilde{x}-y|\leq \frac{\delta_n}{n}\right)+\frac{1}{m+1}\sum_{j=0}^m\mbI\left(|\bar{X}_j-\tilde{x}|\leq \frac{\delta_n}{n}\right)\leq C_2\left(\frac{\log n}{m}+\frac{1}{\sqrt{n}}\right),
	\end{aligned}
	\end{equation}
	holds with probability $1-O(n^{-\gamma-1})$. For $0\leq j\leq m$, further define the random variable
	\begin{equation*}
		Y_j(x,y):=\mbI(\bar{X}_j\leq x+y)-\mbI(\bar{X}_j\leq x)-\mbP(\bar{X}_j\leq x+y)+\mbP(\bar{X}_j\leq x).
	\end{equation*}
	For every $x\in[-\delta_n,\delta_n]$, there exist constants $C_3,C_4$ such that
	\begin{align*}
		&\mbP\left[\frac{1}{m+1}\sum_{j=0}^m\left\{\mbI(\bar{X}_j\leq x+y)-\mbI(\bar{X}_j\leq x)\right\}\leq \tau_2-\tau_1+\frac{C_2\log n}{m}+\frac{C_2}{\sqrt{n}}\right]\\
		=&\mbP\left[\frac{1}{m+1}\sum_{j=0}^mY_j(x,y)\leq \tau_2-\tau_1+\frac{C_2\log n}{m}+\frac{C_2}{\sqrt{n}}-\mbP(\bar{X}_1\leq x+y)+\mbP(\bar{X}_1\leq x)\right]\\
		\leq&\mbP\left[\frac{1}{m+1}\sum_{j=0}^mY_j(x,y)\leq \tau_2-\tau_1+\frac{C_2\log n}{m}+\frac{C_2+C_3}{\sqrt{n}}-\Phi\left(\frac{\sqrt{n}(x+y)}{\sigma}\right)+\Phi\left(\frac{\sqrt{n}x}{\sigma}\right)\right]\\
		\leq&\mbP\left[\frac{1}{m+1}\sum_{j=0}^mY_j(x,y)\leq -C_4\left(\tau_2-\tau_1+\frac{\log n}{m}+\frac{1}{\sqrt{n}}\right)\right],
	\end{align*}
	by finding some $y>0$ such that
	\begin{equation}	\label{eq:quantile_gap_term3}
		\min_{x\in[-\delta_n,\delta_n]}\left\{\Phi\left(\frac{\sqrt{n}(x+y)}{\sigma}\right)-\Phi\left(\frac{\sqrt{n}x}{\sigma}\right)\right\}\geq (C_4+1)(\tau_2-\tau_1)+\frac{C_3}{\sqrt{n}}+(C_2+C_4)\left(\frac{\log n}{m}+\frac{1}{\sqrt{n}}\right).
	\end{equation}
	Taking
	\begin{equation*}
		y_0=\frac{\sigma}{\sqrt{n}\psi(\Phi^{-1}(\delta/2))}\left\{(C_4+1)(\tau_2-\tau_1)+\frac{C_3}{\sqrt{n}}+(C_2+C_4)\left(\frac{\log n}{m}+\frac{1}{\sqrt{n}}\right)\right\},
	\end{equation*}
	since $\tau_2-\tau_1=o(1)$, and $m,n$ tends to infinity, we can assume
	\begin{equation*}
		\frac{\sqrt{n}(x+y_0)}{\sigma},\frac{\sqrt{n}x}{\sigma}\in \left(\Phi^{-1}(\delta/2),\Phi^{-1}(1-\delta/2)\right),
	\end{equation*}
	always hold. Thus by applying mean value theorem to continuous function $\Phi(x)$, \eqref{eq:quantile_gap_term3} can be guaranteed. Further compute that
	\begin{align*}
		\sup_{x\in[-\delta_n,\delta_n]}\mfE\left\{1,Y_j(x,y_0)\right\}=O\left(\tau_2-\tau_1+\frac{\log n}{m}+\frac{1}{\sqrt{n}}\right).
	\end{align*}
	From Lemma \ref{lemma:cai_liu} we have
	\begin{equation}	\label{eq:quantile_gap_term4}
	\begin{aligned}
		&\mbP\left[\frac{1}{m+1}\sum_{j=0}^m\left\{\mbI(\bar{X}_j\leq x+y_0)-\mbI(\bar{X}_j\leq x)\right\}\leq \tau_2-\tau_1+\frac{C_2\log n}{m}+\frac{C_2}{\sqrt{n}}\right]\\
		\leq&\mbP\left[\frac{1}{m+1}\sum_{j=0}^mY_j(x,y_0)\leq -C_4\left(\tau_2-\tau_1+\frac{\log n}{m}+\frac{1}{\sqrt{n}}\right)\right]=O(n^{-\gamma-1}).
	\end{aligned}
	\end{equation}
	Combining \eqref{eq:quantile_gap_term1},\eqref{eq:quantile_gap_term2} and \eqref{eq:quantile_gap_term4}, finally we have
	\begin{align*}
		&\mbP\left[\hat{X}^{\tau_2}-\hat{X}^{\tau_1}\geq y_0\right]\\
		\leq&\mbP\left[\sup_{x\in[-\delta_n,\delta_n]}\sum_{j=0}^m\left\{\mbI(\bar{X}_j\leq x+y_0)-\mbI(\bar{X}_j\leq x)\right\}\leq (\tau_2-\tau_1)(m+1)\right]+O(n^{-\gamma})\\
		\leq&2n\max_{\tilde{x}\in\mfN}\mbP\left[\frac{1}{m+1}\sum_{j=0}^m\left\{\mbI(\bar{X}_j\leq \tilde{x}+y_0)-\mbI(\bar{X}_j\leq \tilde{x})\right\}\leq \tau_2-\tau_1+\frac{C_2\log n}{m}+\frac{C_2}{\sqrt{n}}\right]\\
			&+2n\max_{\tilde{x}\in\mfN}\mbP\Big[\frac{1}{m+1}\sum_{j=0}^m\sup_{\{x:|\tilde{x}-x|\leq\delta_n/n\}}\left|\mbI(\bar{X}_j\leq x+y_0)-\mbI(\bar{X}_j\leq x)-\mbI(\bar{X}_j\leq \tilde{x}+y_0)+\mbI(\bar{X}_j\leq \tilde{x})\right|\\
			&\geq \frac{C_2\log n}{m}+\frac{C_2}{\sqrt{n}}\Big]+O(n^{-\gamma})\\
		=&O(n^{-\gamma}),
	\end{align*}
	therefore prove the lemma.
\end{proof}


\begin{lemma}	\label{lemma:stab_vrmom}
	(Stability of quantile with Byzantine machines) Let $X_0,...,X_m$ be fixed points, and $\mcB\subset\{1,\dots,m\}$ be a subset of index with $\card(\mcB)=\lfloor\alpha m\rfloor$, where $0<\alpha<1/2$. Impose perturbation $\epsilon_j$ on each $X_j$ such that $|\epsilon_j|\leq \delta$ for $j\notin\mcB$, and $\epsilon_j$ can be arbitrary for $j\in\mcB$. Denote the $\tau$'th quantile of the three sets $\{X_0,\dots,X_m\},\{X_0+\epsilon_0,\dots,X_m+\epsilon_m\},\{X_j\mid j\notin\mcB\}$ as $\hat{X}^\tau,\hat{X}^\tau_{\epsilon}$ and $\hat{X}^{\mcB,\tau}$ respectively, Then there is
	\begin{equation*}
		|\hat{X}^{\tau}_{\epsilon}-\hat{X}^{\tau}|\leq \left|\hat{X}^{\mcB,(\tau-\alpha)/(1-\alpha)}-\hat{X}^{\mcB,\tau/(1-\alpha)}\right|+2\delta.
	\end{equation*}
\end{lemma}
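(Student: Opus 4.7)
The plan is a purely combinatorial counting argument on the two sides of each quantile, combined with the elementary fact that bounded pointwise perturbations shift order statistics by at most the same bound. I will organize the argument in three steps.

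First, I will bracket the unperturbed quantile $\hat{X}^{\tau}$ between two non-Byzantine quantiles. The key observation is that among the (roughly) $(m+1)\tau$ elements of $\{X_0,\dots,X_m\}$ lying below $\hat{X}^{\tau}$, at most $\lfloor\alpha m\rfloor$ carry a Byzantine index. Hence, as a fraction of the non-Byzantine pool of size $m' := (m+1) - \lfloor \alpha m\rfloor$, the number of non-Byzantine elements $\leq \hat{X}^{\tau}$ lies between $(\tau-\alpha)/(1-\alpha)$ and $\tau/(1-\alpha)$ (after accounting for $m'\approx(1-\alpha)(m+1)$). The usual convention that the quantile is defined as the appropriate order statistic then yields
\[
\hat{X}^{\mcB,(\tau-\alpha)/(1-\alpha)} \;\leq\; \hat{X}^{\tau} \;\leq\; \hat{X}^{\mcB,\tau/(1-\alpha)}.
\]

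Second, I will apply the identical counting argument to the perturbed set $\{X_j+\epsilon_j\}$. Because the number of Byzantine indices is unchanged, the same bracketing holds, but now with the non-Byzantine perturbed quantiles $\hat{X}^{\bar{\mcB},\cdot}_{\epsilon}$ (i.e.\ the quantiles of $\{X_j+\epsilon_j:j\notin\mcB\}$) as the brackets. Then in the third step, I will note that since $|\epsilon_j|\leq\delta$ for every $j\notin\mcB$, the $k$-th order statistic of $\{X_j+\epsilon_j:j\notin\mcB\}$ differs from the $k$-th order statistic of $\{X_j:j\notin\mcB\}$ by at most $\delta$ (order statistics are $1$-Lipschitz in the $\ell_\infty$ metric on the sample). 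Therefore $|\hat{X}^{\bar{\mcB},\tau'}_{\epsilon}-\hat{X}^{\mcB,\tau'}|\leq\delta$ at every level $\tau'$, and both $\hat{X}^{\tau}$ and $\hat{X}^{\tau}_\epsilon$ lie inside the common interval $[\hat{X}^{\mcB,(\tau-\alpha)/(1-\alpha)}-\delta,\;\hat{X}^{\mcB,\tau/(1-\alpha)}+\delta]$. Subtracting gives the stated bound with the $+2\delta$ slack.

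The one subtlety I expect to be the main obstacle is the off-by-one bookkeeping in the quantile levels in Step~1: after removing Byzantine indices, the integer ranks $\lceil(m+1)\tau\rceil$ and $\lceil m'\cdot\tau/(1-\alpha)\rceil$ are only approximately equal, and one must pick a specific quantile convention so that the fractions $(\tau-\alpha)/(1-\alpha)$ and $\tau/(1-\alpha)$ appearing in the statement are attained exactly. This should be handled by choosing the ``$\lceil N\tau\rceil$-th order statistic'' convention and arguing that the bracketing inequalities for ranks are preserved under the rescaling $k\mapsto(k-\lfloor\alpha m\rfloor)/m'$ and $k\mapsto k/m'$; any residual discretization error can be absorbed into monotonicity of $\hat{X}^{\mcB,\cdot}$ in the level since adjacent ranks pick out the same or adjacent order statistics. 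Steps~2 and 3 are then routine.
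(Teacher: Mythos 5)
Your proposal is correct and follows essentially the same route as the paper's proof: both rest on the counting argument that dropping the $\lfloor\alpha m\rfloor$ Byzantine indices shifts the quantile level from $\tau$ to somewhere in $[(\tau-\alpha)/(1-\alpha),\,\tau/(1-\alpha)]$, and that the bounded perturbation contributes at most $\delta$ on each side, so that both $\hat{X}^{\tau}$ and $\hat{X}^{\tau}_{\epsilon}$ land in the interval $[\hat{X}^{\mcB,(\tau-\alpha)/(1-\alpha)}-\delta,\;\hat{X}^{\mcB,\tau/(1-\alpha)}+\delta]$. The only cosmetic difference is that you route the perturbed quantile through the perturbed non-Byzantine order statistics and invoke their $\ell_\infty$-Lipschitzness, whereas the paper folds the $\delta$-shift directly into the indicator inequalities ($X_j+\epsilon_j\le x\Rightarrow X_j\le x+\delta$); your explicit attention to the rank/level discretization is, if anything, more careful than the paper's treatment.
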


\begin{proof}
	By definition of quantiles, $\hat{X}^{\tau}_{\epsilon}$ satisfies the following inequalities
	\begin{equation*}
		\sum_{j=0}^{m}\mbI\left(X_j+\epsilon_j\leq \hat{X}_{\epsilon}^{\tau}\right)\geq \tau (m+1),\quad \sum_{j=0}^{m}\mbI\left(X_j+\epsilon_j\geq \hat{X}_{\epsilon}^{\tau}\right)\geq (1-\tau)(m+1).
	\end{equation*}
	Therefore we have
	\begin{align*}
		&\sum_{j\notin\mcB}\mbI\left(X_j+\epsilon_j\leq \hat{X}^{\tau}_{\epsilon}\right)\geq (\tau-\alpha)(m+1),\quad\sum_{j\notin\mcB}\mbI\left(X_j+\epsilon_j\geq \hat{X}^{\tau}_{\epsilon}\right)\geq (1-\tau-\alpha)(m+1),\\
		\Rightarrow&\sum_{j\notin\mcB}\mbI\left(X_j\leq \hat{X}^{\tau}_{\epsilon}+\delta\right)\geq (\tau-\alpha)(m+1),\quad\sum_{j\notin\mcB}\mbI\left(X_j\geq \hat{X}^{\tau}_{\epsilon}-\delta\right)\geq (1-\tau-\alpha)(m+1).
	\end{align*}
	This implies that
	\begin{equation*}
		\hat{X}^{\mcB,(\tau-\alpha)/(1-\alpha)}\leq\hat{X}^\tau_{\epsilon}+\delta,\quad \hat{X}^{\mcB,\tau/(1-\alpha)}\geq\hat{X}^{\tau}_{\epsilon}-\delta.
	\end{equation*} 
	Similarly we can show
	\begin{equation*}
		\hat{X}^{\mcB,(\tau-\alpha)/(1-\alpha)}\leq\hat{X}^{\tau}\leq \hat{X}^{\mcB,\tau/(1-\alpha)}.
	\end{equation*} 
	So we have $|\hat{X}^{\tau}_{\epsilon}-\hat{X}^{\tau}|\leq \left|\hat{X}^{\mcB,(\tau-\alpha)/(1-\alpha)}-\hat{X}^{\mcB,\tau/(1-\alpha)}\right|+2\delta$.
\end{proof}


\begin{lemma}	\label{lemma:concen_sigma}
	(Exponential concentration of variance) Let $X_1,...,X_n$ be i.i.d. random variables with $\mbE(X_1)=0, \var(X_1)=\sigma^2$, and $\mbE[e^{\eta|X_1|}]\leq C$ for some $\eta,C>0$. Construct sample mean $\bar{X}=n^{-1}\sum_{i=1}^n X_i$ and sample variance $\hat{\sigma}^2=n^{-1}\sum_{i=1}^n\left(X_i-\bar{X}\right)^2$. Then for every $\gamma\geq1$, there exists $\tilde{C}$ large enough, such that
	\begin{equation*}
		\mbP\left(|\hat{\sigma}-\sigma|\geq \tilde{C}\sqrt{\frac{\log n}{n}}\right)=O(n^{-\gamma}).
	\end{equation*}
\end{lemma}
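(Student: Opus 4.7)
The plan is to reduce the concentration of $\hat\sigma$ around $\sigma$ to that of $\hat\sigma^2$ around $\sigma^2$ via the identity $|\hat\sigma-\sigma| = |\hat\sigma^2-\sigma^2|/(\hat\sigma+\sigma) \leq |\hat\sigma^2-\sigma^2|/\sigma$, and then split
\[
\hat\sigma^2 - \sigma^2 = \frac{1}{n}\sum_{i=1}^n(X_i^2 - \sigma^2) - \bar X^2.
\]
Each piece will be shown to be $O(\sqrt{\log n/n})$ with probability $1-O(n^{-\gamma})$, which yields the claim after dividing by $\hat\sigma+\sigma \geq \sigma > 0$.

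The $\bar X^2$ term is the easy one. Since $\mbE[e^{\eta|X_1|}] \leq C$ implies $\mbE[X_1^2 e^{\eta_0 |X_1|}]\leq C_0$ for any $\eta_0\in (0,\eta)$, Lemma \ref{lemma:cai_liu} applied directly to $\{X_i\}$ yields $|\bar X|= O(\sqrt{\log n/n})$ with probability $1-O(n^{-\gamma})$, so $\bar X^2 = O(\log n/n)$ on that event, negligible compared to the target rate.

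The nontrivial piece is $n^{-1}\sum_i(X_i^2-\sigma^2)$, and this is where the main obstacle lies: $X_i^2-\sigma^2$ has only a sub-Weibull tail of order $1/2$, so for no fixed $\eta'>0$ is $\mbE[(X_i^2-\sigma^2)^2 e^{\eta'|X_i^2-\sigma^2|}]$ finite, and Lemma \ref{lemma:cai_liu} therefore cannot be invoked directly. I will resolve this by truncation at level $M=\kappa \log n$ for $\kappa$ large enough. Markov's inequality gives $\mbP(\max_{i\leq n}|X_i|>M) \leq n C e^{-\eta M} = O(n^{1-\kappa\eta})$, which is $O(n^{-\gamma})$ once $\kappa$ is taken large. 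On the complementary event, the centred truncated variables $\tilde Y_i = X_i^2\mbI(|X_i|\leq M) - \mbE[X_i^2\mbI(|X_i|\leq M)]$ are uniformly bounded by $2M^2$ and have variance at most $\mbE[X_1^4] = O(1)$ by the finite polynomial moments implied by the exponential hypothesis. A standard two-parameter Bernstein inequality,
\[
\mbP\Bigl(\Bigl|\frac{1}{n}\sum_{i=1}^n \tilde Y_i\Bigr| \geq t\Bigr) \leq 2\exp\Bigl(-\frac{n t^2/2}{\var(\tilde Y_1) + 2 M^2 t/3}\Bigr),
\]
applied with $t = \tilde C\sqrt{\log n/n}$, delivers a bound of order $\exp(-c\tilde C^2 \log n) = O(n^{-\gamma})$ for $\tilde C$ large, because $M^2 t = \kappa^2 \tilde C (\log n)^{5/2}/\sqrt n = o(1)$ so the denominator is $O(1)$. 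The expectation shift $|\sigma^2 - \mbE[X_i^2 \mbI(|X_i|\leq M)]| \leq \mbE[X_1^2 e^{\eta|X_1|/2}] e^{-\eta M/2}$ decays faster than any polynomial in $n$ and is absorbed into the error term.

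Combining the three high-probability bounds and using the reduction in the first paragraph completes the proof. The crux of the argument is that the single-parameter exponential-moment bound in Lemma \ref{lemma:cai_liu} is not adequate for the sub-Weibull tail of $X_i^2$; truncating at the logarithmic scale $M = \kappa \log n$ and using a Bernstein-type inequality that exploits both the variance and the range is what produces the correct $\sqrt{\log n/n}$ rate with the desired polynomial failure probability.
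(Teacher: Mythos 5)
Your proposal is correct and follows essentially the same route as the paper's proof: the identity $|\hat\sigma-\sigma|\leq|\hat\sigma^2-\sigma^2|/\sigma$, the split into $n^{-1}\sum_i(X_i^2-\sigma^2)$ and $\bar X^2$, Lemma \ref{lemma:cai_liu} for $\bar X$, and truncation of $X_i^2$ at the $(\log n)^2$ scale followed by Bernstein's inequality on the bounded part with a union/Markov bound on the tail event. The only cosmetic imprecision is calling the expectation shift "faster than any polynomial" when it is $O(n^{-\eta\kappa/2})$ for the fixed large $\kappa$ you chose, which is still amply sufficient.
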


\begin{proof}
	Define
	\begin{align*}
		&\bar{Y}_i:= X_i^2\mbI\left\{X_i^2\leq C_1^2(\log n)^2 \right\}-\mbE\left[X_i^2\mbI\left\{X_i^2\leq C_1^2(\log n)^2\right\}\right],\\
		&\tilde{Y}_i:= X_i^2-\sigma^2-\bar{Y}_i.
	\end{align*}
	We can compute that
	\begin{align*}
		\mbE\left[X_i^2I\left\{X_i^2\geq C_1^2(\log n)^2\right\}\right]\leq&\int_{C_1\log n}^{\infty}2s\mbP(|X_i|\geq s)\diff s+C_1^2(\log n)^2\mbP(|X_i|>C_1\log n)\\
			\leq&\int_{C_1\log n}^{\infty}2Cse^{-\eta s}\diff s+CC_1^2(\log n)^2 n^{-\eta C_1}\\
			=&\left\{2\eta^{-1}CC_1\log n+2\eta^{-2}C+CC_1^2(\log n)^2\right\}n^{-\eta C_1}\leq n^{-\eta C_1+1},
	\end{align*}
	for $n$ sufficiently large. Then
	\begin{align*}
		&\mbP\left(\left|\frac{1}{n}\sum_{i=1}^n\tilde{Y}_i\right|\geq 2n^{-\eta C_1+1}\right)\\
			\leq&\mbP\left[\left|\frac{1}{n}\sum_{i=1}^n|X_i|^2\mbI\left\{|X_i|^2>C_1^2(\log n)^2\right\}\right|\geq n^{-\eta C_1+1}\right]\\
			\leq&\mbP\left\{\max_{1\leq i\leq n}|X_i|^2\geq C_1^2(\log n)^2\right\}\\
			\leq&n\max_{1\leq i\leq n}\mbP\left\{|X_i|\geq C_1\log n\right\}\leq Cn^{-\eta C_1+1}.
	\end{align*}
	Take $C_1\geq \eta^{-1}(\gamma+1)$ we have
	\begin{equation}	\label{eq:square_concentration_term1}
		\mbP\left(\frac{1}{n}\sum_{i=1}^n\tilde{Y}_i\geq 2n^{-\gamma}\right)=O(n^{-\gamma}).
	\end{equation}
	Next we apply Bernstein's inequality \cite{bennett.1962} for bounded random variables $\bar{Y}_i$. Together with the elementary inequality $\mbE(\bar{Y}_1^2)\leq \mbE(X_1^{4})\leq 5\eta^{-4}\mbE\{\exp(\eta |X_1|)\}$, we have
	\begin{align*}
		\mbP\left(\frac{1}{n}\sum_{i=1}^n\bar{Y}_i\geq x\right)\leq&\exp\left\{-\frac{3nx^2}{6\mbE(\bar{Y}_i^2)+2 C_1^2(\log n)^{2}x}\right\}\stepcounter{equation}\tag{\theequation}\label{eq:square_concentration_term2}\\
			\leq&\exp\left[-\min\left\{\frac{\eta^4nx^2}{20C},\frac{3nx}{4 C_1^2(\log n)^2}\right\}\right]=O(n^{-\gamma}),
	\end{align*}
	by taking $x\geq \eta^{-2}\sqrt{20\gamma C}\sqrt{\frac{\log n}{n}}$. Combining \eqref{eq:square_concentration_term1} and \eqref{eq:square_concentration_term2}, we have proved
	\begin{align*}
		&\mbP\left(\frac{1}{n}\sum_{i=1}^nX_i^2-\sigma^2\geq \eta^{-2}\sqrt{20\gamma C} \sqrt{\frac{\log n}{n}}+2n^{-\gamma}\right)\stepcounter{equation}\tag{\theequation}\label{eq:variance_concentration_term1}\\
			\leq& \mbP\left\{\frac{1}{n}\sum_{i=1}^n\bar{Y}_i\geq \eta^{-2}\sqrt{20\gamma C} \sqrt{\frac{\log n}{n}}\right\}+\mbP\left(\frac{1}{n}\sum_{i=1}^n\tilde{Y}_i\geq 2n^{-\gamma}\right)=O(n^{-\gamma}).
	\end{align*}
	 On the other hand, using the fact $x_0^2\leq e^{|x_0|}$ we have
	\begin{align*}
		\mfE\left(\frac{\eta}{2},X_1\right)=\frac{4}{\eta^2}\mbE\left\{\frac{\eta^2}{4}X_1^2\exp\frac{\eta}{2}|X_1|\right\}\leq\frac{4}{\eta^2}\mbE\left(e^{\eta|X_1|}\right)\leq\frac{4C}{\eta^2}.
	\end{align*}
	 Then Lemma \ref{lemma:cai_liu} yields
	 \begin{equation}	\label{eq:variance_concentration_term2}
	 	\mbP\left\{\left|\bar{X}\right|\geq(\frac{4}{\eta^2}+1)\sqrt{\gamma C}\sqrt{\frac{\log n}{n}}\right\}=O(n^{-\gamma}).
	 \end{equation}
	Combining \eqref{eq:variance_concentration_term1} and \eqref{eq:variance_concentration_term2} we have
	\begin{align*}
		&\mbP\left(|\hat{\sigma}-\sigma|\geq\tilde{C}\sqrt{\frac{\log n}{n}} \right)\leq\mbP\left(\left|\hat{\sigma}^2-\sigma^2\right|\geq \sigma \tilde{C}\sqrt{\frac{\log n}{n}}\right)\\
			\leq&\mbP\left(\left|\frac{1}{n}\sum_{i=1}^nX_i^2-\sigma^2\right|+\bar{X}^2\geq \sigma \tilde{C}\sqrt{\frac{\log n}{n}}\right)= O(n^{-\gamma}),
	\end{align*}
	by letting $\tilde{C}$ large enough.
\end{proof}

Lemma \ref{lemma:concen_sigma} directly implies that, under Assumption \ref{assump:bound_variance}, \ref{assump:coordinate_exp} and \ref{assump:mnp_dependence}, there is
\begin{equation}	\label{eq:concen_sigma_theta}
	\mbP\left\{\max_{1\leq l\leq p}\left|\hat{\sigma}_l(\vect{\theta}^*)-\sigma_l(\vect{\theta}^*)\right|\geq \tilde{C}\sqrt{\frac{\log n}{n}}\right\}= O(n^{-\gamma}),
\end{equation}
holds every $\gamma>1$.


\begin{lemma}	\label{lemma:sup_lipschitz_sigma}
	(Uniform bound of variance difference) Under Assumption \ref{assump:liptchitz_gradient}, \ref{assump:bound_variance}, \ref{assump:coordinate_exp} and \ref{assump:mnp_dependence}, there exists $\tilde{C}$ large enough, such that
	\begin{equation*}
		\mbP\left\{\sup_{\vect{\theta}\in\Theta_0}\max_{1\leq l\leq p}\left|\hat{\sigma}_l(\vect{\theta})-\hat{\sigma}_l(\vect{\theta}^*)\right|\geq \tilde{C}r_n\right\}= O(n^{-\gamma}),
	\end{equation*}
	where $\Theta_0=\{\vect{\theta}:|\vect{\theta}-\vect{\theta}^*|_2\leq r_n\}$ is defined in \eqref{eq:theta_0_def}.
\end{lemma}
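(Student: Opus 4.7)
\textbf{Proof plan for Lemma \ref{lemma:sup_lipschitz_sigma}.} The plan is to reduce the bound on the difference of empirical standard deviations to a bound on the difference of empirical variances, and then exploit the uniform Lipschitz property of the gradient provided by Assumption \ref{assump:liptchitz_gradient}. First, I would use the factorization
\[
\hat{\sigma}_l(\vect{\theta})-\hat{\sigma}_l(\vect{\theta}^*)=\frac{\hat{\sigma}_l^2(\vect{\theta})-\hat{\sigma}_l^2(\vect{\theta}^*)}{\hat{\sigma}_l(\vect{\theta})+\hat{\sigma}_l(\vect{\theta}^*)}.
\]
By Assumption \ref{assump:bound_variance} we have $\sigma_l(\vect{\theta}^*)\geq\rho$, and by Lemma \ref{lemma:concen_sigma} together with \eqref{eq:concen_sigma_theta} and a union bound over $l$, $\min_l \hat{\sigma}_l(\vect{\theta}^*)\geq\rho/2$ on an event of probability $1-O(n^{-\gamma})$. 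Thus on this event it suffices to bound $\max_l \sup_{\vect{\theta}\in\Theta_0}|\hat{\sigma}_l^2(\vect{\theta})-\hat{\sigma}_l^2(\vect{\theta}^*)|$ by $O(r_n)$ with probability $1-O(n^{-\gamma})$.

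Next I would expand the numerator. Writing $Y_i^{(l)}(\vect{\theta})=\nabla f_l(X_i,\vect{\theta})-g_{0,l}(\vect{\theta})$ with $g_{0,l}(\vect{\theta})=n^{-1}\sum_{i\in\mcH_0}\nabla f_l(X_i,\vect{\theta})$, one has
\[
\hat{\sigma}_l^2(\vect{\theta})-\hat{\sigma}_l^2(\vect{\theta}^*)=\frac{1}{n}\sum_{i\in\mcH_0}\bigl\{Y_i^{(l)}(\vect{\theta})-Y_i^{(l)}(\vect{\theta}^*)\bigr\}\bigl\{Y_i^{(l)}(\vect{\theta})+Y_i^{(l)}(\vect{\theta}^*)\bigr\}.
\]
Applying Cauchy--Schwarz and the triangle inequality, this is at most
\[
2\sqrt{\tfrac{1}{n}\sum_i \{Y_i^{(l)}(\vect{\theta})-Y_i^{(l)}(\vect{\theta}^*)\}^2}\cdot\sqrt{\tfrac{1}{n}\sum_i Y_i^{(l)}(\vect{\theta}^*)^2}+2\cdot\tfrac{1}{n}\sum_i \{Y_i^{(l)}(\vect{\theta})-Y_i^{(l)}(\vect{\theta}^*)\}^2.
\]
By the definition of $\bar{M}$ in Assumption \ref{assump:liptchitz_gradient} (applied with $\vect{v}=\be_l$, the $l$-th standard basis vector), and by averaging, for all $\vect{\theta}\in\Theta_0$ simultaneously,
\[
|Y_i^{(l)}(\vect{\theta})-Y_i^{(l)}(\vect{\theta}^*)|\leq r_n\bigl\{\bar{M}(X_i,\be_l)+\tfrac{1}{n}\textstyle\sum_{j\in\mcH_0}\bar{M}(X_j,\be_l)\bigr\}.
\]
Thus the supremum over $\vect{\theta}\in\Theta_0$ is absorbed into a factor $r_n$ times an empirical moment of $\bar{M}(X_i,\be_l)$, which no longer depends on $\vect{\theta}$.

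Third, I would control the resulting empirical moments. By Assumption \ref{assump:liptchitz_gradient} (formula \eqref{eq:lipgrad_assumption2}), $\bar{M}(X,\be_l)$ satisfies $\mbE[\exp\{p^{-\gamma_0}\bar{M}(X,\be_l)\}]\leq C_M$, so $\bar{M}(X,\be_l)^2$ has finite exponential moment of an appropriate order. Combined with Assumption \ref{assump:coordinate_exp} which controls $Y_i^{(l)}(\vect{\theta}^*)^2$, Lemma \ref{lemma:cai_liu} yields $n^{-1}\sum_i \bar{M}(X_i,\be_l)^2=O_{\mbP}(1)$ and $n^{-1}\sum_i Y_i^{(l)}(\vect{\theta}^*)^2=O_{\mbP}(1)$ with failure probability $O(n^{-\gamma-1})$, uniformly in $l$ after a union bound (the log factor from the union bound is absorbed by the rate constraints of Assumption \ref{assump:mnp_dependence}, in particular $\log p = O(\log n)$ and $r_n=O(1/\sqrt{p\log n})$ which ensures $r_n p^{\gamma_0}=O(1)$ for any fixed $\gamma_0$). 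Combining these with the bound in the previous paragraph gives $\max_l \sup_{\vect{\theta}\in\Theta_0}|\hat{\sigma}_l^2(\vect{\theta})-\hat{\sigma}_l^2(\vect{\theta}^*)|\leq \tilde{C} r_n$ on an event of probability $1-O(n^{-\gamma})$, which completes the proof after inserting back into the factorization.

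\textbf{Main obstacle.} The principal technical difficulty will be the interplay between the uniform supremum over $\vect{\theta}\in\Theta_0$, the maximum over $p$ coordinates, and the heavy-tailed product $\bar{M}(X_i,\be_l)\cdot|Y_i^{(l)}(\vect{\theta}^*)|$, which is a product of two sub-exponential random variables and hence only sub-Weibull. Handling these products via Cauchy--Schwarz (rather than directly) is what keeps the right-hand moments individually sub-exponential so that Lemma \ref{lemma:cai_liu} remains applicable, and the rate constraints in Assumption \ref{assump:mnp_dependence} are essential to absorb the factor $p^{\gamma_0}$ arising from \eqref{eq:lipgrad_assumption2} and the $\log p$ from the union bound into an overall constant $\tilde{C}$.
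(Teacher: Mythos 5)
Your overall reduction (from $\hat{\sigma}_l$ to $\hat{\sigma}_l^2$ via the factorization and the lower bound $\hat{\sigma}_l(\vect{\theta}^*)\geq\rho/2$, then the difference-of-squares expansion) matches the paper, but the way you obtain uniformity over $\Theta_0$ has a genuine gap. You absorb the supremum over $\vect{\theta}$ by replacing the increment with $r_n\{\bar{M}(X_i,\vect{e}_l)+n^{-1}\sum_j\bar{M}(X_j,\vect{e}_l)\}$ and then claim $n^{-1}\sum_i\bar{M}(X_i,\vect{e}_l)^2=O_{\mbP}(1)$. Assumption \ref{assump:liptchitz_gradient} only gives $\mbE[\exp\{p^{-\gamma_0}\bar{M}(X,\vect{e}_l)\}]\leq C_M$, i.e.\ $\bar{M}(X,\vect{e}_l)$ is sub-exponential with scale $p^{\gamma_0}$, so $\mbE[\bar{M}(X,\vect{e}_l)^2]=O(p^{2\gamma_0})$ and your empirical second moment is only $O_{\mbP}(p^{2\gamma_0})$ (already in the linear/GLM examples, $\bar{M}(X,\vect{e}_l)\leq M|X_l|\cdot|\vect{X}|_2$ has second moment of order $p$). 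Your Cauchy--Schwarz term is therefore $O_{\mbP}(r_np^{\gamma_0})$, not $O_{\mbP}(r_n)$; the claim that $r_np^{\gamma_0}=O(1)$ does not follow from $r_n=O(1/\sqrt{p\log n})$ for general $\gamma_0$, and even if it did, the lemma requires a bound of the form $\tilde{C}r_n$ with $\tilde{C}$ independent of $p$, which a diverging factor $p^{\gamma_0}$ cannot be absorbed into. The only place the crude bound \eqref{eq:lipgrad_assumption2} on $\bar{M}$ can safely be used is where it is multiplied by a super-polynomially small quantity.

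That is exactly how the paper proceeds: it builds an $n^{-M}$-net $\mfN_0$ of $\Theta_0$ with $\card(\mfN_0)\leq(1+2n^M)^p$, proves a \emph{pointwise} bound $\mbP\{|\hat{\sigma}_l^2(\vect{\theta})-\hat{\sigma}_l^2(\vect{\theta}^*)|\geq\tilde{C}r_n,\,\mfX_0\}=O(n^{-p\gamma})$ (Lemma \ref{lemma:lipschitz_sigma}) using the per-pair quantity $M_{\vect{\theta},\vect{\theta}^*}(X,\vect{e}_l)$, which has an $O(1)$ exponential moment by \eqref{eq:lipgrad_assumption1}, together with a truncation at level $p\log n$ to make the failure probability $n^{-\gamma p}$ small enough to survive the union bound over the net; $\bar{M}$ enters only in controlling the discretization error between net points, where the $n^{-M}$ mesh kills the polynomial factors. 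To repair your argument you would either need to restrict to $\gamma_0=0$ (i.e.\ a uniformly sub-exponential $\bar{M}$, which the assumptions do not grant), or convert your pointwise Cauchy--Schwarz bound into a chaining/net argument with $n^{-cp}$ pointwise failure probabilities, which is essentially the paper's route.
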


\begin{proof}
	From (\ref{eq:lipgrad_assumption2}) in Assumption \ref{assump:liptchitz_gradient} and Assumption \ref{assump:coordinate_exp}, we know 
	\begin{align}
		&\mbP\left\{\max_{ i\in\mcH_0}\max_{1\leq l\leq p}\bar{M}(X_i,\vect{e}_l)<p^{\gamma_0+1}\log n\;\right\}=1-O(n^{-\gamma }), 	\label{sup_lipschitz_bound.ineq}\\
		&\mbP\left[\max_{i\in\mcH_0}\max_{1\leq l\leq p}|\nabla f_l(X_i,\vect{\theta}^*)-\mu_l(\vect{\theta}^*)|<\eta^{-1}(\gamma +2)\log n\;\right]=1-O(n^{-\gamma})	\label{gradient_bound.ineq}.
	\end{align}
	Construct the set of event
	\begin{equation}	\label{gradient_bound.event}
		\mfX_0:=\left\{X_1,...,X_n\middle|\text{ (\ref{gradient_bound.ineq}),  (\ref{sup_lipschitz_bound.ineq}) holds, and }\frac{\rho}{2}\leq\min_{1\leq l\leq p}\hat{\sigma}_l(\vect{\theta}^*)\right\}.
	\end{equation}
	Together with \eqref{eq:concen_sigma_theta}, we know $\mbP(\mfX_0)=1-O(n^{-\gamma})$. 
	
	Let $\mfN_0$ be the $n^{-M}$-net of $\Theta_0$, we know $\card(\mfN_0)\leq(1+2n^M)^p$. Then we will show
	\begin{equation}	\label{ineq:stab_sigma}
		\max_{1\leq l\leq p}\sup_{|\vect{\theta}-\tilde{\vect{\theta}}|_2\leq n^{-M}}|\hat{\sigma}^2_l(\vect{\theta})-\hat{\sigma}^2_l(\tilde{\vect{\theta}})|\leq n^{-2}
	\end{equation}
	always hold under the event $\mfX_0$ and $M$ sufficiently large. Indeed, From \eqref{gradient_bound.ineq} and \eqref{sup_lipschitz_bound.ineq}, we have
	\begin{align*}
		&\max_{1\leq l\leq p}\sup_{|\tilde{\vect{\theta}}-\vect{\theta}|_2\leq n^{-M}}|\hat{\sigma}^2_l(\vect{\theta})-\hat{\sigma}^2_l(\tilde{\vect{\theta}})|\\
			=&\max_{1\leq l\leq p}\sup_{|\tilde{\vect{\theta}}-\vect{\theta}|_2\leq n^{-M}}\Big|\frac{1}{n}\sum_{i\in\mcH_0}\left\{\nabla f_l(X_i,\vect{\theta})-g_l(\vect{\theta})\right\}^2-\frac{1}{n}\sum_{i\in\mcH_0}\left\{\nabla f_l(X_i,\tilde{\vect{\theta}})-g_l(\tilde{\vect{\theta}})\right\}^2\Big| \\
			\leq& \max_{1\leq l\leq p}\sup_{|\tilde{\vect{\theta}}-\vect{\theta}|_2\leq n^{-M}}\Big|\frac{1}{n}\sum_{i\in\mcH_0}\Big\{\bar{M}(X_i,\vect{e}_l)+\frac{1}{n}\sum_{i\in\mcH_0}\bar{M}(X_i,\vect{e}_l)\Big\}\\
			&\times\Big\{\nabla f_l(X_i,\vect{\theta})-g_l(\vect{\theta})+\nabla f_l(X_i,\tilde{\vect{\theta}})-g_l(\tilde{\vect{\theta}})\Big\}n^{-M}\Big|\\
			\leq&\frac{2p^{\gamma_0+1}\log n}{n^{M}}\max_{1\leq l\leq p}\sup_{|\tilde{\vect{\theta}}-\vect{\theta}|_2\leq n^{-M}}\frac{1}{n}\sum_{i\in\mcH_0}\Big[2|\nabla f_l(X_i,\vect{\theta}^*)-\mu_l(\vect{\theta}^*)|+2|g_l(\vect{\theta}^*)-\mu_l(\vect{\theta}^*)|\\
			&+\Big\{\bar{M}(X_i,\vect{e}_l)+\frac{1}{n}\sum_{i\in\mcH_0}\bar{M}(X_i,\vect{e}_l)\Big\}\big(|\vect{\theta}-\vect{\theta}^*|_2+|\tilde{\vect{\theta}}-\vect{\theta}^*|_2\big)\Big]\\
			\leq&\frac{8p^{2\gamma_0+2}\log ^2n}{n^{M}}\leq n^{-(M-2\gamma_0-4)},
	\end{align*}
	Taking $M\geq 2\gamma_0+6$ we obtain \eqref{ineq:stab_sigma}. Thus we have
	\begin{align*}
		&\mbP\left\{\max_{1\leq l\leq p}\sup_{\vect{\theta}\in\Theta_0}|\hat{\sigma}_l(\vect{\theta})-\hat{\sigma}_l(\vect{\theta}^*)|\geq 4x\right\}\\
			\leq& \mbP\left\{\max_{1\leq l\leq p}\max_{\tilde{\vect{\theta}}\in\mfN_0}\left|\hat{\sigma}_l^2(\tilde{\vect{\theta}})-\hat{\sigma}^2_l(\vect{\theta}^*)\right|\geq \rho x,\;\mfX_0\right\}\\
			& +\mbP\left\{\max_{1\leq l\leq p}\sup_{|\vect{\theta}-\tilde{\vect{\theta}}|_2\leq n^{-M}}\left|\hat{\sigma}^2_l(\vect{\theta})-\hat{\sigma}^2_l(\tilde{\vect{\theta}})\right|\geq\rho x,\;\mfX_0\right\}+\mbP(\mfX_0^c)\\
			\leq&p(1+2n^{M})^p\max_{1\leq l\leq p}\sup_{\vect{\theta}\in\Theta_0}\mbP\left\{\left|\hat{\sigma}^2_l(\vect{\theta})-\hat{\sigma}^2_l(\vect{\theta}^*)\right|\geq\rho x,\;\mfX_0\right\}+O(n^{-\gamma}).
	\end{align*}
	The next lemma will show, when we take $x=O( r_n)$, there is
	\begin{equation*}
		\sup_{\{\vect{\theta}:|\vect{\theta}-\vect{\theta}^*|_2\leq r_n\}}\mbP\left\{\left|\hat{\sigma}^2_l(\vect{\theta})-\hat{\sigma}^2_l(\vect{\theta}^*)\right|\geq\rho x,\;\mfX_0\right\}=O(n^{-p(M+\gamma+1)}),
	\end{equation*}
	and hence proves this lemma.
\end{proof}

	
\begin{lemma}	\label{lemma:lipschitz_sigma}
	(Point-wise bound of variance difference) Under Assumption \ref{assump:liptchitz_gradient}, \ref{assump:bound_variance}, \ref{assump:coordinate_exp} and \ref{assump:mnp_dependence}, for every $\gamma>1$, there exists $\tilde{C}$ large enough, such that
	\begin{equation*}
		\max_{1\leq l\leq p}\sup_{\vect{\theta}\in\Theta_0}\mbP\left\{\left|\hat{\sigma}_l^2(\vect{\theta})-\hat{\sigma}_l^2(\vect{\theta}^*)\right|\geq \tilde{C}r_n,\;\mfX_0\right\}= O(n^{-p\gamma}),
	\end{equation*}
	where $\mfX_0$ is the set of events defined in (\ref{gradient_bound.event}), and $\Theta_0=\{\vect{\theta}:|\vect{\theta}-\vect{\theta}^*|_2\leq r_n\}$ is defined in \eqref{eq:theta_0_def}.
\end{lemma}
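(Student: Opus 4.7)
The strategy is to expand the difference of sample variances algebraically, reduce everything to controlling a single second-moment average, and then apply a truncation-plus-Bernstein argument tailored to the probability level $O(n^{-p\gamma})$.

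Write $F_i=\nabla f_l(X_i,\vect\theta)$, $F_i^{\ast}=\nabla f_l(X_i,\vect\theta^{\ast})$, $D_i=F_i-F_i^{\ast}$, $\bar{D}=n^{-1}\sum_{i\in\mcH_0}D_i$, and $M_i:=M_{\vect\theta,\vect\theta^{\ast}}(X_i,\vect e_l)$, so that $|D_i|\leq M_i r_n$. Since $\sum_{i\in\mcH_0}(F_i^{\ast}-g_l(\vect\theta^{\ast}))=0$, a direct expansion of $(F_i-g_l(\vect\theta))^2=((F_i^{\ast}-g_l(\vect\theta^{\ast}))+(D_i-\bar D))^2$ yields the key identity
\begin{equation*}
\hat\sigma_l^{2}(\vect\theta)-\hat\sigma_l^{2}(\vect\theta^{\ast})
=\underbrace{\frac{2}{n}\sum_{i\in\mcH_0}(F_i^{\ast}-g_l(\vect\theta^{\ast}))D_i}_{T_1}
+\underbrace{\frac{1}{n}\sum_{i\in\mcH_0}(D_i-\bar D)^2}_{T_2}.
\end{equation*}
The first piece $T_1$ is twice the sample covariance between the centered nominal gradient and the increment $D_i$, while $T_2$ is the sample variance of $D_i$.

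The next step is a Cauchy--Schwarz reduction: setting $Q_n:=n^{-1}\sum_{i\in\mcH_0}M_i^2$, one immediately has $T_2\leq r_n^{2}Q_n$ and
\begin{equation*}
|T_1|\leq 2\sqrt{\hat\sigma_l^{2}(\vect\theta^{\ast})\cdot T_2}\leq 2 r_n\,\hat\sigma_l(\vect\theta^{\ast})\sqrt{Q_n}.
\end{equation*}
On $\mfX_0$, together with \eqref{eq:concen_sigma_theta} and Assumption \ref{assump:bound_variance}, $\hat\sigma_l(\vect\theta^{\ast})$ is bounded above by a constant (adjoining this upper-bound event to $\mfX_0$ costs only $O(n^{-p\gamma})$ in probability, by Lemma \ref{lemma:concen_sigma} applied with threshold $\gamma'=p\gamma$). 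Hence the whole task collapses to proving that $Q_n=O(1)$ with probability $1-O(n^{-p\gamma})$ on $\mfX_0$: once this holds, $|T_1|+|T_2|=O(r_n)+O(r_n^2)=O(r_n)$.

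To control $Q_n$, I would truncate $M_i$ at the scale dictated by the sharper point-wise exponential moment bound in \eqref{eq:lipgrad_assumption1}, which gives $\mbE[e^{\eta M_i}]\leq C_M$ (hence $\mbE[M_i^2]$ and $\mbE[M_i^4]$ are finite constants). Choose $L=c_{\gamma}p\log n$ with $c_{\gamma}$ large; then $\mbP(\max_{i\in\mcH_0}M_i>L)\leq nC_M e^{-\eta L}=O(n^{-p\gamma})$, so we may work on the event $\mcA:=\{\max_i M_i\leq L\}\cap\mfX_0$. On $\mcA$ the bounded random variables $M_i^2\mbI(M_i\leq L)\in[0,L^2]$ have bounded variance, so Bernstein's inequality yields
\begin{equation*}
\mbP\bigl(Q_n>2\mbE[M_1^2],\,\mcA\bigr)\leq \exp\Bigl(-\tfrac{c\,n}{1+L^2}\Bigr),
\end{equation*}
which is $O(n^{-p\gamma})$ provided $n\gtrsim p^{3}\gamma\log^{3}n$, a relation that follows (up to constants) from Assumption \ref{assump:mnp_dependence}. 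Combining the three events and absorbing the $\rho$-factor into $\tilde C$ finishes the proof.

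The main obstacle is precisely this last concentration of $Q_n$: $M_i^2$ is only sub-Weibull$(1/2)$ (not sub-exponential), so Lemma \ref{lemma:cai_liu} cannot be invoked directly. One must combine (i) the sharp exponential-tail truncation from \eqref{eq:lipgrad_assumption1} (crucially tighter than the event $\mfX_0$ bound $\bar M\leq p^{\gamma_0+1}\log n$, which only comes from \eqref{eq:lipgrad_assumption2}), with (ii) Bernstein's inequality for the truncated squares, and then verify that the rate constraints in Assumption \ref{assump:mnp_dependence} exactly accommodate the resulting trade-off between truncation level and sample size at the polynomial probability level $n^{-p\gamma}$.
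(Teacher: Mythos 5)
Your algebraic setup is clean and correct: the identity $\hat\sigma_l^2(\vect\theta)-\hat\sigma_l^2(\vect\theta^*)=T_1+T_2$ follows from $\sum_{i\in\mcH_0}(F_i^*-g_l(\vect\theta^*))=0$, the Cauchy--Schwarz bound $|T_1|\leq 2\hat\sigma_l(\vect\theta^*)\sqrt{T_2}$ is valid, and reducing everything to $Q_n=n^{-1}\sum_i M_i^2=O_{\mbP}(1)$ is a legitimate strategy. The gap is in the one step you flag as the "main obstacle": the concentration of $Q_n$ at level $n^{-p\gamma}$. With truncation at $L=c_\gamma p\log n$ (which is forced, since $\mbP(\max_iM_i>L)\leq nC_Me^{-\eta L}$ must be $O(n^{-p\gamma})$), Bernstein for variables bounded by $L^2$ gives an exponent of order $n/L^2=n/(p^2\log^2 n)$, and you need this to dominate $p\gamma\log n$, i.e.\ $p^3\log^3 n=O(n)$. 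This does \emph{not} follow from Assumption \ref{assump:mnp_dependence}: $p=O(n^{1/3}/\sqrt{\log n})$ only gives $p^3\log^3 n=O(n\log^{3/2}n)$, so at the boundary your exponent is $n^{1/3}/\log n$ while the target is $\gamma n^{1/3}\sqrt{\log n}$ --- off by $\log^{3/2}n$ in the exponent. The claim that the needed relation "follows (up to constants) from Assumption \ref{assump:mnp_dependence}" is therefore false, and the same defect infects your appeal to Lemma \ref{lemma:concen_sigma} "with threshold $p\gamma$" to upper-bound $\hat\sigma_l(\vect\theta^*)$ (the summands there are again squares of sub-exponentials, and that lemma's proof does not scale to probability level $n^{-p\gamma}$ by the same Bernstein-range obstruction). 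The desired tail bound for $Q_n$ is in fact true (the single-large-jump heuristic gives roughly $ne^{-c\sqrt{n}}\ll n^{-p\gamma}$), but establishing it requires exploiting the tail decay of $M_i^2$ \emph{inside} the truncation range (a refined Chernoff or Fuk--Nagaev argument yielding exponent $\sim n/L$ rather than $n/L^2$), not the range-and-variance information Bernstein uses.

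For comparison, the paper avoids quadratic-in-$M$ summands altogether. It centers around population means rather than sample means, splits into a centered second-moment difference $T$, the population variance difference, and a difference of squared sample means, and then factors each summand of $T$ as $(\text{difference})\times(\text{sum})$. The difference factor is $O(r_n(M_i+\mbE M_i))$; the sum factor is bounded \emph{deterministically on the truncation event} by $O(\log n+r_np\log n)$, so that after truncation the summand is linear in $M_i$, hence genuinely sub-exponential, and Lemma \ref{lemma:cai_liu} applies with a Gaussian-type tail $e^{-nx^2/C}$ at $x\asymp\sqrt{p\gamma\log n/n}$, which comfortably reaches $n^{-p\gamma}$ under Assumption \ref{assump:mnp_dependence}. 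If you want to keep your decomposition, you would need either the sharper sub-Weibull concentration just described, or to mimic the paper's trick within $T_1$ by truncating the factor $F_i^*-g_l(\vect\theta^*)$ and leaving a single power of $M_i$.
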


\begin{proof}	
	For every $\vect{\theta}\in\Theta_0$ and $1\leq l\leq p$, there is
	\begin{align*}	
		|\hat{\sigma}^2_l(\vect{\theta})-\hat{\sigma}^2_l(\vect{\theta}^*)|
			\leq&\underbrace{\Big|\frac{1}{n}\sum_{i\in\mcH_0}\left\{\nabla f_l(X_i,\vect{\theta})-\mu_l(\vect{\theta})\right\}^2-\left\{\nabla f_l(X_i,\vect{\theta}^*)-\mu_l(\vect{\theta}^*)\right\}^2-\sigma^2_l(\vect{\theta})+\sigma^2_l(\vect{\theta}^*)\Big|}_{T}\\
			&+\left|\sigma^2_l(\vect{\theta})-\sigma^2_l(\vect{\theta}^*)\right|+\Big|\Big\{\frac{1}{n}\sum_{i\in\mcH_0}\nabla f_l(X_i,\vect{\theta})-\mu_l(\vect{\theta})\Big\}^2\stepcounter{equation}\tag{\theequation}\label{eq:lipschitz_sigma_lemma}\\
			&-\Big\{\frac{1}{n}\sum_{i\in\mcH_0}\nabla f_l(X_i,\vect{\theta}^*)-\mu_l(\vect{\theta}^*)\Big\}^2\Big|.
	\end{align*}

	Let's firstly deal with the term $|\sigma_l^2(\vect{\theta})-\sigma_l^2(\vect{\theta}^*)|$,
	\begin{equation}	\label{lipschitz_sigma_lemma.term0}
		\begin{aligned}
			&|\sigma_l^2(\vect{\theta})-\sigma_l^2(\vect{\theta}^*)|\\
				=&\left|\mbE\left[\left\{\nabla f_l(X_i,\vect{\theta})-\mu_l(\vect{\theta})\right\}^2-\left\{\nabla f_l(X_i,\vect{\theta}^*)-\mu_l(\vect{\theta}^*)\right\}^2\right]\right|\\
				\leq& r_n\mbE\Big[\big\{2|\nabla f_l(X_i,\vect{\theta}^*)-\mu_l(\vect{\theta}^*)|+r_n\left(M_{\vect{\theta},\vect{\theta}^*}(X_i,\vect{e}_l)+\mbE[M_{\vect{\theta},\vect{\theta}^*}(X_i,\vect{e}_l)]\right)\big\}\\
				&\times\left\{M_{\vect{\theta},\vect{\theta}^*}(X_i,\vect{e}_l)+\mbE[M_{\vect{\theta},\vect{\theta}^*}(X_i,\vect{e}_l)]\right\}\Big]\\
				\leq&r_n\mbE\left[\left(4+4r_n\right)M^2_{\vect{\theta},\vect{\theta}^*}(X_i,\vect{e}_l)+\left|\nabla f_l(X_i,\vect{\theta}^*)-\mu_l(\vect{\theta}^*)\right|^2\right]=O\left(r_n\right).
		\end{aligned}
	\end{equation}
	For the last term in \eqref{eq:lipschitz_sigma_lemma}, there is
	\begin{align*}
		&\Big|\Big\{\frac{1}{n}\sum_{i\in\mcH_0}\nabla f_l(X_i,\vect{\theta})-\mu_l(\vect{\theta})\Big\}^2-\Big\{\frac{1}{n}\sum_{i\in\mcH_0}\nabla f_l(X_i,\vect{\theta}^*)-\mu_l(\vect{\theta}^*)\Big\}^2\Big|\\
			=&\Big|\frac{1}{n}\sum_{i\in\mcH_0}\left\{\nabla f_l(X_i,\vect{\theta})-\mu_l(\vect{\theta})-\nabla f_l(X_i,\vect{\theta}^*)+\mu_l(\vect{\theta}^*)\right\}\Big|\\
			&\times\Big|\frac{1}{n}\sum_{i\in\mcH_0}\left\{\nabla f_l(X_i,\vect{\theta})-\mu_l(\vect{\theta})+\nabla f_l(X_i,\vect{\theta}^*)-\mu_l(\vect{\theta}^*)\right\}\Big|.
	\end{align*}
	Compute that
	\begin{align*}
		&\mfE\left\{\frac{\eta}{2},\nabla f_l(X_1,\vect{\theta})-\mu_l(\vect{\theta})-\nabla f_l(X_1,\vect{\theta}^*)+\mu_l(\vect{\theta}^*)\right\}\\
		\leq &\mfE\left\{\frac{\eta}{2}, r_n\big[M_{\vect{\theta},\vect{\theta}^*}(X_1,\vect{e}_l)+\mbE\{M_{\vect{\theta},\vect{\theta}^*}(X_1,\vect{e}_l)\}\big]\right\}\\
		\leq &r_n^2\mbE\left\{\big[M_{\vect{\theta},\vect{\theta}^*}(X_1,\vect{e}_l)+\mbE\{M_{\vect{\theta},\vect{\theta}^*}(X_1,\vect{e}_l)\}\big]^2\exp\frac{\eta r_n}{2} \big[M_{\vect{\theta},\vect{\theta}^*}(X_1,\vect{e}_l)+\mbE\{M_{\vect{\theta},\vect{\theta}^*}(X_1,\vect{e}_l)\}\big]\right\}\\
		\leq &4\eta^{-2}r_n^2\mbE\left\{\exp\eta\big[M_{\vect{\theta},\vect{\theta}^*}(X_1,\vect{e}_l)+\mbE\{M_{\vect{\theta},\vect{\theta}^*}(X_1,\vect{e}_l)\}\big]\right\}=O(r_n^2),\\
		&\mfE\left\{\frac{\eta}{8},\nabla f_l(X_1,\vect{\theta})-\mu_l(\vect{\theta})+\nabla f_l(X_1,\vect{\theta}^*)-\mu_l(\vect{\theta}^*)\right\}\\
		\leq &\mfE\left\{\frac{\eta}{8}, 2|\nabla f_l(X_1,\vect{\theta}^*)-\mu_l(\vect{\theta}^*)|+r_n\big[M_{\vect{\theta},\vect{\theta}^*}(X_1,\vect{e}_l)+\mbE\{M_{\vect{\theta},\vect{\theta}^*}(X_1,\vect{e}_l)\}\big]\right\}\\
		\leq &64\eta^{-2}\mbE\left\{\exp\frac{\eta}{4}\big[2|\nabla f_l(X_1,\vect{\theta}^*)-\mu_l(\vect{\theta}^*)|+r_nM_{\vect{\theta},\vect{\theta}^*}(X_1,\vect{e}_l)+r_n\mbE\{M_{\vect{\theta},\vect{\theta}^*}(X_1,\vect{e}_l)\}\big]\right\}
		=O(1).
	\end{align*}
	Applying Lemma \ref{lemma:cai_liu} to each averaged term, we have
	\begin{equation}	\label{lipschitz_sigma_lemma.term2}
		\Big|\Big\{\frac{1}{n}\sum_{i\in\mcH_0}\nabla f_l(X_i,\vect{\theta})-\mu_l(\vect{\theta})\Big\}^2-\Big\{\frac{1}{n}\sum_{i\in\mcH_0}\nabla f_l(X_i,\vect{\theta}^*)-\mu_l(\vect{\theta}^*)\Big\}^2\Big|=O_{\mbP}\left(\frac{r_np\log n}{n}\right).
	\end{equation}
	
	Lastly we shall focus on the term $T$ in \eqref{eq:lipschitz_sigma_lemma}. For each $i\in\mcH_0$, denote
	\begin{equation*}
		\begin{array}{ll}
			\mfX_i:=&\left\{X_i\;\middle|\;
				\begin{array}{ll}
					 M_{\vect{\theta},\vect{\theta}^*}(X_i,\vect{e}_l)<\eta^{-1}(\gamma+2)p\log n,\\
					|\nabla f_l(X_i,\vect{\theta}^*)-\mu_l(\vect{\theta}^*)|<\eta^{-1}(\gamma +2)\log n
				\end{array}\right\},\\
			Y_i:=&\{\nabla f_l(X_i,\vect{\theta})-\mu_l(\vect{\theta})\}^2-\{\nabla f_l(X_i,\vect{\theta}^*)-\mu_l(\vect{\theta}^*)\}^2,\\
			\bar{Y}_i:=&Y_i\mbI(\mfX_i)-\mbE\{Y_i\mbI(\mfX_i)\},\quad
			\tilde{Y}_i:=Y_i\mbI(\mfX_i^c)-\mbE\{Y_i\mbI(\mfX_i^c)\}.
		\end{array}
	\end{equation*}
	We can compute that
	\begin{align*}
		\big|\mbE\{Y_1\mbI(\,\mfX^c_1)\}\big|=&\left|\mbE\left[\{\nabla f_l(X_1,\vect{\theta})-\mu_l(\vect{\theta})\}^2\mbI(\,\mfX^c_1)-\{\nabla f_l(X_1,\vect{\theta}^*)-\mu_l(\vect{\theta}^*)\}^2\mbI(\,\mfX^c_1)\right]\right|\\
			\leq& \mbE\Big[\big\{2|\nabla f_l(X_1,\vect{\theta}^*)-\mu_l(\vect{\theta}^*)|+\big(M_{\vect{\theta},\vect{\theta}^*}(X_1,\vect{e}_l)+\mbE[M_{\vect{\theta},\vect{\theta}^*}(X_1,\vect{e}_l)]\big)\big\}\\
			&\times\left\{M_{\vect{\theta},\vect{\theta}^*}(X_1,\vect{e}_l)+\mbE[M_{\vect{\theta},\vect{\theta}^*}(X_1,\vect{e}_l)]\right\}\mbI(\,\mfX^c_1)\Big]\\
			\leq& \mbE\left[\big|\nabla f_l(X_1,\vect{\theta}^*)-\mu_l(\vect{\theta}^*)\big|^2\mbI(\,\mfX^c_1)+4\big|M_{\vect{\theta},\vect{\theta}^*}(X_1,\vect{e}_l)\big|^2\mbI(\,\mfX^c_1)\right]\\
			=&O(n^{-2})=o(r_n).
	\end{align*}
	Then from Assumption \ref{assump:liptchitz_gradient} we know 
	\begin{equation}	\label{ineq:complement_sum.sigma_lemma}
		\begin{aligned}
			&\mbP\left\{\left|\frac{1}{n}\sum_{i\in\mcH_0}\tilde{Y}_i\right|>\big|\mbE\{Y_1\mbI(\,\mfX^c_1)\}\big|,\;\mfX_0\right\}\leq\mbP\left\{\Big(\bigcup_{i\in\mcH_0}\mfX_i^c\Big)\bigcap\,\mfX_0\right\}\\
				\leq&\mbP\left\{\max_{i\in\mcH_0}M_{\vect{\theta},\vect{\theta}^*}(X_i,\vect{e}_l)\geq\eta^{-1}(\gamma+2)p\log n\right\}=O(n^{-\gamma p}).
		\end{aligned}
	\end{equation}
	 Thus we have
	\begin{align*}
		&\left|\bar{Y}_1\right|=\left|Y_1\mbI(\mfX_1)-\sigma_l^2(\vect{\theta})+\sigma_l^2(\vect{\theta}^*)+\mbE\{Y_1\mbI(\,\mfX^c_1)\}\right|\\
			\leq&\Big|\big\{\nabla f_l(X_1,\vect{\theta})-\mu_l(\vect{\theta})-\nabla f_l(X_1,\vect{\theta}^*)+\mu_l(\vect{\theta}^*)\big\}\big\{\nabla f_l(X_1,\vect{\theta})-\mu_l(\vect{\theta})+\nabla f_l(X_1,\vect{\theta}^*)-\mu_l(\vect{\theta}^*)\big\}\Big|\mbI(\,\mfX_1)\\
			&+O\left(r_n\right)\\
			\leq&r_n\left[M_{\vect{\theta},\vect{\theta}^*}(X_1,\vect{e}_l)+\mbE\{M_{\vect{\theta},\vect{\theta}^*}(X_1,\vect{e}_l)\}\right]\Big(2|\nabla f_l(X_1,\vect{\theta}^*)-\mu_l(\vect{\theta}^*)|+r_n\big[M_{\vect{\theta},\vect{\theta}^*}(X_1,\vect{e}_l)\\
			&+\mbE\{M_{\vect{\theta},\vect{\theta}^*}(X_1,\vect{e}_l)\}\big]\Big)\mbI(\,\mfX_1) +O\left(r_n\right)\\
			\leq&\left\{\frac{2(\gamma+2)}{\eta} r_n\log n+\frac{(\gamma+2)}{\eta} r_n^2\,p\log n\right\}\left[M_{\vect{\theta},\vect{\theta}^*}(X_1,\vect{e}_l)+\mbE\{M_{\vect{\theta},\vect{\theta}^*}(X_1,\vect{e}_l)\}\right]\mbI(\,\mfX_1)+O\left(r_n\right).
	\end{align*}
	From Assumption \ref{assump:mnp_dependence} and \ref{assump:liptchitz_gradient} we know $r_n\log n+r_n^2\,p\log n=O(1)$, thus
	\begin{equation*}
		\mfE\left(\frac{\eta}{2}, \bar{Y}_1\right)= O\left(r^2_n\log^2 n+r_n^4\,p^2\log^2 n\right)=O(1).
	\end{equation*}
	Then Lemma \ref{lemma:cai_liu} yields
	\begin{equation}	\label{ineq:main_sum.sigma_lemma}
		\mbP\left\{\left|\frac{1}{n}\sum_{i\in\mcH_0}\bar{Y}_i\right|\geq C\left(r_n\sqrt{\frac{p\log^3 n}{n}}+r_n^2\sqrt{\frac{p^3\log^3 n}{n}}\right)\right\}=O(n^{-\gamma p}),
	\end{equation}
	 holds for some $C>0$ large enough. Thus from \eqref{ineq:complement_sum.sigma_lemma} and \eqref{ineq:main_sum.sigma_lemma} we have
	\begin{equation}	\label{lipschitz_sigma_lemma.term1}
		\begin{aligned}
			&\mbP\left\{\left|\frac{1}{n}\sum_{i\in\mcH_0}Y_i\right|\geq C\left(r_n\sqrt{\frac{p\log^3 n}{n}}+r_n^2\sqrt{\frac{p^3\log^3 n}{n}}+r_n\right),\;\mfX_0\right\}\\
			\leq&\mbP\left\{\left|\frac{1}{n}\sum_{i\in\mcH_0}\bar{Y}_i\right|\geq C\left(r_n\sqrt{\frac{p\log^3 n}{n}}+r_n^2\sqrt{\frac{p^3\log^3 n}{n}}\right)\right\}\\
				&+\mbP\left\{\left|\frac{1}{n}\sum_{i\in\mcH_0}\tilde{Y}_i\right|>Cr_n,\;\mfX_0\right\}= O(n^{-\gamma p}).
		\end{aligned}
	\end{equation}
	Finally taking \eqref{lipschitz_sigma_lemma.term0}, \eqref{lipschitz_sigma_lemma.term2} and \eqref{lipschitz_sigma_lemma.term1} back to \eqref{eq:lipschitz_sigma_lemma}, we conclude that, under $\mfX_0$, there is
	\begin{align*}
		&\left|\hat{\sigma}^2_l(\vect{\theta})-\hat{\sigma}^2_l(\vect{\theta}^*)\right|=O_{\mbP}\left(r_n\sqrt{\frac{p\log^3 n}{n}}+r_n^2\sqrt{\frac{p^3\log^3 n}{n}}+r_n\right)=O\left(r_n\right),
	\end{align*}
	since we already supposed $p=O(\sqrt{n}\log^{-1} n)$ and $r_n^2\,p\log n=O(1)$ in Assumption \ref{assump:mnp_dependence}.
\end{proof}


\begin{lemma}	\label{stab_modifier_byzantine.lemma}
	(Stability of correction terms) Let $N$($=(m+1)n$) i.i.d. random variables $X_1,...,X_{N}$ evenly distributed in $m+1$ subsets $\mcH_0,...,\mcH_m$. Let
	\begin{align*}
		\bar{X}_j=&\,n^{-1}\sum_{i\in\mcH_j}X_i,\\
		\hat{X}=&\,\med(\bar{X}_j\mid0\leq j\leq m),\\
		\hat{\sigma}^2=&\,n^{-1}\sum_{i\in\mcH_0}(X_i-\bar{X}_0)^2.
	\end{align*}
	Suppose $X_1$ satisfies $\mbE(X_1)=0,\var(X_1)=\sigma^2$ and $\mbE[e^{\eta|X_1|}]\leq C$. Then there exists $\tilde{C}>0$ sufficiently large, such that
	\begin{align*}
		\mbP\left[\frac{1}{m+1}\sum_{j=0}^{m}\mbI\left\{\left|\bar{X}_j-\hat{X}-\frac{\hat{\sigma}\Delta_k}{\sqrt{n}}\right|\leq \delta_n\right\}\geq \tilde{C}\max\left\{\frac{\log n}{m},\sqrt{n}\delta_n\right\}\right]\leq O(n^{-\gamma}),
	\end{align*}
	provided $\max\{1/(n\delta_n),\sqrt{n}\delta_n\}=O(1)$.
\end{lemma}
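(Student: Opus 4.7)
The plan is to reduce this uniform statement to a point-wise concentration bound by discretizing the random center $c:=\hat{X}+\hat{\sigma}\Delta_k/\sqrt{n}$ over a polynomial-size net. First, I would combine Lemma \ref{lemma:concen_mom_byzantine} (applied with $\alpha_n=0$) and Lemma \ref{lemma:concen_sigma} to produce a high-probability event $\mcE$, with $\mbP(\mcE^c)=O(n^{-\gamma})$, on which $|\hat{X}|=O(\sqrt{\log n/(mn)})$ and $|\hat{\sigma}-\sigma|=O(\sqrt{\log n/n})$. On $\mcE$ the random center $c$ lies in a deterministic interval $I$ of length $O(\sqrt{\log n}/\sqrt{n})$ around $\sigma\Delta_k/\sqrt{n}$.

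For each fixed $\tilde{x}$ in an $n^{-M}$-net $\mfN$ of $I$ (which has polynomial cardinality for any fixed $M$), Berry--Esseen (Lemma \ref{lemma:berry_esseen}) applied to the normalized sample mean gives
\[
p(\tilde{x}):=\mbE\bigl[\mbI\{|\bar{X}_j-\tilde{x}|\leq\delta_n\}\bigr]=\Phi\bigl(\sqrt{n}(\tilde{x}+\delta_n)/\sigma\bigr)-\Phi\bigl(\sqrt{n}(\tilde{x}-\delta_n)/\sigma\bigr)+O(n^{-1/2})=O(\sqrt{n}\delta_n),
\]
where the final equality invokes the hypothesis $1/(n\delta_n)=O(1)$ to absorb the Berry--Esseen remainder $n^{-1/2}$ into $\sqrt{n}\delta_n$. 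Since the centered indicator is bounded, $\mfE(1,\cdot)$ for the centered indicator is also $O(\sqrt{n}\delta_n)$, and Lemma \ref{lemma:cai_liu} yields, for any preassigned large $\gamma'$ and suitably large constants $C_1,C_2$,
\[
\mbP\Bigl(\frac{1}{m+1}\sum_{j\neq 0}\mbI\{|\bar{X}_j-\tilde{x}|\leq\delta_n\}\geq C_1\sqrt{n}\delta_n+C_2\sqrt{\sqrt{n}\delta_n\log n/m}\,\Bigr)=O(n^{-\gamma'}).
\]
A brief case split between the regimes $\sqrt{n}\delta_n\geq\log n/m$ and $\sqrt{n}\delta_n<\log n/m$ (using $\sqrt{ab}\leq\max\{a,b\}$) shows the right-hand side is dominated by $\tilde{C}\max\{\sqrt{n}\delta_n,\log n/m\}$, matching the target rate.

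To pass from the grid to the random center $c$, I use the monotonicity
\[
\mbI\{|\bar{X}_j-c|\leq\delta_n\}\leq\mbI\{|\bar{X}_j-\tilde{x}|\leq\delta_n+n^{-M}\}\quad\text{whenever }|c-\tilde{x}|\leq n^{-M},
\]
and apply the point-wise bound above with $\delta_n+n^{-M}$ in place of $\delta_n$ (of the same order under the hypothesis). A union bound over $\tilde{x}\in\mfN$ absorbs the polynomial cardinality into $n^{-\gamma}$ by taking $\gamma'=\gamma+p M$ large enough, and intersecting with $\mcE$ yields the claimed uniform bound.

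The main obstacle is the coupling between $\hat{X}$, $\hat{\sigma}$ and the very indicators that appear in the sum, and in particular that $\hat{\sigma}$ depends on the same data as $\bar{X}_0$. The $\varepsilon$-net decouples the randomness of the center from that of the individual $\bar{X}_j$'s, while the single $j=0$ summand is peeled off before applying Lemma \ref{lemma:cai_liu}; its contribution to the average is at most $1/(m+1)$, comfortably absorbed into the $\log n/m$ part of the target rate.
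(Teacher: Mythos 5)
Your proposal is correct and follows essentially the same route as the paper's proof: localize the random center to a high-probability deterministic region, discretize it with a polynomial-size net, apply Berry--Esseen plus the exponential inequality (Lemma \ref{lemma:cai_liu}) at each grid point, and union bound. The only differences are cosmetic --- you net the single scalar $\hat{X}+\hat{\sigma}\Delta_k/\sqrt{n}$ over a tight interval while the paper nets $(\hat{\sigma},\hat{X})$ over the crude square $[0,\sqrt{n}]\times[-\sqrt{n},\sqrt{n}]$, and your peeling off of the $j=0$ summand is unnecessary once the net has decoupled the center from the data, though it is harmless.
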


\begin{proof}
	Construct the set
	\begin{equation*}
		\mfX_Q\triangleq\left\{X_1,...,X_N\,\middle|\,\hat{\sigma}\leq\sqrt{n},\, |\hat{X}|\leq \sqrt{n}\right\}.
	\end{equation*}
	From Lemma \ref{lemma:concen_mom_byzantine} and Lemma \ref{lemma:concen_sigma}, it is easy to see $\mfX_Q$ holds with probability greater than $1-O(n^{-\gamma})$. Construct an $n^{-2}$-net $\mfN_Q$ for the square $\Theta_Q\triangleq[0,\sqrt{n}]\times[-\sqrt{n},\sqrt{n}]$, then there is
	\begin{align*}
		&\frac{1}{m+1}\sum_{j=0}^{m}\mbI\left\{\left|\bar{X}_j-\hat{X}-\frac{\hat{\sigma}\Delta_k}{\sqrt{n}}\right|\leq \delta_n\right\}\\
			\leq&\sup_{(\sigma,y)\in\Theta_Q}\frac{1}{m+1}\sum_{j=0}^{m}\mbI\left\{\left|\bar{X}_j-y-\frac{\sigma\Delta_k}{\sqrt{n}}\right|\leq \delta_n\right\}\\
			\leq&\max_{(\tilde{\sigma},\tilde{y})\in\mfN_Q}\frac{1}{m+1}\sum_{j=0}^{m}\mbI\left\{\left|Y_j-\tilde{y}-\frac{\tilde{\sigma}\Delta_k}{\sqrt{n}}\right|\leq \delta_n\right\}\\
			&+\max_{(\tilde{\sigma},\tilde{y})\in\mfN_Q}\sup_{\{(\sigma,y):|\tilde{\sigma}-\sigma|,|\tilde{y}-y|\leq n^{-2}\}} \frac{1}{m+1}\sum_{j=0}^{m} \left[\mbI\left\{\left|\bar{X}_j-\tilde{y}-\frac{\tilde{\sigma}\Delta_k}{\sqrt{n}}\right|\leq \delta_n\right\}- \mbI\left\{\left|\bar{X}_j-y-\frac{\sigma\Delta_k}{\sqrt{n}}\right|\leq \delta_n\right\}\right]\\
			\leq&2\max_{(\tilde{\sigma},\tilde{y})\in\mfN_Q}\frac{1}{m+1}\sum_{j=0}^{m}\mbI\left\{\left|\bar{X}_j-\tilde{y}-\frac{\tilde{\sigma}\Delta_k}{\sqrt{n}}\right|\leq 2\delta_n\right\}.
	\end{align*}
	Denote
	\begin{align*}
		&Z_{j}(y,\sigma)=: \mbI\left\{\left|\bar{X}_j-y-\frac{\sigma\Delta_k}{\sqrt{n}}\right|\leq 2\delta_n\right\}-\mbP\left\{\left|\bar{X}_j-y-\frac{\sigma\Delta_k}{\sqrt{n}}\right|\leq 2\delta_n\right\},\\
		&\mbP_{j} (y,\sigma) =: \mbP \left\{ \left|\bar{X}_j-y-\frac{\sigma\Delta_k}{\sqrt{n}}\right|\leq 2\delta_n  \right\}.
	\end{align*}
	Apply Lemma \ref{lemma:berry_esseen} to $\sqrt{n}Y_j$, we know there exist constants $C_1,C_2>0$ such that
	\begin{align*}
		&\mbP_{j}(y,\sigma)\leq \mbP\left\{\left|\mcN(0,1)-\sqrt{n}y-\sigma\Delta_k\right|\leq 2\sqrt{n}\delta_n\right\}+\frac{C_1}{\sqrt{n}}\leq 4C_1\sqrt{n}\delta_n\\
		&\mfE\left\{1, Z_{j}(y,\sigma)\right\}  \leq e\left\{1+\mbP_{j}(y,\sigma)\right\}\mbP_{j}(y,\sigma)\\
		\leq& 8ec\sqrt{n}\delta_n\leq C_2\max \left\{ \frac{\log n}{m}, \sqrt{n}\delta_n \right\}.
	\end{align*}
	Apply Lemma \ref{lemma:cai_liu} together with $|\mfN_Q|\leq 2n^{5}$, we have
	\begin{align*}
		&\mbP\left\{\max_{(\tilde{\sigma},\tilde{y})\in\mfN_Q}\frac{1}{m+1}\sum_{j=0}^{m}Z_{j}(\tilde{y},\tilde{\sigma})\geq x\right\}\\
		\leq&2n^5\sup_{(\sigma,y)\in\Theta_Q}\mbP\left\{\frac{1}{m+1}\sum_{j=0}^{m}Z_{j}(y,\sigma)\geq x\right\}\\
		\leq& n^{-\gamma },
	\end{align*}
	by taking $x\geq (\gamma+6)\sqrt{C_2}\max \left\{ \frac{\log n}{m}, \sqrt{n}\delta_n \right\}$. Then we conclude that
	\begin{align*}
		&\frac{1}{m+1}\sum_{j=0}^{m}\mbI\left\{\left|\bar{X}_j-\hat{X}-\frac{\hat{\sigma}\Delta_k}{\sqrt{n}}\right|\leq\delta_n\right\}\\
			\leq&\max_{(\tilde{\sigma},\tilde{y})\in\mfN_Q}\frac{2}{m+1}\sum_{j=0}^{m}Z_{j}(\tilde{y},\tilde{\sigma})+4c\sqrt{n}\delta_n\\
			\leq& 4(\gamma+6)\sqrt{C_2}\max\left\{\frac{\log n}{m},\sqrt{n}\delta_n\right\},
	\end{align*}
	holds with probability larger than $1-O(n^{-\gamma })$. Thus we can obtain the desired result.
\end{proof}

\subsection{Proofs of results in Appendix \ref{sec:theory:RCSL}}

Now, we are ready to provide the proofs of the convergence rate and asymptotic normality for the RCSL estimator.


\begin{proof}[Proof of Theorem \ref{thm:iter_rdane_byzantine_conv} and Theorem \ref{thm:rdane_byzantine_conv}]
	Denote $b_n$ as the desired convergence rate of $\hat{\vect{\theta}}^{(1)}$, and $\vect{g}_j(\vect{\theta})=n^{-1}\sum_{i\in\mcH_j}\nabla f(X_i,\vect{\theta})$. We construct the following sets in the parameter space.
	\begin{align*}
		\Theta_0&=\{\vect{\theta}\in\mathbb{R}^p: |\vect{\theta}-\vect{\theta}^*|_2\leq r_n\},	\stepcounter{equation}\tag{\theequation}\label{eq:theta_0_def}\\
		\Theta_1&=\{\vect{\theta}\in \mathbb{R}^p:|\vect{\theta}-\vect{\theta}^*|_2=b_n\},	\\
		\Theta_2&=\{\vect{\theta}\in \mathbb{R}^p:|\vect{\theta}-\vect{\theta}^*|_2\leq b_n\}.	
	\end{align*}
	Given an initial estimator $\hat{\vect{\theta}}^{(0)}$ lies in the set $\Theta_0$, we will show
	\begin{equation}	\label{bound_circle.ineq}
		\frac{1}{n}\sum_{i\in \mcH_0}f(X_i,\vect{\theta}_1)-\langle \vect{g}_0(\hat{\vect{\theta}}^{(0)})-\bar{\vect{g}}(\hat{\vect{\theta}}^{(0)}), \vect{\theta}_1\rangle>\frac{1}{n}\sum_{i\in \mcH_0}f(X_i,\vect{\theta}^*)-\langle \vect{g}_0(\hat{\vect{\theta}}^{(0)})-\bar{\vect{g}}(\hat{\vect{\theta}}^{(0)}), \vect{\theta}^* \rangle,
	\end{equation}
	holds uniformly for $\vect{\theta}_1\in\Theta_1$ with probability tending to 1. Notice that
	\begin{align*}
		&\frac{1}{n}\sum_{i\in \mcH_0}\left\{f(X_i,\vect{\theta}_1)-f(X_i,\vect{\theta}^*)\right\}-\left\langle \vect{g}_0(\hat{\vect{\theta}}^{(0)})-\bar{\vect{g}}(\hat{\vect{\theta}}^{(0)}), \vect{\theta}_1-\vect{\theta}^*\right\rangle	\stepcounter{equation}\tag{\theequation}\label{eq:convergence_taylor}\\
			=&\left\langle\int_0^1\big[\vect{g}_0(\vect{\theta}^*+s(\vect{\theta}_1-\vect{\theta}^*))-\vect{g}_0(\vect{\theta}^*)-\vect{\mu}(\vect{\theta}^*+s(\vect{\theta}_1-\vect{\theta}^*))+\vect{\mu}(\vect{\theta}^*)\big]\diff s,\vect{\theta}_1-\vect{\theta}^*\right\rangle\\
			&+\left\langle\int_0^1\left\{\vect{\mu}(\vect{\theta}^*+s(\vect{\theta}_1-\vect{\theta}^*))-\vect{\mu}(\vect{\theta}^*)\right\}\diff s,\vect{\theta}_1-\vect{\theta}^*\right\rangle+\left\langle\vect{g}_0(\vect{\theta}^*)- \vect{g}_0(\hat{\vect{\theta}}^{(0)})+\bar{\vect{g}}(\hat{\vect{\theta}}^{(0)}), \vect{\theta}_1-\vect{\theta}^*\right\rangle\\
			=&\frac{1}{2}\left\langle \nabla\vect{\mu}(\vect{\theta}^*)(\vect{\theta}_1-\vect{\theta}^*),\vect{\theta}_1-\vect{\theta}^*\right\rangle\\
			&+\left\langle\underbrace{\int_0^1(1-s)\left\{\nabla\vect{\mu}(\vect{\theta}^*+s(\vect{\theta}_1-\vect{\theta}^*))-\nabla\vect{\mu}(\vect{\theta}^*)\right\}\diff s}_{\vect{T}_1}\,(\vect{\theta}_1-\vect{\theta}^*),\vect{\theta}_1-\vect{\theta}^*\right\rangle\\
			&+\left\langle\underbrace{\int_0^1\Big[\vect{g}_0(\vect{\theta}^*+s(\vect{\theta}_1-\vect{\theta}^*))-\vect{g}_0(\vect{\theta}^*)-\vect{\mu}(\vect{\theta}^*+s(\vect{\theta}_1-\vect{\theta}^*))+\vect{\mu}(\vect{\theta}^*)\Big]\diff s}_{\vect{T}_2},\vect{\theta}_1-\vect{\theta}^*\right\rangle\\
			&+\left\langle\underbrace{\vect{g}_0(\vect{\theta}^*)- \vect{g}_0(\hat{\vect{\theta}}^{(0)})+\vect{\mu}(\hat{\vect{\theta}}^{(0)})-\vect{\mu}(\vect{\theta}^*)}_{\vect{T}_3}, \vect{\theta}_1-\vect{\theta}^*\right\rangle+\left\langle\underbrace{\bar{\vect{g}}(\hat{\vect{\theta}}^{(0)})-\vect{\mu}(\hat{\vect{\theta}}^{(0)})}_{\vect{T}_4}, \vect{\theta}_1-\vect{\theta}^*\right\rangle.
	\end{align*}
	To show positivity of this difference, it left to bound the norms of $\vect{T}_1,\vect{T}_2,\vect{T}_3$ and $\vect{T}_4$ for $\hat{\vect{\theta}}^{(0)}\in\Theta_0$ and $\vect{\theta}_1\in\Theta_1$ uniformly.
	

	Firstly, from assumption \ref{assump:liptchitz_hess}, we have
	\begin{align*}
		\Norm{\vect{T}_1}\leq&\int_0^1(1-s)\Norm{\nabla\vect{\mu}(\vect{\theta}^*+s(\vect{\theta}_1-\vect{\theta}^*))-\nabla\vect{\mu}(\vect{\theta}^*)}dt\\
			\leq&\int_0^1C_H(1-s)s|\vect{\theta}_1-\vect{\theta}^*|_2\diff s=\frac{C_H|\vect{\theta}_1-\vect{\theta}^*|_2}{6}=o(1).
	\end{align*}


	To control the rest three terms, we denote
	\begin{equation}	\label{eq:z_def}
		\begin{aligned}
			\vect{Z}(x,\vect{\theta}):=&\nabla f(x,\vect{\theta})-\nabla f(x,\vect{\theta}^*)-\vect{\mu}(\vect{\theta})+\vect{\mu}(\vect{\theta}^*),
		\end{aligned}
	\end{equation}
	for ease of notations. Then we will show
	\begin{equation}	\label{eq:supz_bound}
		\sup_{\vect{\theta}\in\Theta_0}\left|\frac{1}{n}\sum_{i\in\mcH_0}\vect{Z}(X_i,\vect{\theta})\right|_2=O_{\mbP}\left(r_n\sqrt{\frac{p\log n}{n}}\right).
	\end{equation}
	Construct the $(r_nn^{-M})$-net $\mfN_0$ for the set $\Theta_0$, where $M>0$ is some sufficiently large number. From Lemma 5.2 of \cite{vershynin.2010} we know $\card(\mfN_0)\leq (1+2n^M)^p$. Then we have
	\begin{align*}
		\sup_{\vect{\theta}\in\Theta_0}\left|\frac{1}{n}\sum_{i\in\mcH_0}r_n^{-1}\vect{Z}(X_i,\vect{\theta})\right|_2\leq& \max_{\tilde{\vect{\theta}}\in\mfN_0}\left|\frac{1}{n}\sum_{i\in\mcH_0}r_n^{-1}\vect{Z}(X_i,\tilde{\vect{\theta}})\right|_2\\
			&+\frac{1}{n^{M+1}}\sum_{i\in\mcH_0}\left\{\sup_{\vect{v}\in\mbS^{p-1}}\bar{M}(X_i,\vect{v})\right\}+\frac{1}{n^M}\mbE\left\{\sup_{\vect{v}\in\mbS^{p-1}}\bar{M}(X_1,\vect{v})\right\}.
	\end{align*}
	From \eqref{eq:lipgrad_assumption2} in Assumption \ref{assump:liptchitz_gradient} and Markov inequality, there is
	\begin{align*}
		&\frac{1}{n^M}\mbE\left\{\sup_{\vect{v}\in\mbS^{p-1}}\bar{M}(X_1,\vect{v})\right\}\leq \frac{p^{\gamma_0}}{n^M}\mbE\left[\sup_{\vect{v}\in\mbS^{p-1}}\exp\left\{p^{-\gamma_0}\bar{M}(X_1,\vect{v})\right\}\right]<\frac{C_Mp^{\gamma_0}}{n^M},\\
		&\frac{1}{n^{M+1}}\sum_{i\in\mcH_0}\left\{\sup_{\vect{v}\in\mbS^{p-1}}\bar{M}(X_i,\vect{v})\right\}=O_{\mbP}\left(\frac{p^{\gamma_0}\log n}{n^M}\right).
	\end{align*}
	On the other hand, by standard $\epsilon$-net argument for vector norms, we know that there exists a $1/2$-net $\mfN_{S}$ of $\mbS^{p-1}$ such that $\card(\mfN_S)\leq 5^p$. It holds that
	\begin{align*}
		&\left|\frac{1}{n}\sum_{i\in\mcH_0}\vect{Z}(X_i,\tilde{\vect{\theta}})\right|_2&&=\sup_{\vect{v}\in\mbS^{p-1}}\left\langle \frac{1}{n}\sum_{i\in\mcH_0}\vect{Z}(X_i,\tilde{\vect{\theta}}),\vect{v}\right\rangle\\
			&\mbox{}&&\leq\max_{\vect{v}\in\mfN_S}\left\langle \frac{1}{n}\sum_{i\in\mcH_0}\vect{Z}(X_i,\tilde{\vect{\theta}}),\vect{v}\right\rangle+\sup_{|\vect{v}-\tilde{\vect{v}}|_2\leq 1/2}\left\langle\frac{1}{n}\sum_{i\in\mcH_0}\vect{Z}(X_i,\tilde{\vect{\theta}}),\vect{v}-\tilde{\vect{v}}\right\rangle,\\
		\Rightarrow&\left|\frac{1}{n}\sum_{i\in\mcH_0}\vect{Z}(X_i,\tilde{\vect{\theta}})\right|_2&&\leq 2\max_{\vect{v}\in\mfN_S}\left\langle \frac{1}{n}\sum_{i\in\mcH_0}\vect{Z}(X_i,\tilde{\vect{\theta}}),\vect{v}\right\rangle.
	\end{align*}
	Thus we have
	\begin{align*}
		\mbP\left(\max_{\tilde{\vect{\theta}}\in\mfN_0}\left|\frac{1}{n}\sum_{i\in\mcH_0}r_n^{-1}\vect{Z}(X_i,\tilde{\vect{\theta}})\right|_2\geq 2x\right)\leq& (1+2n^M)^p\max_{\tilde{\vect{\theta}}\in\mfN_0}\mbP\left(\left|\frac{1}{n}\sum_{i\in\mcH_0}r_n^{-1}\vect{Z}(X_i,\tilde{\vect{\theta}})\right|_2\geq 2x\right)\\
			\leq&5^p(1+2n^M)^p\max_{\tilde{\vect{\theta}}\in\mfN_0}\sup_{\vect{v}\in\mbS^{p-1}}\mbP\left(\frac{1}{n}\sum_{i\in\mcH_0}\left\langle r_n^{-1}\vect{Z}(X_i,\tilde{\vect{\theta}}),\vect{v}\right\rangle\geq x\right).
	\end{align*}
	Moreover, by \eqref{eq:lipgrad_assumption1} in Assumption \ref{assump:liptchitz_gradient}, for every $\vect{v}\in\mbS^{p-1}$ and $\tilde{\vect{\theta}}\in\mfN_0$, we can compute that
	\begin{align*}
		&\mfE\left\{\eta/2,\left\langle r_n^{-1}\vect{Z}(X_1,\tilde{\vect{\theta}}),\vect{v}\right\rangle\right\}\\
			\leq&\mbE\left[\left\{M_{\tilde{\vect{\theta}},\vect{\theta}^*}(X_1,\vect{v})+\mbE[M_{\tilde{\vect{\theta}},\vect{\theta}^*}(X_1,\vect{v})]\right\}^2\exp\frac{\eta }{2}\left\{M_{\tilde{\vect{\theta}},\vect{\theta}^*}(X_1,\vect{v})+\mbE[M_{\tilde{\vect{\theta}},\vect{\theta}^*}(X_1,\vect{v})]\right\}\right]\\
			\leq&4\eta^{-2}\mbE\big[\exp\eta\big\{M_{\tilde{\vect{\theta}},\vect{\theta}^*}(X_1,\vect{v})+\mbE[M_{\tilde{\vect{\theta}},\vect{\theta}^*}(X_1,\vect{v})]\big\}\big]\leq4C_M^2\eta^{-2}=O\left(1\right).
	\end{align*}
	So Lemma \ref{lemma:cai_liu} yields
	\begin{align*}
		5^p(1+2n^M)^p\max_{\tilde{\vect{\theta}}\in\mfN_0}\sup_{\vect{v}\in\mbS^{p-1}}\mbP\left(\frac{1}{n}\sum_{i\in\mcH_0}\left\langle r_n^{-1}\vect{Z}(X_i,\tilde{\vect{\theta}}),\vect{v}\right\rangle\geq x\right)=O(n^{-p\gamma}),
	\end{align*}
	with $x=C_1\sqrt{\frac{p\log n}{n}}$ and $C_1$ large enough. Hence we proved the bound \eqref{eq:supz_bound}. Similarly we can give bounds for $\vect{T}_2$ and $\vect{T}_3$ as follows
	\begin{align*}
		|\vect{T}_2|_2\leq&\int_0^1\left|\frac{1}{n}\sum_{i\in \mcH_0}\vect{Z}(X_i,\vect{\theta}^*+s(\vect{\theta}_1-\vect{\theta}^*))\right|_2\diff s\\
			\leq&\sup_{\vect{\theta}\in\Theta_2}\left|\frac{1}{n}\sum_{i\in \mcH_0}\vect{Z}(X_i,\vect{\theta})\right|_2=O\left(b_n\sqrt{\frac{p\log n}{n}}\right);\\
		|\vect{T}_3|_2\leq&\sup_{\vect{\theta}\in\Theta_0}\left|\frac{1}{n}\sum_{i\in \mcH_0}\vect{Z}(X_i,\vect{\theta})\right|_2=O\left(r_n\sqrt{\frac{p\log n}{n}}\right).
	\end{align*}
	Now for the term $\vect{T}_4$, under the rates constraints in Assumption \ref{assump:mnp_dependence}, we can prove a similar result as \eqref{eq:supz_bound}:
	\begin{equation}	\label{eq:gbar_lipschitz}
		\begin{aligned}
			&\big|\bar{\vect{g}}(\hat{\vect{\theta}}^{(0)})-\vect{\mu}(\hat{\vect{\theta}}^{(0)})-\bar{\vect{g}}(\vect{\theta}^*)+\vect{\mu}(\vect{\theta}^*)\big|_2\\
				\leq&\sup_{\vect{\theta}\in\Theta_0}\big|\bar{\vect{g}}(\vect{\theta})-\vect{\mu}(\vect{\theta})-\bar{\vect{g}}(\vect{\theta}^*)+\vect{\mu}(\vect{\theta}^*)\big|_2\\
				=&O_{\mbP}\left(\frac{\alpha_n\sqrt{p}}{\sqrt{n}}+\sqrt{\frac{p\log n}{mn}}+\frac{\sqrt{p}\log n}{m\sqrt{n}}+r_n\sqrt{\frac{p^2\log n}{n}}\right).
		\end{aligned}
	\end{equation}
	However, the proof of \eqref{eq:gbar_lipschitz} involves more delicate analysis, so we delegate this part to Lemma \ref{lemma:gbar_lipschitz} below. Moreover, follow the proof of Theorem \ref{thm:normality_vrmom} together with Assumption \ref{assump:bound_variance} and \ref{assump:coordinate_exp}, we can apply exponential inequality (Lemma \ref{lemma:cai_liu}) to the i.i.d. terms in \eqref{eq:simplified_diff} and yields
	\begin{align*}
		|\bar{\vect{g}}(\vect{\theta}^*)-\vect{\mu}(\vect{\theta}^*)|_2=O_{\mbP}\left(\frac{\alpha_n\sqrt{p}}{\sqrt{n}}+\sqrt{\frac{p\log n}{mn}}+\frac{p^{1/2}\log^{3/4}n}{n^{1/2}m^{3/4}}\right).
	\end{align*}
	 Note that here we have a $\sqrt{\log n}$ in the second term because of the diverging dimension $p$. Thus we have
	\begin{align*}
		|\vect{T}_4|_2\leq& |\bar{\vect{g}}(\vect{\theta}^*)-\vect{\mu}(\vect{\theta}^*)|_2+ \sup_{\theta\in\Theta_0}\big|\bar{\vect{g}}(\vect{\theta})-\vect{\mu}(\vect{\theta})-\bar{\vect{g}}(\vect{\theta}^*)+\vect{\mu}(\vect{\theta}^*)\big|_2\\
			=&O_{\mbP}\left(\frac{\alpha_n\sqrt{p}}{\sqrt{n}}+\sqrt{\frac{p\log n}{mn}}+\frac{p^{1/2}\log^{3/4}n}{n^{1/2}m^{3/4}}+r_n\sqrt{\frac{p^2\log n}{n}}\right),
	\end{align*}
	Thus there exists a constant $\tilde{C}$ such that
	\begin{equation*}
		\Norm{\vect{T}_1}=o(1),\quad\text{and}\quad|\vect{T}_2|_2+|\vect{T}_3|_2+|\vect{T}_4|_2\leq \tilde{C}\left(\frac{\alpha_n\sqrt{p}}{\sqrt{n}}+\sqrt{\frac{p\log n}{mn}}+\frac{p^{1/2}\log^{3/4}n}{n^{1/2}m^{3/4}}+r_n\sqrt{\frac{p^2\log n}{n}}\right).
	\end{equation*}
	Now in the view of Assumption \ref{assump:hess_eigen}, we continue with \eqref{eq:convergence_taylor}, there is
	\begin{align*}
		&\frac{1}{n}\sum_{i\in \mcH_0}\{f(X_i,\vect{\theta}_1)-f(X_i,\vect{\theta}^*)\}-\langle \vect{g}_0(\hat{\vect{\theta}}^{(0)})-\bar{\vect{g}}(\hat{\vect{\theta}}^{(0)}), \vect{\theta}_1-\vect{\theta}^*\rangle\\
			\geq&\frac{\rho_0}{2}|\vect{\theta}-\vect{\theta}^*|_2^2-\Norm{\vect{T}_1}\cdot|\vect{\theta}-\vect{\theta}^*|_2^2-|\vect{T}_2|_2\cdot|\vect{\theta}-\vect{\theta}^*|_2-|\vect{T}_3|_2\cdot|\vect{\theta}-\vect{\theta}^*|_2-|\vect{T}_4|_2\cdot|\vect{\theta}-\vect{\theta}^*|_2\\
			\geq&\frac{\rho_0b_n^2}{2}-o\left(b_n^2\right)-\tilde{C}b_n\left(\frac{\alpha_n\sqrt{p}}{\sqrt{n}}+\sqrt{\frac{p\log n}{mn}}+\frac{p^{1/2}\log^{3/4}n}{n^{1/2}m^{3/4}}+r_n\sqrt{\frac{p^2\log n}{n}}\right)>0,
	\end{align*}
	provided $b_n=O(\frac{\alpha_n\sqrt{p}}{\sqrt{n}}+\sqrt{\frac{p\log n}{mn}}+\frac{p^{1/2}\log^{3/4}n}{n^{1/2}m^{3/4}}+r_n\sqrt{\frac{p^2\log n}{n}})$. So we have (\ref{bound_circle.ineq}) holds true, and thus
	\begin{equation*}
		|\hat{\vect{\theta}}^{(1)}-\vect{\theta}^*|_2=O_{\mbP}\left(\frac{\alpha_n\sqrt{p}}{\sqrt{n}}+\sqrt{\frac{p\log n}{mn}}+\frac{p^{1/2}\log^{3/4}n}{n^{1/2}m^{3/4}}+r_n\sqrt{\frac{p^2\log n}{n}}\right),
	\end{equation*}
	which proves Theorem \ref{thm:rdane_byzantine_conv}. Apply this formula inductively, we can obtain Theorem \ref{thm:iter_rdane_byzantine_conv}.
\end{proof}

\begin{lemma}	\label{lemma:gbar_lipschitz}
	Let $\bar{\vect{g}}(\vect{\theta})$ be defined as \eqref{eq:gbar_def}, with $\hat{\vect{\theta}}^{(0)}$ replaced by $\vect{\theta}$. Then under the Assumption \ref{assump:convexity}-\ref{assump:mnp_dependence}, there is
	\begin{equation*}
		\sup_{\vect{\theta}\in\Theta_0}\big|\bar{\vect{g}}(\vect{\theta})-\vect{\mu}(\vect{\theta})-\bar{\vect{g}}(\vect{\theta}^*)+\vect{\mu}(\vect{\theta}^*)\big|_2=O_{\mbP}\left(\frac{\alpha_n\sqrt{p}}{\sqrt{n}}+\sqrt{\frac{p\log n}{mn}}+\frac{\sqrt{p}\log n}{m\sqrt{n}}+r_n\sqrt{\frac{p^2\log n}{n}}\right).
	\end{equation*}
\end{lemma}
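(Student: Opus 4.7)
The plan is to adapt the Bahadur-type expansion derived in the proof of Theorem \ref{thm:concentration_vrmom_byzantine} to obtain a uniform version over $\vect{\theta}\in\Theta_0$. First I would apply the coordinatewise decomposition \eqref{eq:simplified_diff} to $\bar{g}_l(\vect{\theta})-\mu_l(\vect{\theta})$ for every $\vect{\theta}\in\Theta_0$, writing
\begin{equation*}
\bar{g}_l(\vect{\theta})-\mu_l(\vect{\theta}) = A_l(\vect{\theta}) + R_l(\vect{\theta}),\quad A_l(\vect{\theta}):=\frac{\sigma_l(\vect{\theta})}{(m+1)\sqrt{n}\sum_{k=1}^K\psi(\Delta_k)}\sum_{j\notin\mcB}\sum_{k=1}^K\{G_{j,l}(\Delta_k;\vect{\theta})-I_{j,l}(\Delta_k;\vect{\theta})\},
\end{equation*}
where $G_{j,l}(\cdot;\vect{\theta})$, $I_{j,l}(\cdot;\vect{\theta})$ are the analogues of the quantities in the proof of Theorem \ref{thm:concentration_vrmom_byzantine} with $\bar X_j$ replaced by $g_{j,l}(\vect{\theta})$ and $(\mu,\sigma)$ replaced by $(\mu_l(\vect{\theta}),\sigma_l(\vect{\theta}))$. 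Subtracting the same identity at $\vect{\theta}^*$ reduces the problem to bounding (i) the empirical process $A_l(\vect{\theta})-A_l(\vect{\theta}^*)$ and (ii) the remainder difference $R_l(\vect{\theta})-R_l(\vect{\theta}^*)$, uniformly over $\Theta_0$ and over $1\le l\le p$.

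Second, for the empirical-process piece I would build an $n^{-M}$-net $\mfN_0$ of $\Theta_0$ with $M$ large, so that $\log\card(\mfN_0)\lesssim p\log n$. At each net point $\tilde{\vect{\theta}}$, the increment $A_l(\tilde{\vect{\theta}})-A_l(\vect{\theta}^*)$ is an average of independent bounded summands, each of which is nonzero only when $g_{j,l}(\vect{\theta}^*)$ lies in a band of width $O(r_n/\sqrt n)$ around the quantile threshold. The variance of this average is therefore $O(r_n/\sqrt n)$, and Bernstein's inequality (Lemma \ref{lemma:cai_liu}) combined with a union bound over $\mfN_0$ and over coordinates yields a rate of $O_{\mbP}(r_n\sqrt{p\log n/n}\cdot\sqrt p)$ after absorbing the coordinatewise $\sqrt p$. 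The discretization error inside each $n^{-M}$-ball is absorbed by applying Lemma \ref{stab_modifier_byzantine.lemma} to a slightly inflated band (widened by $n^{-M}$), using Lemma \ref{lemma:sup_lipschitz_sigma} for the uniform Lipschitz control of $\hat\sigma_l$ and Lemma \ref{lemma:concen_mom_byzantine} for $\hat g_l$; this contribution is negligible as long as $M$ is large enough relative to the smoothness exponent $\gamma_0$ from \eqref{eq:lipgrad_assumption2}.

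Third, the remainder $R_l(\vect{\theta})-R_l(\vect{\theta}^*)$ collects quadratic-order terms such as $(\hat\sigma_l(\vect{\theta})-\sigma_l(\vect{\theta}))(\hat g_l(\vect{\theta})-\mu_l(\vect{\theta}))$, second-order Taylor remainders from the Berry--Esseen step in \eqref{mmm.term1}, and the gradient-field correction $n^{-1}\sum_{i\in\mcH_0}\vect{Z}(X_i,\vect{\theta})$ from \eqref{eq:z_def}. Each of these factors is uniformly controlled by combining Lemma \ref{lemma:sup_lipschitz_sigma}, Lemma \ref{lemma:concen_mom_byzantine}, and the bound \eqref{eq:supz_bound} already established for $\vect{Z}$. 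After summing coordinatewise and taking $\ell_2$-norms, the resulting contribution is dominated by $r_n\sqrt{p^2\log n/n}$. Adding to the $A_l$ bound and the Byzantine/median bias terms $\alpha_n\sqrt p/\sqrt n+\sqrt{p\log n/(mn)}+\sqrt p\log n/(m\sqrt n)$ inherited from the proof of Theorem \ref{thm:concentration_vrmom_byzantine} gives exactly the stated rate.

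The main obstacle will be the second step: controlling the indicator empirical process while the nuisance parameters $(\hat g_l(\vect{\theta}),\hat\sigma_l(\vect{\theta}))$ appearing inside the indicators are themselves random and $\vect{\theta}$-dependent. The key device is to first replace $(\hat g_l,\hat\sigma_l)$ at a generic $\vect{\theta}$ by their values at $\vect{\theta}^*$ up to a controlled error---which is exactly what Lemma \ref{lemma:sup_lipschitz_sigma}, Lemma \ref{stab_modifier_byzantine.lemma} and the local stability Lemma \ref{lemma:stab_vrmom} permit---thereby reducing the problem to a VC-type empirical process whose indexing class consists of half-spaces depending only on $\vect{\theta}$, which is then handled by the net argument above.
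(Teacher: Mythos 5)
Your plan is correct in its essential ingredients, but it organizes the proof differently from the paper. The paper never passes through a Bahadur-type expansion of $\bar{g}_l(\vect{\theta})-\mu_l(\vect{\theta})$; instead it splits the raw difference of the two VRMOM formulas algebraically into three pieces: the difference of the coordinatewise medians ($T_{41}$, handled by the quantile-stability Lemma \ref{lemma:stab_vrmom} together with the uniform bound \eqref{eq:zcoordinate_bound} on $n^{-1}\sum_{i}Z_l(X_i,\vect{\theta})$ and a quantile-gap bound), the piece proportional to $\hat{\sigma}_l(\vect{\theta})-\hat{\sigma}_l(\vect{\theta}^*)$ ($T_{42}$, handled by Lemmas \ref{lemma:sup_lipschitz_sigma}--\ref{lemma:lipschitz_sigma}), and the difference of the indicator counts ($T_{43}$, handled by observing that the two indicators disagree only when $g_{j,l}(\vect{\theta}^*)$ lies in a band of width $\delta_n=O(\alpha_n/\sqrt{n}+\sqrt{\log n/(mn)}+r_n\sqrt{p\log n/n})$ and invoking Lemma \ref{stab_modifier_byzantine.lemma} to bound the band occupancy). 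Your step 2 and your closing paragraph capture exactly the $T_{43}$ mechanism and the reduction to indicators centered at $\vect{\theta}^*$, so the two arguments converge on the same technical core. What your route costs you is the need for a \emph{uniform-in-$\vect{\theta}$} version of the expansion \eqref{eq:simplified_diff}: the paper only establishes that representation at a fixed parameter, and its remainder analysis (Berry--Esseen, Lemma \ref{lemma:empirical}, Lemma \ref{lemma:concen_mom_byzantine}) would all have to be redone uniformly over $\Theta_0$ before you may subtract the two expansions. Since the lemma's target rate already contains the full $\sqrt{p\log n/(mn)}$ term, no cancellation of the leading i.i.d.\ sums is actually required here, so this extra uniformity work buys you nothing; the paper's direct decomposition avoids it entirely. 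One small quantitative point: the variance of the increment average in your step 2 is governed by the band-occupancy probability $O(r_n\sqrt{p\log n})$ (the band width $O(r_n\sqrt{p\log n/n})$ rescaled by $\sqrt{n}$), not $O(r_n/\sqrt{n})$; your stated conclusion $O_{\mbP}(r_n\sqrt{p^2\log n/n})$ is nevertheless the right one.
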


\begin{proof}
	First of all we need to split it into three parts:
	\begin{align*}
		&\sup_{\vect{\theta}\in\Theta_0}|\bar{\vect{g}}(\vect{\theta})-\vect{\mu}(\vect{\theta})-\bar{\vect{g}}(\vect{\theta}^*)+\vect{\mu}(\vect{\theta}^*)|_2\\
		\leq&\underbrace{\sup_{\vect{\theta}\in\Theta_0}\left|\hat{\vect{g}}(\vect{\theta})-\vect{\mu}(\vect{\theta})-\hat{\vect{g}}(\vect{\theta}^*)+\vect{\mu}(\vect{\theta}^*)\right|_2}_{T_{41}}\\
		&+\underbrace{\sup_{\vect{\theta}\in\Theta_0}\max_{1\leq l\leq p}\frac{3\sqrt{p}}{m\sqrt{n}}\sum_{k=1}^K\Big|\left\{\hat{\sigma_l}(\vect{\theta})-\hat{\sigma}_l(\vect{\theta}^*)\right\}\sum_{j=0}^{m}\Big[\mbI\Big\{g_{j,l}(\vect{\theta})\leq\hat{g}_l(\vect{\theta})+\frac{\hat{\sigma}_l(\vect{\theta})\Delta_k}{\sqrt{n}}\Big\}-\frac{k}{K+1}\Big]\Big|}_{T_{42}}\\
		&+\underbrace{\sup_{\vect{\theta}\in\Theta_0}\max_{1\leq l\leq p}\frac{3\sqrt{p}}{m\sqrt{n}}\Big|\hat{\sigma}_l(\vect{\theta}^*)\sum_{k=1}^K\sum_{j=0}^{m}\Big[\mbI\Big\{g_{j,l}(\vect{\theta})\leq\hat{g}_l(\vect{\theta})+\frac{\hat{\sigma}_l(\vect{\theta})\Delta_k}{\sqrt{n}}\Big\}-\mbI\Big\{g_{j,l}(\vect{\theta}^*)\leq\hat{g}_l(\vect{\theta}^*)+\frac{\hat{\sigma}_l(\vect{\theta}^*)\Delta_k}{\sqrt{n}}\Big\}\Big]\Big|}_{T_{43}},
	\end{align*}
	where the factor $3$ comes from the fact $1/\psi(0)=\sqrt{2\pi}<3$. For the first term $T_{41}$, rehash the proof of equation (\ref{eq:supz_bound}) we can easily obtain
	\begin{equation}	\label{eq:zcoordinate_bound}
		\begin{aligned}
			&\mbP\left\{\max_{1\leq l\leq p}\max_{j\notin\mcB}\sup_{\vect{\theta}\in\Theta_0}\left|\frac{1}{n}\sum_{i\in \mcH_j} Z_l(X_i,\vect{\theta})\right|\geq Cr_n\sqrt{\frac{p\log n}{n}}\right\}\\
			=&\mbP\left\{\max_{1\leq l\leq p}\max_{j\notin\mcB}\sup_{\vect{\theta}\in\Theta_0}\left|\frac{1}{n}\sum_{i\in \mcH_j}\langle \vect{Z}(X_i,\vect{\theta}),\vect{e}_l\rangle\right|\geq Cr_n\sqrt{\frac{p\log n}{n}}\right\}\leq mpn^{-\gamma p}.
		\end{aligned}
	\end{equation}
	where $Z_l(X_i,\vect{\theta})$ is the $l$'th coordinate of $\vect{Z}(X_i,\vect{\theta})$ defined in (\ref{eq:z_def}). Then from Lemma \ref{lemma:stab_vrmom} (with $q=1/2$ and $\alpha=\alpha_n$), we know
	\begin{align*}
		&\max_{1\leq l\leq p}\sup_{\vect{\theta}\in\Theta_0}\left|\hat{g}_{l}(\vect{\theta})-\mu_l(\vect{\theta})-\hat{g}_l(\vect{\theta}^*)+\mu_l(\vect{\theta}^*)\right|\\
			\leq&2\max_{1\leq l\leq p}\max_{j\notin\mcB}\sup_{\vect{\theta}\in\Theta_0}\left|\frac{1}{n}\sum_{i\in \mcH_j}Z_l(X_i,\vect{\theta})\right|+\max_{1\leq l\leq p}\left|\hat{g}_l^{\mcB,(1-2\alpha_n)/(2-2\alpha_n)}(\vect{\theta}^*)-\hat{g}_l^{\mcB,1/(2-2\alpha_n)}(\vect{\theta}^*)\right|\\
			\leq&2\max_{1\leq l\leq p}\max_{j\notin\mcB}\sup_{\vect{\theta}\in\Theta_0}\left|\frac{1}{n}\sum_{i\in \mcH_j}Z_l(X_i,\vect{\theta})\right|+2\max_{1\leq l\leq p}\max\Big\{\Big|\hat{g}_l^{\mcB,(1-2\alpha_n)/(2-2\alpha_n)}(\vect{\theta}^*)-\mu_l(\vect{\theta}^*)\Big|,\\
			&\Big|\hat{g}_l^{\mcB,1/(2-2\alpha_n)}(\vect{\theta}^*)-\mu_l(\vect{\theta}^*)\}\Big|\Big\},
	\end{align*}
	where $\hat{g}_l^{\mcB,(1-2\alpha_n)/(2-2\alpha_n)}(\vect{\theta}^*),\hat{g}_l^{\mcB,1/(2-2\alpha_n)}(\vect{\theta}^*)$ represent the $(1-2\alpha_n)/(2-2\alpha_n)$-th and $1/(2-2\alpha_n)$-th quantile of non-Byzantine machines respectively. For the additional term, we can follow the proof of Lemma \ref{lemma:concen_mom_byzantine} and show
	\begin{align*}
		\max_{1\leq l\leq p}\max\Big\{\Big|\hat{g}_l^{\mcB,(1-2\alpha_n)/(2-2\alpha_n)}(\vect{\theta}^*)-\mu_l(\vect{\theta}^*)\Big|,\,\Big|\hat{g}_l^{\mcB,1/(2-2\alpha_n)}(\vect{\theta}^*)-\mu_l(\vect{\theta}^*)\Big|\Big\}=O_{\mbP}\left(\frac{\alpha_n}{\sqrt{n}}+\sqrt{\frac{\log n}{mn}}\right),
	\end{align*}
	provided condition $\alpha_n\leq1/2-\delta$ holds for some $\delta\in(0,1/2)$. Therefore we have
	\begin{equation}	\label{eq:ghat_bound}
		\max_{1\leq l\leq p}\sup_{\vect{\theta}\in\Theta_0}\left|\hat{g}_{l}(\vect{\theta})-\mu_l(\vect{\theta})-\hat{g}_l(\vect{\theta}^*)+\mu_l(\vect{\theta}^*)\right|=O_{\mbP}\left(\frac{\alpha_n}{\sqrt{n}}+\sqrt{\frac{\log n}{mn}}+r_n\sqrt{\frac{p^2\log n}{n}}\right).
	\end{equation} 
	Taking all coordinate together we have
	\begin{equation}	\label{eq:step3_term1}
		\left|T_{41}\right|=O_{\mbP}\left(\frac{\alpha_n\sqrt{p}}{\sqrt{n}}+\sqrt{\frac{p\log n}{mn}}+r_n\sqrt{\frac{p^2\log n}{n}}\right).
	\end{equation}

	
	The rate of $T_{42}$ hinges on the uniform rate of $\hat{\sigma}(\vect{\theta})-\hat{\sigma}(\vect{\theta}^*)$ over $\Theta_0$. Indeed, in Lemma \ref{lemma:lipschitz_sigma} we proved
	\begin{equation}	\label{eq:sigma_coordinate}
		\max_{1\leq l\leq p}\sup_{\vect{\theta}\in\Theta_0}\left|\hat{\sigma}_l(\vect{\theta})-\hat{\sigma}_l(\vect{\theta}^*)\right|=O_{\mbP}\left(r_n\right),
	\end{equation}
	provided $p=O(\sqrt{n}\log^{-1} n)$ in Assumption \ref{assump:mnp_dependence}. So this yields
	\begin{equation}	\label{eq:step3_term2}
		\left|T_{42}\right|\leq \frac{3\sqrt{p}}{\sqrt{n}}\max_{1\leq l\leq p}\sup_{\vect{\theta}\in\Theta_0}\left|\hat{\sigma}_l(\vect{\theta})-\hat{\sigma}_l(\vect{\theta}^*)\right|=O_{\mbP}\left(\frac{r_n\sqrt{p}}{\sqrt{n}}\right).
	\end{equation}


	It left to deal with the term $T_{43}$. From \eqref{eq:zcoordinate_bound}, \eqref{eq:ghat_bound}, \eqref{eq:sigma_coordinate} and \eqref{eq:concen_sigma_theta}, we know there exists a constant $C$ large enough, such that the following inequalities holds uniformly:
	\begin{align*}
		\max_{1\leq l\leq p}\max_{j\notin\mcB}\sup_{\vect{\theta}\in\Theta_0}&\left|g_{l,j}(\vect{\theta})-\mu_l(\vect{\theta})-g_{l,j}(\vect{\theta}^*)+\mu_l(\vect{\theta}^*)\right|\leq Cr_n\sqrt{\frac{p\log n}{n}},\\
		\max_{1\leq l\leq p}\sup_{\vect{\theta}\in\Theta_0}&\left|\hat{g}_{l}(\vect{\theta})-\mu_l(\vect{\theta})-\hat{g}_{l}(\vect{\theta}^*)+\mu_l(\vect{\theta}^*)\right|\leq C\left(\frac{\alpha_n}{\sqrt{n}}+\sqrt{\frac{\log n}{mn}}+r_n\sqrt{\frac{p\log n}{n}}\right),\\
		\max_{1\leq l\leq p}\max_{1\leq k\leq K}\sup_{\vect{\theta}\in\Theta_0}&\left|\hat{\sigma}_{l}(\vect{\theta})-\hat{\sigma}_{l}(\vect{\theta}^*)\right|\Delta_k n^{-1/2}\leq C\frac{r_n}{\sqrt{n}},\\
		\max_{1\leq l\leq p}&\left|\hat{\sigma}_l(\vect{\theta}^*)\right|\leq C,
	\end{align*}
	with probability higher than $1-O(n^{-\gamma})$. Under this event, using Lemma \ref{stab_modifier_byzantine.lemma} with 
	\begin{equation*}
		\delta_n=C\left(\frac{\alpha_n}{\sqrt{n}}+\sqrt{\frac{\log n}{mn}}+3r_n\sqrt{\frac{p\log n}{n}}\right),
	\end{equation*} 
	we have	
	\begin{align*}
		\left|T_{43}\right|\leq& \frac{C\sqrt{p}}{\sqrt{n}}\max_{1\leq k\leq K}\max_{1\leq l\leq p}\frac{1}{m+1}\sum_{j=0}^{m}\mbI\left\{\left|g_{j,l}(\vect{\theta}^*)-\hat{g}_l(\vect{\theta}^*)-\frac{\hat{\sigma}_l(\vect{\theta}^*)\Delta_k}{\sqrt{n}}\right|\leq\delta_n\right\}\\
			\leq&\frac{C\alpha_n\sqrt{p}}{\sqrt{n}}+\frac{C\sqrt{p}}{\sqrt{n}}\max_{1\leq k\leq K}\max_{1\leq l\leq p}\frac{1}{m+1}\sum_{j\notin\mcB}\mbI\left\{\left|g_{j,l}(\vect{\theta}^*)-\hat{g}_l(\vect{\theta}^*)-\frac{\hat{\sigma}_l(\vect{\theta}^*)\Delta_k}{\sqrt{n}}\right|\leq\delta_n\right\}\\
			=&O_{\mbP}\left(\frac{\alpha_n\sqrt{p}}{\sqrt{n}}+\sqrt{\frac{p\log n}{mn}}+r_n\sqrt{\frac{p^2\log n}{n}}\right),\stepcounter{equation}\tag{\theequation}\label{eq:step3_term3}
	\end{align*}
	holds with probability larger than $1-O(pn^{-\gamma})$. Combining (\ref{eq:step3_term1}), (\ref{eq:step3_term2}) and (\ref{eq:step3_term3}), the lemma is proved.
\end{proof}


\begin{proof}[Proof of Theorem \ref{cor:VRMOM_limit}]
	When the iteration number satisfies \eqref{eq:iteration}, and the rate constraints satisfies $\alpha_n=o(1\sqrt{mp})$ and $p=o(\min\{\frac{n^{1/3}}{\log^{2/3}n}, \frac{m^{1/2}}{\log^{3/2}n}\})$, we clearly know that 
	$$
	\hat{\vect{\theta}}^{(t)}\in\Theta_t:=\left\{\vect{\theta}\in\mathbb{R}^{p}:|\vect{\theta}-\vect{\theta}^*|_2\leq C\sqrt{\frac{p\log n}{mn}}\right\},
	$$ 
	with high probability, for some $C$ sufficiently large. Moreover, we need sharper rate for $|\vect{T}_4|_2$ in \eqref{eq:convergence_taylor}. Indeed, from Lemma \ref{lemma:stab_vrmom} we have
	\begin{align*}
		&\max_{1\leq l\leq p}\sup_{\vect{\theta}\in\Theta_t}\left|\hat{g}_{l}(\vect{\theta})-\mu_l(\vect{\theta})-\hat{g}_l(\vect{\theta}^*)+\mu_l(\vect{\theta}^*)\right|\\
			\leq&2\max_{1\leq l\leq p}\max_{j\notin\mcB}\sup_{\vect{\theta}\in\Theta_t}\left|\frac{1}{n}\sum_{i\in \mcH_j}Z_l(X_i,\vect{\theta})\right|+\max_{1\leq l\leq p}\left|\hat{g}_l^{\mcB,(1-2\alpha_n)/(2-2\alpha_n)}(\vect{\theta}^*)-\hat{g}_l^{\mcB,1/(2-2\alpha_n)}(\vect{\theta}^*)\right|.
	\end{align*}
	Note that $1/(2-2\alpha_n)-(1-2\alpha_n)/(2-2\alpha_n)=o(1/\sqrt{mp})$, by Lemma \ref{lemma:quantile_gap_mom} we have sharper constraint on the quantile gap
	 \begin{align*}
		\max_{1\leq l\leq p}\left|\hat{g}_l^{\mcB,(1-2\alpha_n)/(2-2\alpha_n)}(\vect{\theta}^*)-\hat{g}_l^{\mcB,1/(2-2\alpha_n)}(\vect{\theta}^*)\right|= O_{\mbP}\left(\frac{\alpha_n}{\sqrt{n}}+\frac{1}{n}+\frac{\log n}{m\sqrt{n}}\right).
	\end{align*}
	Therefore we have
	\begin{equation*}
	\begin{aligned}
		&|\hat{\vect{g}}(\hat{\vect{\theta}}^{(t)})-\vect{\mu}(\hat{\vect{\theta}}^{(t)})-\hat{\vect{g}}(\vect{\theta}^*)+\vect{\mu}(\vect{\theta}^*)|_2\\
		=&O_{\mbP}\left(\frac{\alpha_n\sqrt{p}}{\sqrt{n}}+\frac{\sqrt{p}}{n}+\frac{\sqrt{p}\log n}{m\sqrt{n}}+\frac{p^{3/2}\log n}{\sqrt{m}n}\right).
	\end{aligned}
	\end{equation*}
	Then follow the proof of Lemma \ref{lemma:gbar_lipschitz} we have
	\begin{equation}	\label{eq:sharper_diff}
		\begin{aligned}
		&|\bar{\vect{g}}(\hat{\vect{\theta}}^{(t)})-\vect{\mu}(\hat{\vect{\theta}}^{(t)})-\bar{\vect{g}}(\vect{\theta}^*)+\vect{\mu}(\vect{\theta}^*)|_2\\
		=&O_{\mbP}\left(\frac{\alpha_n\sqrt{p}}{\sqrt{n}}+\frac{\sqrt{p}}{n}+\frac{\sqrt{p}\log n}{m\sqrt{n}}+\frac{p^{3/2}\log n}{\sqrt{m}n}\right)=o_{\mbP}\left(\frac{1}{\sqrt{mn}}\right).
		\end{aligned}
	\end{equation}
	Now we start to prove asymptotic normality. From equations \eqref{eq:true_para} and \eqref{eq:gbar_iterate}, we know
	\begin{align*}
		&\vect{\mu}(\vect{\theta}^*)=0,\quad\text{and }\vect{g}_0(\hat{\vect{\theta}}^{(t+1)})=\vect{g}_0(\hat{\vect{\theta}}^{(t)})-\bar{\vect{g}}(\hat{\vect{\theta}}^{(t)}).
	\end{align*}
	Therefore, from \eqref{eq:sharper_diff} and \eqref{eq:supz_bound} there is
	\begin{align*}
		&\left|\vect{g}_0(\hat{\vect{\theta}}^{(t+1)})-\vect{g}_0(\vect{\theta}^{*})\right|_2\\
			=&\left|\vect{g}_0(\hat{\vect{\theta}}^{(t)})-\bar{\vect{g}}(\hat{\vect{\theta}}^{(t)})-\vect{g}_0(\vect{\theta}^{*})\right|_2\\
			=&\Big|\vect{\mu}(\vect{\theta}^*)-\bar{\vect{g}}(\vect{\theta}^{*})+\bar{\vect{g}}(\vect{\theta}^{*})-\bar{\vect{g}}(\hat{\vect{\theta}}^{(t)})-\vect{\mu}(\vect{\theta}^*)+\vect{\mu}(\hat{\vect{\theta}}^{(t)})\\
			&+\vect{g}_0(\hat{\vect{\theta}}^{(t)})-\vect{g}_0(\vect{\theta}^{*})-\vect{\mu}(\hat{\vect{\theta}}^{(t)})+\vect{\mu}(\vect{\theta}^*)\Big|_2\\
			=&\left|\vect{\mu}(\vect{\theta}^*)-\bar{\vect{g}}(\vect{\theta}^{*})\right|_2+o_{\mbP}\left(\frac{1}{\sqrt{mn}}\right).\stepcounter{equation}\tag{\theequation}\label{eq:normal_proof_lhs}
	\end{align*}
	On the other hand, from Assumption \ref{assump:liptchitz_hess} and equation \eqref{eq:supz_bound}
	\begin{align*}
		&\left|\vect{g}_0(\hat{\vect{\theta}}^{(t+1)})-\vect{g}_0(\vect{\theta}^{*})\right|_2\stepcounter{equation}\tag{\theequation}\label{eq:normal_proof_rhs}\\
			=&\left|\vect{\mu}(\hat{\vect{\theta}}^{(t+1)})-\vect{\mu}(\vect{\theta}^{*})+\vect{g}_0(\hat{\vect{\theta}}^{(t+1)})-\vect{g}_0(\vect{\theta}^{*})-\vect{\mu}(\hat{\vect{\theta}}^{(t+1)})+\vect{\mu}(\vect{\theta}^{*})\right|_2\\
			=&\Big|\nabla\vect{\mu}(\vect{\theta}^{*})\cdot(\hat{\vect{\theta}}^{(t+1)}-\vect{\theta}^*)+\int_0^1\left\{\nabla\vect{\mu}(\vect{\theta}^{*}+s(\hat{\vect{\theta}}^{(t+1)}-\vect{\theta}^{*}))-\nabla\vect{\mu}(\vect{\theta}^{*})\right\}\diff s\cdot(\hat{\vect{\theta}}^{(t+1)}-\vect{\theta}^*)\\
			&+\vect{g}_0(\hat{\vect{\theta}}^{(t+1)})-\vect{g}_0(\vect{\theta}^{*})-\vect{\mu}(\hat{\vect{\theta}}^{(t+1)})+\vect{\mu}(\vect{\theta}^{*})\Big|_2\\
			=&\left|\nabla\vect{\mu}(\vect{\theta}^{*})\cdot(\hat{\vect{\theta}}^{(t+1)}-\vect{\theta}^*)\right|_2+O_{\mbP}\left(\frac{p\log n}{\sqrt{m}n}\right)=\left|\nabla\vect{\mu}(\vect{\theta}^{*})\cdot(\hat{\vect{\theta}}^{(t+1)}-\vect{\theta}^*)\right|_2+o_{\mbP}\left(\frac{1}{\sqrt{mn}}\right).
	\end{align*}
	We can combine \eqref{eq:normal_proof_lhs} and \eqref{eq:normal_proof_rhs}, rearrange the terms, then there is
	\begin{equation}	\label{eq:normal_proof_main1}
		\left|\nabla\vect{\mu}(\vect{\theta}^{*})\cdot(\hat{\vect{\theta}}^{(t+1)}-\vect{\theta}^*)\right|_2=\left|\vect{\mu}(\vect{\theta}^*)-\bar{\vect{g}}(\vect{\theta}^{*})\right|_2+o_{\mbP}\left(\frac{1}{\sqrt{mn}}\right).
	\end{equation}
	Denote 
	\begin{align*}
		G_{j,l}(x)=\mbP\left\{\frac{\sqrt{n}g_{j,l}(\vect{\theta}^*)}{\sigma_l(\vect{\theta}^*)}\leq x\right\},&\quad I_{j,l}(x)=\mbI\left\{\frac{\sqrt{n}g_{j,l}(\vect{\theta}^*)}{\sigma_l(\vect{\theta}^*)}\leq x\right\}.\stepcounter{equation}\tag{\theequation}\label{eq:normal_proof_notation}
	\end{align*}
	Then from \eqref{eq:simplified_diff}, for every entry we have
	\begin{align*}
		&\mu_l(\vect{\theta}^*)-\bar{g}_l(\vect{\theta}^{*})\stepcounter{equation}\tag{\theequation}\label{eq:normal_proof_main2}\\
			=&\frac{1}{m+1}\sum_{j\notin\mcB}\frac{\sigma_l(\vect{\theta}^*)}{\sqrt{n}\sum_{k=1}^K\psi(\Delta_k)}\sum_{k=1}^K\{I_{j,l}(\Delta_k)-G_{j,l}(\Delta_k)\}+O_{\mbP}\left(\frac{\log n}{m\sqrt{n}}+\frac{1}{n}+\frac{\log^{3/4} n}{n^{1/2}m^{3/4}}\right).
	\end{align*}
	From equations \eqref{eq:normal_proof_main1}, \eqref{eq:normal_proof_main2} and the rates constrains, for any vector $\tilde{\vect{v}}\in\mbR^p$, there is
	\begin{align*}
		&\left\langle\nabla\vect{\mu}(\vect{\theta}^{*})\cdot(\hat{\vect{\theta}}^{(t+1)}-\vect{\theta}^*),\tilde{\vect{v}}\right\rangle\\
			=& \frac{1}{m+1}\sum_{j\notin\mcB}\frac{1}{\sqrt{n}\sum_{k=1}^K\psi(\Delta_k)}\sum_{k=1}^K\sum_{l=1}^p \left\{\sigma_l(\vect{\theta}^*) I_{j,l}(\Delta_k)-\sigma_l(\vect{\theta}^*) G_{j,l}(\Delta_k)\right\}\tilde{v}_{l} +o_{\mbP}\left( \frac{1}{\sqrt{mn}}\right).
	\end{align*}
	Now we apply central limit theorem and yield
	\begin{align*}
		&\frac{\sqrt{(m+1)n}}{\tilde{\sigma}_{\tilde{\vect{v}}}}\left\langle\nabla\vect{\mu}(\vect{\theta}^{*})\cdot(\hat{\vect{\theta}}^{(t+1)}-\vect{\theta}^*),\tilde{\vect{v}}\right\rangle\xrightarrow{d}\mcN(0,1),\\
		&\text{where }\quad \tilde{\sigma}_{\tilde{\vect{v}}}^2=\tilde{\vect{v}}^{\rm T}\tilde{\vect{\mathcal{C}}}\tilde{\vect{v}}.
	\end{align*}
	Here $\tilde{\vect{\mathcal{C}}}\in\mbR^{p\times p}$ has its $(l_1,l_2)$-entry defined as
	\begin{align*}
		\tilde{\mathcal{C}}_{l_1,l_2}=&\frac{\sigma_{l_1}(\vect{\theta}^*)\sigma_{l_2}(\vect{\theta}^*)}{\{\sum_{k=1}^K\psi(\Delta_k)\}^2}\mbE\left[\sum_{k=1}^K\{I_{0,l_1}(\Delta_k)-G_{0,l_1}(\Delta_k)\}\sum_{k=1}^K\{I_{0,l_2}(\Delta_k)-G_{0,l_2}(\Delta_k)\}\right]\\
			=&\frac{\sqrt{\sigma_{l_1,l_1}\sigma_{l_2,l_2}}}{\{\sum_{k=1}^K\psi(\Delta_k)\}^2}\sum_{k_1,k_2}\left\{\mbP\left(\frac{\sqrt{n}g_{0,{l_1}}(\vect{\theta}^*)}{\sqrt{\sigma_{l_1,l_1}}}\leq\Delta_{k_1}, \frac{\sqrt{n}g_{0,{l_2}}(\vect{\theta}^*)}{\sqrt{\sigma_{l_2,l_2}}}\leq\Delta_{k_2}\right)-G_{0,l_1}(\Delta_{k_1})G_{0,l_2}(\Delta_{k_2})\right\},
	\end{align*}
	since $\sigma_{l}(\vect{\theta}^*)=\sqrt{\var\{\nabla f_l(X,\vect{\theta}^*)\}}=\sqrt{\sigma_{l,l}}$. Moreover, we can apply multivariate Berry-Esseen theorem (See Theorem 1.3 in \cite{gotze.1991}) and give
	\begin{align*}
		\tilde{\mathcal{C}}_{l_1,l_2}&=\mathcal{C}_{l_1,l_2}+O(n^{-1/2}),
	\end{align*}
	where $\mathcal{C}_{l_1,l_2}$ is defined in \eqref{eq:normal_cov_entry}. Now we replace $\tilde{\vect{v}}$ with $\left\{\nabla\vect{\mu}(\vect{\theta}^{*})\right\}^{-1}\vect{v}$. From Assumption \ref{assump:hess_eigen} we know the norm of $\vect{v}\subseteq\mbR^p$ is rescaled by a factor of constant order. Thus the theorem is proved.
\end{proof}


\section{Examples Verification}\label{sec:examples_verification}

In this appendix, we will verify that a large class of generalized linear models and M-estimators satisfy our proposed Assumptions \ref{assump:convexity}--\ref{assump:coordinate_exp}. 

\subsection{Generalized linear models}

For a generalized linear models (GLM) with  the canonical link function $\mcL:\mbR\rightarrow\mbR$, each \emph{i.i.d.} observation $(\vect{X},Y)\in\mbR^{p+1}$ admits the following conditional probability function
\begin{align}	\label{eq:GLM_model}
	\mathbb{P}(Y\mid \vect{X})=\tilde{c}\exp\left\{\frac{Y\langle\vect{\theta}^*,\vect{X}\rangle-\mcL(\langle\vect{\theta}^*,\vect{X}\rangle)}{c(\sigma)}\right\},
\end{align} 
where $\tilde{c}$ and $c(\sigma)$ are some constants, $\vect{\theta}^*$ is the true parameter. The loss function based on the maximum likelihood estimator is defined by, 
\begin{equation}	\label{eq:GLM_loss}
	f(Y,\vect{X},\vect{\theta})=-Y\langle\vect{\theta},\vect{X}\rangle+\mcL(\langle\vect{\theta},\vect{X}\rangle).
\end{equation}
Then we have the following proposition.

\begin{proposition}	\label{prop:glm}
	Let $(\vect{X},Y)$ be observation of a generalized linear model \eqref{eq:GLM_model} with a convex link function $\mcL$. Suppose the following condition holds:
	\begin{enumerate}[label=(C\arabic*)]
		\item There exists $\rho_0>0$ such that
			\begin{equation*}
				\inf_{\vect{v}\in\mbS^{p-1}}\mbE\left\{\mcL''(\langle\vect{\theta}^*,\vect{X}\rangle)|\langle\vect{v},\vect{X}\rangle|^2\right\}\geq\rho_0;
			\end{equation*}
		\item There exists $M>0$ such that
			\begin{equation*}
				|\mcL''(x)|\leq M,\quad |\mcL''(x_1)-\mcL''(x_2)|\leq M|x_1-x_2|;
			\end{equation*}
		\item There exists $\eta,C_M>0$ such that
			\begin{equation*}
				\sup_{\vect{v}\in\mbS^{p-1}}\mbE\left[\exp\left\{\eta|\langle\vect{v},\vect{X}\rangle|^2\right\}\right]\leq C_M,\quad \mbE\left[\exp\eta|\mcL'(\langle\vect{\theta}^*,\vect{X}\rangle)-Y|^2\right]\leq C_M.
			\end{equation*}
	\end{enumerate}
	Then the loss function defined in \eqref{eq:GLM_loss} satisfies Assumption \ref{assump:convexity}-\ref{assump:coordinate_exp}.
\end{proposition}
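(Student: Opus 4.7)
The plan is to compute the gradient $\nabla f(Y,\vect{X},\vect{\theta})=(\mcL'(\langle\vect{\theta},\vect{X}\rangle)-Y)\vect{X}$ and Hessian $\nabla^2 f(Y,\vect{X},\vect{\theta})=\mcL''(\langle\vect{\theta},\vect{X}\rangle)\vect{X}\vect{X}^{\tp}$, and then verify each assumption in turn by a short direct calculation, repeatedly reducing to the exponential-moment hypothesis (C3) via Cauchy--Schwarz or the elementary bound $|ab|\le(a^2+b^2)/2$. Assumption \ref{assump:convexity} is immediate from convexity of $\mcL$ and linearity of the first term. For Assumption \ref{assump:hess_eigen}, I would use the canonical-link identity $\mbE(Y\mid\vect{X})=\mcL'(\langle\vect{\theta}^*,\vect{X}\rangle)$, giving $\vect{\mu}(\vect{\theta}^*)=\vect{0}$ and $\nabla\vect{\mu}(\vect{\theta}^*)=\mbE[\mcL''(\langle\vect{\theta}^*,\vect{X}\rangle)\vect{X}\vect{X}^{\tp}]$; the lower eigenvalue bound is exactly (C1), and the upper bound follows from $\mcL''\le M$ together with (C3), which yields uniform second-moment bounds on the linear projections $\langle\vect{v},\vect{X}\rangle$.

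For Assumption \ref{assump:liptchitz_hess}, I would use the Lipschitz continuity of $\mcL''$ from (C2) to reduce $\Norm{\nabla\vect{\mu}(\vect{\theta}_1)-\nabla\vect{\mu}(\vect{\theta}_2)}$ to a multiple of $|\vect{\theta}_1-\vect{\theta}_2|_2$ times a uniformly bounded third-moment expression of the form $\sup_{\vect{u},\vect{v}\in\mbS^{p-1}}\mbE[|\langle\vect{u},\vect{X}\rangle|\langle\vect{v},\vect{X}\rangle^2]$, which H\"older's inequality controls via (C3). The first half \eqref{eq:lipgrad_assumption1} of Assumption \ref{assump:liptchitz_gradient} is analogous: (C2) gives $M_{\vect{\theta}_1,\vect{\theta}_2}(x,\vect{v})\le M|\langle\vect{u},\vect{X}\rangle||\langle\vect{v},\vect{X}\rangle|$ for a unit vector $\vect{u}$, and applying $|ab|\le(a^2+b^2)/2$ inside the exponential followed by Cauchy--Schwarz reduces everything to (C3) for an appropriately small $\eta$.

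The main technical step, and the place where I expect the only real difficulty, is the second half \eqref{eq:lipgrad_assumption2}. Taking the supremum over $\vect{\theta}_1,\vect{\theta}_2$ forces $|\langle\vect{u},\vect{X}\rangle|\le|\vect{X}|_2$, so $\sup_{\vect{v}\in\mbS^{p-1}}\bar{M}(\vect{X},\vect{v})\le M|\vect{X}|_2^2=M\sum_{l=1}^p X_l^2$, which grows with $p$. To absorb this growth I would invoke Jensen's inequality applied to the convex function $\exp$:
\begin{equation*}
\exp\bigl(p^{-\gamma_0}M|\vect{X}|_2^2\bigr)=\exp\Bigl(\frac{1}{p}\sum_{l=1}^p p^{1-\gamma_0}M X_l^2\Bigr)\le\frac{1}{p}\sum_{l=1}^p\exp\bigl(p^{1-\gamma_0}M X_l^2\bigr),
\end{equation*}
and then take expectations coordinate-wise using (C3) with $\vect{v}=\vect{e}_l$, provided $\gamma_0\ge 1$ is chosen large enough that $p^{1-\gamma_0}M\le\eta$ uniformly in $p$. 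This explains the role of the slack factor $p^{-\gamma_0}$ baked into the assumption.

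Finally, Assumptions \ref{assump:bound_variance} and \ref{assump:coordinate_exp} both exploit the conditional-mean-zero residual $Y-\mcL'(\langle\vect{\theta}^*,\vect{X}\rangle)$. For the variance bound, the law of total variance gives $\sigma_l^2(\vect{\theta}^*)=\mbE[X_l^2\var(Y\mid\vect{X})]$, and the exponential-family variance identity $\var(Y\mid\vect{X})\propto\mcL''(\langle\vect{\theta}^*,\vect{X}\rangle)$ reduces the lower bound to (C1) with $\vect{v}=\vect{e}_l$, while the upper bound follows from Cauchy--Schwarz applied to the factorization $X_l^2\cdot\mcL''$ together with the moment bounds provided by (C3). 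For Assumption \ref{assump:coordinate_exp} I would write $|\nabla f_l(\vect{X},Y,\vect{\theta}^*)|=|\mcL'(\langle\vect{\theta}^*,\vect{X}\rangle)-Y|\cdot|X_l|$ and split the product again via $|ab|\le(a^2+b^2)/2$, so the exponential moment factors into two pieces directly controlled by the two halves of (C3). I anticipate no serious obstacle here beyond routine bookkeeping; all of the delicate dimension-dependent work is concentrated in the Jensen step sketched above.
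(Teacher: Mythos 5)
Your proposal is correct and follows essentially the same route as the paper's proof: direct computation of $\nabla f$ and $\nabla\vect{\mu}$, reduction of each assumption to (C1)--(C3) via $|ab|\le(a^2+b^2)/2$ and Cauchy--Schwarz, and a coordinate-wise treatment of the dimension-dependent bound $\sup_{\vect{v}}\bar M\le M|\vect{X}|_2^2$. The only (immaterial) difference is that you absorb the factor $p$ in \eqref{eq:lipgrad_assumption2} via Jensen's inequality applied to $\exp$, whereas the paper uses the generalized H\"older inequality $\mbE[\exp(\frac{1}{p}\sum_l a_l)]\le\prod_l\{\mbE\exp(a_l)\}^{1/p}$; both are one-line convexity arguments of equal strength.
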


Condition (C1) and (C3) imply that the covariate $\vect{X}$ is a non-degenerate subgaussian vector. And another part of condition (C3) shows that $\mcL'(\langle\vect{\theta}^*,\vect{X}\rangle)-Y=\mbE[Y|\vect{X}]-Y$ has a subgaussian tail. For convenience of verification,  we assume the $\mcL''$ to be Lipschitz continuous in Condition (C2). 

\begin{proof}[Proof of Proposition \ref{prop:glm}]
	Firstly we can compute the gradient and Hessian as follows,
\begin{align*}
	&\nabla f(Y,\vect{X},\vect{\theta})=-Y\vect{X}+\mathcal{L}'(\langle\vect{\theta},\vect{X}\rangle)\vect{X},\quad &&\vect{\mu}(\vect{\theta})=\mbE\big\{\mathcal{L}'(\langle\vect{\theta},\vect{X}\rangle)\vect{X}-\mathcal{L}'(\langle\vect{\theta}^*,\vect{X}\rangle)\vect{X}\big\},\\
	&\nabla \vect{\mu}(\vect{\theta})=\mbE\left\{\mathcal{L}''(\langle\vect{\theta},\vect{X}\rangle)\vect{XX}^{\tp}\right\}.\quad&&
\end{align*}
	Then we can verify these assumptions one by one.
	\begin{itemize}
		\item Assumption \ref{assump:hess_eigen}: Compute that
			\begin{align*}
				\Lambda_{\max}\{\nabla\vect{\mu}(\vect{\theta}^*)\}=&\sup_{\vect{v}\in\mbS^{p-1}}\mbE\left\{\mathcal{L}''(\langle\vect{\theta}^*,\vect{X}\rangle)|\langle\vect{v},\vect{X}\rangle|^2\right\}\\
					\leq&M\sup_{\vect{v}\in\mbS^{p-1}}\mbE|\langle\vect{v},\vect{X}\rangle|^2\leq\frac{M}{\eta}C_M,\\
			\Lambda_{\min}\{\nabla\vect{\mu}(\vect{\theta}^*)\}=&\inf_{\vect{v}\in\mbS^{p-1}}\mbE\left\{\mathcal{L}''(\langle\vect{\theta}^*,\vect{X}\rangle)|\langle\vect{v},\vect{X}\rangle|^2\right\}\\
					\geq&\rho_0.
			\end{align*}
		\item Assumption \ref{assump:liptchitz_hess}: By elementary inequalities $3|xy^2|\leq |x|^3+2|y|^3$ and $2|x|^3\leq e^{x^2}$, we have
			\begin{align*}
				&\Norm{\nabla\vect{\mu}(\vect{\theta}_1)-\nabla\vect{\mu}(\vect{\theta}_2)}\\
					\leq&\sup_{\vect{v}\in\mbS^{p-1}}\mbE\left\{\big|\mathcal{L}''(\langle\vect{\theta}_1,\vect{X}\rangle)-\mathcal{L}''(\langle\vect{\theta}_2,\vect{X}\rangle)\big|\cdot|\langle\vect{v},\vect{X}\rangle|^2\right\}\\
					\leq&M\sup_{\vect{v}\in\mbS^{p-1}} \mbE\left\{|\langle \vect{\theta}_1-\vect{\theta}_2, \vect{X}\rangle|\cdot|\langle\vect{v},\vect{X}\rangle|^2\right\}\\
					\leq&M\sup_{\vect{v}\in\mbS^{p-1}}\mbE\left(|\langle\vect{v},\vect{X}\rangle|^3\right)|\vect{\theta}_1-\vect{\theta}_2|_2\leq \frac{MC_M}{2\eta^{3/2}}|\vect{\theta}_1-\vect{\theta}_2|_2.
			\end{align*}
		\item Assumption \ref{assump:liptchitz_gradient}:
			\begin{align*}
				M_{\vect{\theta}_1,\vect{\theta}_2}(Y,\vect{X},\vect{v})=&\frac{1}{|\vect{\theta}_1-\vect{\theta}_2|_2}\left|\langle\vect{v},\vect{X}\rangle\left\{\mathcal{L}'(\langle\vect{\theta}_1,\vect{X}\rangle)-\mathcal{L}'(\langle\vect{\theta}_2,\vect{X}\rangle)\right\}\right|\\
					\leq&M|\langle\vect{v},\vect{X}\rangle|\cdot\left|\left\langle\frac{\vect{\theta}_1-\vect{\theta}_2}{|\vect{\theta}_1-\vect{\theta}_2|_2},\,\vect{X}\right\rangle\right|.			
			\end{align*}
			Thus
			\begin{align*}
				\sup_{\vect{v}\in\mbS^{p-1}}\sup_{\vect{\theta}_1,\vect{\theta}_2}\mbE\left[\exp\left\{\frac{\eta}{M} M_{\vect{\theta}_1,\vect{\theta}_2}(Y,\vect{X},\vect{v})\right\}\right]\leq \sup_{\vect{v}\in\mbS^{p-1}}\mbE\left\{\exp\left(\eta |\langle\vect{v},\vect{X}\rangle|^2\right)\right\}\leq C_M.
			\end{align*}
			Similarly
			\begin{align*}
				\sup_{\vect{v}\in\mbS^{p-1}}\bar{M}(Y,\vect{X},,\vect{v})\leq&\sup_{\vect{v}\in\mbS^{p-1}}M|\vect{X}|_2\cdot\left|\langle\vect{v},\vect{X}\rangle\right|\leq M|\vect{X}|_2^2=M\sum_{l=1}^p|\langle\vect{X},\vect{e}_l\rangle|^2,
			\end{align*}
			where $\vect{e}_l$ is the $l$-th base vector. Then using generalized H\"older's inequality we can prove
			\begin{align*}
				\mbE\left[\sup_{\vect{v}\in\mbS^{p-1}}\exp\left\{\frac{\eta}{Mp}\bar{M}(Y,\vect{X},\vect{v})\right\}\right]=&\mbE\left[\exp\left\{\frac{\eta}{p}\sum_{l=1}^p|\langle\vect{X},\vect{e}_l\rangle|^2\right\}\right]\\
				\leq&\left[\prod_{l=1}^p\mbE\left\{\exp\left(\eta|\langle\vect{X},\vect{e}_l\rangle|^2\right)\right\}\right]^{1/p}\leq C_M.
			\end{align*}
		\item Assumption \ref{assump:bound_variance}: The variance at $\vect{\theta}^*$ is
			\begin{align*}
				\sigma_l^2(\vect{\theta}^*)=&\mbE\left[\left\{-Y+\mathcal{L}'(\langle\vect{\theta}^*,\vect{X}\rangle)\right\}^2X_l^2\right]\\
					=&\mbE\left[c(\sigma)\mathcal{L}''(\langle\vect{\theta}^*,\vect{X}\rangle)X_l^2\right],
			\end{align*}
			then we can bound them as follows
			\begin{align*}
				\sigma_l^2(\vect{\theta}^*)\leq&c(\sigma)\Lambda_{\max}\{\nabla\vect{\mu}(\vect{\theta}^*)\}\leq \eta^{-1}Mc(\sigma)C_M,\\
				\sigma_l^2(\vect{\theta}^*)\geq&c(\sigma)\Lambda_{\min}\{\nabla\vect{\mu}(\vect{\theta}^*)\}\geq \rho_0c(\sigma).
			\end{align*}
		\item Assumption \ref{assump:coordinate_exp}: Using Cauchy inequality we have
			\begin{align*}
				&\mbE[\exp\eta|\nabla f_l(Y,\vect{X},\vect{\theta}^*)-\mu_l(\vect{\theta}^*)|]\\
				\leq&\mbE\Big[\exp\eta\left|\mathcal{L}'(\langle\vect{\theta}^*,\vect{X}\rangle)X_l-YX_l\right|\Big]\\
				\leq&\sqrt{\mbE\left[\exp\eta|\mathcal{L}'(\langle\vect{\theta}^*,\vect{X}\rangle)-Y|^2\right]\mbE\left[\exp\eta |\langle\vect{X}_l,\vect{e}_l\rangle|^2\right]}\leq C_M.
			\end{align*}
	\end{itemize}
\end{proof}

As an example, we can show that the logistic regression model satisfies these conditions. 

\begin{example}
	(Logistic regression) In logistic regression model, the response variable $Y$ takes value in $\{0,1\}$, and the link function is $\mcL(x)=\log(1+e^x)$. Then we can compute that
	\begin{equation*}
		\mcL'(x)=\frac{1}{1+e^{-x}},\quad\mcL''(x)=\frac{1}{(1+e^{x})(1+e^{-x})},\quad |\mcL'''(x)|\leq 2.
	\end{equation*}
	It is not hard to verify that conditions (C1)--(C3) in Proposition \ref{prop:glm} hold, provided non-degenerate subgaussian covariate $\vect{X}$.
\end{example}


\subsection{$M$-Estimator}

    Next we consider the M-estimator. Assume that each  \emph{i.i.d.} observation $(\vect{X},Y)\in\mbR^{p+1}$ is generated from the linear model,
    \begin{equation}	\label{eq:linear_model}
	Y=\langle \vect{\theta}^*,\vect{X}\rangle +\epsilon,
    \end{equation}
    and the loss function is
    \begin{equation}	\label{eq:Mest_loss}
	f(Y,\vect{X},\vect{\theta})=\mcL(Y-\langle \vect{\theta},\vect{X}\rangle).
    \end{equation}
    Then we have the following proposition.

\begin{proposition}	\label{prop:m-estimator}
	Let $(\vect{X},Y)$ be observation of linear model \eqref{eq:linear_model} and $\mcL$ is a convex regression function. Suppose the noise $\epsilon$ is independent of the covariate $\vect{X}$, and $\mbE\{\mcL'(\epsilon)\}=0$. Moreover the following conditions hold true:
	\begin{enumerate}[label=(C\arabic*')]
		\item There exists $\rho_0>0$ such that
			\begin{equation*}
				\min\left\{\mbE\{\mcL''(\epsilon)\}, \mbE\{\mcL'(\epsilon)^2\},  \inf_{\vect{v}\in\mbS^{p-1}}\mbE|\langle\vect{v},\vect{X}\rangle|^2\right\}\geq\rho_0;
			\end{equation*}
		\item There exists a constant $M>0$ such that
			\begin{equation*}
				|\mcL''(x)|\leq M,\qquad |\mcL''(x_1)-\mcL''(x_2)|\leq M|x_1-x_2|;
			\end{equation*}
		\item There exists $\eta,C_M>0$ such that
			\begin{equation*}
				\sup_{\vect{v}\in\mbS^{p-1}}\mbE\left[\exp\left\{\eta|\langle\vect{v},\vect{X}\rangle|^2\right\}\right]\leq C_M,\quad \mbE\left[\exp\left\{\eta|\mcL'(\epsilon)|^2\right\}\right]\leq C_M.
			\end{equation*}
	\end{enumerate}
	Then the loss function defined in \eqref{eq:Mest_loss} satisfies Assumptions \ref{assump:convexity}-\ref{assump:coordinate_exp}.
\end{proposition}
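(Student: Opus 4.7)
My plan is to follow the template of the proof of Proposition \ref{prop:glm} almost verbatim, exploiting the independence of $\epsilon$ and $\vect X$ to reduce every population quantity to a product of two one-dimensional expectations. I would first compute
\begin{align*}
\nabla f(Y,\vect X,\vect\theta) &= -\mcL'(Y-\langle\vect\theta,\vect X\rangle)\vect X,\\
\vect\mu(\vect\theta) &= -\mbE\{\mcL'(\epsilon+\langle\vect\theta^*-\vect\theta,\vect X\rangle)\vect X\},\\
\nabla\vect\mu(\vect\theta) &= \mbE\{\mcL''(\epsilon+\langle\vect\theta^*-\vect\theta,\vect X\rangle)\vect X\vect X^{\tp}\},
\end{align*}
and observe that at $\vect\theta=\vect\theta^*$, independence yields $\vect\mu(\vect\theta^*)=-\mbE\{\mcL'(\epsilon)\}\mbE\{\vect X\}=\vect 0$ and, crucially, the factorization $\nabla\vect\mu(\vect\theta^*)=\mbE\{\mcL''(\epsilon)\}\cdot\mbE\{\vect X\vect X^{\tp}\}$. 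This factorization is the central identity driving every subsequent bound, so I would flag it up front.

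With these identities in hand, I would verify the assumptions one by one. Convexity (Assumption \ref{assump:convexity}) is immediate because $\mcL$ is convex and $Y-\langle\vect\theta,\vect X\rangle$ is affine in $\vect\theta$. For Assumption \ref{assump:hess_eigen}, the upper bound uses $|\mcL''|\leq M$ together with $\sup_{\vect v}\mbE|\langle\vect v,\vect X\rangle|^2\leq C_M/\eta$ from (C3'), while the lower bound is $\mbE\{\mcL''(\epsilon)\}\cdot\inf_{\vect v}\mbE|\langle\vect v,\vect X\rangle|^2\geq\rho_0^2$ by (C1'). Assumption \ref{assump:liptchitz_hess} follows from the Lipschitz property of $\mcL''$ in (C2') combined with the elementary estimate $3|xy^2|\leq|x|^3+2|y|^3$ and the sub-Gaussian tail of $\langle\vect v,\vect X\rangle$, exactly as in Proposition \ref{prop:glm}. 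Assumption \ref{assump:liptchitz_gradient} reduces to
\[
M_{\vect\theta_1,\vect\theta_2}(Y,\vect X,\vect v)\leq M|\langle\vect v,\vect X\rangle|\cdot\left|\left\langle\tfrac{\vect\theta_1-\vect\theta_2}{|\vect\theta_1-\vect\theta_2|_2},\vect X\right\rangle\right|,
\]
so the two exponential-moment bounds \eqref{eq:lipgrad_assumption1} and \eqref{eq:lipgrad_assumption2} follow from (C3') by Cauchy--Schwarz and the generalized Hölder argument over coordinates already used for GLMs.

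For Assumptions \ref{assump:bound_variance} and \ref{assump:coordinate_exp}, I would again use independence: $\sigma_l^2(\vect\theta^*)=\mbE\{\mcL'(\epsilon)^2\}\cdot\mbE\{X_l^2\}$, which is bounded above by $\eta^{-1}C_M^2$ and below by $\rho_0^2$ using (C1') and (C3'); and Cauchy--Schwarz gives $\mbE[\exp\{\tfrac{\eta}{2}|\mcL'(\epsilon)X_l|\}]\leq\sqrt{\mbE[\exp\{\eta|\mcL'(\epsilon)|^2\}]\cdot\mbE[\exp\{\eta X_l^2\}]}\leq C_M$ after using $ab\leq(a^2+b^2)/2$.

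The main obstacle is conceptual rather than computational: every identity I rely on hinges on the independence of $\epsilon$ and $\vect X$ together with $\mbE\{\mcL'(\epsilon)\}=0$. Without both, the factorizations $\vect\mu(\vect\theta^*)=\vect 0$ and $\nabla\vect\mu(\vect\theta^*)=\mbE\{\mcL''(\epsilon)\}\mbE\{\vect X\vect X^{\tp}\}$ fail and the lower bound in Assumption \ref{assump:hess_eigen} would require the stronger pointwise hypothesis $\inf_{\vect v}\mbE\{\mcL''(\epsilon+\langle\vect\theta^*,\vect X\rangle)|\langle\vect v,\vect X\rangle|^2\}\geq\rho_0$ used in Proposition \ref{prop:glm}. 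A secondary subtlety worth noting is that condition (C2') rules out the Huber loss (whose $\mcL''$ is discontinuous), so the claim in Example \ref{exp:huber_reg} must be handled separately by imposing a uniformly bounded, smooth density on $\epsilon$ so that $\vect\theta\mapsto\mbE\{\mcL''(\epsilon+\langle\vect\theta^*-\vect\theta,\vect X\rangle)\}$ recovers the requisite Lipschitz property at the population level.
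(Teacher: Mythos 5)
Your proposal is correct and follows essentially the same route as the paper's proof: exploit the independence of $\epsilon$ and $\vect{X}$ to factorize $\nabla\vect{\mu}(\vect{\theta}^*)=\mbE\{\mcL''(\epsilon)\}\,\mbE\{\vect{X}\vect{X}^{\tp}\}$ and $\sigma_l^2(\vect{\theta}^*)=\mbE\{\mcL'(\epsilon)^2\}\,\mbE\{X_l^2\}$, bound these via (C1')--(C3'), carry over the Lipschitz-Hessian and Lipschitz-gradient verifications verbatim from the GLM case, and finish Assumption \ref{assump:coordinate_exp} by Cauchy--Schwarz. Your closing remarks on the necessity of independence and on the Huber loss requiring a separate density-based argument for Assumption \ref{assump:liptchitz_hess} match exactly how the paper treats Example \ref{exp:huber_reg}.
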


Again condition (C1') implies the non-degeneracy of covariate $\vect{X}$. The first half of condition (C3') requires covariate $\vect{X}$ to be sub-gaussian. While the noise, depending on the explicit formulation of $\mcL$, can be heavy-tailed. As will be seen in the following, the noise has to be sub-gaussian in linear regression, but is allowed to be heavy-tailed in Huber regression. It is worthwhile noting that for the ease of presentation, in condition (C2') we simply assume $\mcL''$ to be Lipschitz continuous. However, in Example \ref{exp:huber_reg}, we will prove that the Huber regression model satisfies all assumptions in Section \ref{sec:tech_assump}.

\begin{proof}[Proof of Proposition \ref{prop:m-estimator}] We can directly compute the gradient and Hessian as follows:
	\begin{align*}
		&\nabla f(Y,\vect{X},,\vect{\theta})=\mcL '(\langle \vect{X},\vect{\theta}-\vect{\theta}^*\rangle+\epsilon)\vect{X},\\
		&\nabla \vect{\mu}(\vect{\theta})=\mbE\{\mcL ''(\langle \vect{X},\vect{\theta}-\vect{\theta}^*\rangle+\epsilon)\vect{X}\vect{X}^{\rm T}\}.
	\end{align*}	
	Now we verify those assumptions.
	\begin{itemize}
	\item Assumption \ref{assump:hess_eigen}: 
		\begin{align*}
			\Lambda_{\max}\{\nabla \vect{\mu}(\vect{\theta}^*)\}=&\sup_{\vect{v}\in\mbS^{p-1}}\mbE\left\{\mcL''(\epsilon)|\vect{v}^{\rm T}\vect{X}|^2\right\}\\
				=&\mbE\left\{\mcL''(\epsilon)\right\}\sup_{\vect{v}\in\mbS^{p-1}}\mbE|\vect{v}^{\rm T}\vect{X}|^2\leq\eta^{-1}MC_M,\\
			\Lambda_{\min}\{\nabla \vect{\mu}(\vect{\theta}^*)\}=&\inf_{\vect{v}\in\mbS^{p-1}}\mbE\left\{\mcL''(\epsilon)|\vect{v}^{\rm T}\vect{X}|^2\right\}\\
				=&\mbE\left\{\mcL''(\epsilon)\right\}\inf_{\vect{v}\in\mbS^{p-1}}\mbE|\vect{v}^{\rm T}\vect{X}|^2\geq\rho^2_0.
		\end{align*}
	\item Verification for Assumption \ref{assump:liptchitz_hess} and \ref{assump:liptchitz_gradient} are almost the same as the proof in Proposition \ref{prop:glm}, thus omitted for brevity. 
	\item Assumption \ref{assump:bound_variance}: The variance of the $l$'th coordinate is
		\begin{align*}
			\sigma_l^2(\vect{\theta}^*)=&\mbE\left[\mcL'(\epsilon)^2X_l^2\right]=\mbE\left[\mcL'(\epsilon)^2 \right]\mbE(X_l^2),
		\end{align*}
		thus
		\begin{align*}
			\sigma_l^2(\vect{\theta}^*)\leq&\mbE\left[\mcL'(\epsilon)^2\right]\max_{1\leq l\leq p}\{\mbE(X_l^2)\}\leq\eta^{-2}C_M^2,\\
			\sigma_l^2(\vect{\theta}^*)\geq&\mbE\left[\mcL'(\epsilon)^2\right]\min_{1\leq l\leq p}\{\mbE(X_l^2)\}\geq \rho_0^2.
		\end{align*}
	\item Assumption \ref{assump:coordinate_exp}: Using Cauchy inequality we have
		\begin{align*}
			&\mbE\big[\exp\eta|\nabla f_l(Y,\vect{X},\vect{\theta}^*)-\mu_l(\vect{\theta}^*)|\big]\\
				\leq&\mbE\Big[\exp\eta\left|\mcL'(\epsilon)X_l\right|\Big]\\
				\leq&\sqrt{\mbE\left[\exp\eta|\mcL'(\epsilon)|^2\right]\mbE\left[\exp\eta|\langle\vect{X},\vect{e}_l\rangle|^2\right]}\leq C_M.
		\end{align*}
	\end{itemize}
\end{proof}

\noindent\textbf{Example \ref{exp:lin_reg} Continued.} In linear regression model, the regression function $\mcL$ is defined by $\mcL(x)=x^2/2$. Then we can compute that $\mcL'(x)=x,\mcL''(x)=1$. It is relatively straightforward to verify  that the conditions in Proposition \ref{prop:m-estimator} hold, provided $\vect{X}$ is a non-degenerate sub-gaussian random vector and the noise $\epsilon$ follows a zero-mean sub-gaussian distribution.
	
\vspace{2mm}

\noindent\textbf{Example \ref{exp:huber_reg} Continued.} In Huber regression model, the regression function $\mcL$ is defined by 
	\begin{equation*}
		\mcL(x)=
		\begin{cases}
			x^2/2\quad&\text{for }|x|\leq\delta,\\
			\delta(|x|-\delta/2)\quad&\text{otherwise.}
		\end{cases}
	\end{equation*} 
	Then we can compute that
	\begin{equation*}
		\mcL'(x)=
		\begin{cases}
			x\quad&\text{for }|x|\leq\delta,\\
			\delta\,\mathrm{sign}(x)\quad&\text{otherwise},
		\end{cases}\qquad\mcL''(x)=\mbI(|x|\leq\delta).
	\end{equation*}
	In this case, Proposition \ref{prop:m-estimator} is not directly applicable since $\mcL''(x)$ is not Lipschitz continuous. However, if noise $\epsilon$ has a symmetric distribution and uniformly bounded probability density function, Assumption \ref{assump:liptchitz_hess} can be verified as follows. 
	
\vspace{2mm}

\begin{proof}[Verification for Huber Regression]
	We only need to show the Lipschitz Hessian assumption \ref{assump:liptchitz_hess} holds true. It is easy to compute the Hessian matrix of Huber loss as follows:
	\begin{equation*}
		\nabla \vect{\mu}(\vect{\theta})=\mbE\left\{\mbI(|\langle \vect{X},\vect{\theta}-\vect{\theta}^*\rangle+\epsilon|\leq\delta)\vect{X}\vect{X}^{\rm T}\right\}.
	\end{equation*}
	Assume the noise $\epsilon$ has probability density function $\mathfrak{p}(x)$ uniformly bounded by a constant $M>0$, then by independence of $\epsilon$ and $\vect{X}$, there is
	\begin{equation*}
		\nabla \vect{\mu}(\vect{\theta})=\mbE\left\{\mbP\Big(|\langle \vect{X},\vect{\theta}-\vect{\theta}^*\rangle+\epsilon|\leq\delta\,\Big|\, \vect{X}\Big)\vect{X}\vect{X}^{\rm T}\right\}.
	\end{equation*}
	Then for arbitrary $\vect{\theta}_1,\vect{\theta}_2\in\mbR^p$, we have
	\begin{align*}
		&\Norm{\nabla \vect{\mu}(\vect{\theta}_1)-\nabla \vect{\mu}(\vect{\theta}_2)}\\
		=&\sup_{\vect{v}\in\mbS^{p-1}}\mbE\Big[\left\{\mbP\Big(|\langle \vect{X},\vect{\theta}_1-\vect{\theta}^*\rangle+\epsilon|\leq\delta\,\Big|\, \vect{X}\Big)-\mbP\Big(|\langle \vect{X},\vect{\theta}_2-\vect{\theta}^*\rangle+\epsilon|\leq\delta\,\Big|\, \vect{X}\Big)\right\}|\vect{v}^{\rm T}\vect{X}|^2  \Big]\\
		\leq&2M\sup_{\vect{v}\in\mbS^{p-1}}\mbE\left\{|\langle\vect{\theta}_1-\vect{\theta}_2,\vect{X}\rangle|\cdot|\langle\vect{v},\vect{X}\rangle|^2\right\}\leq\eta^{-3/2}MC_M|\vect{\theta}_1-\vect{\theta}_2|_2.
	\end{align*}
\end{proof}

\bibliographystyle{chicago}
\bibliography{VRMOM_arXiv}


\end{document}